\newcommand{\inner}[1]{\langle #1 \rangle}
\newcommand{\opnorm}[1]{\left\lVert#1\right\rVert_{\rm{op}}}
\newcommand{\normL}[1]{\left\lVert#1\right\rVert_{L^2}}
\newcommand{\normLeta}[1]{\left\lVert#1\right\rVert_{L^{2 + \eta}}}
\newcommand{\nind}{\mathsf{m}}
\newcommand{\tind}{\mathsf{u}}
\newcommand{\nexp}{\mathsf{s}}
\newcommand{\texp}{\mathsf{j}}
\newcommand{\cut}{\ell}
\newcommand{\bPsil}[1]{\bPsi_{\leq #1}}
\newcommand{\bPsig}[1]{\bPsi_{> #1}}
\newcommand{\bEl}[1]{\bE_{\leq #1}}
\newcommand{\bEg}[1]{\bE_{>#1}}
\newcommand{\bff}{\widehat{\boldf}}
\newcommand{\bffl}[1]{\bff_{\leq #1}}
\newcommand{\bffg}[1]{\bff_{>#1}}
\newcommand{\bDl}[1]{\bD_{\leq #1}}
\newcommand{\bSl}[1]{\bS_{\leq #1}}
\newcommand{\bfl}[1]{\boldf_{\leq #1}}
\newcommand{\bfg}[1]{\boldf_{> #1}}
\newcommand{\proj}{\mathsf{P}}
\newcommand{\projl}[1]{\proj_{\leq #1}}
\newcommand{\projg}[1]{\proj_{> #1}}
\newcommand{\polyproj}{\overline{\proj}}
\newcommand{\polyprojl}[1]{\polyproj_{\leq \, #1}}
\newcommand{\polyprojg}[1]{\polyproj_{> #1}}
\newcommand{\odp}{o_{d,\P}}
\newcommand{\Odp}{O_{d,\P}}
\newcommand{\od}{o_{d}}
\newcommand{\Rtrain}{\widehat{R}_n}
\newcommand{\Rtest}{R}
\newcommand{\timekernel}{\id_n - e^{-t\bH/n}}
\newcommand{\lowrank}[1]{\bPsil{#1}\bDl{#1}^2\bPsil{#1}^\sT}
\newcommand{\projkernel}{\bK}
\newcommand{\kernelop}{\mathbb{H}}
\newcommand{\sphere}{\S^{d-1}(\sqrt{d})}
\newcommand{\oracle}{f^{\rm{or}}}
\newcommand{\emp}{\hat{f}}
\newcommand{\scalar}{\alpha}
\newcommand{\squeezeup}{\vspace{-2.5mm}}
\newcommand{\rulesep}{\unskip\ \vrule\ }
\DeclareMathOperator{\diag}{diag}
\DeclareMathOperator{\Span}{span}
\DeclareMathOperator{\Cyc}{Cyc}
\DeclareMathOperator{\He}{He}
\DeclareMathOperator{\ReLU}{ReLU}
\colorlet{linkequation}{blue}
\definecolor{darkpastelgreen}{rgb}{0.0, 0.5, 0.0}
\newcommand{\revised}[1]{{\color{black} #1}}
\title{The Three Stages of Learning Dynamics in High-dimensional Kernel Methods}
\author{Nikhil Ghosh$^1$,\, Song Mei$^1$,\, and Bin Yu$^{1,2,3,4}$\\
$^1$Department of Statistics, UC Berkeley\\
$^2$Department of Electrical Engineering and Computer Science, UC Berkeley\\
$^3$Center for Computational Biology, UC Berkeley\\
$^4$Weill Neurohub Investigator
}
\begin{document}

\maketitle

\begin{abstract}
To understand how deep learning works, it is crucial to understand the training dynamics of neural networks. Several interesting hypotheses about these dynamics have been made based on empirically observed phenomena, but there exists a limited theoretical \revised{understanding} of when and why such phenomena occur. 

In this paper, we consider the training dynamics of gradient flow on kernel least-squares objectives, which is a limiting dynamics of SGD trained neural networks. Using precise high-dimensional asymptotics, we characterize the dynamics of the fitted model in two “worlds”: in the \textit{Oracle World} the model is trained on the population distribution and in the \textit{Empirical World} the model is trained on a sampled dataset. We show that under mild conditions on the kernel and \revised{$L^2$ target regression function} the training dynamics undergo three stages characterized by the behaviors of the models in the two worlds. Our theoretical results also mathematically formalize some interesting deep learning phenomena. Specifically, in our setting we show that SGD progressively learns more complex functions and that there is a ``deep bootstrap" phenomenon: during the second stage, the test error of both worlds remain close despite the empirical training error being much smaller. Finally, we give a concrete example comparing the dynamics of two different kernels which shows that faster training is not necessary for better generalization.
\end{abstract}

\section{Introduction}
In order to fundamentally understand how and why deep learning works, there has been much effort to understand the learning dynamics of neural networks trained by gradient descent based algorithms. This effort has led to the discovery of many intriguing empirical phenomena (e.g. \cite{frankle2020early, fort2020deep, nakkiran2019deep, nakkiran2019sgd,  nakkiran2020deep}) that help shape our conceptual framework for understanding the learning process in neural networks. \cite{nakkiran2019sgd} provides evidence that SGD starts by first learning a linear classifier and over time learns increasingly functionally complex classifiers. \cite{nakkiran2020deep} introduces the ``deep bootstrap'' phenomenon: for some deep learning tasks the empirical world test error remains close to the oracle world error\footnote{Their paper uses ``Ideal World" for ``Oracle World'' and ``Real World'' for ``Empirical World''.} for many SGD iterations, even if the empirical training and test errors display a large gap. To better understand such phenomena, it is useful to study training dynamics in relevant but mathematically tractable settings.

One approach for theoretical investigation is to study kernel methods, which were recently shown to have a tight connection with over-parameterized neural networks \citep{jacot2018neural, du2018gradient}. Indeed, consider a sequence of neural networks $(f_N(\bx; \btheta))_{N \in \N}$ with the widths of the layers going to infinity as $N \to \infty$. Assuming proper parametrization and initialization, for large $N$ the SGD dynamics on $f_N$
% \[
% \btheta^{k+1} = \btheta^k - \eta \nabla [(\cY_k - f_N(\cX_k; \btheta^k))^2],
% \]
% where $(\cY_k - f_N(\cX_k; \btheta^k))^2$ is the square loss function evaluated on a fresh batch of data $(\cX_k, \cY_k)$. \revised{ Assuming proper parametrization and initialization,} such dynamics have been shown \citep{jacot2018neural, du2018gradient} 
is known to be well approximated by the corresponding dynamics on the first-order Taylor expansion of $f_N$ around its initialization $\btheta^0$,
\[
f_{N, \rm{lin}}(\bx; \btheta) = f_N(\bx; \btheta^0) + \< \grad_{\btheta} f_N(\bx; \btheta^0), \btheta - \btheta^0 \>.
\]
Thus, in the large width limit it suffices to study the dynamics on the linearization $f_{N, \rm{lin}}$.
% \[
% \frac{\de}{\de t} \bbeta^t = - \nabla \hat \E [(\by - \< \bbeta^t, \nabla_\btheta f_N(\bx; \btheta^0) \> )^2]. 
% \]
When using the squared loss, these dynamics correspond to optimizing a kernel least-squares objective with the neural tangent kernel $K_N(\bx, \bx') = \< \nabla_\btheta f_N(\bx; \btheta^0), \nabla_\btheta f_N(\bx'; \btheta^0)\>$. 
% As a consequence, deep neural networks in the over-parameterized regime can provably find zero training error solutions. 
 
Over the past few years, researchers have used kernel machines as a tractable model to investigate many neural network phenomena including benign overfitting, i.e., generalization despite the interpolation of noisy data \citep{bartlett2020benign, liang2020just} and double-descent, i.e., risk curves that are not classically U-shaped \citep{belkin2020two, liu2021kernel}. Kernels have also been studied to better understand certain aspects of neural network architectures such as invariance and stability \citep{bietti2017group, mei2021learning}. Although kernel methods cannot be used to explain some phenomena such as feature learning, they can still be conceptually useful for understanding other neural networks properties. 

\subsection{Three stages of kernel dynamics}\label{sec:three_stages}
\begin{figure}
    \centering
    \includegraphics[width=0.6\linewidth]{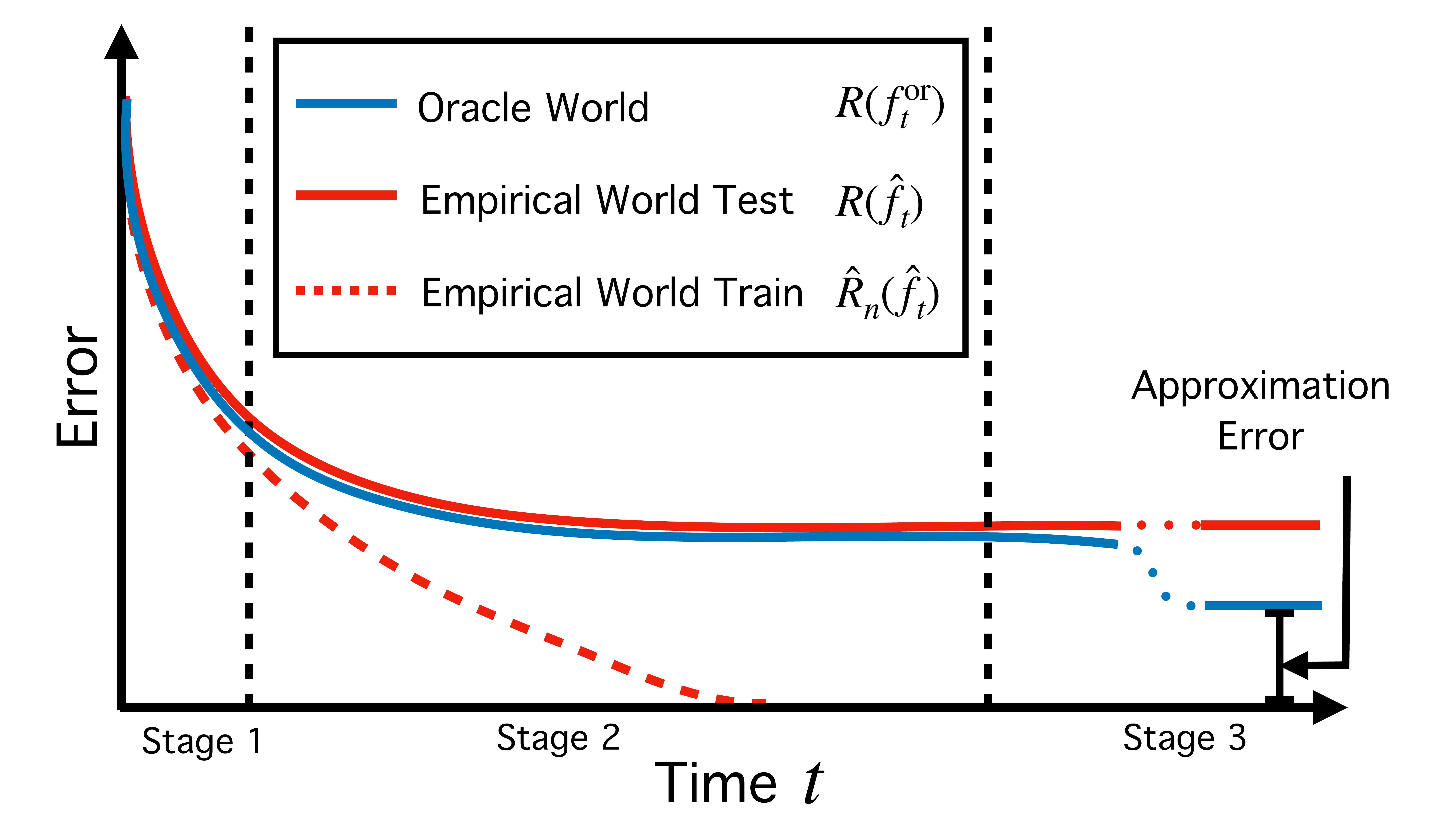}
    \caption{A conceptual drawing of empirical and oracle world learning curves. \textbf{Stage 1:} all curves are together. \textbf{Stage 2:} training error goes to zero while test and oracle error stay together. \textbf{Stage 3:} test error remains constant while oracle error decays to the RKHS approximation error. See Section \ref{sec:three_stages} for a detailed discussion. (Dotted lines in stage 3 indicate compressed time interval.)}
    \label{fig:three_phases_curves}
    \squeezeup
\end{figure}

Despite much classical work in the study of gradient descent training of kernel machines (e.g. \cite{yao2007early, raskutti2014early}) there has been limited work understanding the high-dimensional setting, which is the setting of interest in this paper. Although solving the linear dynamics of gradient flow is simple, the statistical analysis of the fitted model requires involved random matrix theory arguments. In our analysis we study the dynamics of the \textit{Oracle World}, where training is done on the (usually inaccessible) population risk, and the \textit{Empirical World}, where training is done on the empirical risk (as is done in practice). Associated with the oracle world model $\oracle_t$ and the empirical world model $\emp_t$ are the following quantities of interest: the {\bf empirical training error} $\Rtrain(\emp_t)$, the {\bf empirical test error} $\Rtest(\emp_t)$, and the {\bf oracle error} $\Rtest(\oracle_t)$ defined in Eqs. (\ref{eqn:risks}), (\ref{eqn:oracle_dynamics}), (\ref{eqn:empirical_dynamics}) for which we derive expressions that are accurate in high dimensions.

Informally, our main results show that under reasonable conditions on the regression function and the kernel the training dynamics undergo the following three stages: %\sm{change naturally to potentially? Sometimes not all stages exist} \nikhil{..}
\begin{itemize}[leftmargin=*]
    \item \textbf{Stage one}: the empirical training, the empirical test, and the oracle errors are all close.
    \item \textbf{Stage two}: the empirical training error decays to zero, but the empirical test error and the oracle error stay close and keep \revised{ approximately} constant. %??they can't be exactly constant??
    \item \textbf{Stage three}: the empirical training error is still zero, the empirical test error stays \revised{ approximately} constant, but the oracle test error decays to the approximation error. 
\end{itemize}
We conceptually illustrate the \revised{ error} curves of the oracle and empirical world in Fig.\ \ref{fig:three_phases_curves} and provide intuition for the evolution of the learned models in Fig.\ \ref{fig:phases}. The existence of the first and third stages are not unexpected: at the beginning of training the model has not fit the dataset enough to distinguish the oracle and empirical world and at the end of training an expressive enough model with infinite samples will outperform one with finitely many. The most interesting stage is the second one where the empirical model begins to ``overfit" the training set while still remaining close to the non-interpolating oracle model in the $L^2$ sense (see Fig.\ \ref{fig:phases}). %\bin{did we define what a smoother model is Are we running two model fitting dynamics? one on training data and one on infinite data? I thought we are running only one dynamic on the finite training data and EVALUATE its performance on also the test error in the empirical and oracle worlds -- so which is which??} \sm{we are indeed running two model fitting dynamics: one is on the training data (empirical world), and another is on infinite data (oracle world)} \nikhil{removed "smoother"}

In Section \ref{sec:related} we discuss some related work. In Section \ref{sec:results} we elaborate our description of the three stages and give a detailed mathematical characterization for two specific settings in Theorem \ref{thm:rot-invar} and \ref{thm:group_invar}. Although the three stages arise fairly generally, we remark that certain stages will vanish if the problem parameters are chosen in a special way (c.f.\ Remark \ref{rmk:degen}). We connect our theoretical results to related empirical deep learning phenomena in Remark \ref{rmk:deep_phenom} and discuss the relation to deep learning in practice in Remark \ref{rmk:practice}. In Section \ref{sec:numerical} we provide numerical simulations to illustrate the theory more concretely and in Section \ref{sec:discussion} we end with a summary and discussion of the results. 

\begin{figure}
    \centering
    \includegraphics[width=\linewidth]{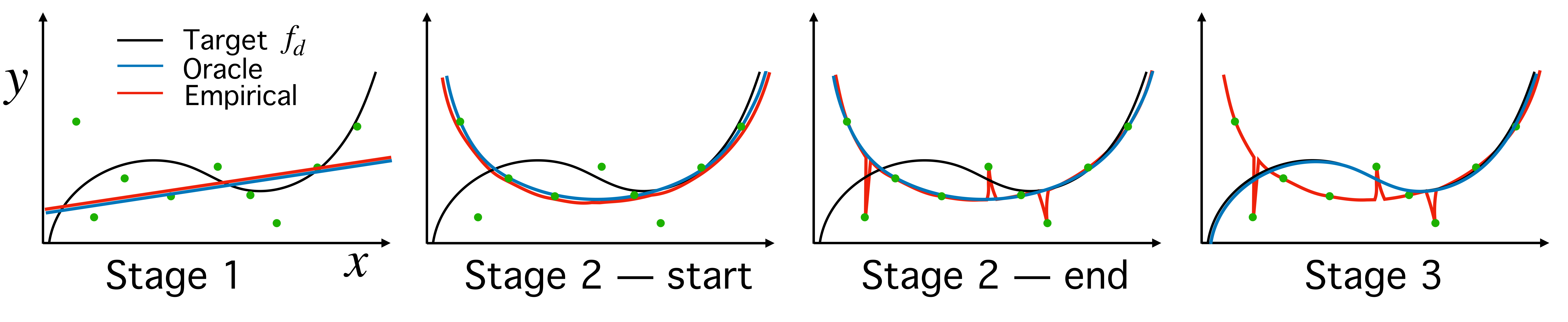}
     \caption{A conceptual drawing of the evolution of the empirical and oracle models $\emp_t$ and $\oracle_t$. In stage 1, $\emp_t$ and $\oracle_t$ learn the best linear approximation of $f_d$. At the start of stage 2, $\emp_t$ and $\oracle_t$ learn the best quadratic approximation. At the end of stage 2, $\emp_t$ interpolates the training set but is close to $\oracle_t$ in the $L^2$ sense. Lastly in stage 3, $\oracle_t$ learns $f_d$ while $\emp_t$ stays the same. }
    \label{fig:phases}
    \squeezeup
\end{figure}

\section{Related Literature}\label{sec:related}
% \subsection{Generalization of KRR: static and dynamics}
The generalization error of the kernel ridge regression (KRR) solution has been well-studied in both the fixed dimension regime \cite[Chap.\ 13]{wainwright2019high}, \citep{caponnetto2007optimal} and the high-dimensional regime \citep{el2010spectrum, liang2020just, liu2021kernel, ghorbani2020neural, ghorbani2021linearized, mei2021generalization, mei2021learning}. Most closely related to our results is the setting of \citep{ghorbani2021linearized, mei2021generalization, mei2021learning}. Analysis of the entire KRR training trajectory has also been done \citep{yao2007early, raskutti2014early, cao2019towards} but only for the fixed dimensional setting. Classical non-parametric rates are often obtained by specifying a strong regularity assumption on the target function (e.g. the source condition in \cite{fischer2020sobolev}), whereas in our work the assumption on the target function is mild.

Another line of work directly studies the dynamics of learning in linear neural networks \citep{saxe2013exact, li2018algorithmic, arora2019implicit, vaskevicius2019implicit}. Similar to us, these works show that some notion of complexity (typically effective rank or sparsity) increases in the linear network over the course of optimization. %Furthermore, the analysis in \cite{li2018algorithmic} relies upon a direct connection between the optimization in the oracle and empirical worlds.

The relationship between the speed of iterative optimization and gap between population and empirical quantities has been studied before in the context of algorithmic stability \citep{bousquet2002stability, hardt2016train, chen2018stability}. These analyses certify good empirical generalization by using stability in the first few iterations to upper bound the gap between train and test error. In contrast, our analysis directly computes the errors at an arbitrary time $t$ (c.f.\ Remark \ref{rmk:opt_speed}). The relationship between oracle and empirical training dynamics has been considered before in \cite{bottou2004large} and \cite{pillaud2018statistical}.

\section{Results}\label{sec:results}
In this section we introduce the problem and present a specialization of our results to two concrete settings: \revised{ dot product and group invariant kernels on the sphere} (Theorems \ref{thm:rot-invar} and \ref{thm:group_invar} respectively). The more general version of our results is described in Appendix \ref{sec:general_results}.

\subsection{Problem setup}\label{sec:problem_setup}
We consider the supervised learning problem where we are given i.i.d.\ data $(\bx_i, y_i)_{i \leq n}$. The covariate vectors $(\bx_i)_{i \le n} \sim_{iid} \Unif(\sphere)$ and the real-valued noisy responses $y_i = f_d(\bx_i) + \eps_i$ for some unknown target function $f_d \in L^2(\sphere)$ and $(\eps_i)_{i \le n} \sim_{iid} \cN(0, \sigma_\eps^2)$. 
\revised{ Given a function $f \in L^2(\sphere)$, we define its test error $\Rtest(f)$ and its training error $\Rtrain(f)$ as}
\begin{equation}\label{eqn:risks}
\Rtest(f) \equiv \E_{(\bx_{\rm{new}}, y_{\rm{new}})}\{(y_{\rm{new}} - f(\bx_{\rm{new}}))^2\},~~~~~~~ \Rtrain(f) \equiv \frac{1}{n} \sum\limits_{i=1}^n (y_i - f(\bx_i))^2,
\end{equation}
where $(\bx_{\rm{new}}, y_{\rm{new}})$ is i.i.d.\ with $(\bx_i, y_i)_{i\leq n}$. The test error $\Rtest(f)$ measures the fit of $f$ on the population distribution and the training error $\Rtrain(f)$ measures the fit of $f$ to the training set. 

For a kernel function $H_d: \S^{d-1}(\sqrt{d}) \times \S^{d-1}(\sqrt d) \to \R$, we analyse the dynamics of the following two fitted models indexed by time $t$: the \textit{oracle model} $\oracle_t$ and the \textit{empirical model} $\emp_t$, which are given by the gradient flow on $\Rtest$ and $\Rtrain$ over the associated RKHS $\cH_d$ respectively
\begin{align}
        \dv{t} \oracle_t(\bx) &= -\grad \Rtest(\oracle_t(\bx)) = \E [H_d(\bx, \bz)(f_d(\bz) - \oracle_t(\bz))], \label{eqn:oracle_dynamics}\\
        \dv{t} \hat{f}_t(\bx) &= -\grad \Rtrain(\emp_t(\bx)) = \frac{1}{n} \sum\limits_{i=1}^n H_d(\bx, \bx_i)(y_i - \hat{f}_t(\bx_i)),\label{eqn:empirical_dynamics}
\end{align}

with zero initialization $\oracle_0 \equiv \emp_0 \equiv 0$. These dynamics are motivated from the neural tangent kernel perspective of over-parameterized neural networks \citep{jacot2018neural, du2018gradient}. A precise mathematical definition and derivation of these two dynamics are provided in Appendix \ref{sec:dynamics}.  

For our results we make some assumptions on the spectral properties of the kernels $H_d$ similar to those in \cite{mei2021generalization} that are discussed in detail in Appendix \ref{sec:general_assump}. \revised{At a high-level we require that the diagonal elements of the kernel concentrate, that the kernel eigenvalues obey certain spectral gap conditions, and that the top eigenfunctions obey a hyperconctractivity condition which says they are ``delocalized"}. For the specific settings of Theorems \ref{thm:rot-invar} and \ref{thm:group_invar} we give more specific conditions on the kernels that are more easily verified and imply the required spectral properties.
\subsection{Dot Product Kernels}
In our first example, we consider dot product kernels $H_d$ of the form
\begin{equation}
    H_d(\bx_1, \bx_2) = h_d(\inner{\bx_1, \bx_2} / d), \, ~~~~~~~ \forall \bx_1, \bx_2 \in \S^{d-1}(\sqrt{d}), \label{eqn:dot-product}
\end{equation}
for some function $h_d : [-1, 1] \to \R$. 
Our results apply to general dot product kernels under weak conditions on $h_d$ given in Appendix \ref{sec:inner_prod_ass}. In particular they apply to the random feature and neural tangent kernels associated to certain fully connected neural networks \citep{jacot2018neural}. 

Before presenting our results for this setting we introduce some notation. Denote by $\polyprojl{\ell}$ the orthogonal projection onto the subspace of $L^2(\sphere)$ spanned by polynomials of degree less than or equal to $\ell$. The projectors $\polyproj_\ell$ and $\polyprojg{\ell}$ are defined analogously (see Appendix \ref{sec:technical_background} for details). We use $o_d(\cdot)$ for standard little-o relations, where the subscript $d$ emphasizes the asymptotic variable. The statement $f(d) = \omega_d(g(d))$ is equivalent to $g(d) = o_d(f(d))$. We use $\odp(\cdot)$ in probability relations. Namely for two sequences of random variables $Z_1(d)$ and $Z_2(d)$, $Z_1(d) = \odp(Z_2(d))$ if for any $\eps, C_\eps > 0$ there exists $d_\eps \in \Z_{>0}$, such that $\P(|Z_1(d)/Z_2(d)| < C_\eps) \leq \eps$ for all $d \geq d_\eps$. The asymptotic notations $O_d, O_{d,\P}$ etc. are defined analogously.
\begin{theorem}[Dot Product Kernels]\label{thm:rot-invar}
	Let $\{f_d \in L^2(\sphere)\}_{d \geq 1}$ be a sequence of functions such that for some $\eta > 0$, $\normLeta{f_d} = O_d(1)$, and let $\{H_d\}_{d \geq 1}$ be a sequence of dot product kernels satisfying Assumption \ref{ass:rot-invar}. Assume that for some fixed integers $\texp, \nexp \geq 0$ and some $\delta > 0$ that \[ d^{\texp + \delta} \leq t \leq d^{\texp + 1 - \delta}, ~~~~~d^{\nexp + \delta} \leq n \leq d^{\nexp + 1 - \delta}.\]
	Then we have the following characterizations, 
	\begin{enumerate}[label=(\alph*)]
		\item \label{thm:rot-invar-oracle} (Oracle World) The oracle model learns every degree component of $f_d$ as time progresses
		\[
		\Rtest(\oracle_t) = \normL{\polyprojg{\texp} f_d}^2 + \sigma_\eps^2 + \od(1), ~~~~~\normL{\oracle_t - \polyprojl{\texp} f_d}^2 = \od(1).
		\]
		\item \label{thm:rot-invar-train} (Empirical World -- Train) Empirical training error follows oracle error then goes to zero
		\begin{align*}
		\Rtrain(\hat{f}_t) &= \normL{\polyprojg{\texp} f_d}^2 + \sigma_\eps^2 + \odp(1) && \text{ if } t/n = o_d(1),\\
		\Rtrain(\hat{f}_t) &= \odp(1) && \text{ if } t/n = \omega_d(1).
		\end{align*}
		
		\item \label{thm:rot-invar-test}  (Empirical World -- Test) Empirical test error follows oracle error until the empirical model learns the degree-$\nexp$ component of $f_d$
		\[
		\Rtest(\hat{f}_t) = \normL{\polyprojg{\min\{\texp, \nexp\}} f_d}^2 + \sigma_\eps^2 + \odp(1), ~~~~~\normL{\hat{f}_t - \polyprojl{\min\{\texp, \nexp\}} f_d}^2 = \odp(1).
		\]
	\end{enumerate}
\end{theorem}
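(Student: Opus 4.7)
My plan is to work entirely in the eigenbasis of the kernel integral operator $\kernelop$ associated to $H_d$. Since $H_d$ is a dot product kernel and the measure on $\sphere$ is rotation-invariant, $\kernelop$ is diagonalized by spherical harmonics, and within each degree-$\ell$ block all eigenvalues equal a common value $\xi_{d,\ell}$. Under the dot-product kernel assumption, a straightforward Funk--Hecke computation (together with Assumption \ref{ass:rot-invar}) yields $\xi_{d,\ell} = \Theta(d^{-\ell})$ with degree-$\ell$ multiplicity $B(d,\ell) = \Theta(d^{\ell})$. Writing $f_d = \sum_\ell \polyproj_\ell f_d$ and expanding both $\oracle_t$ and $\emp_t$ in this basis turns the PDE dynamics into a decoupled ODE system for the oracle and a linear system driven by the empirical kernel matrix $\bH$ for the empirical model.

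\textbf{Oracle world (part a).} In the eigenbasis, the oracle flow (\ref{eqn:oracle_dynamics}) decouples: each coefficient $c_k(t) = \inner{\oracle_t, \phi_k}$ evolves as $\dot{c}_k = \xi_{d,\ell}(\inner{f_d,\phi_k} - c_k)$, giving the closed form $c_k(t) = (1 - e^{-\xi_{d,\ell} t})\inner{f_d, \phi_k}$. Plugging in the regime $d^{\texp+\delta} \leq t \leq d^{\texp+1-\delta}$ gives $\xi_{d,\ell} t \to \infty$ for $\ell \leq \texp$ and $\xi_{d,\ell} t \to 0$ for $\ell > \texp$, so the learned component converges in $L^2$ to $\polyprojl{\texp} f_d$, and the tail projection $\polyprojg{\texp} f_d$ is essentially unlearned. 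Summing Parseval-style yields $\normL{\oracle_t - \polyprojl{\texp} f_d}^2 = \od(1)$; the test error formula follows by adding the irreducible noise $\sigma_\eps^2$. The $L^{2+\eta}$ assumption on $f_d$ gives uniform tail control needed to make this sum rigorous across $d$.

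\textbf{Empirical world (parts b, c), and the main obstacle.} For the empirical dynamics, solving (\ref{eqn:empirical_dynamics}) yields the explicit representer-theorem formula $\emp_t(\bx) = \bh(\bx)^\sT (\bH/n)^{-1} (\timekernel)(\by/n)$, where $\bh(\bx)_i = H_d(\bx, \bx_i)$. The crux of the argument is the spectral structure of the empirical kernel matrix in the regime $n = \Theta(d^{\nexp + \alpha})$: drawing on the random matrix analyses of \cite{mei2021generalization, ghorbani2021linearized}, one has a concentration decomposition
\begin{equation*}
\bH \;\approx\; \lowrank{\nexp} + \mu_\nexp \, \id_n,
\end{equation*}
where $\bPsil{\nexp}$ collects the spherical harmonics of degree $\leq \nexp$ evaluated at the sample, $\bDl{\nexp}^2$ carries the kernel eigenvalues of those degrees, and $\mu_\nexp = \sum_{\ell > \nexp} B(d,\ell)\xi_{d,\ell}/n$ captures the contribution of the high-degree tail. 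Inserting this decomposition into the representer formula splits $\emp_t$ into a ``signal'' piece that essentially mimics the oracle on the first $\nexp$ degrees and a ``ridge-like'' piece governed by $\mu_\nexp$ that relaxes on the slow timescale $n/\mu_\nexp = \Theta(n)$. For the test error (part c), signal components of degree $\ell$ are learned precisely when $\xi_{d,\ell} t \to \infty$ \emph{and} $\ell \leq \nexp$, giving the $\min\{\texp, \nexp\}$ cutoff; the hypercontractivity condition ensures the empirical Gram factors $\bPsil{\nexp}^\sT \bPsil{\nexp}/n$ are well-conditioned and that the $L^2$ and empirical norms agree on low-degree polynomials. For the training error (part b), when $t/n = \od(1)$ the ridge piece has barely contracted and the training residual equals (up to $\odp(1)$) the oracle residual $\normL{\polyprojg{\texp} f_d}^2 + \sigma_\eps^2$, while for $t/n = \omega_d(1)$ the ridge-like piece fully contracts and interpolation drives the training error to zero. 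The principal technical obstacle is establishing this decomposition of $\bH$ with operator-norm error that is small enough to pass through the matrix exponential $e^{-t\bH/n}$; this is where the spectral gap and hypercontractivity assumptions on the kernel are essential, since they control the crossover between ``low-degree signal'' and ``high-degree bulk'' uniformly across the time window.
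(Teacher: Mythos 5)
Your sketch follows essentially the same route as the paper: diagonalize the kernel operator in the spherical-harmonic eigenbasis, solve the oracle flow coefficient-by-coefficient with the threshold $\xi_{d,\ell}t \gg 1$ iff $\ell \le \texp$, and reduce the empirical world to the kernel-matrix concentration $\bH \approx \bPsil{\nexp}\bDl{\nexp}^2\bPsil{\nexp}^\sT + \kappa_H\,\id_n$ from \cite{mei2021generalization} together with hypercontractivity to control $\bPsil{\nexp}^\sT\bPsil{\nexp}/n \approx \id$. The paper factors this into an abstract general theorem (Theorems \ref{thm:oracle}, \ref{thm:train}, \ref{thm:test}) plus a verification of Assumptions \ref{ass:KCP}--\ref{ass:valid_time} for dot-product kernels, but the underlying ingredients are the ones you identify.

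One concrete slip: your effective-ridge level carries a spurious $1/n$. The tail contribution to the diagonal of $\bH$ is $\kappa_H = \sum_{\ell>\nexp} B(d,\ell)\xi_{d,\ell} = \Theta(1)$, not $\kappa_H/n$; the factor of $1/n$ only appears after normalizing $\bH$ to $\bH/n$ inside the exponential $e^{-t\bH/n}$. With your definition $\mu_\nexp = \kappa_H/n$ the stated relaxation time $n/\mu_\nexp$ would be $\Theta(n^2)$, not the $\Theta(n)$ you assert (and which is correct). Beyond this, the heavy lifting you flag as the ``principal technical obstacle''---propagating the $\bH$-decomposition through $e^{-t\bH/n}$ in operator norm, and the term-by-term control of $T_1,\ldots,T_5$ in the test-error expansion---is indeed where the paper spends most of its effort (Lemma \ref{lem:exp_kernel} via the matrix-exponential perturbation bound, Lemmas \ref{lem:hmh}, \ref{lem:hD}, \ref{lem:orth}), so your sketch is correct in outline but leaves the bulk of the argument unexecuted.
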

%\sm{Think about a place to explain notation $o_d(1)$, $\omega_d(1)$, $\odp(1)$. This was asked by many reviewers in previous papers. } \nikhil{Perhaps right before the theorem?}

The results are conceptually illustrated in an example in Fig.\ \ref{fig:three_phase} which shows the stair-case phenomenon in high-dimensions and the three learning stages. We see that in both the oracle world and the empirical world, the prediction model increases in complexity over time. More precisely, the model learns the best polynomial fit to the target function (in an $L^2$ sense) of increasingly higher degree. In the empirical world the maximum complexity is determined by the sample size $n$, which is in contrast to the oracle world where there are effectively infinite samples.

The results imply that generally (but not always c.f.\ Remark \ref{rmk:degen}) there will be three stages of learning. In the first stage the oracle and empirical world models are close in $L^2$ and fit a polynomial with degree determined by $t$. The first stage lasts from $t = 0$ to $t = nd^{-\eps} \ll n$ for some small $\eps > 0$. As $t$ approaches $n$, there is a phase transition and the empirical world training error goes to zero at $t = nd^{\eps} \gg n$. %\bin{during the second stage because $t = nd^{\eps} < d^{\nexp+1}$ because of an assumption in Theorem 1} \nikhil{added inequalities} 
From time $nd^{-\eps}$ till at least $d^{\nexp+1}$ is the second stage where the empirical and oracle models remain close in $L^2$ but the gap between test and train error can be large. If the sample size $n$ is not large enough for $\emp_t$ to learn the target function, then at some large enough $t$ we will enter a third stage where $\oracle_t$ improves in performance, outperforming $\emp_t$ which remains the same. On synthetic data in finite dimensions we can see a resemblance of the staircase shape which becomes sharper with increasing $d$ (c.f.\ Appendix \ref{sec:additional_figs}).

\begin{remark}[Degenerate stages of Learning]\label{rmk:degen}
%\sm{a bit wordy. Simplify the first sentence } %\nikhil{Re-worded}  
For special problem parameters we will not observe the second and/or third stages. The second stage will disappear if $n \asymp d^{q}$ for some $q \in \N$ (see Fig.\ \ref{fig:two_phase}), or if $f_d$ is a degree-$\nexp$ polynomial. The third stage will not occur if $\polyprojg{\nexp} f_d$ lies in the orthogonal complement of the RKHS $\cH_d$. 
\end{remark} 
\begin{figure}
    \centering
    \begin{subfigure}[t]{0.3\textwidth}
        \centering
        \includegraphics[width=\linewidth]{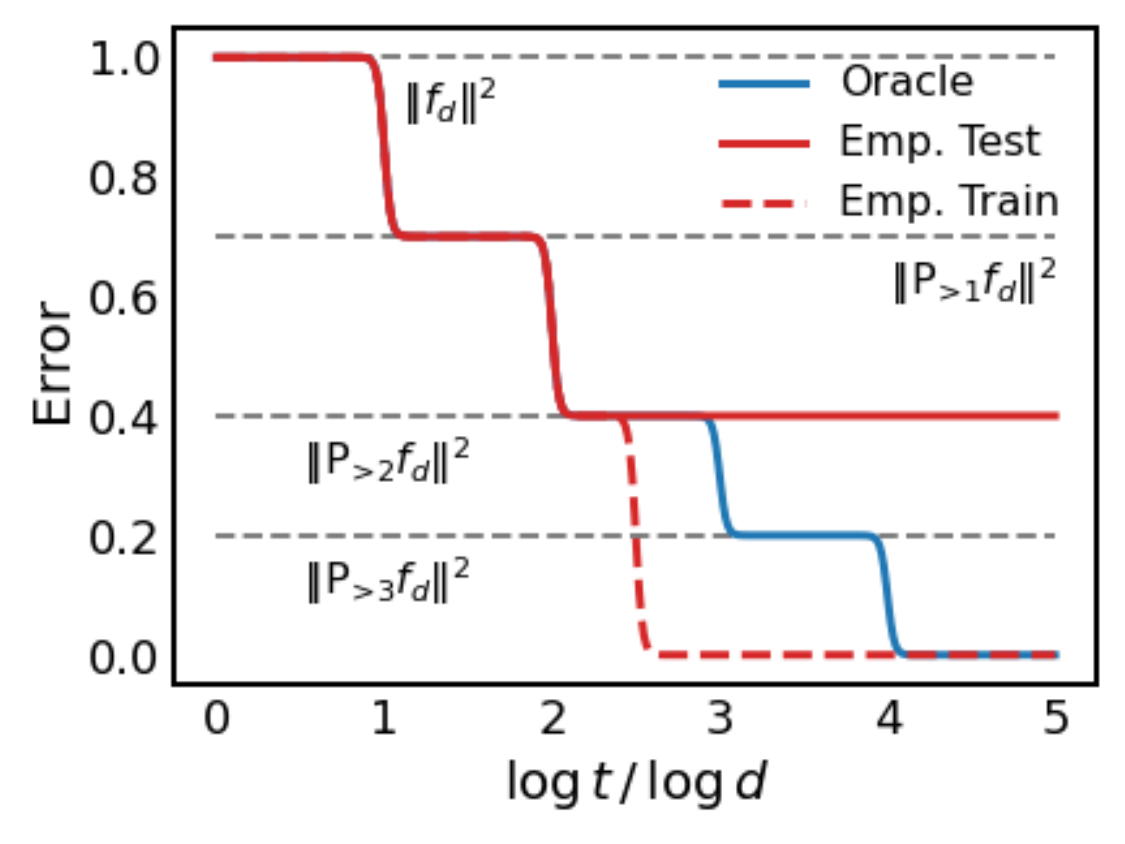} 
        \caption{Dot prod. kernel, $n \asymp d^{2.5}$} \label{fig:three_phase}
    \end{subfigure}
    \begin{subfigure}[t]{0.3\textwidth}
        \centering
        \includegraphics[width=\linewidth]{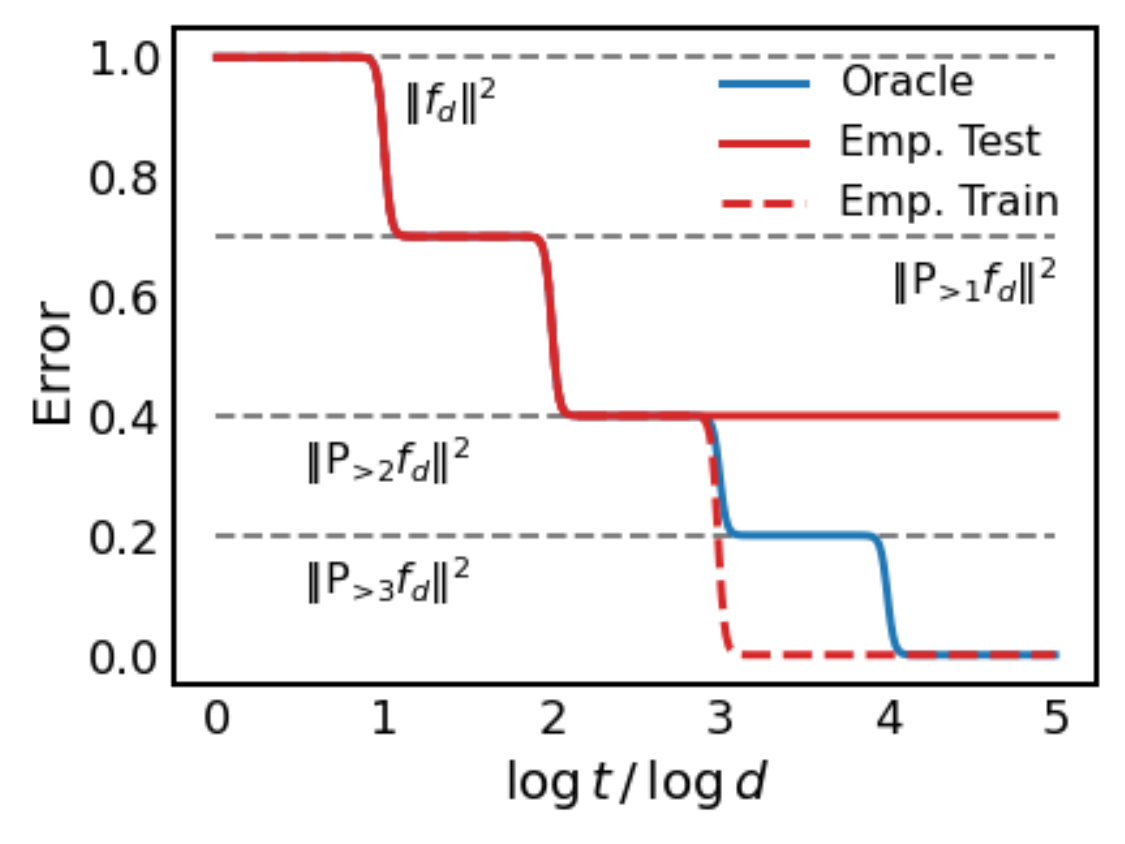} 
        \caption{Dot prod. kernel, $n \asymp d^{2.99}$} \label{fig:two_phase}
    \end{subfigure}
    \rulesep
    \begin{subfigure}[t]{0.32\textwidth}
        \centering
        \includegraphics[width=\linewidth]{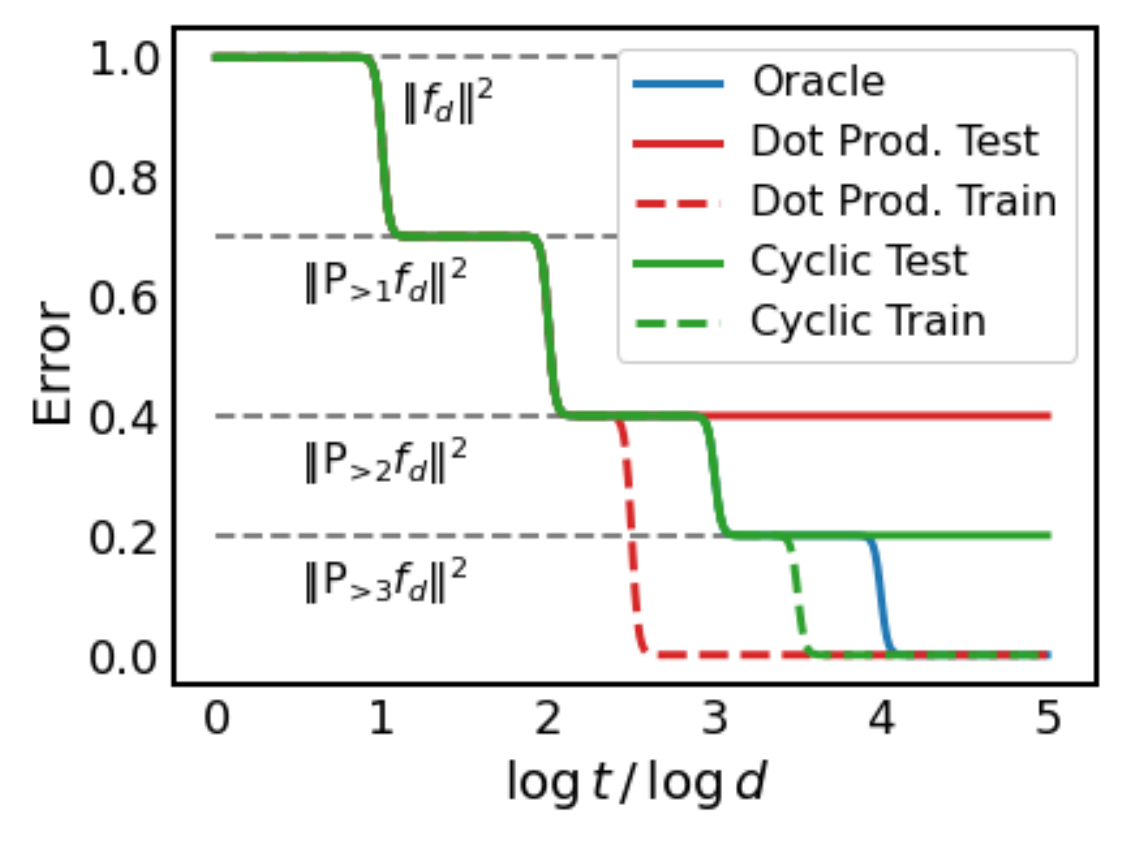} 
        \caption{Dot prod. vs cyclic, $n \asymp d^{2.5}$} \label{fig:cyclic}
    \end{subfigure}
    \caption{\revised{ Schematic drawings of the conclusions of Theorems \ref{thm:rot-invar} and \ref{thm:group_invar} for three different noiseless settings.} Panels (\ref{fig:three_phase}) and (\ref{fig:two_phase}) illustrate the performance of a dot product kernel $H_d$ for two different scalings $n(d)$. The full three stages appear in (\ref{fig:three_phase}), but the second stages disappears in (\ref{fig:two_phase}) since $\log n / \log d$ is nearly an integer (see Remark \ref{rmk:degen}). Panel (\ref{fig:cyclic}) compares the performance of a dot product kernel $H_d$ (\textcolor{red}{red}) and corresponding cyclic kernel $H_{d, \rm{inv}}$ (\textcolor{darkpastelgreen}{green}) for a cyclic target function $f_d$. The cyclic kernel in (\ref{fig:cyclic}) generalizes better but optimizes more slowly (see Remark \ref{rmk:opt_speed}).}
    \label{fig:staircase}
    \squeezeup
\end{figure}
\subsection{Group Invariant Kernels}\label{sec:group_invar_kernel}
We now consider our second setting which concerns the invariant function estimation problem introduced in \cite{mei2021learning}. As before, we are given i.i.d.\ data $(\bx_i, y_i)_{i \leq n}$ where the feature vectors $(\bx_i)_{i \le n} \sim_{iid} \Unif(\sphere)$ and noisy responses $y_i = f_d(\bx_i) + \eps_i$. We now assume that the target function $f_d$ satisfies an invariant property. We consider a general type of invariance, defined by a group $\cG_d$ that is represented as a subgroup of the orthogonal group in $d$ dimensions. The group element $g \in \cG_d$ acts on a vector $\bx \in \R^d$ via $\bx \mapsto g \cdot \bx$. We say that $f_\star$ is $\cG_d$-invariant if $f_\star(\bx) = f_\star(g \cdot \bx)$ for all $g \in \cG_d$. We denote the space of square integrable $\cG_d$-invariant functions by $L^2(\sphere, \cG_d)$. We focus on groups $\cG_d$ that are \textit{groups of degeneracy $\alpha$} as defined below. As an example, we consider the cyclic group $\Cyc_d = \{g_0, g_1, \ldots, g_{d-1}\}$ where for any $\bx = (x_1, \ldots, x_d)^\sT \in \sphere$, the group action is defined by $g_i \cdot \bx = (x_{i+1}, x_{i+2}, \ldots, x_d, x_1, x_2, \ldots, x_i)^\sT$. The cyclic group has degeneracy $1$.
\begin{definition}[Groups of degeneracy $\alpha$]\label{def:group_degeneracy}
Let $V_{d,k}$ be the subspace of degree-k polynomials that are orthogonal to polynomials of degree at most $(k-1)$ in $L^2(\sphere)$, and denote by $V_{d,k}(\cG_d)$ the subspace of $V_{d,k}$ formed by polynomials that are $\cG_d$-invariant. We say that $\cG_d$ has degeneracy $\alpha$ if for any integer $k \geq \alpha$ we have $\dim(V_{d,k}/V_{d,k}(\cG_d)) \asymp d^\alpha$ (i.e., there exists $0 < c_k \leq C_k < +\infty$ such that $c_k \leq \dim(V_{d,k}/V_{d,k}(\cG_d)) \leq C_k$ for any $d \geq 2$).
\end{definition}
To encode invariance in our kernel we consider $\cG_d$-invariant kernels $H_d$ of the form
\begin{equation}\label{eq:invar}
H_{d, \text{inv}}(\bx_1, \bx_2) = \int_{\cG_d} h(\inner{\bx_1, g \cdot \bx_2} / d) \pi_d(\dd{g}),
\end{equation}
where $\pi_d$ is the Haar measuare on $\cG_d$. Such kernels satisfy the following invariance property: for all $g, g' \in \cG_d$ and for $H_d(\bx_1, \bx_2) = H_d(g \cdot \bx_1, g' \cdot \bx_2)$ for every $\bx_1, \bx_2$. For the cyclic group, $\pi_d$ is the uniform measure. We now present our results for the group invariant setting.
\begin{theorem}[Group Invariant Kernels]\label{thm:group_invar}
 Let $\cG_d$ be a group of degeneracy $\alpha \leq 1$ according to Definition \ref{def:group_degeneracy}. Let $\{f_d \in L^2(\sphere, \cG_d)\}_{d \geq 1}$ a sequence of $\cG_d$-invariant functions such that for some $\eta > 0$, $\normLeta{f_d} = O_d(1)$, and let $\{H_d\}_{d \geq 1}$ be a sequence of $\cG_d$-invariant kernels satisfying Assumption \ref{ass:group-invar}. Assume that for some fixed integers $\texp \geq 0, \nexp \geq 1$ and some $\delta > 0$ that \[ d^{\texp + \delta} \leq t \leq d^{\texp + 1 - \delta},~~~~~ d^{\nexp - \alpha + \delta} \leq n \leq d^{\nexp + 1 -\alpha - \delta}.\]
 Then we have the following characterizations,
 \begin{enumerate}[label=(\alph*)]
 	\item (Oracle World) The oracle model learns every degree component of $f_d$ as time progresses
 	\[
 	\Rtest(\oracle_t) = \normL{\polyprojg{\texp} f_d}^2 + \sigma_\eps^2 + \od(1), ~~~~~\normL{\oracle_t - \polyprojl{\texp} f_d}^2 = \od(1).
 	\]
 	\item (Empirical World -- Train) Empirical training error follows oracle error then goes to zero
 	\begin{align*}
 	\Rtrain(\hat{f}_t) &= \normL{\polyprojg{\texp} f_d}^2 + \sigma_\eps^2 + \odp(1) && \text{ if } t/n = o_d(d^\alpha),\\
 	\Rtrain(\hat{f}_t) &= \odp(1) && \text{ if } t/n = \omega_d(d^\alpha).
 	\end{align*}
 	
 	\item (Empirical World -- Test) Empirical test error follows oracle error until the empirical model learns the degree-$\nexp$ component of $f_d$
 	\[
 	\Rtest(\hat{f}_t) = \normL{\polyprojg{\min\{\texp, \nexp\}} f_d}^2 + \sigma_\eps^2 + \odp(1), ~~~~~\normL{\hat{f}_t - \polyprojl{\min\{\texp, \nexp\}} f_d}^2 = \odp(1).
 	\]
 \end{enumerate}
\end{theorem}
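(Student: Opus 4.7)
The plan is to specialize the general analysis of Appendix \ref{sec:general_results} to the $\cG_d$-invariant setting, mirroring the proof of Theorem \ref{thm:rot-invar} with all effective dimensional counts shifted by the degeneracy $\alpha$. The key structural observation is that the invariant kernel $H_{d,\text{inv}}$ defined in \eqref{eq:invar} maps $L^2(\sphere, \cG_d)$ into itself and is diagonalized in a basis of $\cG_d$-invariant spherical harmonics, with the degree-$k$ eigenspace equal to $V_{d,k}(\cG_d)$. By Definition \ref{def:group_degeneracy} this eigenspace has dimension $\asymp d^{k-\alpha}$ rather than $d^k$, so the sample size needed to resolve degrees up to $k$ scales as $n \asymp d^{k-\alpha}$; this is precisely why the assumed sample window is centered at $d^{\nexp - \alpha}$ rather than $d^{\nexp}$ as in Theorem \ref{thm:rot-invar}.

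For part (a), I would expand $f_d = \sum_{k\geq 0} \polyproj_k f_d$ in the invariant basis. The oracle gradient flow \eqref{eqn:oracle_dynamics} started from $\oracle_0 \equiv 0$ is diagonal in this basis, with closed-form solution $\polyproj_k \oracle_t = (1 - e^{-\lambda_{d,k} t}) \polyproj_k f_d$, where $\lambda_{d,k}$ is the invariant eigenvalue on $V_{d,k}(\cG_d)$. Assumption \ref{ass:group-invar} supplies the scaling $\lambda_{d,k} \asymp d^{-k}$, so on the window $d^{\texp+\delta} \leq t \leq d^{\texp+1-\delta}$ the factor $\lambda_{d,k} t$ diverges for $k \leq \texp$ and vanishes for $k \geq \texp+1$. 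The $L^{2+\eta}$ bound on $f_d$ supplies uniform integrability to control the high-degree tail, giving $\normL{\oracle_t - \polyprojl{\texp} f_d}^2 = \od(1)$ and, by a standard bias--variance decomposition, $\Rtest(\oracle_t) = \normL{\polyprojg{\texp} f_d}^2 + \sigma_\eps^2 + \od(1)$.

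Parts (b) and (c) reduce to the random-matrix core of the argument. I would form the empirical kernel matrix $\bH = (H_{d,\text{inv}}(\bx_i,\bx_j))_{ij}$, write the linear gradient flow \eqref{eqn:empirical_dynamics} in closed form via the eigendecomposition of $\bH/n$, and decompose both $\Rtest(\emp_t) - \Rtest(\oracle_t)$ and $\Rtrain(\emp_t)$ into contributions from the ``resolved'' invariant subspace of degree $\leq \cut := \min(\texp, \nexp)$ and from its orthogonal complement. On the resolved subspace, the feature matrix $\bPsi$ built from the top invariant eigenfunctions has dimension $n \times \Theta(d^{\cut - \alpha})$ and is well-conditioned once $n \gg d^{\cut - \alpha}$; on the complementary high-frequency subspace, the kernel acts effectively isotropically and contributes a rescaled ridge-like regularization. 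Plugging these estimates into the matrix exponential $I - e^{-t\bH/n}$ and tracking the time window as in (a) yields the two regimes of (b), together with (c), the key point being that no matter how large $t$ grows the empirical model cannot learn invariant components of degree $> \nexp$ once $n \ll d^{\nexp + 1 - \alpha}$.

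The main obstacle is the random-matrix input on the invariant subspace: (i) a hypercontractivity estimate for $\cG_d$-invariant polynomials that yields the delocalization required by Assumption \ref{ass:group-invar}, and (ii) a concentration bound showing that $\bPsi^\sT \bPsi / n$ is close to the identity once $n \gg d^{\cut - \alpha}$ while the high-degree block of $\bH/n$ concentrates around a scalar matrix. Both inputs are available by adapting the invariant-kernel arguments of \cite{mei2021learning} to the present time-dependent setting. With these in hand, the remainder of the proof is a bookkeeping adaptation of the argument for Theorem \ref{thm:rot-invar}, with the sample-size exponent shifted by $-\alpha$ throughout.
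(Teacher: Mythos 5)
Your proposal is correct and matches the paper's approach: the proof specializes the general results (Theorems \ref{thm:oracle}, \ref{thm:train}, \ref{thm:test}) by taking $\cD_d = L^2(\sphere, \cG_d)$, setting $\tind(d) \asymp d^{\texp - \alpha}$ and $\nind(d) \asymp d^{\nexp - \alpha}$ as the number of invariant eigenfunctions of degree at most $\texp$ and $\nexp$, and then verifying Assumptions \ref{ass:KCP}, \ref{ass:eigenvalue}, and \ref{ass:valid_time} with the random-matrix inputs (hypercontractivity, eigenvalue-tail bounds, kernel-diagonal concentration) imported from \cite{mei2021learning}, exactly the route you sketch. One minor mislabeling: the delocalization estimate you need for invariant polynomials is part of the Kernel Concentration Property (Assumption \ref{ass:KCP}\ref{ass:hyper}), not Assumption \ref{ass:group-invar} (which is the hypothesis on the activation); this does not affect your argument.
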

With respect to the dot product kernel setting (c.f.\ Theorem \ref{thm:rot-invar}), in this setting the behavior of the oracle world is unchanged, but the empirical world behaves as if it has $d^\alpha$ times as many samples. This is illustrated graphically in Fig.\ \ref{fig:cyclic}. It can be shown that using an invariant kernel is equivalent to using a dot product kernel and augmenting the dataset to $\{(g \cdot \bx_i, y_i): g \in \cG_d, i \in [n]\}$ (c.f.\ Appendix \ref{sec:equiv}). Hence for the cyclic group which has size $d^\alpha = d$, we reach the following intriguing conclusion: if the target function is cyclically invariant, then using a dot product kernel and augmenting a training set of $n$ i.i.d.\ samples to $nd$ many samples is asymptotically equivalent to training with $nd$ i.i.d.\ samples. 

\begin{remark}[Optimization Speed versus Generalization]\label{rmk:opt_speed}
\revised{ Interestingly}, training with an invariant kernel \revised{ is} 
%\bin{why may be? for this example, it is the case} \nikhil{strengthened wording}
slower than with a dot product kernel and takes longer to interpolate the dataset despite eventually generalizing better on invariant function estimation tasks (c.f.\ Fig.\ \ref{fig:cyclic}). \revised{ This conclusion is 
%\bin{observation is real}
not an artifact of the continuous time analysis (c.f.\ Appendix \ref{sec:discrete}) and is observed empirically in Section \ref{sec:bootstrap} for discrete-time SGD}. \revised{ This example highlights the limitation of stability based analyses (e.g. \cite{hardt2016train}) which argue that faster SGD training leads to better generalization. \revised{ While a faster rate leads to better %\bin{the faster rate is associated with better generalization}
generalization in the first stage} when stability can control the gap between train and test error, our analysis shows that the \revised{ duration length of the first stage also impacts the final generalization error.}} %\sm{length?} %\bin{what does importance mean here? pls be specific and informative}
% It may also appear to contradict the takeaway from stability based analyses (e.g. \cite{hardt2016train}) that faster SGD training leads to better generalization. We remark however that the stability viewpoint is valid during the first stage, but a complete picture of generalization should take into account the first stage length. 
\end{remark}
\begin{remark}[Connection with Deep Phenomena]\label{rmk:deep_phenom}
The dynamics of high-dimensional kernel regression display behaviors \revised{ that parallel} \revised{ some} empirically observed phenomena in deep learning. For kernel regression, \revised{ we have shown that} the complexity of the empirical model\revised{, measured as the number of learned eigenfunctions,} \revised{ depends on the time optimized when $t \ll n$ and the sample size when $t \gg n$.} %\bin{what does "minimum of time optimized mean" -- clarify}. 
At a high-level, we also expect a similar story for neural networks \revised{ but for some other notion of complexity}. It is believed that neural networks first learn simple functions and then progressively more complex ones, \revised{ until the complexity saturates after interpolating at some time proportional to $n$ \citep{nakkiran2019sgd}.} 
%\bin{do we know this for sure, empirically proven? refs?}. 
\revised{ We have also shown that in kernel regression there is a non-trivial ``deep boostrap'' phenomenon \citep{nakkiran2020deep} during the second learning stage:} 
%\bin{why furthermore?} 
the gap between the oracle world and empirical world test errors is negligible whereas the train and test errors exhibit a substantial gap. 
%\bin{how does this relate to deep bootstrap? not very clear -- make it clear} 
The \revised{ gradient flow} results \revised{ for kernel regression} can also provide insight into the deep bootstrap for random feature networks trained with discrete-time SGD \revised{ as these results 
%\bin{what results?} 
can approximately predict their behavior} (see Section \ref{sec:bootstrap}). 
%\bin{why?}
\end{remark}
\begin{remark}[Connection with Deep Learning Practice]\label{rmk:practice}
Although we believe our results conceptually shed light on some of the interesting behaviors observed in the training dynamics of deep learning, \revised{due to our stylized setting we may not exactly see the predicted phenomena in practice.}
%\bin{in Remarks 5 and 6, we seem to say that we believe our results do explain some phenomena for DL, are we saying something different here? or our results don't explain other phenomen that we didn't mention -- clarify} \sm{Some reviewers will question us why in their experiments they didn't see the phenomena we presented. In this remark, I wanted to explain some of the limitations of our stylized model: we work in regression model, we assumed $d$ to be very large, we need some assumptions on the target function. } 
Accurately observing the three stages \revised{of kernel regression}
%\bin{kernel regression} 
requires sufficiently high-dimensional data in order for the kernel eigenvalues to obey a staircase-like decay and for training to be sufficiently long as the time axis should be in log-scale. Our results hold for regression whereas for classification the empirical model may continue improving after classifying the train set correctly. Despite these caveats, certain conclusions can be observed in some realistic settings (c.f.\ Appendix \ref{sec:empirical}).
\end{remark}
% \begin{remark}[Connection with Prior Work]
% Our results are closely related to and build off of \cite{mei2021generalization, mei2021learning}. In these works however, only the test error of the model at $t = \infty$ is considered. In this paper we study the dynamics of training and obtain results for the full trajectory. This draws a closer parallel to practical training of neural networks where early-stopping is typically necessary for computational reasons. 
% \end{remark}

\section{Numerical Simulations}\label{sec:numerical}
As mentioned previously, Fig.\ \ref{fig:staircase} is a ``cartoon" of the conclusions stated in Theorem \ref{thm:rot-invar} and \ref{thm:group_invar}. In this section, we verify the qualitative predictions of our theorems using synthetic data. Concretely, throughout this section we take $d = 400$ and $n = d^{1.5} = 8000$, and following our theoretical setup (c.f.\ Section \ref{sec:problem_setup}) generate covariates $(\bx_i)_{i \le n} \sim_{iid} \Unif(\sphere)$ and responses $y_i = f_\star(\bx_i) + \eps_i$ with $(\eps_i)_{i \le n} \sim_{iid} \cN(0, \sigma_\eps^2)$, for different choices of target function $f_\star$. All simulations in this section are for the noiseless case $\sigma_\eps^2 = 0$ but a noisy example is given in Appendix \ref{sec:additional_figs}. 

In Section \ref{sec:kernel-gradient-flow}, we simulate the gradient flows of kernel least-squares with dot product kernels and cyclic kernels (Fig.\ \ref{fig:gf_rot_inv}) to reproduce the three stages as shown in Fig.\ \ref{fig:staircase}. In Section \ref{sec:bootstrap} we show that SGD training of (dot product and cyclic) random-feature models (Fig.\ \ref{fig:rf_bootstrap}) exhibit similar three stages phenomena, in which the second stage behaviors are consistent with the deep bootstrap phenomena observed in deep learning experiments \citep{nakkiran2020deep}. Empirical quantities are averaged over 10 trials and the shaded regions indicate one standard deviation from the mean.

\subsection{Gradient Flow of Kernel Least-Squares}\label{sec:kernel-gradient-flow}
\begin{figure}[ht!]
    \centering
    \begin{subfigure}[t]{0.3\textwidth}
        \centering
        \includegraphics[width=\linewidth]{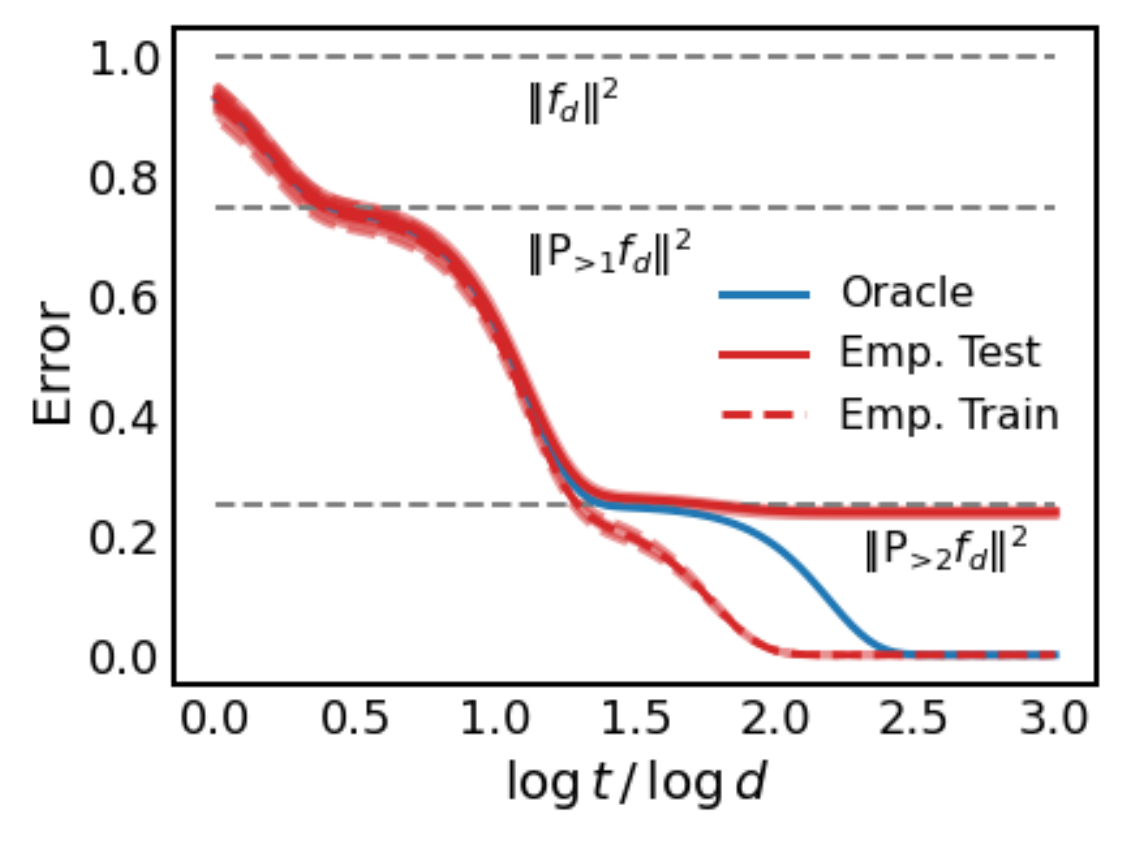} 
        \caption{} \label{fig:log_relu}
    \end{subfigure}
    % \begin{subfigure}[t]{0.3\textwidth}
    %     \centering
    %     \includegraphics[width=\linewidth]{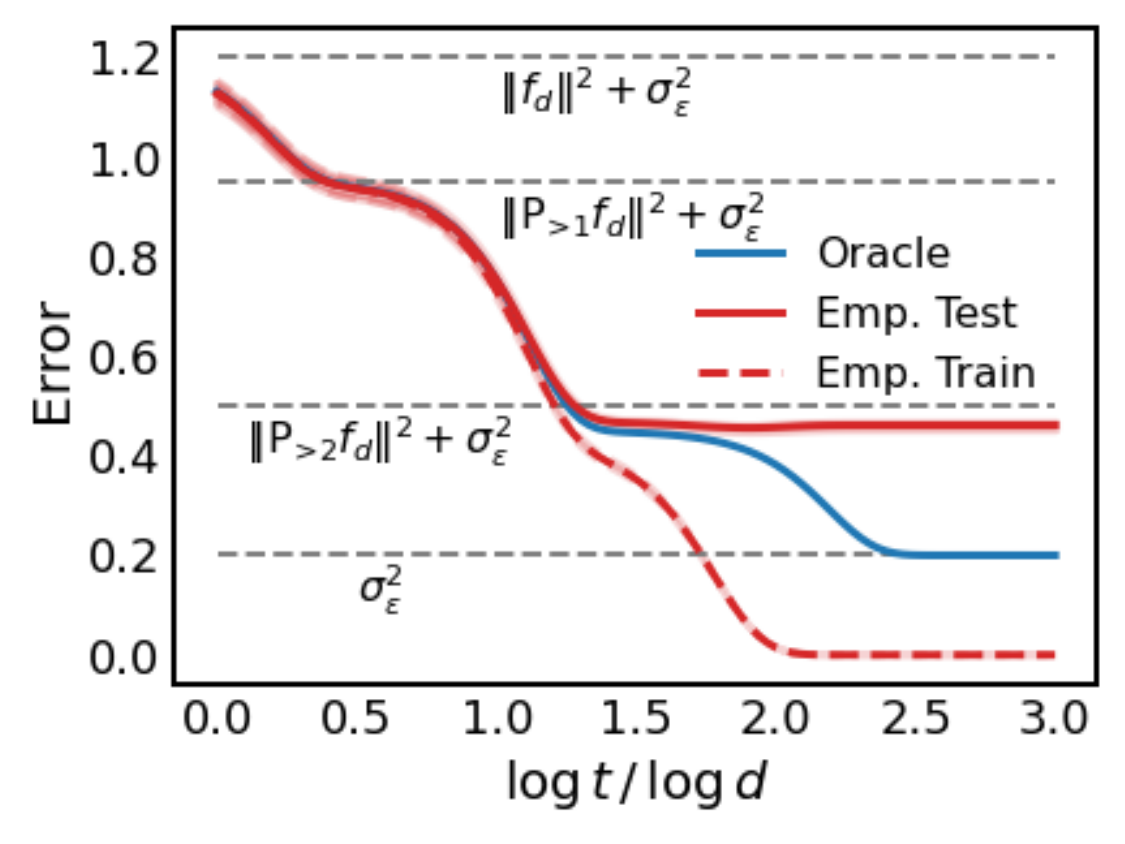} 
    %     \caption{$\sigma_\eps^2 = 0.2$} \label{fig:log_relu_noise}
    % \end{subfigure}
    \begin{subfigure}[t]{0.3\textwidth}
        \centering
        \includegraphics[width=\linewidth]{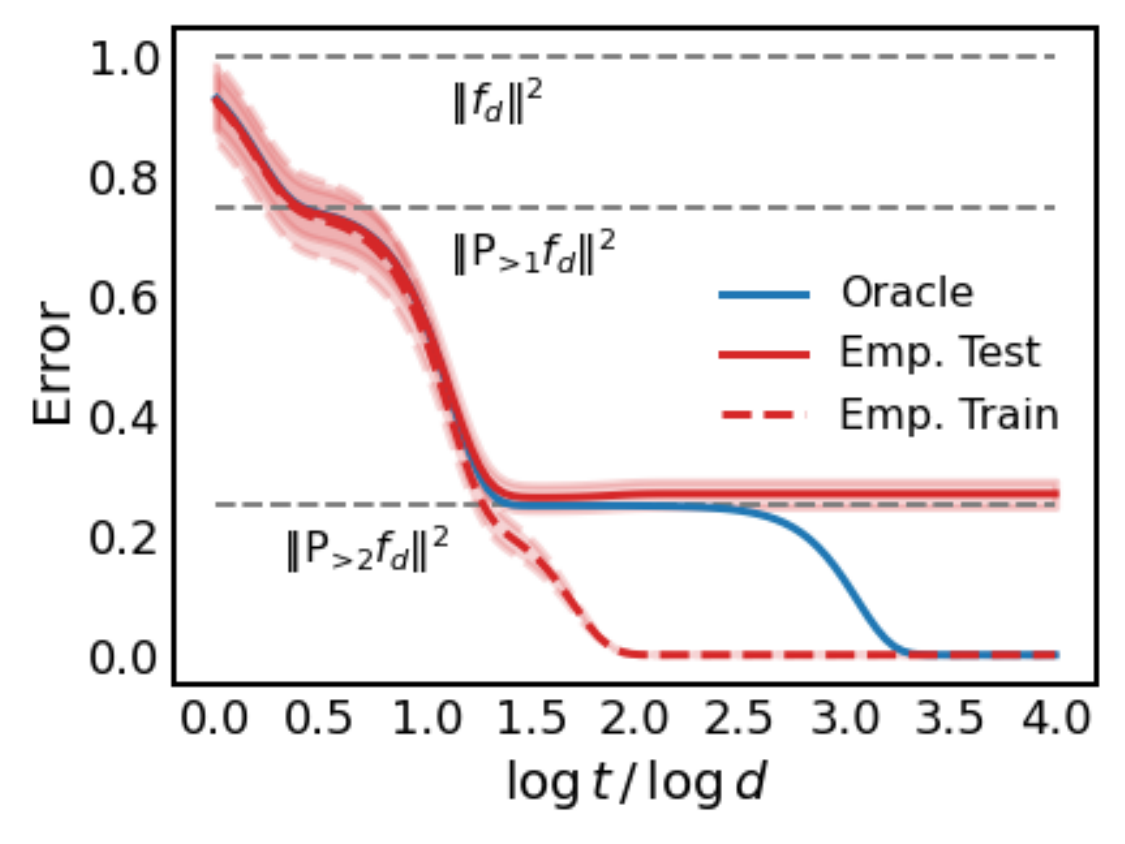} 
        \caption{} \label{fig:log_relu_cubic}
    \end{subfigure}
    \begin{subfigure}[t]{0.3\textwidth}
        \centering
        \includegraphics[width=\linewidth]{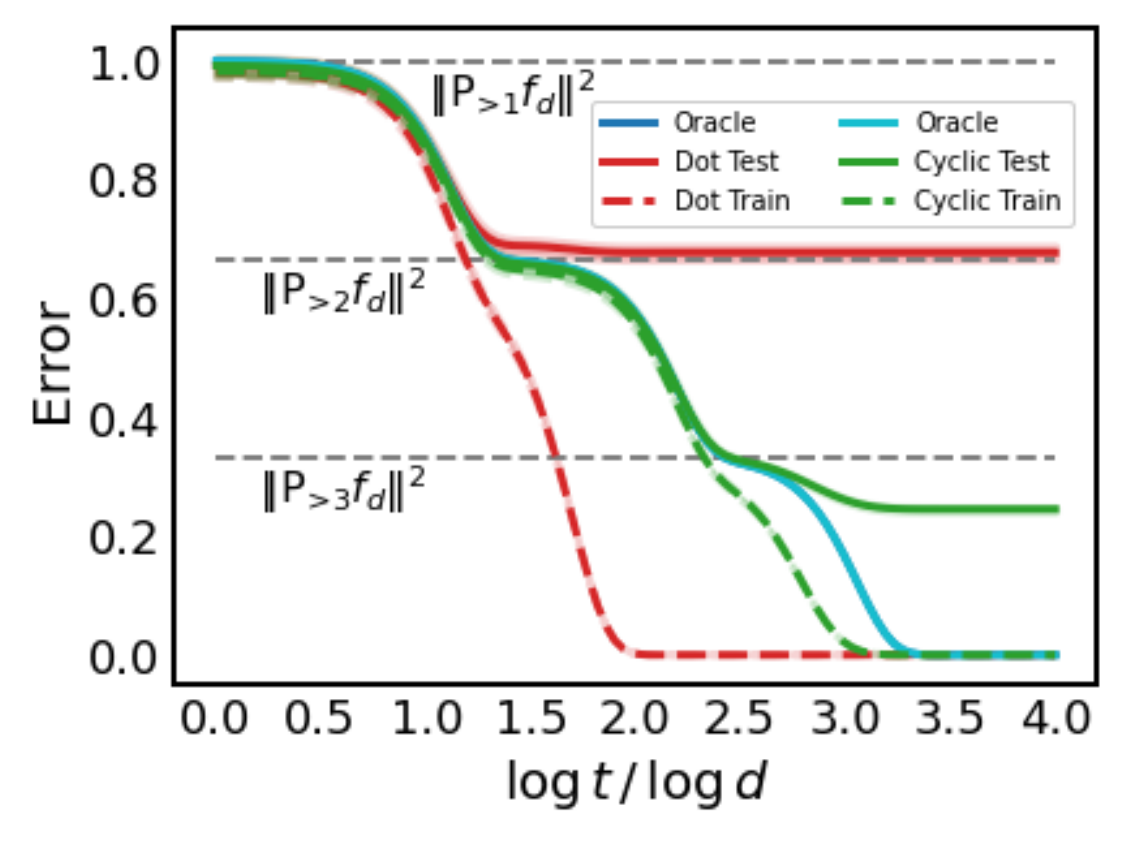} 
        \caption{} \label{fig:log_cyclic}
    \end{subfigure}
    
    \begin{subfigure}[t]{0.3\textwidth}
        \centering
        \includegraphics[width=\linewidth]{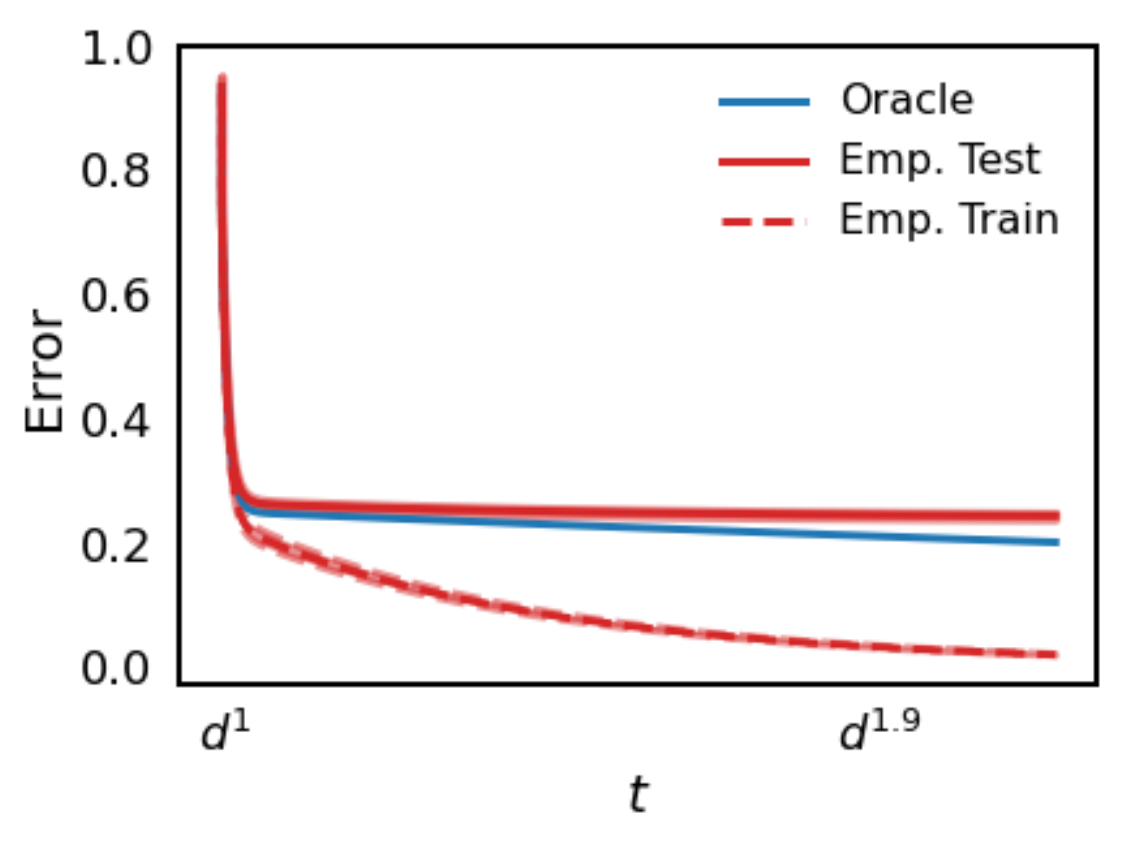} 
        \caption*{} \label{fig:lin_relu}
    \end{subfigure}
    % \begin{subfigure}[t]{0.3\textwidth}
    %     \centering
    %     \includegraphics[width=\linewidth]{} 
    %     \caption*{} \label{fig:lin_relu_noise}
    % \end{subfigure}
    \begin{subfigure}[t]{0.3\textwidth}
        \centering
        \includegraphics[width=\linewidth]{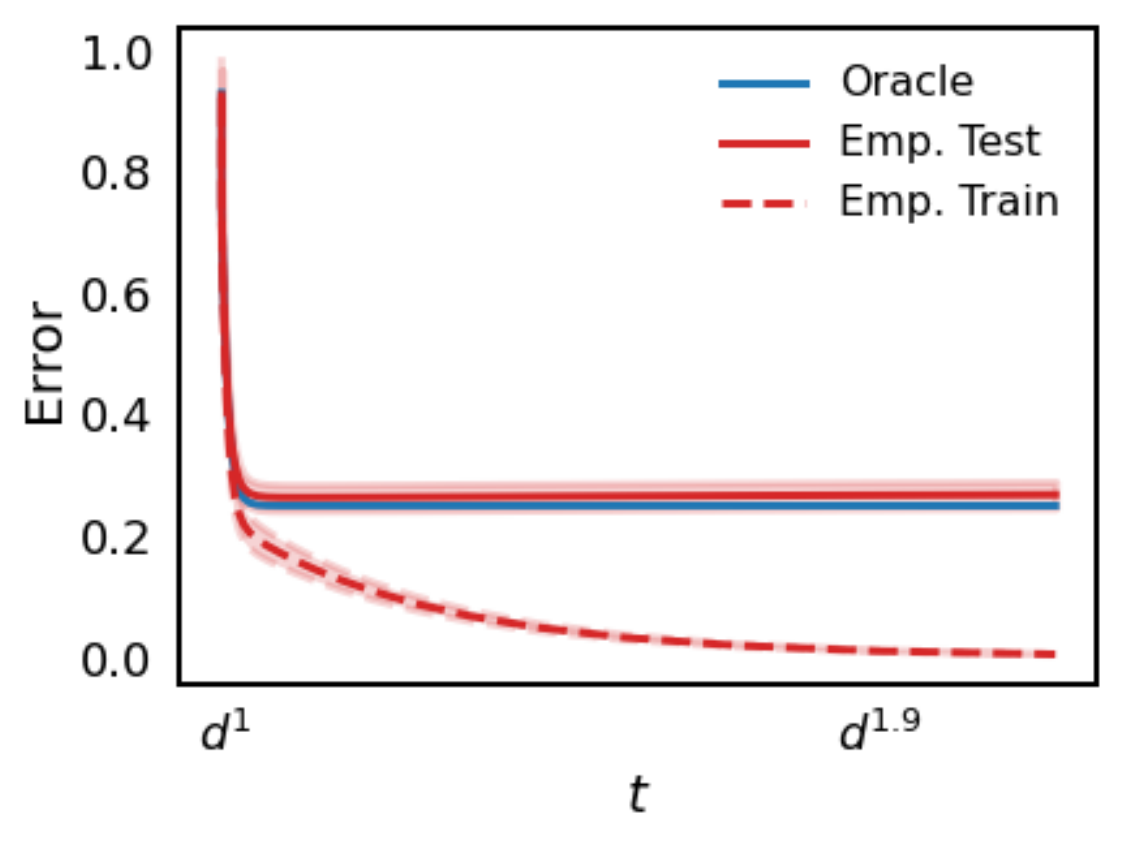} 
        \caption*{} \label{fig:lin_relu_cubic}
    \end{subfigure}
    \begin{subfigure}[t]{0.3\textwidth}
        \centering
        \includegraphics[width=\linewidth]{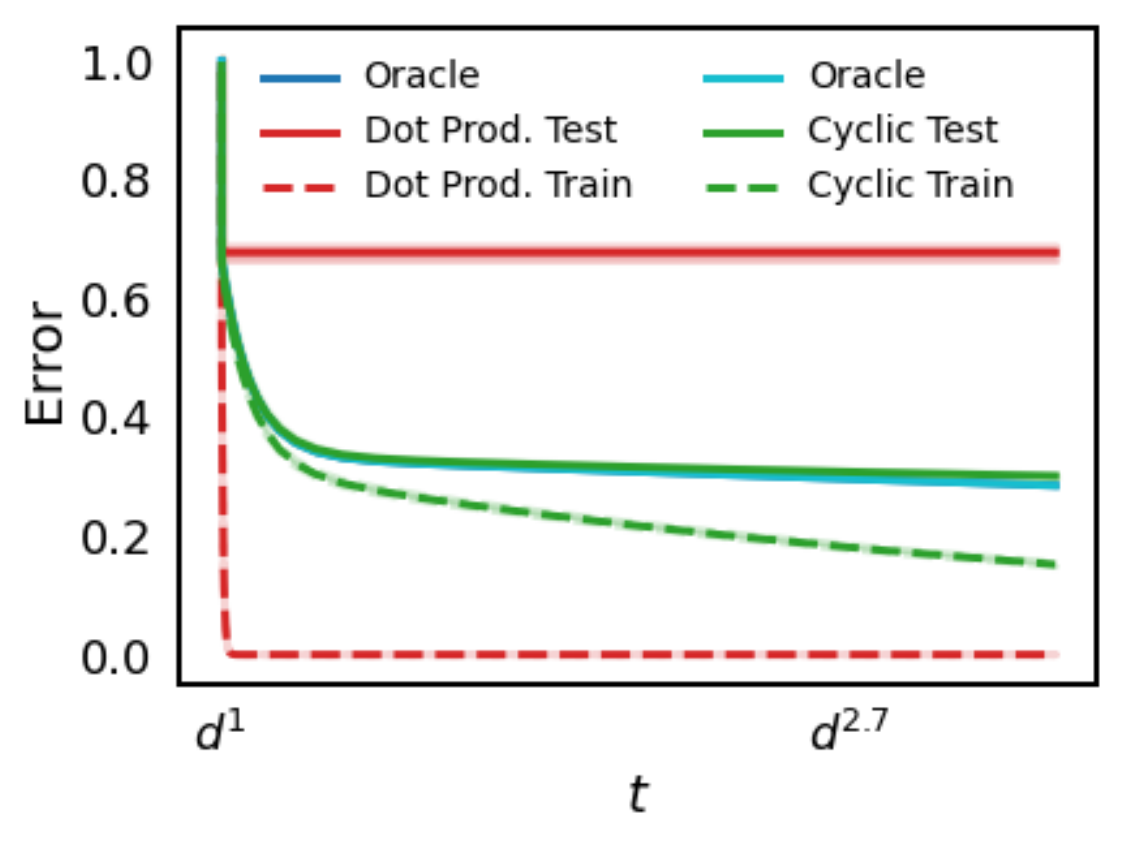} 
        \caption*{} \label{fig:lin_cyclic}
    \end{subfigure}
    \squeezeup
    \caption{\textbf{Top row:} Log-scale plot of errors versus training time for dot product kernel (\ref{fig:log_relu}, \ref{fig:log_relu_cubic}) and \{dot product, cyclic\} kernels in (\ref{fig:log_cyclic}). In (\ref{fig:log_relu}) $\sigma = \ReLU$ and $(a_0, a_1, a_2) = (1/2, 1/\sqrt{2}, 1/\sqrt{8})$. In (\ref{fig:log_relu_cubic}), $\sigma = \ReLU + \, 0.1\He_3$ and $(a_0, a_1, a_2, a_3) = (1/2, 1/\sqrt{2}, 0, 1/\sqrt{24})$. In (\ref{fig:log_cyclic}), $f_\star$ is Eq.\ (\ref{eq:cubic_cyclic}) and $\sigma = \ReLU  + \, 0.1\He_3$. \textbf{Bottom row:} Same as the top row but with zoomed-in linear-scale time.}
    \label{fig:gf_rot_inv}
\end{figure}

Under the synthetic data set-up mentioned earlier, we first simulate the oracle world and empirical world errors curves of gradient flow dynamics using dot product kernels of the form
\begin{equation}
H(\bx_1, \bx_2) = \E_{\bw \sim \S^{d-1}}[\sigma(\inner{\bw, \bx_1}) \sigma(\inner{\bw, \bx_2})], \label{eq:rf_kernel}
\end{equation}
for some activation function $\sigma$. We will examine a few different choices of $f_\star$ and kernel $H$, which are specified in the descriptions of each figure. 

The oracle world error is computed analytically but the empirical world errors require sampling train and test datasets. We compute empirical world errors by averaging over $10$ trials. The results are visualized both in log-scale and linear-scale on the time axis. The log-scale plots allow for direct comparison with the cartoons in Fig.\ \ref{fig:staircase}. The linear-scale plots are zoomed into the region $0 < t \leq nd^{0.4}$ since: 1) plotting the full interval squeeze all curves to the left boundary which is uninformative 2) in practice one would not optimize for very long after interpolation.

In Figs.\  \ref{fig:log_relu} and \ref{fig:log_relu_cubic} we take the target function to be a polynomial of the form
\begin{equation}\label{eq:hermite_fun}
f_\star(\bx) = a_0 \He_0(x_1) + \ldots + a_k \He_k(x_1), \quad \normL{\polyproj_j f_\star}^2 \approx a_j^2 j!,
\end{equation}
where $\He_i(t)$ is the $i$th Hermite polynomial (c.f.\ Appendix \ref{sec:Hermite}) and the approximate equality in Eq.\ (\ref{eq:hermite_fun}) holds in high-dimensions. 
% We take the kernel $H$ to be a dot product kernel with activation function $\sigma$. 
In panel (\ref{fig:log_relu}) we consider use the ReLU activation function $\sigma(t) = \max(t, 0)$, and take $f_\star$ to be a quadratic polynomial with $(a_0, a_1, a_2) = (1/2, 1/\sqrt{2}, 1/\sqrt{8})$. With such a choice of parameters, we can see the three stages phenomenon. 
% In panel \ref{fig:log_relu_noise} we choose $\sigma_\eps^2 = 0.2$ and still use the same activation function and the same target function. In this noisy setting we can additionally see the benign overfitting phenomenon: the model interpolates the noisy training data, but the test error only increases by $\sigma_\eps^2$ which is the approximation error. 
In panel (\ref{fig:log_relu_cubic}) we choose $f_\star$ to be a cubic polynomial with $(a_0, a_1, a_2, a_3) = (1/2, 1/\sqrt{2}, 0, 1/\sqrt{24})$ and $\sigma(t) = \max(t, 0) + 0.1\He_3(t)$ (we need the third Hermite coefficient of $\sigma$ to be non-zero for stage 3 to occur). This choice of coefficients for $f_\star$ is such that $\| \polyprojg{1} f_\star \|_{L^2}^2 \approx \| \polyprojg{2} f_\star \|_{L^2}^2$, so that the second stage in (\ref{fig:log_relu_cubic}) is longer compared to (\ref{fig:log_relu}). %To ensure stage 3 occurs in \ref{fig:log_relu_cubic}, we need the kernel to be formed by an activation function $\sigma$ with a non-zero third Hermite coefficient, and here we choose .
% \textbf{Three-stages}:
% \begin{itemize}
%     \item $H_d(\bx, \bx') = \E_{\bw}[\sigma(\inner{\bw, \bx})\sigma(\inner{\bw, \bx})]$ where $\bw \sim_{iid} \Unif(\S^{d-1}(1))$ and $\norm{\bx}_2 = \sqrt{d}$.
    
%     \item $f_d(\bx) = a_0 He_0(x_1) + a_1He_1(x_1) + \ldots + a_kHe_k(x_1)$
% \end{itemize}
% Take $d = 400$, $n = d^{1.5} = 8000$, $\sigma_\eps^2 = 0$, $\sigma(x) = \max(x, 0)$,  and $(a_0, a_1, a_2) = (1/2, 1/\sqrt{2}, 1/\sqrt{8})$. To 
% With noise: Take $d = 400$, $n = d^{1.5} = 8000$, $\sigma_\eps^2 = 0.2$, $\sigma(x) = \max(x, 0)$,  and $(a_0, a_1, a_2) = (1/2, 1/\sqrt{2}, 1/\sqrt{8})$.
% \textbf{TODO}: different $\sigma$ or $f_d$?
% \textbf{TODO}: Add curves for cyclic kernel and cyclic inv. $f_d$

In Fig.\ \ref{fig:log_cyclic}, we take the target function to be a cubic cyclic polynomial  
\begin{equation}\label{eq:cubic_cyclic}
    f_\star(\bx) = \frac{1}{\sqrt{3d}}\qty(\sum\limits_{i = 1}^d x_i + \sum\limits_{i = 1}^d x_i x_{i+1} + \sum\limits_{i = 1}^d x_i x_{i+1} x_{i+2}),
\end{equation}
\revised{ where the subindex addition in $x_{i+k}$ is understood to be taken modulo $d$.} We compare the performance of the dot product kernel $H$ and its invariant version $H_{\text{inv}}$ (c.f.\ Eq.\ (\ref{eq:invar})) with activation function $\sigma(t) = \max(t, 0) + 0.1\He_3(t)$. The kernel $H_{\text{inv}}$ with $n$ samples performs equivalently to $H$ with $nd$ samples (c.f.\ Remark \ref{rmk:opt_speed}), but is more computationally efficient since the size of the kernel matrix is still $n \times n$. Using $H_{\text{inv}}$ elongates the first stage by a factor $d$, delaying the later stages and ensuring that the empirical world model improves longer.

Although in the simulations, the dimension $d$ is not yet high enough to see a totally sharp staircase phenomenon as in the illustrations of Fig.\ \ref{fig:staircase}, even for this $d$ we are still able to clearly see the three predicted learning stages and deep bootstrap phenomenon across a range of settings. To better understand the effect of dimension we show similar plots with varying $d$ in Appendix \ref{sec:additional_figs}. 

\subsection{SGD for Two-layer Random-Feature Models}\label{sec:bootstrap}
To more closely relate with deep learning practice and the deep bootstrap phenomenon \citep{nakkiran2020deep}, we simulate the error curves of SGD training on random-feature (RF) models (i.e.\ two-layer networks with random first-layer weights and trainable second-layer weights), in the same synthetic data setup as before. In particular, we look at dot product RF models
\[
\hat{f}_{\rm dot}(\bx; \ba) = \frac{1}{\sqrt{N}} \sum\limits_{i = 1}^N a_i \sigma(\inner{\bw_i, \bx}),~~~~~~~ \bw_i \sim_{iid} \Unif(\S^{d-1}), 
\]
and cyclic invariant RF models
\[
\hat{f}_{\rm cyc}(\bx; \ba) = \frac{1}{\sqrt{N}} \sum\limits_{i = 1}^N a_i \int_{\cG_d} \sigma(\inner{\bw_i, g \cdot \bx}) \pi_d(\dd{g}),~~~~~~~ \bw_i \sim_{iid} \Unif(\S^{d-1}).
\]
%where we only train the weights of the second layer $a_i$.
For all following experiments we take the activation $\sigma$ to be ReLU and $N = 4 \times 10^5 \approx n^{1.4}$.

%We generate error curves for the empirical and oracle world as in \cite{nakkiran2019deep}.
For a \revised{ given} data distribution and RF model
%\bin{what does fixed mean here? don't we train the second layer} 2-layer RF model, 
we train two fitted functions, one on a finite dataset (empirical world) and the other on the data distribution (oracle world). More specifically, the training of the empirical model is done using multi-pass SGD on a finite training set of size $n$ with learning rate $\eta = 0.1$ and batch size $b = 50$. The training of the oracle model is done using one-pass SGD with the same learning rate $\eta$ and batch size $b$, but at each iteration a fresh batch is sampled from the population distribution. Both models 
%\bin{what are the both models? uncler} 
\revised{ $\emp_t, \oracle_t$} are initialized with $a_i = 0$ for $i \in [N]$. To speed up and stabilize optimization we use momentum $\beta = 0.9$. Note that if we took $N \to \infty$, $\eta \to 0$, and $\beta = 0$ we would be exactly in the dot product kernel gradient flow setting.

In Fig.\ \ref{fig:rf_bootstrap}, the data generating distributions of panels (\ref{fig:bs_relu_quad}), (\ref{fig:bs_relu_cubic}), (\ref{fig:bs_relu_cyclic}) are respectively the same as that of panels (\ref{fig:log_relu}), (\ref{fig:log_relu_cubic}), and (\ref{fig:log_cyclic}) from Section \ref{sec:kernel-gradient-flow}. The top row of Fig.\ \ref{fig:rf_bootstrap} shows SGD for $\{$dot product, cyclic$\}$ RF models, and the bottom row shows the corresponding gradient flow for $\{$dot product, cyclic$\}$ kernel least-squares. We see that the corresponding curves in these two rows exhibit qualitatively the same behaviors. Additionally, the results in panel (\ref{fig:bs_relu_cyclic}) show that as predicted, even for discrete SGD dynamics the dot product RF optimizes \revised{ faster} but fails to generalize, whereas the invariant RF optimizes slower but generalizes better.

\begin{figure}
\centering
    \begin{subfigure}[t]{0.3\textwidth}
        \centering
        \includegraphics[width=\linewidth]{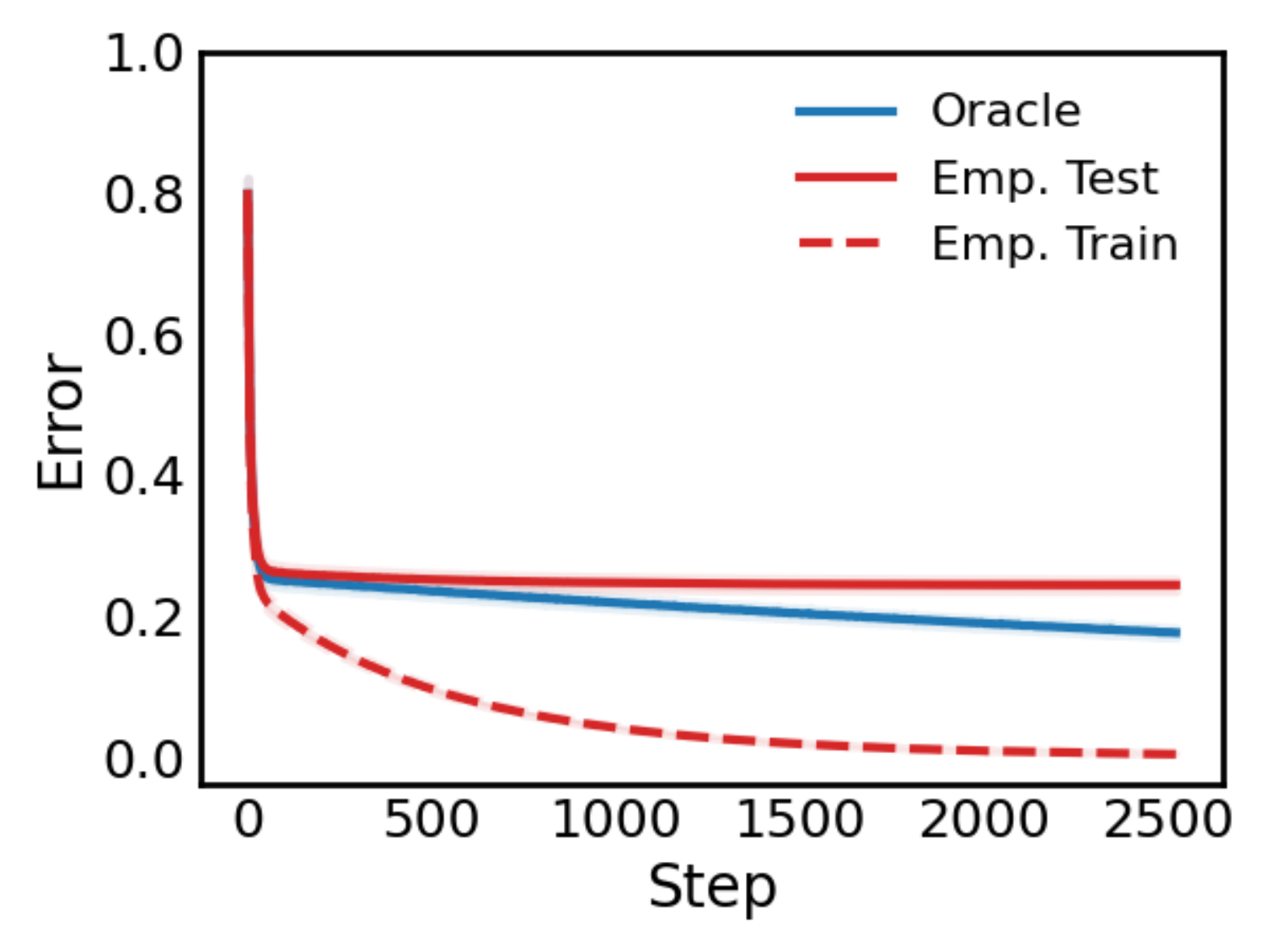} 
        \caption{} \label{fig:bs_relu_quad}
    \end{subfigure}
    \begin{subfigure}[t]{0.3\textwidth}
        \centering
        \includegraphics[width=\linewidth]{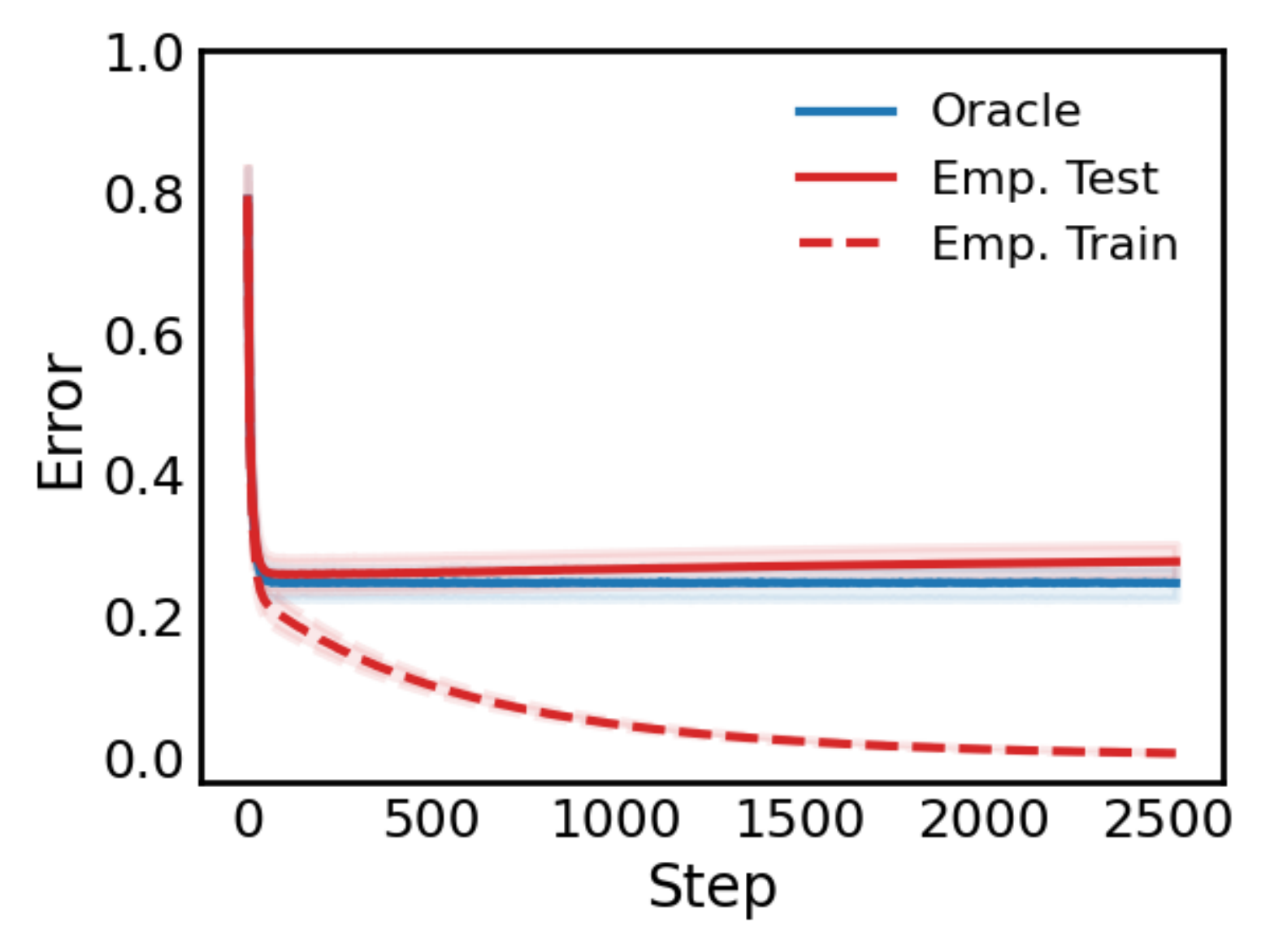} 
        \caption{} \label{fig:bs_relu_cubic}
    \end{subfigure}
    \begin{subfigure}[t]{0.3\textwidth}
        \centering
        \includegraphics[width=\linewidth]{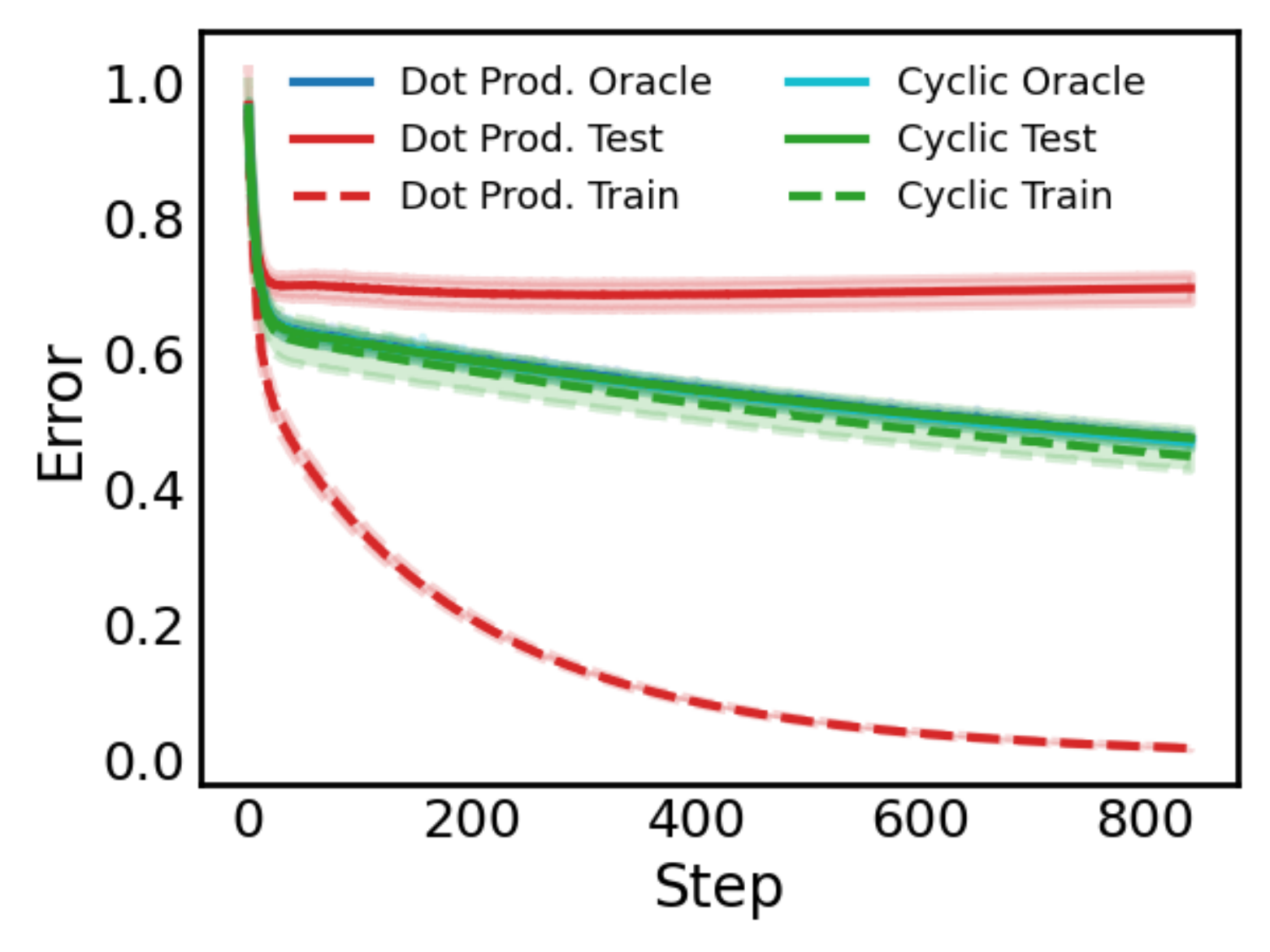} 
        \caption{} \label{fig:bs_relu_cyclic}
    \end{subfigure}
    
    \begin{subfigure}[t]{0.3\textwidth}
        \centering
        \includegraphics[width=\linewidth]{iclr2022/figures/linear_scale_relu_d=400.pdf} 
        \caption*{} \label{fig:theory_relu_quad}
    \end{subfigure}
    \begin{subfigure}[t]{0.3\textwidth}
        \centering
        \includegraphics[width=\linewidth]{iclr2022/figures/linear_scale_relu_cubic_d=400.pdf} 
        \caption*{} \label{fig:theory_relu_cubic}
    \end{subfigure}
    \begin{subfigure}[t]{0.3\textwidth}
        \centering
        \includegraphics[width=\linewidth]{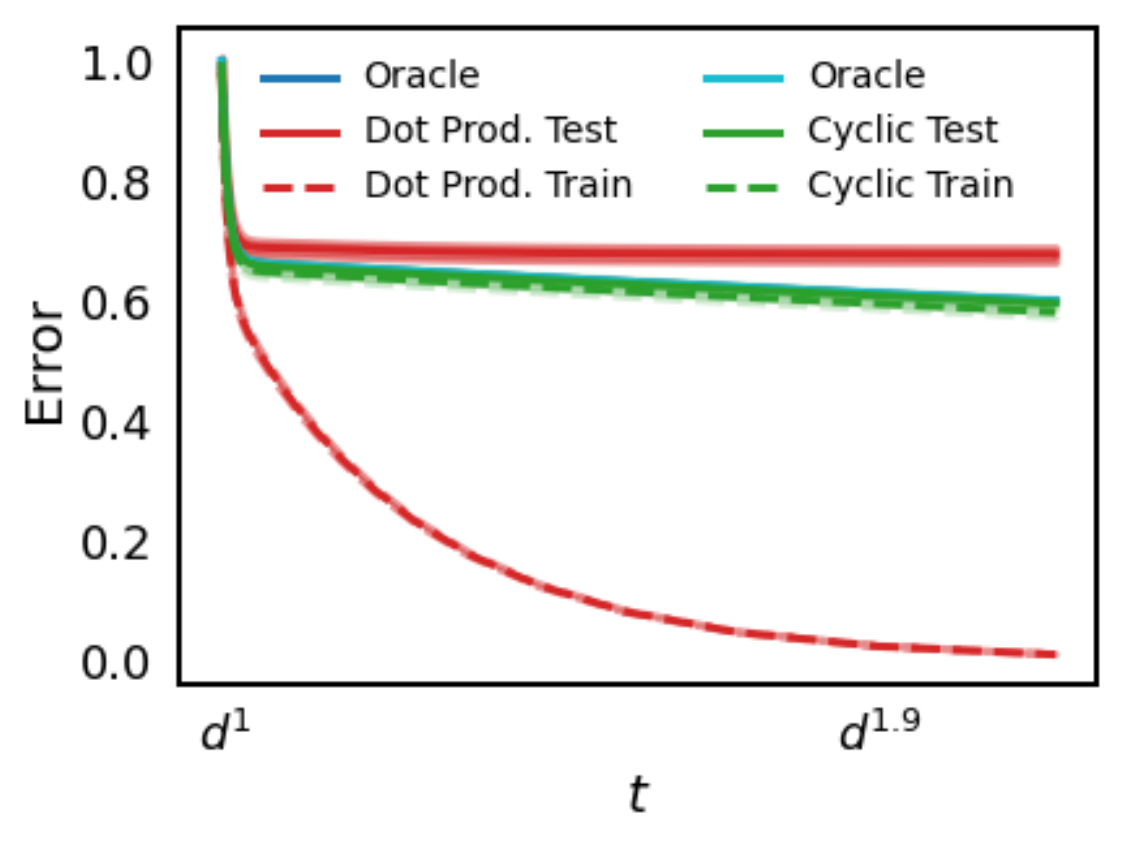} 
        \caption*{} \label{fig:theory_relu_cyclic}
    \end{subfigure}
    \squeezeup
    \squeezeup
\caption{\textbf{Top Row:} SGD dynamics of random-feature models as described in Section \ref{sec:bootstrap}. The target function and data distribution of (\ref{fig:bs_relu_quad}), (\ref{fig:bs_relu_cubic}), (\ref{fig:bs_relu_cyclic}) are that of (\ref{fig:log_relu}), (\ref{fig:log_relu_cubic}), (\ref{fig:log_cyclic}) respectively. \textbf{Bottom Row:} The corresponding linear-scale errors of kernel least-squares gradient flow. }
\label{fig:rf_bootstrap}
\squeezeup
\end{figure}

\section{Summary and Discussion}\label{sec:discussion}
In this paper, we used precise asymptotics to study the oracle world and empirical world dynamics of gradient flow on kernel least-squares objectives for high-dimensional regression problems. Under reasonable conditions on the target function and kernel, we showed that in this setting there are three learning stages based on the behaviors of the empirical and oracle models and also connected our results to some empirical deep learning phenomena.

Although our setting already captures some interesting aspects of deep learning training dynamics, there are some limitations which would be interesting to resolve in future work. We require very high-dimensional data in order for the asymptotics to be accurate, but real data distributions have low-dimensional structure. \revised{We work in a limiting regime of neural network training where the dynamics are linear and the step-size is infinitesimal. It is an important direction to extend this analysis to the non-linear feature learning regime and to consider discrete step-size minibatch SGD, as these are considered important aspects of network training.}
%???in this paper or right now but not in this paper?? ??not low-dimensional feature space setting as we mentioned earlier??. 
Our results hold for the square-loss in regression problems, but many deep learning problems involve classification using cross-entropy loss. Lastly, our analysis holds specifically for gradient flow, so it would be also interesting to consider other iterative learning algorithms such as boosting. 
\subsection*{\textbf{Acknowledgements}}
We would like to thank Preetum Nakkiran for helpful discussions and for reviewing an early draft of the paper. This research is kindly supported in part by NSF TRIPODS Grant 1740855, DMS-1613002, 1953191,
2015341, IIS 1741340, the Center for Science of Information (CSoI), an NSF Science and Technology Center, under grant agreement CCF-0939370, NSF grant 2023505 on Collaborative Research:
Foundations of Data Science Institute (FODSI), the NSF and the Simons Foundation for the Collaboration on the Theoretical Foundations of Deep Learning through awards DMS-2031883 and
814639, and a grant from the Weill Neurohub.

\bibliography{ref}
\bibliographystyle{iclr2022_conference}

\clearpage

\appendix

\section{General Setting}
In this section we present our theory for training dynamics of kernel regression in an abstract setting similar to that of \cite{mei2021generalization}. We first introduce the setting of interest, then state the relevant assumptions, and finally we provide our theoretical results. We provide proofs of these results in Appendix \ref{sec:proofs}. %\sm{Add some preliminaries in the invariant paper} %\sm{The setting is similar to that of [MMM21]} \nikhil{..}

\subsection{Problem Setup}\label{sec:setup}

Consider a sequence of Polish probability spaces $(\cX_d, \nu_d)$, where $\nu_d$ is a probability measure on the configuration space $\cX_d$, indexed by an integer $d$. We denote by $L^2(\cX_d) = L^2(\cX_d, \nu_d)$ the space of square integrable functions on $(\cX_d, \nu_d)$. For $p \geq 1$, we denote $\norm{f}_{L^p(\cX_d)} = \E_{\bx \sim \nu_d}[|f(\bx)^{p}|]^{1/p}$ the $L^p$ norm of $f$. Let $\cD_d \subseteq L^2(\cX_d)$ be a closed linear subspace. In some simple applications we will consider $\cD_d = L^2(\cX_d)$, but the extra generality will be useful in certain applications.

We are concerned with a supervised learning problem where we are given i.i.d.\ data $(\bx_i, y_i)_{i \leq n}$. The feature vectors $\bx_i \sim_{iid} \nu_d$ are in $\cX_d$ and the empirical-valued noisy responses $y_i$ are given by
\[
    y_i = f_d(\bx_i) + \eps_i,
\]
for some unknown target function $f_d \in \cD_d$ and $\eps_i \sim_{iid} \cN(0, \sigma_\eps^2)$.

We consider a general RKHS defined on $(\cX_d, \nu_d)$ via the compact self-adjoint positive definite operator $\kernelop_d: \cD_d \to \cD_d$ which admits the representation
\[
    \kernelop_d g(\bx) = \int_{\cX_d} H_d(\bx, \bx') g(\bx') \nu_d(\dd{\bx'}),
\]
where $H_d \in L^2(\cX_d \times \cX_d)$ with the property that $\int_{\cX_d} H_d(\bx, \bx') g(\bx') \nu_d(\dd{\bx'}) = 0$ for $g \in \cD_d^\perp$.

By the spectral theorem of compact operators, there exists an orthonormal basis $(\psi_j)_{j \geq 1}$ such that $\Span(\psi_j, j \geq 1) = \cD_d \subseteq L^2(\cX_d)$ and empirical eigenvalues $(\lambda_{d,j})_{j \geq 1}$ with nonincreasing absolute values $|\lambda_{d,1}| \geq |\lambda_{d,2}| \geq \cdots $ and $\sum_{j \geq 1} \lambda_{d,j}^2 < \infty$ such that
\[
H_d(\bx_1, \bx_2) = \sum\limits_{j=1}^\infty \lambda_{d,j}^2 \psi_j(\bx_1) \psi_j(\bx_2),
\]
where convergence holds in $L^2(\cX_d \times \cX_d)$.

For $S \subseteq \{1, 2, \ldots\}$ we denote $\proj_{S}$ to be the projection operator from $L^2(\cX_d)$ onto the subspace $\cD_{d,S} := \Span(\psi_j, j \in S)$. We denote $\kernelop_{d,S}$ to be the operator
\[
    \kernelop_{d,S} = \sum\limits_{j=1}^\infty \lambda_{d,j}^2 \psi_j \psi_j^\star
\]
and $H_{d,S}$ the corresponding kernel.

If $S = \{j \in \N: j \leq \ell\}$ we will write as short-hand $\kernelop_{d, \leq \ell}$ and analogously for $S = \{j \in \N: j > \ell\}$. The trace of this operator is given by
\[
\Tr(\kernelop_{d,S}) \equiv \sum\limits_{j \in S} \lambda_{d,j}^2 = \E_{\bx \sim \nu_d}[H_{d,S}(\bx, \bx)] < \infty. 
\]
Define the test error $\Rtest : L^2(\cX_d) \to \R$ and training error $\Rtrain : L^2(\cX_d) \to \R$
\begin{equation}
\Rtest(f) \equiv \E_{(\bx_{\rm{new}}, y_{\rm{new}})}\{(y_{\rm{new}} - f(\bx_{\rm{new}}))^2\},~~~~~~~ \Rtrain(f) \equiv \frac{1}{n} \sum\limits_{i=1}^n (y_i - f(\bx_i))^2,
\end{equation}
where $(\bx_1, y_1), \ldots, (\bx_n, y_n), (\bx_{\rm{new}}, y_{\rm{new}})$ are i.i.d. For a kernel $H_d \in L^2(\cX_d \times \cX_d)$ we will consider the oracle model $\oracle_t$ and the empirical model $\emp_t$ which satisfy the following gradient flows
\begin{align}
        \dv{t} \oracle_t(\bx) &= -\grad \Rtest(\oracle_t(\bx)) = \E_{\bz \sim \nu_d} [H_d(\bx, \bz)(f_d(\bz) - \oracle_t(\bz))], \label{eqn:oracle_dynamics_app}\\
        \dv{t} \hat{f}_t(\bx) &= -\grad \Rtrain(\emp_t(\bx)) = \frac{1}{n} \sum\limits_{i=1}^n H_d(\bx, \bx_i)(y_i - \hat{f}_t(\bx_i)).\label{eqn:empirical_dynamics_app}
\end{align}

\subsection{General Assumptions}\label{sec:general_assump}
We now state our assumptions on the kernel and the sequence of probability spaces $(\cX_d, \nu_d)$. 
\begin{assumption}[$\{n(d), \nind(d)\}_{d \geq 1}$-Kernel Concentration Property]\label{ass:KCP}
We say that the sequence of operators $\{\kernelop_d\}_{d \geq 1}$ satisfies the Kernel Concentration Property (KCP) with respect to the sequence $\{n(d), \nind(d)\}_{d \geq 1}$ if there exists a sequence of integers $\{r(d)\}_{d \geq 1}$ with $r(d) \geq \nind(d)$, such that the following conditions hold.
\begin{enumerate}[label=(\alph*)]
    \item\label{ass:hyper} (Hypercontractivity of finite eigenspaces.) For any fixed $q \geq 1$, there exists a constant $C$ such that for any $h \in \cD_{d, \leq r(d)} = \Span(\psi_s, 1 \leq s \leq r(d))$, we have
    \[
    \norm{h}_{L^{2q}} \leq C \norm{h}_{L^2}.
    \]
    \item\label{ass:eigenval_decay} (Properly decaying eigenvalues) There exists fixed $\delta_0 > 0$, such that, for all $d$ large enough,
    \begin{align*}
        n(d)^{2 + \delta_0} &\leq \frac{(\sum_{j=r(d)+1}^\infty \lambda_{d,j}^4)^2}{\sum_{j=r(d)+1}^\infty \lambda_{d,j}^8},\\
        n(d)^{2+\delta_0} &\leq \frac{(\sum_{j=r(d)+1}^\infty \lambda_{d,j}^2)^2}{\sum_{j=r(d)+1}^\infty \lambda_{d,j}^4}.
    \end{align*}
    \item\label{ass:conc} (Concentration of diagonal elements of kernel) For $(\bx_i)_{i \in [n(d)]} \sim_{iid} \nu_d$, we have:
    \begin{align*}
        \max_{i \in [n(d)]} \qty|\E_{\bx \sim \nu_d}[H_{d, >\nind(d)}(\bx_i, \bx)^2] - \E_{\bx, \bx' \sim \nu_d}[H_{d, >\nind(d)}(\bx, \bx')^2]| &= \odp(1) \cdot \E_{\bx, \bx' \sim \nu_d}[H_{d, >\nind(d)}(\bx, \bx')^2],\\
        \max_{i \in [n(d)]} \qty|H_{d, >\nind(d)}(\bx_i, \bx_i) - \E_{\bx}[H_{d, >\nind(d)}(\bx, \bx)]| &= \odp(1) \cdot \E_{\bx}[H_{d, >\nind(d)}(\bx, \bx)].
    \end{align*}
\end{enumerate}
\end{assumption}

%\sm{Check the following: the second equation in Asssumption 8.(c), is it the correct form? Is there a square? Then check whether we used all these assumptions or we just used a subset of the assumptions. } \nikhil{..}

Assumption \ref{ass:KCP}\ref{ass:hyper} can be interpreted as requiring that the top eigenfunctions of $\kernelop_d$ are delocalized. Assumption \ref{ass:KCP}\ref{ass:eigenval_decay} concerns the tail of eigenvalues of $\kernelop_d$ and is a mild assumption in high-dimensions. Lastly, Assumption \ref{ass:KCP}\ref{ass:conc} essentially requires that ``most points" in $\cX_d$ behave similarly in the sense of having similar values of the kernel diagonal $H_d(\bx, \bx)$.

\begin{assumption}[Eigenvalue condition at level $\{(n(d), \nind(d))\}_{d \geq 1}$]\label{ass:eigenvalue} We say that the sequence of kernel operators $\{\kernelop_d\}_{d \geq 1}$ satisfies the Eigenvalue Condition at level $\{(n(d), \nind(d))\}_{d \geq 1}$ if the following conditions hold for all $d$ large enough
\begin{enumerate}[label=(\alph*)]
    \item\label{ass:upper} There exists a fixed $\delta_0 > 0$, such that
    \begin{align}
        n(d)^{1 + \delta_0} &\leq \frac{1}{\lambda_{d,\nind(d) + 1}^4} \sum_{k=\nind(d)+1}^\infty \lambda_{d,k}^4, \label{eq:upper1}\\
        n(d)^{1 + \delta_0} &\leq \frac{1}{\lambda_{d,\nind(d) + 1}^2} \sum_{k=\nind(d)+1}^\infty \lambda_{d,k}^2. \label{eq:upper2}
    \end{align}
    
    \item\label{ass:lower} There exists a fixed $\delta_0 > 0$, such that
    \[
        n(d)^{1-\delta_0} \geq \frac{1}{\lambda_{d,\nind(d)}^2} \sum_{k=\nind(d)+1}^\infty \lambda_{d,k}^2.
    \]
    
    \item\label{ass:index} There exists a fixed $\delta_0 > 0$, such that
    \[
    \nind(d) \leq n(d)^{1-\delta_0}.
    \]
\end{enumerate}
\end{assumption}

Assumptions \ref{ass:eigenvalue}\ref{ass:upper} and \ref{ass:eigenvalue}\ref{ass:lower} can be seen as a spectral gap assumption. This ensures a clear separation between the eigenvalues in the subspace $\cD_{d, \leq \nind(d)}$ and the subspace $\cD_{d, > \nind(d)}$. The technical requirement in Assumption \ref{ass:eigenvalue}\ref{ass:index} is mild.

In the asymptotic setting, we will be interested in the model learned at a time $t = t(d)$ scaling with the dimension. The following assumptions give requirements for a valid scaling.
\begin{assumption}[Admissible Time at $\{(t(d), \tind(d), n(d), \nind(d))\}_{d \geq 1}$]\label{ass:valid_time} We say that the sequence of tuples $\{(t(d), \tind(d), n(d), \nind(d))\}_{d \geq 1}$ is an Admissible Time 
%\sm{Perhaps change the phrase? Admissible time?} \nikhil{..} 
for the sequence of kernel operators $\{\kernelop_d\}_{d \geq 1}$ if the following conditions hold %\sm{This condition involves both the numbers and the kernel eigenvalues. So you want to say both kernel sequence and the number sequence satisfy this assumption. One possibility is to merge Assumption 9 or 10. } \nikhil{..}
\begin{enumerate}[label=(\alph*)]
    \item\label{ass:time_gap} There exists a fixed $\delta_0 > 0$, such that for $d$ large enough
    \[
     \frac{1}{\lambda_{d,\tind(d)}^2} \leq t(d)^{1-\delta_0} \leq t(d)^{1+\delta_0}  \leq  \frac{1}{\lambda_{d,\tind(d)+1}^2}.
    \]
    
    \item\label{ass:time_n} If $\tind(d) < \nind(d)$ for infinitely many $d$, then
    \[
    \frac{t(d)}{n(d)} \sum_{k=\nind(d)+1}^\infty \lambda_{d,k}^2 = o_d(1).
    \]
    
    \item\label{ass:bulk} There exists a constant $C$ such that
    \[
        \sum_{k=1}^{\nind(d)} \lambda_{d,k}^2 \leq C  \sum_{k=\nind(d)+1}^\infty \lambda_{d,k}^2.
    \]
\end{enumerate}
\end{assumption}
Assumption \ref{ass:valid_time}\ref{ass:time_gap} is similar to the spectral gap condition Assumption \ref{ass:eigenvalue}\ref{ass:upper}, \ref{ass:eigenvalue}\ref{ass:lower} for $(t(d), \tind(d))_{d \geq 1}$. Assumption \ref{ass:valid_time}\ref{ass:time_n} relates the ordering of the indices $\tind(d), \nind(d)$ to the relative growth of $t(d), n(d)$. Assumption \ref{ass:valid_time}\ref{ass:bulk} requires that the eigenvalue tail does not decay too abruptly.

\subsection{Main Results}\label{sec:general_results}
In this section we give the main theoretical results. Recall the problem set-up and notation from Appendix \ref{sec:setup}. We will characterize the gradient flow dynamics dynamics of the oracle model $\oracle_t$ Eq.\ (\ref{eqn:oracle_dynamics_app}) and the empirical model $\emp_t$ Eq.\ (\ref{eqn:empirical_dynamics_app}) for a general kernel $H_d$.
\begin{theorem}[Oracle World]\label{thm:oracle}
Let $\{f_d \in \cD_d\}_{d \geq 1}$ be a sequence of functions and $\{\kernelop_d\}_{d \geq 1}$ be a sequence of kernel operators such that $\{(\kernelop_d, t(d), \tind(d))\}_{d \geq 1}$ satisfies Assumption \ref{ass:valid_time}\ref{ass:time_gap}, then
\begin{align*}
    &\Rtest(\oracle_t) = \normL{\projg{\tind(d)} f_d}^2 + \sigma_\eps^2 + \od(1) \cdot \normL{f_d}^2,\\
    &\normL{\oracle_t - \projl{\tind(d)} f_d}^2 = \od(1) \cdot \normL{f_d}^2,
\end{align*}
where $\projl{\tind(d)}$ and $\projg{\tind(d)}$ are the projection operators onto the subspace spanned by the top $\tind(d)$ kernel eigenfunctions and the orthogonal complement respectively, as defined in Appendix \ref{sec:setup}. 
\end{theorem}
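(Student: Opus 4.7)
The plan is to solve the oracle dynamics exactly in the kernel eigenbasis and then control an explicit spectral filter. Since \eqref{eqn:oracle_dynamics_app} reads $\partial_t \oracle_t = \kernelop_d(f_d - \oracle_t)$ with $\oracle_0 \equiv 0$, it is a linear autonomous ODE in $\cD_d$ whose solution is $\oracle_t = (I - e^{-t\kernelop_d}) f_d$. Expanding $f_d = \sum_{j \geq 1} \hat f_{d,j}\, \psi_j$ with $\hat f_{d,j} = \inner{f_d, \psi_j}_{L^2}$, this diagonalizes to
\begin{equation*}
    \oracle_t \;=\; \sum_{j \geq 1} \bigl(1 - e^{-t \lambda_{d,j}^2}\bigr)\, \hat f_{d,j}\, \psi_j,
\end{equation*}
so the entire argument reduces to analyzing the filter $j \mapsto 1 - e^{-t\lambda_{d,j}^2}$ on either side of the cut-off index $\tind(d)$.

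The only hypothesis needed is the spectral gap in Assumption \ref{ass:valid_time}\ref{ass:time_gap}, which for some fixed $\delta_0 > 0$ yields, for all $d$ large, $t(d)\lambda_{d,\tind(d)}^2 \geq t(d)^{\delta_0}$ and $t(d)\lambda_{d,\tind(d)+1}^2 \leq t(d)^{-\delta_0}$. By monotonicity of $(\lambda_{d,j})_j$ this gives the two pointwise bounds: for every $j \leq \tind(d)$, $e^{-t\lambda_{d,j}^2} \leq e^{-t^{\delta_0}} = \od(1)$; for every $j > \tind(d)$, $1 - e^{-t\lambda_{d,j}^2} \leq t\lambda_{d,j}^2 \leq t^{-\delta_0} = \od(1)$, where the first inequality uses $1 - e^{-x} \leq x$ for $x \geq 0$.

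Both conclusions now follow from Parseval. The coefficient of $\psi_j$ in $\oracle_t - \projl{\tind(d)} f_d$ equals $-e^{-t\lambda_{d,j}^2}\hat f_{d,j}$ for $j \leq \tind(d)$ and $(1 - e^{-t\lambda_{d,j}^2})\hat f_{d,j}$ for $j > \tind(d)$, so squaring and summing the two pointwise bounds above yields $\normL{\oracle_t - \projl{\tind(d)} f_d}^2 \leq \bigl(e^{-2t^{\delta_0}} + t^{-2\delta_0}\bigr)\normL{f_d}^2 = \od(1)\cdot\normL{f_d}^2$. For the test error I would use the noise decomposition $\Rtest(\oracle_t) = \normL{f_d - \oracle_t}^2 + \sigma_\eps^2$ together with $\normL{f_d - \oracle_t}^2 = \sum_{j \geq 1} e^{-2t\lambda_{d,j}^2}\hat f_{d,j}^2$. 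The low-index block is absorbed into $\od(1)\cdot\normL{f_d}^2$ via the first bound, while on the high-index block the second bound gives $e^{-2t\lambda_{d,j}^2} = 1 + \od(1)$ uniformly in $j$, so that block contributes $(1 + \od(1))\normL{\projg{\tind(d)} f_d}^2$. Combining these yields the claimed expansion of $\Rtest(\oracle_t)$.

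I do not anticipate a real obstacle here. The oracle dynamics are deterministic and exactly diagonalized by the eigenbasis of $\kernelop_d$, so neither Assumption \ref{ass:KCP} nor Assumption \ref{ass:eigenvalue} is required; only the spectral gap at the level $(t(d), \tind(d))$ enters. The genuine work of the paper will be in the empirical-world counterparts, where replacing $\kernelop_d$ by its sample version breaks the exact diagonalization and forces the use of the hypercontractivity, eigenvalue-decay, and kernel-concentration conditions of Assumptions \ref{ass:KCP}--\ref{ass:eigenvalue}.
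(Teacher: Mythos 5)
Your proposal is correct and follows essentially the same route as the paper's proof: solve the linear ODE exactly, diagonalize in the kernel eigenbasis, and use Assumption \ref{ass:valid_time}\ref{ass:time_gap} to get the pointwise filter bounds $e^{-t\lambda_{d,j}^2} \leq e^{-t^{\delta_0}}$ for $j \leq \tind(d)$ and $1 - e^{-t\lambda_{d,j}^2} \leq t^{-\delta_0}$ for $j > \tind(d)$, then sum by Parseval. The only cosmetic differences are that the paper bounds the deviation by a single $\max$ over the two regimes and uses $(1-e^{-x})^2 \leq 1-e^{-2x}$ where you split into blocks and use the cruder sum of the two bounds; both give $\od(1)\cdot\normL{f_d}^2$.
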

The error of the oracle model is determined solely by optimization time $t$ through $\tind(d)$. Due to the spectral gap assumption \ref{ass:valid_time}\ref{ass:time_gap} learning only occurs along the top $\tind(d)$ eigenfunctions.

The next results describe the empirical model. First we characterize the training error.
\begin{theorem}[Empirical World - Train]\label{thm:train}
Let $\{f_d \in \cD_d\}_{d \geq 1}$ be a sequence of functions, let the covariates $(\bx_i)_{i \in [n(d)]} \sim \nu_d$ independently, and let $\{\kernelop_d\}_{d \geq 1}$ be a sequence of kernel operators such that $\{(\kernelop_d, n(d), \nind(d), t(d), \tind(d))\}_{d \geq 1}$ satisfies $\{(n(d), \nind(d))\}_{d \geq 1}$-KPCP (Assumption \ref{ass:KCP}), eigenvalue condition at level $\{(n(d), \nind(d))\}_{d \geq 1}$ (Assumption \ref{ass:eigenvalue}), and $\{(n(d), \nind(d), t(d), \tind(d))\}_{d \geq 1}$ is a valid time (Assumption \ref{ass:valid_time}). Define $\kappa_H = \Tr(\kernelop_{d, >\nind(d)})$ and $\cut(d) = \min\{\tind(d), \nind(d)\}$. Then for any $\eta > 0$ we have,
\begin{align*}
    \Rtrain(\emp_t) &= \normL{\projg{\cut(d)} f_d}^2 + \sigma_\eps^2 + \odp(1) \cdot (\normLeta{f_d}^2 + \sigma_\eps^2) && \text{if } t = o_d(n/\kappa_H),\\
    \Rtrain(\emp_t) &= \odp(1) \cdot (\normLeta{f_d}^2 + \sigma_\eps^2) && \text{if } t = \omega_d(n/\kappa_H).\\
\end{align*}
\end{theorem}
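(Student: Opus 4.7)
The plan is to solve the empirical gradient flow in closed form, dissect the empirical kernel matrix into a low-rank signal and a near-scalar bulk, and read off the asymptotics of the resulting quadratic form. Evaluating Eq.\ \eqref{eqn:empirical_dynamics_app} at the training points with $\bff_t = (\emp_t(\bx_i))_{i\in[n]}$, $\by = (y_i)_{i\in[n]}$, and $\bH = (H_d(\bx_i,\bx_j))_{i,j\in[n]}$, the flow reduces to $\dot\bff_t = \tfrac{1}{n}\bH(\by-\bff_t)$ with $\bff_0 = 0$, giving $\by - \bff_t = e^{-t\bH/n}\by$ and hence
\[
\Rtrain(\emp_t) = \tfrac{1}{n}\,\by^{\sT} e^{-2t\bH/n}\by.
\]

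Next I would decompose $\bH = \lowrank{\nind} + \bH_{>\nind}$, where $\bPsil{\nind} = (\psi_k(\bx_i))_{i\in[n], k\le \nind}$ collects sample values of the top-$\nind$ eigenfunctions, and establish two approximations. First, $\bH_{>\nind} = \kappa_H\,\mathbf{I}_n + \bE$ with $\|\bE\|_{\rm op} = \odp(\kappa_H)$: diagonal entries are handled by Assumption \ref{ass:KCP}\ref{ass:conc}, while the off-diagonal block is controlled by a second-moment / matrix-Bernstein argument whose variance bound is supplied by Assumption \ref{ass:KCP}\ref{ass:eigenval_decay} together with Assumption \ref{ass:eigenvalue}\ref{ass:upper}, the factor $n^{-\delta_0/2}$ being what beats $\kappa_H$. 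Second, by hypercontractivity (Assumption \ref{ass:KCP}\ref{ass:hyper}) combined with $\nind \le n^{1-\delta_0}$ (Assumption \ref{ass:eigenvalue}\ref{ass:index}), $\tfrac{1}{n}\bPsil{\nind}^{\sT}\bPsil{\nind} \to \mathbf{I}_{\nind}$ in operator norm, so the nonzero eigenvalues of $\lowrank{\nind}$ are $(1+\odp(1))n\lambda_{d,k}^2$ for $k\le\nind$, with eigenvectors approximately the normalized columns of $\bPsil{\nind}$.

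Combining these, $\bH/n\approx \lowrank{\nind}/n + (\kappa_H/n)\mathbf{I}_n$, and since $\kappa_H\mathbf{I}_n$ commutes with everything, $e^{-t\bH/n}\approx e^{-t\kappa_H/n}\,e^{-t\lowrank{\nind}/n}$. Applied to $\by$, the spectral-gap condition Assumption \ref{ass:valid_time}\ref{ass:time_gap} gives $t\lambda_{d,k}^2\to\infty$ for $k\le\tind$ and $t\lambda_{d,k}^2\to 0$ for $k>\tind$, so only components of $\by$ along columns $\tind<k\le\nind$ of $\bPsil{\nind}$ survive the low-rank factor, along with the projection $P_\perp\by$ onto the orthogonal complement of $\mathrm{col}(\bPsil{\nind})$. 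A LLN using $\normLeta{f_d} = O_d(1)$ and hypercontractivity of each $\psi_k$ gives $\tfrac{1}{n}\sum_i\psi_k(\bx_i) y_i\to\langle\psi_k,f_d\rangle_{L^2}$ and $\tfrac{1}{n}\|P_\perp\by\|^2\to\normL{\projg{\nind} f_d}^2+\sigma_\eps^2$. Assembling and using Parseval on the indices $\tind<k\le\nind$ together with $\cut = \min\{\tind,\nind\}$,
\[
\Rtrain(\emp_t) \;\approx\; e^{-2t\kappa_H/n}\,\bigl(\normL{\projg{\cut} f_d}^2 + \sigma_\eps^2\bigr).
\]
The dichotomy in the theorem then follows from $e^{-2t\kappa_H/n}\to 1$ if $t = o_d(n/\kappa_H)$ and $e^{-2t\kappa_H/n}\to 0$ if $t = \omega_d(n/\kappa_H)$.

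The main obstacle is propagating the two perturbations ($\bE$ and the deviation of $\tfrac{1}{n}\bPsil{\nind}^{\sT}\bPsil{\nind}$ from $\mathbf{I}_{\nind}$) through the matrix exponential with sufficient uniformity over $t$. The naive bound $\|e^{-t(\bA+\bE)/n} - e^{-t\bA/n}\|_{\rm op}\le (t/n)\|\bE\|_{\rm op}$ becomes useless in the critical window $t\sim n/\kappa_H$ between the two regimes. The remedy is to work in a simultaneous near-eigenbasis and split $\bE$ into its restrictions to the bulk and the signal subspaces: on the bulk the dynamics are essentially scalar and $\bE$ only rescales the decay rate $\kappa_H\to\kappa_H(1+o(1))$, which is harmless inside $e^{-2t\kappa_H/n}$; on the signal subspace the large eigenvalues $n\lambda_{d,k}^2$ dominate and $\bE$ is a lower-order perturbation. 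Executing this carefully, together with uniform control of $O(\nind)$ empirical inner products where Assumption \ref{ass:KCP}\ref{ass:hyper} is essential, is where the bulk of the technical work lives.
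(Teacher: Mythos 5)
Your high-level framework agrees with the paper: you start from the same closed form $\Rtrain(\emp_t) = \tfrac{1}{n}\by^\sT e^{-2t\bH/n}\by$, use the same decomposition $\bH = \bPsil{\nind}\bDl{\nind}^2\bPsil{\nind}^\sT + \kappa_H(\id_n + \bDelta_H)$ (Lemma~\ref{lem:kernel_decomp}), the same isotropy estimate $\bPsil{\nind}^\sT\bPsil{\nind}/n \approx \id_\nind$ (Lemma~\ref{lem:iso}), and the same spectral-gap reasoning to decide which components survive. However, where you diagnose the critical window $t \asymp n/\kappa_H$ as the place the matrix-exponential perturbation bound ``becomes useless,'' this is misdirected: in that window $(t/n)\kappa_H = O_d(1)$ and $\opnorm{\bDelta_H} = \odp(1)$, so the naive bound gives $(t/n)\kappa_H\opnorm{\bDelta_H} = \odp(1)$, which is exactly enough. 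It is the \emph{late} regime $t = \omega_d(n/\kappa_H)$ where the perturbation bound degrades; the paper's Lemma~\ref{lem:exp_kernel} handles that case separately by noting both exponentials are already $\odp(1)$ in operator norm. Your proposed remedy --- working in a simultaneous near-eigenbasis and splitting the perturbation into bulk and signal restrictions --- is genuinely different from what the paper does, and is left as ``where the bulk of the technical work lives'' without execution; it is not obviously easier than the route you are trying to avoid, because the mixing between the bulk and signal subspaces is precisely what a near-eigenbasis argument would have to control.

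The paper's actual route avoids all eigenbasis reasoning: it splits $\Rtrain = R_1 + R_2 + R_3$ (signal-signal, signal-noise, noise-noise), and for the leading term $R_1$ further decomposes $\boldf = \bfl{\cut} + \bfg{\cut}$, giving three sub-terms $T_1, T_2, T_3$. The piece $T_1$ is killed using $\bH \succeq \lowrank{\cut}$, Lemma~\ref{lem:expid}, and the spectral gap; the cross term $T_2$ is handled by Cauchy-Schwarz; and the dominant term $T_3$ is sandwiched using $1 - x \leq e^{-x} \leq 1$, with the correction controlled by the targeted hypercontractivity estimate Lemma~\ref{lem:orth} together with Assumption~\ref{ass:valid_time}\ref{ass:bulk}. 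This last estimate --- bounding $\tfrac{t}{n^2}\E[\bfg{\cut}^\sT\bPsil{\cut}\bDl{\cut}^2\bPsil{\cut}^\sT\bfg{\cut}]$ via hypercontractivity of the top eigenfunctions --- is the key technical move and is precisely the sort of ``uniform control of $O(\nind)$ empirical inner products'' that you flag but do not carry out; the casual ``LLN'' invocation does not supply it. The noise contributions $R_2, R_3$ are likewise handled by separate second-moment and Chebyshev arguments rather than the collective $P_\perp\by$ projection you suggest. Finally, your claimed intermediate formula $\Rtrain(\emp_t) \approx e^{-2t\kappa_H/n}\bigl(\normL{\projg{\cut}f_d}^2 + \sigma_\eps^2\bigr)$ is a heuristic that happens to give the right answer in the two extreme regimes but is not established (and, when $\tind < \nind$, mislabels the decay rate of the intermediate modes $\tind < k \leq \nind$ as $\kappa_H/n$ instead of $\lambda_{d,k}^2$); the paper makes no such uniform claim.
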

\squeezeup
In the early-time regime $t \ll n / \kappa_H$ the training error may be non-zero and matches the oracle world error if also $\tind(d) \leq \nind(d)$. In the late-time regime $t \gg n / \kappa_H$ the training error is negligible and the model interpolates the training set. The quantity $\kappa_H$ arises since the empirical kernel matrix can be decomposed as $\bH = \bH_{\leq \nind} + \bH_{> \nind}$ and the second component is approximately a multiple of the identity: $\bH_{> \nind} \approx \Tr(\bH_{> \nind}) \cdot \id_n = \kappa_H \cdot \id_n$. This term acts as a self-induced ridge-regularizer.

Our final result characterizes the test error of the empirical model.
\begin{theorem}[Empirical World - Test]\label{thm:test}
Let $\{f_d \in \cD_d\}_{d \geq 1}$ be a sequence of functions, let the covariates $(\bx_i)_{i \in [n(d)]} \sim \nu_d$ independently, and let $\{\kernelop_d\}_{d \geq 1}$ be a sequence of kernel operators such that $\{(\kernelop_d, n(d), \nind(d), t(d), \tind(d))\}_{d \geq 1}$ satisfies $\{(n(d), \nind(d))\}_{d \geq 1}$-KPCP (Assumption \ref{ass:KCP}), eigenvalue condition at level $\{(n(d), \nind(d))\}_{d \geq 1}$ (Assumption \ref{ass:eigenvalue}), and $\{(n(d), \nind(d), t(d), \tind(d))\}_{d \geq 1}$ is a valid time (Assumption \ref{ass:valid_time}). Define $\cut(d) = \min\{\tind(d), \nind(d)\}$. Then for any $\eta > 0$ we have, 
\begin{align*}
    &\Rtest(\emp_t) = \normL{\projg{\cut(d)} f_d}^2 + \sigma_\eps^2 + \odp(1) \cdot (\normLeta{f_d}^2 + \sigma_\eps^2),\\
    &\normL{\emp_t - \projl{\cut(d)} f_d}^2 = \odp(1) \cdot (\normLeta{f_d}^2 + \sigma_\eps^2).
\end{align*}
\end{theorem}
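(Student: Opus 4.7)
The test error decomposes as $\Rtest(\emp_t) = \sigma_\eps^2 + \normL{\emp_t - f_d}^2$. Writing $\emp_t - f_d = (\emp_t - \projl{\cut(d)} f_d) - \projg{\cut(d)} f_d$, expanding the square, and applying Cauchy--Schwarz to the cross-term $\langle \emp_t - \projl{\cut(d)} f_d,\, \projg{\cut(d)} f_d\rangle_{L^2}$ reduces the entire theorem to the single $L^2$-closeness estimate claimed in part (ii). To prove it, my starting point is the closed-form gradient-flow solution. Writing $\emp_t(\bx) = \sum_{i=1}^n \alpha_i(t) H_d(\bx, \bx_i)$ reduces (\ref{eqn:empirical_dynamics_app}) to $\dot{\alpha}(t) = \frac{1}{n}(\mathbf{y} - \bH_n \alpha(t))$ with $\bH_n = (H_d(\bx_i,\bx_j))_{i,j}$ and $\mathbf{y} = (y_i)_{i\leq n}$, which integrates to $\emp_t(\bx) = \mathbf{h}(\bx)^\top \bH_n^{-1}(\id_n - e^{-t\bH_n/n})\mathbf{y}$ where $\mathbf{h}(\bx) = (H_d(\bx,\bx_i))_i$. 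Expanding in the kernel eigenbasis via $\bPsi_{ik} = \psi_k(\bx_i)$ turns the $L^2$-coefficient of $\psi_k$ in $\emp_t$ into an explicit linear functional of $\bH_n^{-1}(\id_n - e^{-t\bH_n/n})\mathbf{y}$, and by orthogonality I can attack the claim coordinate-by-coordinate and sum.

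The central technical input---essentially the same one underlying Theorem \ref{thm:train}---is the self-induced ridge representation of $\bH_n$. Splitting $\bH_n = \bPsi_{\leq\nind(d)}\bD_{\leq\nind(d)}^2\bPsi_{\leq\nind(d)}^\top + \bH_{n,>\nind(d)}$, Assumptions \ref{ass:KCP} and \ref{ass:eigenvalue} together with Hanson--Wright-type concentration---whose moment requirements are supplied by the hypercontractivity in Assumption \ref{ass:KCP}\ref{ass:hyper}---yield $\bH_{n,>\nind(d)} = \kappa_H \id_n + \boldsymbol{\Delta}$ with $\opnorm{\boldsymbol{\Delta}}/\kappa_H = \odp(1)$. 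A Woodbury expansion of $\bH_n^{-1}(\id_n - e^{-t\bH_n/n})$ then splits its action into (a) a rank-$\nind(d)$ filter on $\Span(\bPsi_{\leq\nind(d)})$ whose spectral value at $\lambda_{d,k}^2$ is essentially $(1 - e^{-t(\lambda_{d,k}^2 + \kappa_H/n)})/(\lambda_{d,k}^2 + \kappa_H/n)$, plus (b) an approximately isotropic remainder $\kappa_H^{-1}(1 - e^{-t\kappa_H/n})\id_n$ acting on the orthogonal complement.

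Combining this structure with the time-gap Assumption \ref{ass:valid_time}\ref{ass:time_gap} closes the argument. For $k \leq \tind(d)$ the spectral filter is $1 - \od(1)$, and Assumption \ref{ass:eigenvalue}\ref{ass:lower} forces $\kappa_H/n \ll \lambda_{d,k}^2$ for $k \leq \nind(d)$, so the effective ridge is asymptotically invisible on the learned modes; hence for $k \leq \cut(d)$ the $k$-th coefficient of $\emp_t$ is $(1+\odp(1))(f_d)_k$, while for $\tind(d) < k \leq \nind(d)$ the filter itself is $\odp(1)$ and kills the coefficient. When $\nind(d) < \tind(d)$, the mass of $\projg{\nind(d)}\emp_t$ is bounded in $L^2$ by the operator norm of the isotropic piece times the tail $\sum_{k>\nind(d)}\lambda_{d,k}^4$, which is $\odp(1)\normLeta{f_d}^2$ by Assumption \ref{ass:eigenvalue}\ref{ass:upper} (with Assumption \ref{ass:valid_time}\ref{ass:time_n} handling the remaining regime). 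The noise contribution propagates through a parallel variance computation: $\sigma_\eps^2$-factors are absorbed by a trace bound on the squared filter that is $\odp(1)\sigma_\eps^2$ via the same spectral-gap estimates. The hardest step will be quantitatively controlling the Woodbury cross-terms coupling the low- and high-frequency blocks of $\bH_n^{-1}$; these require $L^{2+\eta}$ control on $f_d$ to uniformly bound higher-order moments of quadratic forms in $\bPsi_{>\nind(d)}$, which is precisely why $\normLeta{f_d}^2$ rather than $\normL{f_d}^2$ appears in the stated error.
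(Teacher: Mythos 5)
Your overall plan tracks the paper's quite closely: the closed-form exponential solution, the self-induced ridge decomposition $\bH = \bPsil{\nind}\bDl{\nind}^2\bPsil{\nind}^\sT + \kappa_H(\id_n+\bDelta_H)$ (Lemma~\ref{lem:kernel_decomp}), the spectral-gap conditions in Assumptions~\ref{ass:valid_time}\ref{ass:time_gap} and~\ref{ass:eigenvalue}, and hypercontractivity to produce the $\normLeta{f_d}$ factor (Lemma~\ref{lem:orth}) are all the same ingredients the paper uses. The one genuine organizational difference is direction: you propose proving the $L^2$-closeness estimate (part~ii) first and then deriving the test-error identity (part~i) via Cauchy--Schwarz, whereas the paper proves part~(i) first by expanding $\Rtest(\emp_t)$ into five quadratic-form terms $T_1,\dots,T_5$ built out of $\bu(t)^\sT\bH^{-1}\bE$ and $\bu(t)^\sT\bH^{-1}\bM\bH^{-1}\bu(t)$, and only then passes to the $L^2$ claim via the algebraic relation~(\ref{eq:test_model_relate}) plus one extra estimate on $\bu(t)^\sT\bH^{-1}\bEg{\cut}$. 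Your reduction is valid (the cross-term bound $\normL{\projg{\cut}f_d}\normL{\emp_t-\projl{\cut}f_d}\le\odp(1)(\normLeta{f_d}^2+\sigma_\eps^2)$ checks out) and arguably saves that last estimate, but the matrix analysis underneath is identical.

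Where the sketch falls short of a proof is the ``coordinate-by-coordinate'' claim: you assert that for each individual $k\le\cut(d)$ the $k$-th Fourier coefficient of $\emp_t$ is $(1+\odp(1))\hat f_k$, and that for $\tind(d)<k\le\nind(d)$ the filter ``kills the coefficient.'' Neither per-mode statement is what the paper proves, and neither holds in general --- for any mode with $\hat f_k=0$ the empirical coefficient is generically nonzero because $\bH^{-1}(\id_n-e^{-t\bH/n})$ mixes distinct $\psi_j$-channels through the off-diagonal entries of $\bPsil{\nind}^\sT\bPsil{\nind}/n$. What actually holds is an aggregate $L^2$ bound, delivered by operator-norm deviation inequalities that encode exactly this coupling: $\opnorm{\bPsil{\cut}^\sT\bH^{-1}\bPsil{\nind}\bDl{\nind}^2-[\bSl{\cut};\bzero]}=\odp(1)$ (from Lemma~\ref{lem:hD}) and $\opnorm{n\bH^{-1}\bM\bH^{-1}-\bPsil{\nind}\bSl{\nind}^2\bPsil{\nind}^\sT/n}=\odp(1)$ (Lemma~\ref{lem:hmh}), combined with $\bSl{\cut}\approx\id$ via Assumption~\ref{ass:eigenvalue}\ref{ass:lower}. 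You correctly flag the cross-block coupling as the hardest step, but to close the argument you must replace the per-mode filter picture with these matrix-level bounds; a naive sum of per-coordinate estimates will not assemble into the theorem.
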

The result shows that the empirical model is essentially the projection of the regression function onto the first $\cut(d)$ eigenfunctions. The quantity $\cut(d)$ controls the complexity of the model which increases with time $t$ up until $\tind(d) \geq \nind(d)$ after which the complexity is limited by $n$.

\clearpage
\section{Proof of General Setting}\label{sec:proofs}

\subsection{Oracle World - Proof of Theorem \ref{thm:oracle}}

The oracle model ODE Eq.\ (\ref{eqn:oracle_dynamics_app}) with initialization $\oracle_0 \equiv 0$ can be solved (c.f.\ Appendix \ref{sec:dynamics}) to yield the solution
\[
\oracle_t = f_d-e^{-t \kernelop_d}f_d.
\]
Therefore the excess risk is given by
\[
\Rtest(\oracle_t) - \sigma_\eps^2 = \E_{\bx \sim \nu_d}(f_d(\bx) - \oracle_t(\bx))^2 =  \int_{\cX_d} f_d(\bx) e^{-2t \kernelop_d}f_d(\bx) \nu_d(\dd{\bx}) = \sum\limits_{k=0}^\infty \exp(-2t \lambda_{d,k}^2) \hat{f}_k^2,
\]
where $\lambda_{d, k}^2$ are the kernel eigenvalues and $\hat{f}_k := \inner{f_d, \psi_k}_{L^2}$ are the Fourier coefficients of $f_d$ in the kernel eigenbasis (c.f.\ Appendix \ref{sec:setup}).
%\sm{where $\hat f_k$ is ... as given in Assumption ... and $\lambda_{d, k}$ is ... as given in Assumption ...} \nikhil{..}
We can control this quantity as follows,
\begin{align*}
    |\Rtest(\oracle_t) - \sigma_\eps^2 - \normL{\projg{\tind} f_d}^2| / \normL{f_d}^2 
    &\leq \max\qty{\max_{k \leq \tind} \exp(-2t \lambda_{d,k}^2), \max_{k \geq \tind + 1} 1 - \exp(-2t \lambda_{d,k}^2)} \\
    &\leq \max\qty{\max_{k \leq \tind} \exp(-\Omega(t \lambda_{d,k}^2)), \max_{k \geq \tind + 1} O(t \lambda_{d,k}^2)} \\
    &\leq \max\qty{\exp(-\Omega(t^{\delta_0})), O(t^{-\delta_0})} = \od(1),
\end{align*} 
where the last inequality follows from Assumption \ref{ass:valid_time}\ref{ass:time_gap}. This shows the first theorem statement,
\[
\normL{\oracle_t - \projl{\tind} f_d}^2 = \od(1) \cdot \normL{f_d}^2.
\]
%\sm{What does it mean by taking $f_d$ to be $\projl{\tind} f_d$? Is this argument correct? } \nikhil{..}

Now observe that
\[
    \projl{u}f_d - \oracle_t = \sum\limits_{k = 0}^{\tind} e^{-t\lambda_{d,k}^2} \hat{f}_k \psi_k - \sum\limits_{k \geq \tind + 1} (1 - e^{-t \lambda_{d,k}^2}) \hat{f}_k \psi_k,
\]
hence
\begin{align*}
    \normL{\oracle_t - \projl{u}f_d}^2 / \normL{f_d}^2 &\leq \max \qty{\max_{k \leq \tind} \exp(-2t \lambda_{d,k}^2), \max_{k \geq \tind + 1} (1 - e^{-t \lambda_{d,k}^2})^2} \\
    &\leq \max\qty{\max_{k \leq \tind} \exp(-2t \lambda_{d,k}^2), \max_{k \geq \tind + 1} 1 - \exp(-2t \lambda_{d,k}^2)} = o_d(1), \\
\end{align*}
where the second inequality follows from the fact that $(1 - e^{-x})^2 \leq 1 - e^{-2x}$ and the final equality is from the proof of the first part of the theorem. Thus,
\[
\normL{\oracle_t - \projl{u}f_d}^2 = \od(1) \cdot \normL{f_d}^2,
\]
completing the proof.

\subsection{Empirical World -- Preliminaries}
We will introduce some useful notations for studying the empirical world.
\subsubsection{Training Dynamics}
For the training of the empirical world c.f.\ Eq.\ (\ref{eqn:empirical_dynamics_app}), if $\bu(t) = (\emp_t(\bx_1), \ldots, \emp_t(\bx_n)) \in \R^n$ then from Eq.\ (\ref{eq:u(t)_sol}) if $\bu(0) = \bzero$ then
\begin{equation}\label{eq:u(t)}
    \bu(t) = \by - e^{-t\bH/n}\by = (\timekernel)\by,
\end{equation}
where $\bH \in \R^{n \times n}$ with the $(i, j)$th element given by $H_{ij} = H_d(\bx_i, \bx_j)$ and $\by = \boldf + \beps \in \R^{n}$ with $\boldf = (f_d(\bx_1), \ldots, f_d(\bx_n))^\sT \in \R^n$ and $\beps = (\eps_1, \ldots, \eps_n)^\sT \in \R^n$. 

For $\bx \in \R^d$, define $\bh(\bx) = (H_d(\bx_1, \bx), \ldots, H_d(\bx_n, \bx))^\sT \in \R^n$. The training and test errors as defined in Eq.\ (\ref{eqn:risks}) can be written as 
\begin{align*}
\Rtrain(\emp_t) &= \frac{1}{n} \norm{\bu(t) - \by}_2^2 = \frac{1}{n} \by^\sT e^{-2t\bH / n} \by, \\
\Rtest(\emp_t) &= \E_\bx \qty[(f_d(\bx) - \bu(t)^\sT \bH^{-1} \bh(\bx))^2].
\end{align*}
Expanding $\Rtest(\emp_t)$ yields
\begin{equation}\label{eq:test_decomp}
\Rtest(\emp_t) = \E_\bx[f_d(\bx)^2] - 2 \bu(t)^\sT \bH^{-1} \bE + \bu(t)^\sT \bH^{-1} \bM \bH^{-1} \bu(t),
\end{equation}
where $\bE = (E_1, \ldots, E_n)^\sT \in \R^n$, $\bM = (M_{ij})_{i, j\in[n]} \in \R^{n \times n}$, and $\bH = (H_{ij})_{i, j \in [n]} \in \R^{n \times n}$ with
\begin{align*}
    E_i &= \E_\bx [f_d(\bx) H_d(\bx, \bx_i)],\\
    M_{ij} &= \E_\bx [H_d(\bx, \bx_i) H_d(\bx, \bx_j)],\\
    H_{ij} &=  H_d(\bx_i, \bx_j). 
\end{align*}
\subsubsection{Decompositions and Notations}\label{sec:decomp}
In this section we recall some useful decompositions of empirical quantities from \cite{mei2021generalization}. As mentioned earlier the eigendecomposition of $H_d$ is given by
\[
H_d(\bx, \by) = \sum\limits_{k=1}^\infty \lambda_{d,k}^2 \psi_k(\bx) \psi_k(\by). 
\]
We write the orthogonal decomposition of $f_d$ in the basis $\{\psi_k\}_{k \geq 1}$ as
\[
f_d(\bx) = \sum\limits_{k=1}^\infty \hat{f}_{d,k}\psi_k(\bx).
\]
Define
\begin{align*}
    \bpsi_k &= (\psi_k(\bx_1), \ldots, \psi_k(\bx_n))^\sT \in \R^n,\\
    \bDl{\nind} &= \diag(\lambda_{d, 1}, \lambda_{d,2}, \ldots, \lambda_{d, \nind}) \in \R^{\nind \times \nind},\\
    \bPsil{\nind} &= (\psi_k(\bx_i))_{i \in [n], k \in [\nind]} \in \R^{n \times \nind},\\
    \bffl{\nind} &= (\hat{f}_{d,1}, \hat{f}_{d,2}, \ldots, \hat{f}_{d,\nind})^\sT \in \R^\nind.
\end{align*}

We have the following orthogonal basis decompositions of $\boldf, \bH, \bE$ and $\bM$
\begin{equation}\label{eq:ortho_decomp}
\begin{aligned}
    &\boldf = \bfl{\nind} + \bfg{\nind}, &\bfl{\nind}= \bPsi_{\leq \nind} \bffl{\nind}, &&\bfg{\nind} = \sum\limits_{k = \nind + 1}^\infty \hat{f}_{d,k} \bpsi_k,\\
    &\bH = \bH_{\leq \nind} + \bH_{> \nind}, &\bH_{\leq \nind}  = \bPsil{\nind} \bDl{\nind}^2 \bPsil{\nind}^\sT,  && \bH_{> \nind}  = \sum\limits_{k=\nind+1}^\infty \lambda_{d,k}^2 \bpsi_k \bpsi_k^\sT,\\
    &\bE = \bE_{\leq \nind} + \bE_{> \nind}, &\bE_{\leq \nind} = \bPsil{\nind} \bDl{\nind}^2 \bffl{\nind}, && \bE_{> \nind} = \sum\limits_{k=\nind+1}^\infty \lambda_{d,k}^2 \hat{f}_{d, k} \bpsi_k,\\
    &\bM = \bM_{\leq \nind} + \bM_{> \nind}, &\bM_{\leq \nind} = \bPsil{\nind} \bDl{\nind}^4 \bPsil{\nind}^\sT, && \bM_{> \nind} = \sum\limits_{k=\nind+1}^\infty \lambda_{d,k}^4 \bpsi_k \bpsi_k^\sT.
\end{aligned}
\end{equation}
By Lemma \ref{lem:kernel_decomp} below, under Assumptions \ref{ass:KCP} and \ref{ass:eigenvalue}\ref{ass:upper} the matrices $\bH$ and $\bM$ can be written as
\begin{align}
    \bH &= \bPsil{\nind} \bDl{\nind}^2 \bPsil{\nind}^\sT + \kappa_H(\id_n + \bDelta_H), \label{eq:h_decomp}\\
    \bM &= \bPsil{\nind} \bDl{\nind}^4 \bPsil{\nind}^\sT + \kappa_M(\id_n + \bDelta_M), \label{eq:m_decomp}
\end{align}
where 
\begin{align*}
    \kappa_H &= \Tr(\kernelop_{d,>\nind}) = \sum\limits_{k \geq \nind + 1}^\infty \lambda_{d,k}^2,\\
    \kappa_M &=  \Tr(\kernelop^2_{d,>\nind}) = \sum\limits_{k \geq \nind + 1}^\infty \lambda_{d,k}^4,
\end{align*}
and 
\[ \max\{\opnorm{\bDelta_H}, \opnorm{\bDelta_M}\} = \odp(1). \] 
We will use $\scalar$ as shorthand for the scalar valued dimension dependent quantity $e^{-(t/n) \kappa_H}$ and take
\begin{equation}\label{eq:projkernel}
\projkernel_{\leq \nind} := \id_n - \alpha e^{-(t/n) \lowrank{\nind}}.
\end{equation}
We also introduce the shrinkage matrix defined as
\begin{equation}\label{eq:shrinkage}
\bSl{\nind} = \qty(\id_{\nind} + \frac{\kappa_H}{n} \bDl{\nind}^{-2})^{-1} = \diag((s_j)_{j \in [\nind]}) \in \R^{\nind \times \nind}, \quad \text{ where } s_j = \frac{\lambda_{d,j}^2}{\lambda_{d,j}^2 + \frac{\kappa_H}{n}}.
\end{equation}

If unspecified we will typically use $\bDelta, \bDelta'$, etc. to denote matrices with operator norm $\odp(1)$. For positive integers $\cut < \nind$, define $[\cut, \nind] = \{\cut + 1, \ldots, \nind\}$. We will use the following notation 
\begin{align*}
    \bS_{\cut \nind} &= \diag((s_j)_{j \in [\cut, \nind]}) && \text{ for } s_j \text{ defined in Eq.\ (\ref{eq:shrinkage})}\\
    \bPsi_{\cut \nind} &= (\psi_k(\bx_i))_{i \in [n], k \in [\cut, \nind]} \in \R^{n \times (\nind - \cut)} \\
    \boldf_{\cut \nind} &= \sum\limits_{k \in [\cut, \nind]} \hat{f}_{d,k} \bpsi_k
\end{align*}

\subsubsection{Auxiliary Lemmas}
Here we collect some lemmas which will be of use to us.
\begin{lemma}[Matrix Exponential Perturbation Inequality]\label{lem:exp_pert}
For matrix operator norm $\| \cdot \|$, if matrices $\bA, \bB \in \R^{n \times n}$ are symmetric then 
\[\norm{e^{\bA} - e^{\bB}} \leq \norm{\bA - \bB} \max\{\norm{e^\bA}, \norm{e^{\bB}}\}.\]
For general square matrices $\bA, \bB \in \R^{n \times n}$ we have
\[\norm{e^{\bA} - e^{\bB}} \leq \norm{\bA - \bB} e^{\norm{\bA}}  e^{\norm{\bB}}.\]
\end{lemma}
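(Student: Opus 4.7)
The plan is to apply the standard Duhamel-type identity for the difference of matrix exponentials. Setting $\phi(s) = e^{s\bA} e^{(1-s)\bB}$ for $s \in [0, 1]$, direct differentiation yields $\phi'(s) = e^{s\bA}(\bA - \bB) e^{(1-s)\bB}$ (using that $\bA$ commutes with $e^{s\bA}$ and $\bB$ with $e^{(1-s)\bB}$), so by the fundamental theorem of calculus
\[
e^\bA - e^\bB \;=\; \phi(1) - \phi(0) \;=\; \int_0^1 e^{s\bA}(\bA - \bB) e^{(1-s)\bB}\,\dd{s}.
\]
Submultiplicativity of the operator norm then gives
\[
\norm{e^\bA - e^\bB} \;\leq\; \norm{\bA - \bB} \int_0^1 \norm{e^{s\bA}} \, \norm{e^{(1-s)\bB}}\,\dd{s},
\]
so both claims reduce to bounding the integral $I := \int_0^1 \norm{e^{s\bA}}\,\norm{e^{(1-s)\bB}}\,\dd{s}$ under the respective hypotheses.

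For the general case, I would use the power-series bound $\norm{e^{s\bA}} \leq e^{s\norm{\bA}}$, which follows by applying the triangle inequality term by term to $e^{s\bA} = \sum_{k \geq 0} (s\bA)^k/k!$ together with submultiplicativity of $\norm{\cdot}$. Consequently
\[
I \;\leq\; \int_0^1 e^{s\norm{\bA}}\,e^{(1-s)\norm{\bB}}\,\dd{s} \;\leq\; e^{\max\{\norm{\bA},\,\norm{\bB}\}} \;\leq\; e^{\norm{\bA}}\,e^{\norm{\bB}},
\]
where the middle step uses that the integrand, being the exponential of a convex combination, is bounded by the exponential of the maximum of its endpoint values. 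Combined with the Duhamel identity, this yields the general bound.

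For the symmetric case, the sharper observation is that $\norm{e^{s\bA}} = e^{s\lambda_{\max}(\bA)}$ for symmetric $\bA$ and $s \geq 0$, obtained from the spectral decomposition of $\bA$ together with the fact that $e^{s\bA}$ is again symmetric with eigenvalues $e^{s\lambda_i(\bA)}$. Applying this to both $\bA$ and $\bB$, for every $s \in [0, 1]$ one gets
\[
\norm{e^{s\bA}}\,\norm{e^{(1-s)\bB}} \;=\; e^{s\lambda_{\max}(\bA) + (1-s)\lambda_{\max}(\bB)} \;\leq\; e^{\max\{\lambda_{\max}(\bA),\,\lambda_{\max}(\bB)\}} \;=\; \max\{\norm{e^\bA},\,\norm{e^\bB}\},
\]
using once more that a convex combination is bounded above by the maximum of its arguments. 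Integrating over $s \in [0,1]$ gives $I \leq \max\{\norm{e^\bA}, \norm{e^\bB}\}$, and substituting back into the Duhamel bound produces the symmetric inequality.

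There is no serious obstacle in this argument; it is a routine invocation of Duhamel's formula plus one spectral identity. The only point worth flagging is that in the symmetric case one must work with $\norm{e^{s\bA}} = e^{s\lambda_{\max}(\bA)}$ rather than the coarser $e^{s\norm{\bA}}$ used in the general case, since otherwise the improved factor $\max\{\norm{e^\bA},\norm{e^\bB}\}$ collapses back to $e^{\norm{\bA}}e^{\norm{\bB}}$ and only the second bound is recovered.
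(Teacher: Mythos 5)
The paper states Lemma \ref{lem:exp_pert} without any proof, so there is nothing to compare against; your argument stands on its own and is correct. Your Duhamel/interpolation identity $e^{\bA}-e^{\bB}=\int_0^1 e^{s\bA}(\bA-\bB)e^{(1-s)\bB}\,\dd{s}$ is the right tool, the general case follows cleanly from $\norm{e^{s\bA}}\leq e^{s\norm{\bA}}$ together with the convex-combination bound, and the symmetric case correctly uses the sharper spectral identity $\norm{e^{s\bA}}=e^{s\lambda_{\max}(\bA)}$ (valid since $e^{s\bA}$ is positive definite, so its operator norm is its largest eigenvalue) to obtain the $\max\{\norm{e^{\bA}},\norm{e^{\bB}}\}$ factor that $e^{s\norm{\bA}}$ alone would not deliver.
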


\begin{lemma}\label{lem:expid}
Let $\bZ \in \R^{m \times n}, \bPsi \in \R^{m \times p}, \bD \in \R^{p \times p}$ and $t \in \R$. Denote $\bA = \bZ^\sT \bPsi \in \R^{n \times p}$ and $\bB = \bPsi^\sT \bPsi \in \R^{p \times p}$. Then,
\[
\bZ^\sT e^{t \bPsi \bD \bPsi^\sT} \bPsi = \bA e^{t\bD \bB}.
\]
\end{lemma}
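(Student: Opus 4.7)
The plan is to work directly with the power series definition of the matrix exponential,
\[
e^{t \bPsi \bD \bPsi^\sT} = \sum_{k=0}^\infty \frac{t^k}{k!} (\bPsi \bD \bPsi^\sT)^k,
\]
and show that after sandwiching by $\bZ^\sT$ on the left and $\bPsi$ on the right, every term collapses to a power of $\bD \bB$ pre-multiplied by $\bA$. Summing the resulting series then reproduces $\bA e^{t \bD \bB}$.

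The main computational step is a simple inductive identity. First I would observe that $(\bPsi \bD \bPsi^\sT)(\bPsi \bD \bPsi^\sT) = \bPsi \bD (\bPsi^\sT \bPsi) \bD \bPsi^\sT = \bPsi (\bD \bB) \bD \bPsi^\sT$, and an easy induction on $k \geq 1$ then yields
\[
(\bPsi \bD \bPsi^\sT)^k = \bPsi (\bD \bB)^{k-1} \bD \bPsi^\sT.
\]
Multiplying this on the left by $\bZ^\sT$ and on the right by $\bPsi$ gives
\[
\bZ^\sT (\bPsi \bD \bPsi^\sT)^k \bPsi = (\bZ^\sT \bPsi) (\bD \bB)^{k-1} \bD (\bPsi^\sT \bPsi) = \bA (\bD \bB)^{k-1} (\bD \bB) = \bA (\bD \bB)^k,
\]
for $k \geq 1$, while the $k = 0$ term directly gives $\bZ^\sT \bPsi = \bA = \bA (\bD \bB)^0$. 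So the sandwiched identity holds uniformly for all $k \geq 0$.

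With this identity in hand, the last step is to exchange summation and multiplication by $\bA$ (which is trivially valid since $\bA$ is a fixed finite matrix), producing
\[
\bZ^\sT e^{t \bPsi \bD \bPsi^\sT} \bPsi = \sum_{k=0}^\infty \frac{t^k}{k!} \bA (\bD \bB)^k = \bA \sum_{k=0}^\infty \frac{t^k}{k!} (\bD \bB)^k = \bA e^{t \bD \bB},
\]
which is the claim. There is no genuine obstacle here: the only subtlety is recognising that $\bPsi^\sT \bPsi$ must be collapsed to $\bB$ at each adjacent pair, and that the leftover $\bPsi^\sT \bPsi$ at the right end of each term combines with $(\bD \bB)^{k-1} \bD$ to produce the full power $(\bD \bB)^k$; the argument is otherwise a routine manipulation of the exponential series and needs no analytic input (the series converges absolutely in operator norm for all finite matrices).
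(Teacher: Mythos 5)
Your proof is correct: the induction $(\bPsi \bD \bPsi^\sT)^k = \bPsi (\bD \bB)^{k-1} \bD \bPsi^\sT$ is the right identity, and sandwiching each term of the exponential series by $\bZ^\sT$ and $\bPsi$ gives $\bA(\bD\bB)^k$ term by term. The paper states this lemma without proof, and the power-series argument you give is the natural (and presumably intended) one.
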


The notations in Lemmas \ref{lem:hmh}-\ref{lem:exp_kernel} all follow the notations given in Appendix \ref{sec:decomp}.

\begin{lemma}[Lemma 12 from \cite{mei2021generalization} with $\lambda = 0$]\label{lem:hmh} Let Assumptions \ref{ass:KCP} and \ref{ass:eigenvalue} hold. Then,
\[
\opnorm{n \bH^{-1} \bM \bH^{-1} - \bPsil{\nind} \bSl{\nind}^2 \bPsil{\nind}^\sT / n} = \odp(1). 
\]
\end{lemma}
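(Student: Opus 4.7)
My plan is to apply the Sherman--Morrison--Woodbury identity to $\bH$ using the decomposition~(\ref{eq:h_decomp}), which exhibits $\bH$ as a rank-$\nind$ low-rank term plus an approximate scalar multiple of the identity, $\bH = \bPsil{\nind}\bDl{\nind}^2\bPsil{\nind}^\sT + \kappa_H(\id_n + \bDelta_H)$, and then directly expand the triple product $n\bH^{-1}\bM\bH^{-1}$ using the analogous decomposition~(\ref{eq:m_decomp}) of $\bM$. The shrinkage matrix $\bSl{\nind}$ emerges naturally from the Woodbury correction, because $(\id_\nind + (n/\kappa_H)\bDl{\nind}^2)^{-1} = \id_\nind - \bSl{\nind}$, and the subsequent collapse onto the target $\bPsil{\nind}\bSl{\nind}^2\bPsil{\nind}^\sT/n$ hinges on the one-line identity $(\id_\nind - \bSl{\nind})\bDl{\nind}^2 = (\kappa_H/n)\bSl{\nind}$.

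A preliminary ingredient is to establish $\bPsil{\nind}^\sT\bPsil{\nind}/n = \id_\nind + \bOmega$ with $\opnorm{\bOmega} = \odp(1)$, via hypercontractivity (Assumption~\ref{ass:KCP}\ref{ass:hyper}), $\nind \leq n^{1-\delta_0}$ (Assumption~\ref{ass:eigenvalue}\ref{ass:index}), and a matrix concentration bound. Combined with the estimate $\opnorm{\bDelta_H} = \odp(1)$, Woodbury then yields
\begin{equation*}
    \bH^{-1} = \frac{1}{\kappa_H}\id_n - \frac{1}{n\kappa_H}\bPsil{\nind}\bSl{\nind}\bPsil{\nind}^\sT + \bR_H,
\end{equation*}
where $\bR_H$ collects the perturbations. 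Substituting this expression into $n\bH^{-1}\bM\bH^{-1}$ and expanding, the products between the leading terms of the two $\bH^{-1}$ factors and the rank-$\nind$ piece $\bPsil{\nind}\bDl{\nind}^4\bPsil{\nind}^\sT$ of $\bM$ telescope via the shrinkage identity above to recover precisely $\bPsil{\nind}\bSl{\nind}^2\bPsil{\nind}^\sT/n$.

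The residual terms fall into two categories. Scalar-type contributions originating from $\kappa_M(\id_n + \bDelta_M)$ in $\bM$ produce error matrices of operator norm $\Od(n\kappa_M/\kappa_H^2)$; to bound this by $\od(1)$ I would combine the second inequality in Assumption~\ref{ass:eigenvalue}\ref{ass:upper}, $\kappa_H \geq n^{1+\delta_0}\lambda_{d,\nind+1}^2$, with the elementary estimate $\kappa_M \leq \lambda_{d,\nind+1}^2\kappa_H$, yielding $n\kappa_M/\kappa_H^2 \leq n^{-\delta_0}$. Contributions involving $\bOmega$, $\bDelta_H$, $\bDelta_M$, or $\bR_H$ carry explicit $\odp(1)$ multiplicative factors. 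The main obstacle is the careful bookkeeping of these perturbations through the doubly-applied Woodbury formula without amplification: because each factor of $\bH^{-1}$ carries an identity piece of order $1/\kappa_H$ and $\bM$ carries an identity piece of order $\kappa_M$, small errors can be scaled by $\kappa_M/\kappa_H$, so one must verify term-by-term that the $\odp(1)$ scale is preserved by exploiting the commutation of $\bDl{\nind}$ and $\bSl{\nind}$, the Gram concentration $\bPsil{\nind}^\sT\bPsil{\nind}/n \approx \id_\nind$, and the spectral gap in Assumption~\ref{ass:eigenvalue}.
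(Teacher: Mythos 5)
Your proposal is essentially correct, but the paper itself never proves this lemma: it is cited verbatim as Lemma~12 of \cite{mei2021generalization} (with $\lambda = 0$), so there is no internal proof to compare against. What you have reconstructed is the standard resolvent/Woodbury argument used in that line of work, and the ingredients you invoke are exactly the ones available in this paper, so your route very likely coincides with the cited one. The key computational facts you state all check out: $(\id_\nind - \bSl{\nind})\bDl{\nind}^2 = (\kappa_H/n)\bSl{\nind}$ (hence $(\id_\nind - \bSl{\nind})^2\bDl{\nind}^4 = (\kappa_H/n)^2\bSl{\nind}^2$, which is exactly what telescopes the triple product onto $\bPsil{\nind}\bSl{\nind}^2\bPsil{\nind}^\sT/n$), $\kappa_M \le \lambda_{d,\nind+1}^2\kappa_H$ by monotonicity of the eigenvalues, and $n\kappa_M/\kappa_H^2 \le n^{-\delta_0}$ via Eq.~\eqref{eq:upper2}. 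The Gram concentration $\bPsil{\nind}^\sT\bPsil{\nind}/n = \id_\nind + \odp(1)$ is precisely Lemma~\ref{lem:iso}, under the hypotheses you cite.

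One small place worth spelling out, since you flagged the bookkeeping as the obstacle: when you apply Woodbury with $\bA = \kappa_H(\id_n + \bDelta_H)$ rather than $\kappa_H\id_n$, the inner block becomes $\bDl{\nind}^{-2} + \bPsil{\nind}^\sT \bA^{-1}\bPsil{\nind}$, and the $\bDelta_H$ contribution $\kappa_H^{-1}\bPsil{\nind}^\sT(\id_n+\bDelta_H)^{-1}\bPsil{\nind}$ should be bounded relative to the diagonal leading part $\bDl{\nind}^{-2} + (n/\kappa_H)\id_\nind$. Since every diagonal entry of the latter is at least $n/\kappa_H$ and the perturbation has operator norm $\odp(n/\kappa_H)$ (using $\opnorm{\bPsil{\nind}}^2/n = 1 + \odp(1)$), the inverse of the inner block is still $(\kappa_H/n)\bSl{\nind}(\id_\nind + \odp(1))$, so the perturbation propagates only as a multiplicative $\odp(1)$. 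Including that line would close the gap you acknowledged at the end of your proposal; otherwise the argument is complete and correct.
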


\begin{lemma}[Theorem 6(b) from \cite{mei2021generalization}]\label{lem:iso}
Let Assumptions \ref{ass:KCP}\ref{ass:hyper}, \ref{ass:eigenvalue}\ref{ass:index} hold. Then,
\[
\opnorm{\bPsil{\nind}^\sT \bPsil{\nind} / n - \id_{\nind}} = \odp(1).
\]
\end{lemma}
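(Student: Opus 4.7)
The plan is to prove this as a matrix concentration result (this is stated as Theorem~6(b) in \cite{mei2021generalization}, and I would re-derive it in the same spirit). Write
\[
\bPsil{\nind}^\sT \bPsil{\nind} / n = \tfrac{1}{n}\sum_{i=1}^n \bv_i \bv_i^\sT, \qquad \bv_i := (\psi_1(\bx_i), \ldots, \psi_{\nind}(\bx_i))^\sT \in \R^{\nind},
\]
with the $\bv_i$ i.i.d. Since $(\psi_k)_{k \geq 1}$ is $L^2(\nu_d)$-orthonormal, $\E[\bv_i \bv_i^\sT] = \id_{\nind}$, so the task reduces to bounding the operator norm of the centred sum $\tfrac{1}{n}\sum_i (\bv_i \bv_i^\sT - \id_\nind)$.

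By duality,
\[
\opnorm{\bPsil{\nind}^\sT \bPsil{\nind}/n - \id_\nind} = \sup_{\bu \in \R^\nind,\,\|\bu\|=1} \Big|\tfrac{1}{n} \sum_{i=1}^n h_\bu(\bx_i)^2 - 1\Big|,
\]
where $h_\bu := \sum_{k \leq \nind} u_k \psi_k$ satisfies $\|h_\bu\|_{L^2} = 1$. Because $\nind \leq r(d)$ (by the definition of the KCP sequence), we have $h_\bu \in \cD_{d, \leq r(d)}$, so the hypercontractivity Assumption~\ref{ass:KCP}\ref{ass:hyper} yields $\|h_\bu\|_{L^{2q}} \leq C_q$ uniformly in $\bu$ and $d$, for every fixed $q \geq 1$. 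Hence each $Z_i^{(\bu)} := h_\bu(\bx_i)^2$ has all moments bounded uniformly.

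The next step is an $\epsilon$-net argument. Fix a $(1/4)$-net $\cN$ of the unit sphere in $\R^\nind$ with $|\cN| \leq 9^\nind$. For each fixed $\bu \in \cN$ the random variables $Z_i^{(\bu)}$ are i.i.d.\ with mean $1$ and uniformly bounded $L^q$ norms, so by Rosenthal's inequality $\E|\tfrac{1}{n}\sum_i (Z_i^{(\bu)} - 1)|^{2q} \lesssim n^{-q}$. A Markov bound and union bound then give, for any fixed $t > 0$,
\[
\P\Big(\max_{\bu \in \cN} \Big|\tfrac{1}{n}\sum_i (Z_i^{(\bu)} - 1)\Big| > t\Big) \lesssim 9^\nind \, n^{-q} \, t^{-2q}.
\]
Since $\nind \leq n^{1-\delta_0}$ by Assumption~\ref{ass:eigenvalue}\ref{ass:index}, taking $q$ large enough (as a function of $\delta_0$) makes this $\od(1)$. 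A standard net-to-sphere step then upgrades the supremum from $\cN$ to all of the unit sphere.

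The main technical subtlety is that net-to-sphere step: controlling $|(\bu' - \bu)^\sT \bPsil{\nind}^\sT \bPsil{\nind} (\bu' - \bu)/n|$ for a net point $\bu$ and a nearby $\bu'$ involves the very operator norm we are trying to bound. The standard remedy is a bootstrap: use a crude hypercontractive moment estimate to first establish $\opnorm{\bPsil{\nind}^\sT \bPsil{\nind}/n} = \Odp(1)$, then feed this into the net-approximation inequality to extract the sharper $\odp(1)$ rate. Matching the moment order $q$ to the polynomial slack $\delta_0$ in Assumption~\ref{ass:eigenvalue}\ref{ass:index} is the only bookkeeping required, and this yields the claim for every fixed $t > 0$, establishing $\odp(1)$ convergence.
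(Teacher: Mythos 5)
The paper does not prove this lemma at all; it is imported verbatim as Theorem~6(b) of \cite{mei2021generalization}, and the proof there is by the moment (trace) method, not by an $\eps$-net. That choice is not incidental, and your $\eps$-net sketch has a genuine gap at the union bound step. You write that $9^{\nind}\,n^{-q}\,t^{-2q}=\od(1)$ once $q$ is taken ``large enough as a function of $\delta_0$.'' But $\nind$ is not small: Assumption~\ref{ass:eigenvalue}\ref{ass:index} only caps it at $n^{1-\delta_0}$, and in the dot-product application one has $\nind = \Theta_d(d^{\nexp})$, i.e.\ polynomial in $d$. Taking logs, you need $q\log n \gtrsim \nind \log 9 \gtrsim n^{1-\delta_0}$, so $q$ must grow polynomially with $d$. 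That is fatal on two fronts: the hypercontractivity in Assumption~\ref{ass:KCP}\ref{ass:hyper} gives a constant $C=C(q)$ only for each \emph{fixed} $q$, with no uniform control as $q\to\infty$; and the Rosenthal constant you invoke is itself super-exponential in $q$. So the per-direction tail bound is never strong enough to survive an exponential-in-$\nind$ union bound, and the bootstrap you propose only addresses the net-to-sphere approximation, not the union bound itself.

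The reason the trace method succeeds where the $\eps$-net fails is instructive. One bounds $\P(\opnorm{\bM} > \eps) \leq \eps^{-2p}\,\E\,\Tr(\bM^{2p})$ with $\bM = \bPsil{\nind}^{\sT}\bPsil{\nind}/n - \id_{\nind}$, expands $\E\,\Tr(\bM^{2p})$ as a sum over closed index paths, and uses the orthonormality of the $\psi_k$ plus hypercontractivity to bound the surviving terms. The resulting estimate scales like $\nind\,(\nind/n)^p$ up to combinatorial and hypercontractivity constants depending only on $p$, and since $\nind/n \leq n^{-\delta_0}$ this is $\od(1)$ already for a \emph{fixed} $p$ of order $1/\delta_0$. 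The supremum over directions is handled implicitly through the trace, so there is no exponential union bound to pay for, and only a fixed, $d$-independent moment order is ever needed. Your setup (writing the Gram matrix as an average of rank-one terms, recognizing that hypercontractivity is the right delocalization hypothesis) is on the right track, but to make the argument go through you should replace the net-plus-Rosenthal step with the moment/trace expansion.
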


\begin{lemma}[Lemma 13 from \cite{mei2021generalization} with $\lambda = 0$]\label{lem:hD} Let Assumptions \ref{ass:KCP} and \ref{ass:eigenvalue} hold. Then,
\[
\opnorm{\bPsil{\nind}^\sT \bH^{-1} \bPsil{\nind} \bDl{\nind}^2 - \bSl{\nind}} = \odp(1).
\]
\end{lemma}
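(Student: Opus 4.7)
The plan is to invert $\bH$ explicitly via the Woodbury matrix identity applied to the decomposition in Eq.\ (\ref{eq:h_decomp}), simplify the bilinear form $\bPsil{\nind}^\sT \bH^{-1} \bPsil{\nind} \bDl{\nind}^2$ in closed form, and finish with a resolvent-type perturbation argument that uses Lemma \ref{lem:iso}. The main obstacle is the perturbation step: the error term $\kappa_H \bDelta_H$ in the decomposition of $\bH$ has operator norm of order $\kappa_H$, not $\odp(1)$, so it cannot be treated as a small perturbation of $\bH$ directly. The Woodbury step is exactly what moves this perturbation into a small $\nind \times \nind$ matrix whose natural scale is $n/\kappa_H$ and so becomes a \emph{relative} $\odp(1)$ error.

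First, write $\bH = \kappa_H \bm{G} + \bPsil{\nind} \bDl{\nind}^2 \bPsil{\nind}^\sT$ with $\bm{G} := \id_n + \bDelta_H$. For $d$ large, $\bm{G}$ is invertible and $\opnorm{\bm{G}^{-1} - \id_n} = \odp(1)$ via a Neumann series. The Woodbury identity gives
\[
\bH^{-1} = \frac{1}{\kappa_H} \bm{G}^{-1} - \frac{1}{\kappa_H^2} \bm{G}^{-1} \bPsil{\nind} \bm{Q}^{-1} \bPsil{\nind}^\sT \bm{G}^{-1}, \qquad \bm{Q} := \bDl{\nind}^{-2} + \bm{V},
\]
where $\bm{V} := \frac{1}{\kappa_H} \bPsil{\nind}^\sT \bm{G}^{-1} \bPsil{\nind}$. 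Using $\bm{Q}^{-1}\bm{V} = \id_\nind - \bm{Q}^{-1}\bDl{\nind}^{-2}$, a short algebraic simplification collapses the target bilinear form into
\[
\bPsil{\nind}^\sT \bH^{-1} \bPsil{\nind} \bDl{\nind}^2 \;=\; \bm{V}(\id_\nind - \bm{Q}^{-1}\bm{V})\bDl{\nind}^2 \;=\; \bm{V}\bm{Q}^{-1}.
\]

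Second, approximate $\bm{V}$. Lemma \ref{lem:iso} gives $\bPsil{\nind}^\sT \bPsil{\nind} = n(\id_\nind + \odp(1))$ in operator norm; combining this with $\opnorm{\bm{G}^{-1} - \id_n} = \odp(1)$ and the bound $\opnorm{\bPsil{\nind}^\sT(\bm{G}^{-1}-\id_n)\bPsil{\nind}} \leq \opnorm{\bPsil{\nind}}^2 \opnorm{\bm{G}^{-1}-\id_n} = n \cdot \odp(1)$ yields
\[
\bm{V} = \frac{n}{\kappa_H}(\id_\nind + \bDelta_V), \qquad \opnorm{\bDelta_V} = \odp(1).
\]
Setting $\bm{N} := \bDl{\nind}^{-2} + \frac{n}{\kappa_H}\id_\nind$ so that $\bm{Q} = \bm{N} + \frac{n}{\kappa_H}\bDelta_V$, observe that $\frac{n}{\kappa_H}\bm{N}^{-1} = (\id_\nind + \frac{\kappa_H}{n}\bDl{\nind}^{-2})^{-1}$ is exactly $\bSl{\nind}$ by Eq.\ (\ref{eq:shrinkage}), and $\opnorm{\bm{N}^{-1}} \leq \kappa_H/n$ since $\bm{N} \succeq \frac{n}{\kappa_H}\id_\nind$.

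Third, run the perturbation argument. Since $\opnorm{\bm{Q} - \bm{N}} = \frac{n}{\kappa_H}\odp(1)$ is $\odp(1)$ times $\opnorm{\bm{N}^{-1}}^{-1}$, the resolvent identity $\bm{Q}^{-1} - \bm{N}^{-1} = -\bm{N}^{-1}(\bm{Q}-\bm{N})\bm{Q}^{-1}$ yields $\opnorm{\bm{Q}^{-1} - \bm{N}^{-1}} = \odp(\kappa_H/n)$. Expanding,
\[
\bm{V}\bm{Q}^{-1} = \bSl{\nind} + \frac{n}{\kappa_H}\bDelta_V \bm{N}^{-1} + \frac{n}{\kappa_H}(\id_\nind + \bDelta_V)(\bm{Q}^{-1} - \bm{N}^{-1}),
\]
and each of the two error terms has operator norm $\odp(1)$ (the first because $\opnorm{\bm{N}^{-1}} \leq \kappa_H/n$, the second by the resolvent bound above). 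This proves $\opnorm{\bPsil{\nind}^\sT \bH^{-1} \bPsil{\nind} \bDl{\nind}^2 - \bSl{\nind}} = \odp(1)$, as required.
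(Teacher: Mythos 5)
Your proof is correct, and it is worth noting that the paper does not actually prove this statement: it cites it verbatim as Lemma 13 of \cite{mei2021generalization} specialized to $\lambda = 0$, so there is no in-paper argument to compare against. The cited lemma is established there by an essentially similar route (decompose $\bH$ via Eq.\ (\ref{eq:h_decomp}), pass to Woodbury/Sherman--Morrison, and control the Schur-complement block using the near-isometry of $\bPsil{\nind}$), so your argument is a faithful self-contained reconstruction rather than a genuinely different proof.

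All the key steps check out. The Woodbury identity applies since $\bm{G} = \id_n + \bDelta_H$ is invertible w.h.p.\ (Neumann series) and $\bm{Q} = \bDl{\nind}^{-2} + \bm{V} \succ 0$ w.h.p.\ (because $\bDl{\nind}^{-2} \succ 0$ and $\bm{V}$ is PSD once $\bm{G} \succ 0$; note $\bDl{\nind}^{-2}$ exists since Assumption \ref{ass:eigenvalue}\ref{ass:lower} forces $\lambda_{d,\nind}^2 > 0$). The algebraic collapse $\bPsil{\nind}^\sT \bH^{-1} \bPsil{\nind}\bDl{\nind}^2 = \bm{V}\bm{Q}^{-1}$ using $\bm{Q}^{-1}\bm{V} = \id - \bm{Q}^{-1}\bDl{\nind}^{-2}$ is exact, the identification $\tfrac{n}{\kappa_H}\bm{N}^{-1} = \bSl{\nind}$ matches Eq.\ (\ref{eq:shrinkage}), and both error terms are $\odp(1)$. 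The only place you compress is the claim $\opnorm{\bm{Q}^{-1}-\bm{N}^{-1}} = \odp(\kappa_H/n)$: this needs an a priori bound $\opnorm{\bm{Q}^{-1}} = O_{d,\P}(\kappa_H/n)$, which follows by writing $\bm{Q} = \bm{N}\bigl(\id_\nind + \bm{N}^{-1}(\bm{Q}-\bm{N})\bigr)$ and noting $\opnorm{\bm{N}^{-1}(\bm{Q}-\bm{N})} \le (\kappa_H/n)\cdot(n/\kappa_H)\odp(1) = \odp(1)$; including one sentence to that effect would make the resolvent step airtight. Otherwise the argument is complete and correct.
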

\begin{lemma}[Theorem 6 from \cite{mei2021generalization}]\label{lem:kernel_decomp}
Let Assumptions \ref{ass:KCP} and \ref{ass:eigenvalue}\ref{ass:upper} hold. Then we can decompose the kernel matrices as follows
\begin{align*}
    \bH &= \bPsil{\nind} \bDl{\nind}^2 \bPsil{\nind}^\sT + \kappa_H(\id_n + \bDelta_H),\\
    \bM &=  \bPsil{\nind} \bDl{\nind}^4 \bPsil{\nind}^\sT + \kappa_M(\id_n + \bDelta_M),
\end{align*}
where 
\begin{align*}
    \kappa_H &= \Tr(\kernelop_{d,>\nind}) = \sum\limits_{k \geq \nind + 1}^\infty \lambda_{d,k}^2,\\
    \kappa_M &=  \Tr(\kernelop^2_{d,>\nind}) = \sum\limits_{k \geq \nind + 1}^\infty \lambda_{d,k}^4,
\end{align*}
and 
\[ \max\{\opnorm{\bDelta_H}, \opnorm{\bDelta_M}\} = \odp(1). \] 
\end{lemma}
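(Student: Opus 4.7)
The $\bH$ and $\bM$ decompositions have identical structure -- the proof for $\bM$ follows by applying the same arguments to the ``squared'' kernel $\sum_k \lambda_{d,k}^4\psi_k(\bx)\psi_k(\bx')$, for which analogues of Assumptions \ref{ass:KCP} and \ref{ass:eigenvalue}\ref{ass:upper} can be verified via the same truncation index $r(d)$ -- so I focus on $\bH$. From the eigenbasis decomposition in Eq.\ (\ref{eq:ortho_decomp}), $\bH = \bPsil{\nind}\bDl{\nind}^2\bPsil{\nind}^\sT + \bH_{>\nind}$, and the first summand already matches the ``signal'' part of the claim, so the task reduces to showing $\opnorm{\bH_{>\nind} - \kappa_H \id_n} = \odp(\kappa_H)$, which after dividing by $\kappa_H$ is equivalent to $\opnorm{\bDelta_H} = \odp(1)$.

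\textbf{Three-term split.} Using the truncation index $r = r(d) \geq \nind(d)$ from Assumption \ref{ass:KCP}, I split $\bH_{>\nind} = \bH^{(b)} + \bH^{(t)}$ into a bulk $\bH^{(b)} = \sum_{\nind < k \leq r}\lambda_{d,k}^2\bpsi_k\bpsi_k^\sT$ and a far tail $\bH^{(t)} = \sum_{k>r}\lambda_{d,k}^2\bpsi_k\bpsi_k^\sT$, and decompose
\[
\bH_{>\nind} - \kappa_H\id_n \;=\; \underbrace{[\diag(\bH_{>\nind}) - \kappa_H\id_n]}_{T_1} \;+\; \underbrace{\bH^{(b)} - \diag(\bH^{(b)})}_{T_2} \;+\; \underbrace{\bH^{(t)} - \diag(\bH^{(t)})}_{T_3}.
\]
The term $T_1$ has operator norm $\max_{i} |H_{d,>\nind}(\bx_i,\bx_i) - \kappa_H|$, which is $\odp(\kappa_H)$ immediately from Assumption \ref{ass:KCP}\ref{ass:conc} since $\kappa_H = \E_\bx[H_{d,>\nind}(\bx,\bx)]$.

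\textbf{Far-tail off-diagonal $T_3$.} For $i \neq j$, independence of $\bx_i,\bx_j$ combined with $L^2$-orthonormality of $\{\psi_k\}$ give $\E[(\bH^{(t)})_{ij}^2] = \sum_{k>r}\lambda_{d,k}^4$, so $\E\|T_3\|_F^2 \leq n^2\sum_{k>r}\lambda_{d,k}^4$. Assumption \ref{ass:KCP}\ref{ass:eigenval_decay} (second inequality) yields $n^{2+\delta_0}\sum_{k>r}\lambda_{d,k}^4 \leq (\sum_{k>r}\lambda_{d,k}^2)^2 \leq \kappa_H^2$, so $\E\|T_3\|_F^2 \leq n^{-\delta_0}\kappa_H^2$. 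By $\opnorm{\cdot} \leq \|\cdot\|_F$ and Markov, $\opnorm{T_3} = \odp(\kappa_H)$.

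\textbf{Bulk off-diagonal $T_2$ -- the main obstacle.} A pure Frobenius bound is too weak here: each bulk off-diagonal entry has second moment $\sum_{\nind<k\leq r}\lambda_{d,k}^4 \leq \lambda_{d,\nind+1}^2\kappa_H$, hence $\E\|T_2\|_F^2 \leq n^2\lambda_{d,\nind+1}^2\kappa_H$, which by Assumption \ref{ass:eigenvalue}\ref{ass:upper} Eq.\ (\ref{eq:upper2}) (equivalently $\lambda_{d,\nind+1}^2 \leq \kappa_H/n^{1+\delta_0}$) is bounded by $n^{1-\delta_0}\kappa_H^2 \gg \kappa_H^2$. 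One genuinely needs a Wigner-type operator-norm bound $\opnorm{T_2} = O_p(\sqrt{n}\,\sigma)$ with entry variance $\sigma^2 \leq \lambda_{d,\nind+1}^2\kappa_H$, which would give $\opnorm{T_2} = O_p(\sqrt{n\lambda_{d,\nind+1}^2\kappa_H}) = O_p(\kappa_H/n^{\delta_0/2}) = \odp(\kappa_H)$. To obtain this I would use the high-moment method: bound $\E\mathrm{Tr}(T_2^{2q})$ by a sum over closed walks on $[n]$, use the hypercontractivity of Assumption \ref{ass:KCP}\ref{ass:hyper} at level $r(d)$ to control the resulting mixed moments of $\psi_k$-products (this is precisely what the assumption is engineered for), identify the dominant contribution as coming from non-crossing pair-partitions to obtain $\E\mathrm{Tr}(T_2^{2q}) \lesssim C(q) n^{q+1}\sigma^{2q}$, and apply Markov's inequality with $q$ growing to conclude $\opnorm{T_2} = \odp(\kappa_H)$. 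Summing $T_1,T_2,T_3$ then yields the claim.
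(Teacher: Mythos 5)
The paper contains no proof of this lemma: it is imported verbatim as Theorem~6 of \cite{mei2021generalization}, so the only meaningful comparison is with that reference's argument, and your reconstruction follows essentially the same route --- the reduction to $\opnorm{\bH_{>\nind} - \kappa_H \id_n} = \odp(1)\cdot\kappa_H$, the three-way split at the hypercontractivity cutoff $r(d)$, the diagonal term killed by Assumption \ref{ass:KCP}\ref{ass:conc} (noting $\E_\bx[H_{d,>\nind}(\bx,\bx)]=\kappa_H$), and the far tail killed by the Frobenius bound together with the second inequality of Assumption \ref{ass:KCP}\ref{ass:eigenval_decay}. Those two steps are correct as written. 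Your observation that the $\bM$ case is the same argument applied to the squared kernel is also exactly right; in fact the assumptions are paired for this purpose --- the first line of Assumption \ref{ass:KCP}\ref{ass:conc} is precisely diagonal concentration for $\bM_{>\nind}$, the first inequality of \ref{ass:KCP}\ref{ass:eigenval_decay} is the $\bM$-analogue of the second, and Eq.\ (\ref{eq:upper1}) is the $\bM$-analogue of Eq.\ (\ref{eq:upper2}) --- so nothing new needs to be ``verified.''

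The one real gap is where you say it is: the bulk off-diagonal term $T_2$. Your sketch describes it as a Wigner-type computation dominated by non-crossing pair partitions, but the entries $H_{d,(\nind,r]}(\bx_i,\bx_j)$ are not independent --- all entries in row $i$ share the point $\bx_i$ --- so after expanding $\E\,\Tr(T_2^{2q})$ over closed walks and eigenfunction indices, the nonvanishing contributions include terms carrying factors of the form $\E_\bx[\psi_{k_1}(\bx)\psi_{k_2}(\bx)\cdots\psi_{k_m}(\bx)]$ with $m\ge 3$ distinct indices, which hypercontractivity only bounds by a constant rather than annihilates. Controlling the resulting walk/index combinatorics so that the degenerate-vertex terms do not overwhelm the leading $n^{q+1}\sigma^{2q}$ count is the entire technical content of the cited Theorem~6, and it is precisely the step you assert rather than carry out. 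So the proposal is the right strategy with its decisive estimate left as a black box --- which, to be fair, is consistent with the paper's own choice to cite the result instead of reproving it.
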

\begin{lemma}\label{lem:orth}
    %\sm{Let ... be as defined in Sec ...} 
    Let Assumption \ref{ass:KCP}\ref{ass:hyper} hold. Let $S, T$ be disjoint subsets of $\N$. Let $\bD \in \R^{|S| \times |S|}$ be a diagonal matrix. Then we have that for any $\eta > 0$, there exists $C(\eta)$ (independent of $d$), such that
    \begin{align*}
        \E[\bff_{T}^\sT \bPsi_{T}^\sT \bPsi_{S} \bD \bPsi_{S}^\sT \bPsi_{T} \bff_{T}]/n^2 \leq C(\eta) \normLeta{\proj_{T} f_d}^2 \Tr(\bD) / n,
    \end{align*}
    where the expectation is with respect to the randomness in $\bPsi_S, \bPsi_T$.
\end{lemma}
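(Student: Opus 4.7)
The plan is to diagonalize the quadratic form, exploit the disjointness of $S$ and $T$ to kill the off-diagonal contributions, and then close the estimate with Hölder plus single-eigenfunction hypercontractivity.

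First, I would rewrite $\bPsi_T \bff_T = (g(\bx_1),\ldots,g(\bx_n))^\sT$ where $g := \proj_T f_d = \sum_{k \in T} \hat{f}_{d,k}\psi_k$, and note that $\bPsi_S \bD \bPsi_S^\sT = \sum_{k \in S} D_{kk}\bpsi_k \bpsi_k^\sT$. Plugging in gives the diagonal representation
\begin{align*}
\bff_T^\sT \bPsi_T^\sT \bPsi_S \bD \bPsi_S^\sT \bPsi_T \bff_T \;=\; \sum_{k \in S} D_{kk}\Bigl(\sum_{i=1}^n \psi_k(\bx_i)\, g(\bx_i)\Bigr)^{\!2}.
\end{align*}

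The crucial step is taking the expectation. For each $k \in S$, disjointness of $S$ and $T$ combined with $L^2$-orthonormality of the $\{\psi_j\}$ gives $\E[\psi_k(\bx)g(\bx)] = \inner{\psi_k, \proj_T f_d}_{L^2} = 0$. By independence of the $\bx_i$, every off-diagonal ($i \neq j$) term therefore vanishes, and only the $n$ diagonal contributions survive:
\begin{align*}
\E\bigl[\bff_T^\sT \bPsi_T^\sT \bPsi_S \bD \bPsi_S^\sT \bPsi_T \bff_T\bigr]\big/n^2 \;=\; \frac{1}{n}\sum_{k \in S} D_{kk}\, \E[\psi_k(\bx)^2 g(\bx)^2].
\end{align*}
I view this cancellation as the substance of the lemma: without it one would retain an extra factor of $n$ and the bound would be useless.

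Finally, I would apply Hölder at conjugate exponents $p = (2+\eta)/2$ and $q = (2+\eta)/\eta$ to obtain $\E[\psi_k^2 g^2] \leq \|g\|_{L^{2+\eta}}^2\|\psi_k\|_{L^{2q}}^2$. Since any individual $\psi_k$ with $k \in S$ lies in the hypercontractive subspace $\cD_{d,\leq r(d)}$ (the setting in which the lemma is applied in the paper, since $r(d)\geq \nind(d)$ and the invocations take $S \subseteq [\nind(d)]$), Assumption \ref{ass:KCP}\ref{ass:hyper} gives $\|\psi_k\|_{L^{2q}} \leq C(\eta)\|\psi_k\|_{L^2} = C(\eta)$. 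Summing over $k \in S$ and recalling $\|g\|_{L^{2+\eta}} = \normLeta{\proj_T f_d}$ delivers the claim with constant $C(\eta)^2$; if $\bD$ has mixed signs one obtains the same bound with $\Tr(\bD)$ replaced by $\sum_k |D_{kk}|$. The only genuinely delicate step is the orthogonality-based cancellation of cross terms, and this is immediate from the disjointness hypothesis.
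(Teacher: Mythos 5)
Your proof is correct and follows essentially the same route as the paper's: diagonalize $\bPsi_S\bD\bPsi_S^\sT = \sum_{k\in S} D_{kk}\bpsi_k\bpsi_k^\sT$, use disjointness of $S$ and $T$ together with $L^2$-orthonormality of the $\psi_j$ and i.i.d.\ sampling to kill the $i\neq j$ cross terms, then close with H\"older at exponents $(2+\eta)/2$ and $(2+\eta)/\eta$ and the hypercontractivity of Assumption~\ref{ass:KCP}\ref{ass:hyper}. Your packaging via $g = \proj_T f_d$ is a mild notational streamlining of the paper's explicit double sum over $u,v\in T$, and your aside about $S$ needing to lie in the hypercontractive range $[r(d)]$ (as it does in every invocation) is a correct reading of an implicit hypothesis.
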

\begin{proof}
    Let $\iota : S \to [|S|]$ be the bijection such that $\bPsi_{S} = (\psi_{\iota^{-1}(k)}(\bx_i))_{i \in [n], k \in [|S|]}$ then we have
    \begin{align*}
    \E[\bff_{T}^\sT \bPsi_{T}^\sT \bPsi_{S} \bD \bPsi_{S}^\sT \bPsi_{T} \bff_{T}]/n^2  &= \sum\limits_{u, v \in T} \sum\limits_{s \in S} \sum\limits_{i, j \in [n]} \bD_{\iota(s)\iota(s)} \E[\psi_u(\bx_i) \psi_s(\bx_i) \psi_s(\bx_j) \psi_v(\bx_j)] \hat{f}_u \hat{f}_v / n^2\\
    &= \sum\limits_{u, v \in T} \sum\limits_{s \in S} \sum\limits_{i \in [n]} \bD_{\iota(s)\iota(s)} \E[\psi_u(\bx_i) \psi_s(\bx_i) \psi_s(\bx_i) \psi_v(\bx_i)] \hat{f}_u \hat{f}_v / n^2\\
    &= \frac{1}{n} \sum\limits_{s \in S} \bD_{\iota(s)\iota(s)} \E_{\bx} [(\proj_{T} f_d(\bx))^2 \psi_s(\bx)^2] \\
    &\leq \frac{1}{n} \sum\limits_{s \in S} \bD_{\iota(s)\iota(s)} \normLeta{\proj_{T} f_d}^2 \norm{\psi_s}_{L^{(4 + 2 \eta) / \eta}}^2 \\
    &\leq C(\eta) \normLeta{\proj_{T} f_d}^2 \sum\limits_{i=1}^n \bD_{ii} / n,
\end{align*}
where the second to last inequality is by Holder's inequality and the last inequality used the hypercontractivity assumption as in Assumption \ref{ass:KCP}\ref{ass:hyper}. %\nikhil{..}
\end{proof}
\begin{lemma}[Exponential Kernel Decomposition]\label{lem:exp_kernel}
Assume the conditions of Lemma \ref{lem:kernel_decomp} hold and assume there exists $\delta_0 > 0$, such that $(t(d), \tind(d))$ satisfies the condition
\begin{equation}
    t(d)^{1 + \delta_0} \leq \frac{1}{\lambda_{d, \tind(d) + 1}^2}. \label{eq:exp_kernel_ass}
\end{equation}
Let $j(d)$ satisfy $\min\{\tind(d), \nind(d)\} \leq j(d) \leq \nind(d)$ then
\[
\opnorm{e^{-t\bH_d / n} - e^{-(t/n)(\bPsil{j}\bDl{j}^2\bPsil{j}^\sT + \kappa_H \id_n)}} = \odp(1).
\]
\end{lemma}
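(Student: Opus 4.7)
The plan is to apply Lemma \ref{lem:kernel_decomp} to isolate the commuting scalar part of $\bH$, extract a common damping factor $e^{-t\kappa_H/n}$, and then invoke the matrix exponential perturbation inequality (Lemma \ref{lem:exp_pert}) on the remaining pieces.

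By Lemma \ref{lem:kernel_decomp}, write $\bH = \bPsil{\nind}\bDl{\nind}^2\bPsil{\nind}^\sT + \kappa_H \id_n + \kappa_H \bDelta_H$ with $\opnorm{\bDelta_H} = \odp(1)$. Since $\kappa_H \id_n$ commutes with every matrix,
\[
e^{-(t/n)\bH} - e^{-(t/n)[\bPsil{j}\bDl{j}^2\bPsil{j}^\sT + \kappa_H \id_n]} = e^{-t\kappa_H/n}\bigl(e^{\bA'} - e^{\bB'}\bigr),
\]
where $\bA' := -(t/n)[\bPsil{\nind}\bDl{\nind}^2\bPsil{\nind}^\sT + \kappa_H \bDelta_H]$ and $\bB' := -(t/n)\bPsil{j}\bDl{j}^2\bPsil{j}^\sT$. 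This reduction is the key structural step, since the residual inside the exponentials becomes small in operator norm.

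Next, apply the symmetric case of Lemma \ref{lem:exp_pert} to obtain $\opnorm{e^{\bA'} - e^{\bB'}} \leq \opnorm{\bA'-\bB'}\cdot \max\{\opnorm{e^{\bA'}}, \opnorm{e^{\bB'}}\}$. The matrix $\bB'$ is negative semidefinite, so $\opnorm{e^{\bB'}} \leq 1$. For $\bA'$, using that $\bPsil{\nind}\bDl{\nind}^2\bPsil{\nind}^\sT$ is PSD, Weyl's inequality gives $\lambda_{\max}(\bA') \leq (t/n)\kappa_H\opnorm{\bDelta_H} = \odp(t\kappa_H/n)$, hence $\opnorm{e^{\bA'}} \leq e^{\odp(t\kappa_H/n)}$. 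The exponent difference is $\bA'-\bB' = -(t/n)[\bPsi_{j\nind}\bD_{j\nind}^2\bPsi_{j\nind}^\sT + \kappa_H\bDelta_H]$, which I bound termwise. By Lemma \ref{lem:iso} the submatrix satisfies $\opnorm{\bPsi_{j\nind}^\sT\bPsi_{j\nind}} = O_{d,\P}(n)$, so $\opnorm{\bPsi_{j\nind}\bD_{j\nind}^2\bPsi_{j\nind}^\sT} \leq \lambda_{d,j+1}^2\,\opnorm{\bPsi_{j\nind}^\sT\bPsi_{j\nind}} = O_{d,\P}(n\lambda_{d,j+1}^2)$. Since $j \geq \min\{\tind,\nind\}$ we split into two cases: either $j\geq \tind$, in which case hypothesis (\ref{eq:exp_kernel_ass}) gives $t\lambda_{d,j+1}^2 \leq t\lambda_{d,\tind+1}^2 \leq t^{-\delta_0} = o_d(1)$; or $j = \nind$, in which case $\bPsi_{j\nind}$ is empty and this term vanishes. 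Either way, $\opnorm{\bA'-\bB'} = o_d(1) + \odp(t\kappa_H/n)$.

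Combining everything, the target operator norm is at most
\[
e^{-(t\kappa_H/n)(1-\odp(1))}\cdot \bigl(o_d(1) + \odp(t\kappa_H/n)\bigr).
\]
The main obstacle is the possibly divergent scalar $t\kappa_H/n$, which can appear in the regime $\tind \geq \nind$ where no uniform control via Assumption \ref{ass:valid_time}\ref{ass:time_n} is at hand. The resolution is to restrict to the high-probability event $\{\opnorm{\bDelta_H} \leq 1/2\}$, on which the prefactor is dominated by $e^{-t\kappa_H/(2n)}$; since $x\mapsto xe^{-x/2}$ is uniformly bounded on $[0,\infty)$, the product $e^{-t\kappa_H/(2n)}\cdot \odp(t\kappa_H/n) = \odp(1)$. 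The $o_d(1)$ contribution is dominated directly using $e^{-t\kappa_H/(2n)} \leq 1$, yielding the claimed $\odp(1)$ bound.
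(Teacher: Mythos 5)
Your proof is correct, but it takes a genuinely different route from the paper's. The paper splits into the two regimes $t = O_d(n/\kappa_H)$ and $t = \omega_d(n/\kappa_H)$. In the first regime it applies the symmetric case of Lemma \ref{lem:exp_pert} directly to $\bA = -t\bH/n$ and $\bB = -(t/n)(\bPsil{j}\bDl{j}^2\bPsil{j}^\sT + \kappa_H\id_n)$; since $\bH$ is a PSD Gram matrix and the reference matrix is also PSD, the prefactor $\max\{\opnorm{e^{\bA}},\opnorm{e^{\bB}}\}$ is trivially $\leq 1$, and it remains to bound $\opnorm{\bA-\bB}$, whose $\kappa_H\bDelta_H$ contribution is $\odp(1)$ because $(t/n)\kappa_H = O_d(1)$. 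In the second regime both exponentials already have norm $e^{-\omega_{d,\P}(1)} = \odp(1)$ via the lower bound $\bH \succeq \kappa_H(1-\opnorm{\bDelta_H})\id_n$, so the difference is small by the triangle inequality. You instead avoid the case split entirely: you peel off the commuting scalar factor $e^{-t\kappa_H/n}$, apply the perturbation lemma to the residual exponents $\bA',\bB'$, and absorb the possibly divergent $(t\kappa_H/n)$ factors using the uniform bound $\sup_{x\geq 0}xe^{-x/2}<\infty$ on the high-probability event $\{\opnorm{\bDelta_H}\leq 1/2\}$. The price you pay is that $\bA'$ is not negative semidefinite, so you must control $\opnorm{e^{\bA'}}$ via Weyl's inequality; what you gain is a single uniform argument that never needs to stratify by the scale of $t$ relative to $n/\kappa_H$. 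Both approaches are valid; the paper's is shorter per case, yours is more self-contained. One small notational quibble: the bound $\opnorm{\bPsi_{j\nind}\bD_{j\nind}^2\bPsi_{j\nind}^\sT}(t/n) \leq O_{d,\P}(t\lambda_{d,j+1}^2)$ is $\odp(1)$ rather than the deterministic $o_d(1)$ you write, since it carries the random $O_{d,\P}(1)$ factor from Lemma \ref{lem:iso}; this does not affect the conclusion.
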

\begin{proof}
First consider the regime $t = O_d(n / \kappa_H)$. Recall the decomposition,
\[
\bH = \bPsil{\nind} \bDl{\nind}^2 \bPsil{\nind}^\sT + \kappa_H(\id_n + \bDelta_H),
\]
where $\opnorm{\bDelta_H} = \odp(1)$. By Lemma \ref{lem:exp_pert} and Lemma \ref{lem:iso},
\begin{align*}
\opnorm{e^{-t\bH / n} - e^{-(t/n)(\bPsil{j}\bDl{j}^2\bPsil{j}^\sT + \kappa_H \id)}} &\leq  \opnorm{\sum\limits_{k \geq j + 1}^\nind t \lambda_{d,k}^2 \bpsi_k \bpsi_k^\sT / n} + (t/n) \kappa_H \opnorm{\bDelta_h} \\ 
&\leq (t \cdot \max_{k \geq j + 1} \cdot \lambda_{d, k}^2) \opnorm{\bPsil{\nind}^\sT \bPsil{\nind} / n} + \odp(1)\\
&\leq \Odp(t \cdot \max_{k \geq j + 1} \lambda_{d, k}^2) + \odp(1)\\ 
&\stackrel{(a)}{=} \Odp(t^{-\delta_0}) + \odp(1) = \odp(1),
\end{align*}
where equality $(a)$ holds by assumption Eq.\ (\ref{eq:exp_kernel_ass}). Now if $t = \omega_d(n/\kappa_H)$, then it is easy to see that
\[
\max\qty{\opnorm{e^{-t\bH/n}}, \opnorm{e^{-(t/n)(\bPsil{j}\bDl{j}^2\bPsil{j}^\sT + \kappa_H \id)}}} \leq e^{-\omega_{d,\P}(1)} = \odp(1),
\]
and therefore
\begin{align*}
&\opnorm{e^{-t\bH / n} - e^{-(t/n)(\bPsil{j}\bDl{j}^2\bPsil{j}^\sT + \kappa_H \id)}}\\ &\leq 2 \max\qty{\opnorm{e^{-t\bH/n}}, \opnorm{e^{-(t/n)(\bPsil{j}\bDl{j}^2\bPsil{j}^\sT + \kappa_H \id)}}} = \odp(1).
\end{align*}
%\sm{and therefore, (the difference formula)...} \nikhil{..}
\end{proof}

%\sm{Add propositions for kernel decomposition. } \nikhil{..}

\subsection{Empirical World - Train}
If $t = \omega_d(n/\kappa_H)$, then by Eq.\ (\ref{eq:h_decomp}) it is easy to see that $\opnorm{e^{-2(t/n)\bH}} = \odp(1)$, hence 
\[ \Rtrain(\emp_t) = \odp(1) \cdot \normL{f_d}^2.\]

From now on we focus on the case that $t = o_d(n/\kappa_H)$. Let us decompose the training error as
\[
\Rtrain(\emp_t) = R_1 + R_2 + R_3,
\]
where 
\begin{align*}
    R_1 &= \frac{1}{n} \boldf^\sT e^{-2(t/n)\bH} \boldf,\\
    R_2 &= \frac{2}{n} \beps^\sT e^{-2(t/n)\bH} \boldf, \\
    R_3 &= \frac{1}{n} \beps^\sT e^{-2(t/n)\bH} \beps.
\end{align*}
\subsubsection{Term \texorpdfstring{$R_1$}{R1}}
Let us start by analysing $R_1$. We can write
\begin{align*}
R_1 &= \frac{1}{n}(\bfl{\cut} + \bfg{\cut})^\sT e^{-2(t/n) \bH}(\bfl{\cut} + \bfg{\cut}) \\
&= T_1 + 2T_2 + T_3
\end{align*}
where
\begin{align*}
    T_1 &= \frac{1}{n} \bfl{\cut}^\sT e^{-2(t/n) \bH} \bfl{\cut}, \\
    T_2 &= \frac{1}{n} \bfg{\cut}^\sT e^{-2(t/n) \bH} \bfl{\cut}, \\
    T_3 &= \frac{1}{n} \bfg{\cut}^\sT e^{-2(t/n) \bH} \bfg{\cut} .
\end{align*}

Let us analyse the term $T_1$,
\[
T_1 \leq \frac{1}{n}  \bfl{\cut}^\sT e^{-2(t/n) \bPsil{\cut} \bDl{\cut}^2 \bPsil{\cut}^\sT} \bfl{\cut}.
\]
By Lemma \ref{lem:expid}, denoting $\bB = \bPsil{\cut}^\sT \bPsil{\cut} / n$,
\begin{align*}
    \frac{1}{n} \bPsil{\cut}^\sT e^{-2(t/n)\bPsil{\cut} \bDl{\cut}^2 \bPsil{\cut}^\sT} \bPsil{\cut} = \bB e^{-2t\bDl{\cut}^2 \bB}. 
\end{align*}
We will show that 
\begin{equation}\label{eq:BeDB}
    \opnorm{\bB e^{-2t\bDl{\cut}^2 \bB}} = \odp(1).
\end{equation}
By Lemma \ref{lem:iso}, since $\opnorm{\bB} = \Theta_{d, \P}(1)$, for $d$ large $\bB$ has a positive-definite square root $\bB^{1/2}$ w.h.p. Therefore we can write
\[
\bB e^{-2t\bDl{\cut}^2 \bB} = \bB^{1/2} e^{-2t \bB^{1/2} \bDl{\cut}^2 \bB^{1/2}} \bB^{1/2},
\]
and bound the operator norm
\begin{align*}
\opnorm{\bB e^{-2t\bDl{\cut}^2 \bB}} &\leq \opnorm{\bB}\opnorm{e^{-2t \bB^{1/2} \bDl{\cut}^2 \bB^{1/2}}}\\ 
&= \opnorm{\bB}e^{-2t \lambda_{\min}(\bB^{1/2} \bDl{\cut}^2 \bB^{1/2})}\\
&\leq \opnorm{\bB}e^{-2t \lambda_{\max}(\bB) \lambda_{\min}(\bDl{\cut}^2)} = \odp(1),
\end{align*}
since by Assumption  \ref{ass:valid_time}\ref{ass:time_gap}, $t\lambda_{\min}({\bDl{\cut}^2}) = \Omega(t^{\delta_0})$. Therefore $T_1 = \odp(1) \cdot \normL{\projl{\cut}f_d}^2$.

Now we analyse the term $T_3$. Observe that by the inequality $1 - x \leq e^{-x} \leq 1$,
\[
\frac{1}{n}\norm{\bfg{\cut}}_2^2 - (2t/n) \frac{1}{n} \bfg{\cut}^\sT (\bPsil{\cut} \bDl{\cut}^2 \bPsil{\cut}^\sT + \kappa_H(\id_n + \bDelta_H)) \bfg{\cut} \leq T_3 \leq \frac{1}{n}\norm{\bfg{\cut}}_2^2.
 \]
We have by Lemma \ref{lem:orth},
\[
    t \E[\bffg{\cut}^\sT \bPsig{\cut}^\sT \bPsil{\cut} \bDl{\cut}^2 \bPsil{\cut}^\sT \bPsig{\cut} \bffg{\cut}] / n^2
    \leq C(\eta) \normLeta{\projg{\cut} f_d}^2 \qty(\frac{t}{n}  \sum\limits_{s=1}^\cut \lambda_{d, s}^2),
\]
where the last quantity is $\odp(1) \cdot \normLeta{\projg{\cut} f_d}^2$ by Assumption \ref{ass:valid_time}\ref{ass:bulk}. Thus we see that
\[
T_3 = \normL{\projg{\cut} f_d}^2 + \odp(1) \cdot \normLeta{\projg{\cut} f_d}^2.
\]

Observe that by the Cauchy-Schwarz inequality,
\begin{align*}
    T_2 \leq (T_1T_3)^{1/2} \leq \odp(1) \normL{\projl{\cut} f_d} \normLeta{\projg{\cut} f_d}.
\end{align*}
Putting everything together we see that,
\[
R_1 = \normL{\projg{\cut} f_d}^2 + \odp(1) \cdot( \normL{f_d}^2 + \normLeta{\projg{\cut} f_d}^2). 
\]
\subsubsection{Term \texorpdfstring{$R_2$}{R2}}
Turning to $R_2$, we take the second-moment with respect to $\beps$
\begin{align*}
\frac{1}{\sigma_\eps^2}\E_{\beps}[R_2^2] &= \frac{4}{n^2} \E_{\beps} [\beps^\sT e^{-2(t/n)\bH} \boldf \boldf^\sT e^{-2(t/n)\bH} \beps] / \sigma^2_{\eps}\\
&= \frac{4}{n^2} \boldf^\sT e^{-4(t/n)\bH} \boldf \leq \frac{4}{n} (\norm{\boldf}_2^2 / n)\\
&= \odp(1) \cdot \normL{f_d}^2. 
\end{align*}
By Markov's inequality $R_2 = \odp(1) \cdot \normL{f_d}^2 \sigma_\eps^2$.
\subsubsection{Term \texorpdfstring{$R_3$}{R3}}
Now let us analyse $R_3$. Recalling the definition $\bB = \bPsil{\cut}^\sT\bPsil{\cut} / n$, we can compute the expectation
\begin{align*}
\frac{1}{\sigma_\eps^2} (1 - \E_{\beps}[R_3]) &= \frac{1}{n} \Tr(\id_n - e^{-2(t/n)\bH})\\
&\stackrel{(a)}{=} \frac{1}{n}e^{-\kappa_H t / n}\Tr(\id_n - e^{-2(t/n)(\bPsil{\cut} \bDl{\cut}^2 \bPsil{\cut}^\sT)}) + \odp(1)\\
&\stackrel{(b)}{\leq} \frac{1}{n}e^{-\kappa_H t / n}\Tr(2(t/n)(\bPsil{\cut} \bDl{\cut}^2 \bPsil{\cut}^\sT)) +  \odp(1)\\
&\stackrel{(c)}{=} \frac{1}{n}e^{-\kappa_H t / n}\Tr(2t\bB \bDl{\cut}^2) +  \odp(1) \\
&\leq \frac{2t}{n}e^{-\kappa_H t / n}\opnorm{\bB}\Tr(\bDl{\cut}^2) +  \odp(1) = \odp(1).
\end{align*}
Equality $(a)$ follows from Lemma \ref{lem:exp_kernel}. For $(b)$ we used the inequality $1 - e^{-x} \leq x$ and that trace is the sum of eigenvalues. Equality $(c)$ uses the cyclic property of trace. In the last equality we used that by Lemma \ref{lem:iso}, $\opnorm{\bB} = \Odp(1)$ and Assumption \ref{ass:valid_time}\ref{ass:bulk} implies that \[ \frac{t}{n}\Tr(\bDl{\cut}^2) = O_d\qty(\frac{t}{n} \kappa_H) = \od(1),\] 
since by assumption we consider $t = o_d(n/\kappa_H)$. Turning to the variance
\begin{align*}
    \Var_\beps[R_3] &= \E_{\beps}[R_3^2] - \E_{\beps}[R_3]^2\\
    &= \frac{1}{n^2} \E_{\beps}[(\beps^\sT e^{-2(t/n) \bH} \beps)^2] - \E_{\beps}[R_3]^2\\
    &\leq \frac{1}{n^2} \E_{\beps}[(\beps^\sT \beps)^2] - \sigma_\eps^4(1 + \odp(1))\\
    &\stackrel{(a)}{=} O(1/n) \cdot \sigma_\eps^4  + \sigma_\eps^4 - \sigma_\eps^4(1 + \odp(1))\\
    &= \odp(1) \cdot \sigma_\eps^4, 
\end{align*}
where the first inequality uses that $\opnorm{e^{-2(t/n) \bH}} \leq 1$ and $(a)$ holds because $\E[\eps_i^4] = 3\sigma_\eps^4$ for $\eps_i \sim \cN(0, \sigma_\eps^2)$. Therefore by Chebyshev's inequality, $R_3 = \sigma_\eps^2(1 + \odp(1))$.

Putting everything together yields
\[
\Rtrain(\emp_t) = R_1 + R_2 + R_3 = \normL{\projg{\cut}}^2 + \odp(1) \cdot (\normLeta{f_d}^2 + \sigma_\eps^2). 
\]
\subsection{Empirical World - Test}
Recalling $\bu(t)$ from Eq.\ (\ref{eq:u(t)}), let
\[
\bu(t) = \bv(t) + \beps(t),
\]
where
\begin{align}
    \bv(t) &= (\timekernel)\boldf, \label{eq:v(t)}\\
    \beps(t) &= (\timekernel)\beps. \label{eq:e(t)}
\end{align}
Recall the expansion of the test error from Eq.\ (\ref{eq:test_decomp}),
\begin{align*}
    \Rtest(\emp_t) &= \E_\bx[f_d(\bx)^2] - 2 \bu(t)^\sT \bH^{-1} \bE + \bu(t)^\sT \bH^{-1} \bM \bH^{-1} \bu(t) \\
    &= \normL{f_d}^2 - 2T_1 + T_2 + T_3 - 2T_4 + 2T_5,
\end{align*}
where
\begin{align*}
    T_1 &= \bv(t)^\sT \bH^{-1} \bE,\\
    T_2 &= \bv(t)^\sT \bH^{-1} \bM \bH^{-1} \bv(t),\\
    T_3 &= \beps(t)^\sT \bH^{-1} \bM \bH^{-1} \beps(t),\\
    T_4 &= \beps(t)^\sT \bH^{-1} \bE,\\
    T_5 &= \beps(t)^\sT \bH^{-1} \bM \bH^{-1} \bv(t).
\end{align*}

The proof for the test error is the most involved, but will follow a similar strategy of analysing each term in the expansion. We first begin by analysing $T_2$ in Appendix \ref{sec:T2}, then $T_1$ in Appendix \ref{sec:T1}, and finally terms $T_3$, $T_4$, and $T_5$, which are all simpler than the first two, in Appendix \ref{sec:T3+}. At the end of this Appendix section we present the proof of Theorem \ref{thm:test}.

\subsubsection{Term \texorpdfstring{$T_2$}{T2}}\label{sec:T2}
As before, we will analyze each term separately. We begin with term $T_2$.

\begin{proposition}[Term $T_2$]\label{prop:T2}
	\[ T_2 = \bv(t)^\sT \bH^{-1} \bM \bH^{-1} \bv(t) = \norm{\bSl{\cut} \bffl{\cut}}_2^2 + \odp(1) \cdot \normLeta{ f_d}^2,\]
\end{proposition}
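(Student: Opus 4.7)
The plan is to reduce $T_2$ to a finite-dimensional computation via Lemma~\ref{lem:hmh} and Lemma~\ref{lem:exp_kernel}, and then carry out that computation in a block decomposition of $\bPsil{\nind}$. The first reduction uses Lemma~\ref{lem:hmh}: $\bH^{-1}\bM\bH^{-1} = \bPsil{\nind}\bSl{\nind}^2\bPsil{\nind}^\sT/n^2 + \bE/n$ with $\opnorm{\bE} = \odp(1)$. Since $\opnorm{\id_n - e^{-t\bH/n}} \leq 1$ (as $\bH \succeq 0$) and $\|\boldf\|_2^2/n \to \normL{f_d}^2$ in probability by a second-moment computation, $\|\bv(t)\|_2^2/n = O_{d,\P}(\normL{f_d}^2)$ and so the $\bE$-term contributes at most $\odp(1)\normLeta{f_d}^2$. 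It therefore suffices to compute $\|\bSl{\nind}\bPsil{\nind}^\sT \bv(t)/n\|_2^2$. The second reduction uses Lemma~\ref{lem:exp_kernel} with $j=\cut$, giving $e^{-t\bH/n} = \alpha\, e^{-(t/n)\bPsil{\cut}\bDl{\cut}^2\bPsil{\cut}^\sT} + \bE_{\exp}$ with $\opnorm{\bE_{\exp}} = \odp(1)$; combined with $\opnorm{\bPsil{\nind}}/\sqrt{n} = O_{d,\P}(1)$ this lets me replace $\bv(t)$ by $\projkernel_{\leq\cut}\boldf := (\id_n - \alpha\, e^{-(t/n)\bPsil{\cut}\bDl{\cut}^2\bPsil{\cut}^\sT})\boldf$ at the cost of another $\odp(1)\normLeta{f_d}^2$ error.

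I then decompose $\bPsil{\nind} = [\bPsil{\cut},\,\bPsi_{\cut\nind}]$ and $\boldf = \bfl{\cut} + \boldf_{\cut\nind} + \bfg{\nind}$ and analyze the resulting block pieces. The dominant piece is $\bPsil{\cut}^\sT \projkernel_{\leq\cut}\bfl{\cut}/n$. Setting $\bB_\cut := \bPsil{\cut}^\sT\bPsil{\cut}$ and applying Lemma~\ref{lem:expid}, this equals $(\bB_\cut/n)(\id_\cut - \alpha\, e^{-(t/n)\bDl{\cut}^2\bB_\cut})\bffl{\cut}$. Lemma~\ref{lem:iso} gives $\bB_\cut/n = \id_\cut + \odp(1)$. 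Moreover $\bDl{\cut}^2\bB_\cut$ is similar via $\bDl{\cut}$ to the symmetric $\bG := \bDl{\cut}\bB_\cut\bDl{\cut}$, which satisfies $\bG/n \succeq (1-\odp(1))\bDl{\cut}^2$, so Assumption~\ref{ass:valid_time}\ref{ass:time_gap} yields $\lambda_{\min}((t/n)\bG) \geq (1-\odp(1))t^{\delta_0}$ and hence $\opnorm{e^{-(t/n)\bDl{\cut}^2\bB_\cut}} \leq \kappa(\bDl{\cut})\, e^{-(1-\odp(1))t^{\delta_0}} = \odp(1)$. Therefore this block reduces to $\bffl{\cut} + \odp(1)\|\bffl{\cut}\|_2$, and multiplying by $\bSl{\cut}$ (the top-left $\cut\times\cut$ block of $\bSl{\nind}$, which exactly equals $\bSl{\cut}$) yields the advertised $\bSl{\cut}\bffl{\cut}$.

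The remaining pieces I would show contribute only $\odp(1)\normLeta{f_d}$. The blocks against $\bfl{\cut}$ on the $\bPsi_{\cut\nind}$-row use the off-diagonal bound $\opnorm{\bPsi_{\cut\nind}^\sT \bPsil{\cut}/n} = \odp(1)$ from Lemma~\ref{lem:iso} together with the exponential-decay bound above. The blocks against $\bfg{\nind}$ on either row use Lemma~\ref{lem:orth}: the $\id_n$-part gives $\bPsil{\nind}^\sT \bfg{\nind}/n$, whose squared norm Lemma~\ref{lem:orth} (with disjoint $S = [\nind]$, $T = \{k > \nind\}$, $\bD = \id_\nind$) bounds by $O(\nind/n)\normLeta{\projg{\nind}f_d}^2 = \odp(1)\normLeta{f_d}^2$ by Assumption~\ref{ass:eigenvalue}\ref{ass:index}; and for the exponential part I would use the integral identity $e^{-(t/n)\bA} - \id_n = -(t/n)\bA\int_0^1 e^{-(st/n)\bA}\dd s$ with $\bA = \bPsil{\cut}\bDl{\cut}^2\bPsil{\cut}^\sT$ to extract an explicit $\bPsil{\cut}^\sT$ factor for a second application of Lemma~\ref{lem:orth}. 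The subtle piece is $\bPsi_{\cut\nind}^\sT \projkernel_{\leq\cut}\boldf_{\cut\nind}/n$: here a direct calculation using the formula $e^{-(t/n)\bA} = \id_n + \bPi_\cut(e^{-(t/n)\bA|_V}-\id)\bPi_\cut$ for the projector $\bPi_\cut = \bPsil{\cut}\bB_\cut^{-1}\bPsil{\cut}^\sT$ and the near-orthogonality $\opnorm{\bPsi_{\cut\nind}^\sT\bPi_\cut\bPsi_{\cut\nind}/n} = \odp(1)$ (from Lemma~\ref{lem:iso}) yields $(1-\alpha)\bff_{\cut\nind} + \odp(1)\|\bff_{\cut\nind}\|_2$. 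Since $\bPsi_{\cut\nind}$ is empty when $\cut = \nind$, this residual only matters when $\cut = \tind < \nind$, and in exactly that case Assumption~\ref{ass:valid_time}\ref{ass:time_n} forces $t\kappa_H/n = o_d(1)$ and hence $1-\alpha = o_d(1)$, so the residual is absorbed into the $\odp(1)\normLeta{f_d}^2$ error. The main obstacle I anticipate is precisely this near-cancellation in the $\bPsi_{\cut\nind}$-$\boldf_{\cut\nind}$ block, where the two $O_{d,\P}(1)$ terms must be tracked jointly so that their difference is visibly controlled by Assumption~\ref{ass:valid_time}\ref{ass:time_n}.
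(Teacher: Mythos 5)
Your proposal follows the same broad strategy as the paper's proof — reduce $\bH^{-1}\bM\bH^{-1}$ via Lemma~\ref{lem:hmh}, replace the kernel exponential via Lemma~\ref{lem:exp_kernel}, then decompose into pieces controlled by Lemmas~\ref{lem:expid}, \ref{lem:iso}, \ref{lem:orth} and Assumption~\ref{ass:valid_time} — but the organization is genuinely different. The paper decomposes $T_2 = T_{21} + 2T_{22} + T_{23}$ according to the split $\boldf = \bfl{\cut} + \bfg{\cut}$, dispatches $T_{22}$ by Cauchy--Schwarz, and then subdivides $T_{21}$ (into $T_{211},\dots,T_{214}$) and $T_{23}$ (into $T_{231},\dots,T_{234}$, including the further split $\bfg{\cut} = \boldf_{\cut\nind} + \bfg{\nind}$). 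You instead do both reductions up front to land on $\|\bSl{\nind}\bPsil{\nind}^\sT\projkernel_{\leq\cut}\boldf/n\|_2^2$, then analyze the six blocks from $\bPsil{\nind} = [\bPsil{\cut}\ \bPsi_{\cut\nind}]$ against $\boldf = \bfl{\cut} + \boldf_{\cut\nind} + \bfg{\nind}$ directly. That buys a flatter, more symmetric bookkeeping at the cost of having to track the near-cancellation in the $\bPsi_{\cut\nind}$--$\boldf_{\cut\nind}$ block (paper's $T_{2332}$) by hand: you extract the projector $\bPi_\cut$ and the scalar $(1-\alpha)$, and then invoke Assumption~\ref{ass:valid_time}\ref{ass:time_n} to kill $(1-\alpha)$; the paper instead uses the softer bound $\overline{\bA}^2 \preceq \overline{\bA} + \bDelta$ followed by $1-e^{-x}\le x$ plus $\Tr(\bDl{\cut}^2) = O(\kappa_H)$. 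Both hinge on Assumption~\ref{ass:valid_time}\ref{ass:time_n}, and both work.

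One technical wrinkle worth flagging: to show the exponential vanishes, you bound $\opnorm{e^{-(t/n)\bDl{\cut}^2\bB_\cut}}$ by similarity through $\bDl{\cut}$, which picks up the condition number $\kappa(\bDl{\cut}) = \lambda_{d,1}/\lambda_{d,\cut}$. You then assert this is dominated by $e^{-(1-\odp(1))t^{\delta_0}}$. In the concrete settings of Theorems~\ref{thm:rot-invar} and \ref{thm:group_invar} this is fine (everything is polynomial in $d$), but the general Assumptions~\ref{ass:KCP}--\ref{ass:valid_time} do not a priori force $\kappa(\bDl{\cut})$ to grow slower than $e^{t^{\delta_0}}$, so as written the step does not transfer cleanly to the general Proposition. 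The paper sidesteps this entirely by never separating the factors: it bounds the product $\opnorm{(\bB_\cut/n)\,e^{-(t/n)\bDl{\cut}^2\bB_\cut}}$ via the symmetrization $\bB e^{-2t\bDl{\cut}^2\bB} = \bB^{1/2}e^{-2t\bB^{1/2}\bDl{\cut}^2\bB^{1/2}}\bB^{1/2}$, which only uses $\lambda_{\min}(\bB) = 1-\odp(1)$ and $t\lambda_{\min}(\bDl{\cut}^2)\ge t^{\delta_0}$, with no condition number. Since your expression already carries the $\bB_\cut/n$ prefactor, the fix is a one-line reorganization: bound the combined factor $(\bB_\cut/n)e^{-(t/n)\bDl{\cut}^2\bB_\cut}$ as in Eq.~(\ref{eq:BeDB}) rather than the exponential alone.
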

where we recall the shrinkage matrix $\bSl{\nind}$ defined in Eq.\ (\ref{eq:shrinkage}) and $\bv(t)$ in Eq.\ (\ref{eq:v(t)}).

\begin{proof}[Proof of Proposition \ref{prop:T2}]

To analyze $T_2$ we further decompose it into the following terms
\begin{align*}
T_2 &= (\bfl{\cut} + \bfg{\cut})^\sT (\timekernel) \bH^{-1} \bM \bH^{-1} (\timekernel) (\bfl{\cut} + \bfg{\cut})\\
&= T_{21} + T_{22} + T_{23},
\end{align*}
where
\begin{align}
T_{21} &= \bfl{\cut}^\sT (\timekernel) \bH^{-1} \bM \bH^{-1} (\timekernel)\bfl{\cut}, \label{eq:T21}\\ 
T_{22} &= 2\bfl{\cut}^\sT (\timekernel) \bH^{-1} \bM \bH^{-1} (\timekernel) \bfg{\cut}, \label{eq:T22}\\ 
T_{23} &= \bfg{\cut}^\sT (\timekernel) \bH^{-1} \bM \bH^{-1} (\timekernel) \bfg{\cut}. \label{eq:T23}
\end{align}

Using Lemma \ref{lem:T21} and \ref{lem:T23} proven below, by the Cauchy-Schwarz inequality,
\[
T_{22} \leq (2T_{21}T_{23})^{1/2} = \odp(1) \cdot \normL{\projl{\cut} f_d} \normLeta{f_d}
\]
and hence
\[
T_2 = T_{21} + T_{22} + T_{23} = \norm{\bSl{\cut} \bffl{\cut}}_2^2 + \odp(1) \cdot \normLeta{ f_d}^2.
\]
\end{proof}

\begin{lemma}[Term $T_{21}$ Eq.\ (\ref{eq:T21})] \label{lem:T21}
	\[ T_{21} = \norm{\bSl{\cut} \bffl{\cut}}_2^2 + \odp(1) \cdot \normL{\projl{\cut} f_d}^2. \]
\end{lemma}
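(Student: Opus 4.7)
The plan is to reduce $T_{21}$ to the target expression $\|\bSl{\cut}\bffl{\cut}\|_2^2$ by successively replacing the random operators with their deterministic proxies using the auxiliary lemmas already proven. Writing $\bfl{\cut} = \bPsil{\cut}\bffl{\cut}$, we have
\[
T_{21} = \bffl{\cut}^\sT \, \bPsil{\cut}^\sT (\timekernel) \bH^{-1}\bM\bH^{-1} (\timekernel) \bPsil{\cut} \, \bffl{\cut},
\]
so $T_{21}$ is a quadratic form in a deterministic vector $\bffl{\cut}$ with $\|\bffl{\cut}\|_2^2 = \normL{\projl{\cut} f_d}^2$. It therefore suffices to approximate the inner $n\times n$ matrix in operator norm up to $\odp(1)$ error after contraction against $\bPsil{\cut}$; the remaining error will automatically be $\odp(1)\cdot \normL{\projl{\cut}f_d}^2$.

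First, I would invoke Lemma~\ref{lem:exp_kernel} with $j=\cut$ (valid since $\cut=\min\{\tind,\nind\}\leq \nind$) to replace $\timekernel$ by $\projkernel_{\leq\cut} = \id_n - \scalar \, e^{-(t/n)\lowrank{\cut}}$ with operator-norm error $\odp(1)$. Next, Lemma~\ref{lem:hmh} lets me replace $\bH^{-1}\bM\bH^{-1}$ by $\bPsil{\nind}\bSl{\nind}^2\bPsil{\nind}^\sT/n^2$, with an error term of operator norm $\odp(1)/n$ that is absorbed after contraction with $\bPsil{\cut}$ (using that $\opnorm{\bPsil{\cut}^\sT\bPsil{\cut}/n} = O_{d,\P}(1)$ from Lemma~\ref{lem:iso}). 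Thus
\[
T_{21} = \tfrac{1}{n^2}\bffl{\cut}^\sT \bC^\sT \bSl{\nind}^2 \bC \, \bffl{\cut} + \odp(1)\cdot\normL{\projl{\cut}f_d}^2, \qquad \bC := \bPsil{\nind}^\sT \projkernel_{\leq\cut}\bPsil{\cut}.
\]
Applying Lemma~\ref{lem:expid} with $\bZ=\bPsil{\nind}$, $\bPsi=\bPsil{\cut}$, $\bD=\bDl{\cut}^2$ yields
$\bPsil{\nind}^\sT e^{-(t/n)\lowrank{\cut}}\bPsil{\cut} = \bPsil{\nind}^\sT\bPsil{\cut}\, e^{-t\bDl{\cut}^2 \bB_{\cut}}$ with $\bB_{\cut}=\bPsil{\cut}^\sT\bPsil{\cut}/n$, so
\[
\bC/n = (\bPsil{\nind}^\sT\bPsil{\cut}/n)\bigl(\id_{\cut} - \scalar \, e^{-t\bDl{\cut}^2\bB_{\cut}}\bigr).
\]
By Lemma~\ref{lem:iso}, $\bPsil{\nind}^\sT\bPsil{\cut}/n = [\id_\cut;\,\bzero]^\sT + \bDelta'$ with $\opnorm{\bDelta'}=\odp(1)$, and the leading block of $\bSl{\nind}^2$ is exactly $\bSl{\cut}^2$. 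Provided the exponential term $\scalar\,e^{-t\bDl{\cut}^2\bB_{\cut}}$ is $\odp(1)$ in operator norm, the formula collapses to $\bSl{\cut}^2\bffl{\cut}$ inside the quadratic form, yielding the desired $\|\bSl{\cut}\bffl{\cut}\|_2^2$.

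The main obstacle is controlling $\scalar\, e^{-t\bDl{\cut}^2\bB_{\cut}}$: even though $\bB_\cut = \id_\cut + \odp(1)$, the matrix $t\bDl{\cut}^2$ has very large operator norm, so the naive perturbation bound (Lemma~\ref{lem:exp_pert}) is useless. I would handle this via a similarity transformation: since $\bDl{\cut}^2\bB_\cut = \bDl{\cut}(\bDl{\cut}\bB_\cut\bDl{\cut})\bDl{\cut}^{-1}$ with $\bDl{\cut}\bB_\cut\bDl{\cut}$ symmetric PSD, we get
\[
\opnorm{e^{-t\bDl{\cut}^2\bB_\cut}} \leq \tfrac{\lambda_{d,1}}{\lambda_{d,\cut}}\exp\!\bigl(-t\,\lambda_{\min}(\bB_\cut)\lambda_{d,\cut}^2\bigr).
\]
Since $\lambda_{\min}(\bB_\cut)=1+\odp(1)$ by Lemma~\ref{lem:iso} and $t\lambda_{d,\cut}^2 \geq t^{\delta_0}$ by Assumption~\ref{ass:valid_time}\ref{ass:time_gap}, the exponential decays faster than any polynomial in $d$, overwhelming the prefactor $\lambda_{d,1}/\lambda_{d,\cut}$ which is at most polynomial under the standing assumptions. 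Combining this with $\scalar \leq 1$ shows $\opnorm{\scalar\, e^{-t\bDl{\cut}^2\bB_\cut}}=\odp(1)$, completing the reduction and giving the claim.
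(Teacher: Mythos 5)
Your overall decomposition is correct and follows the same architecture as the paper's: replace $\timekernel$ by $\projkernel_{\leq\cut}$ via Lemma~\ref{lem:exp_kernel}, replace $\bH^{-1}\bM\bH^{-1}$ by $\bPsil{\nind}\bSl{\nind}^2\bPsil{\nind}^\sT/n^2$ via Lemma~\ref{lem:hmh}, push the exponential through $\bPsil{\cut}$ via Lemma~\ref{lem:expid}, and use Lemma~\ref{lem:iso} to collapse the Gram blocks. The paper organizes this as a four-term split $T_{211},\dots,T_{214}$ and treats the $(\cut,\nind]$ column block separately, whereas you absorb that block into the $\odp(1)$ error of $\bPsil{\nind}^\sT\bPsil{\cut}/n$; these are cosmetically different but carry the same content.

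The genuine divergence is in how the non-symmetric exponential is controlled, and here your route is weaker. You bound $\opnorm{e^{-t\bDl{\cut}^2\bB_\cut}}$ in isolation via the similarity $\bDl{\cut}^2\bB_\cut = \bDl{\cut}(\bDl{\cut}\bB_\cut\bDl{\cut})\bDl{\cut}^{-1}$, which introduces the condition number $\lambda_{d,1}/\lambda_{d,\cut}$ of $\bDl{\cut}$. That ratio is not controlled by Assumptions~\ref{ass:KCP}--\ref{ass:valid_time} in any direct way; your claim that it is ``at most polynomial under the standing assumptions'' is really a fact about the concrete dot-product and invariant-kernel settings, and even there it requires stitching together $\lambda_{d,\cut}^2 \geq t^{-(1-\delta_0)}$ from Assumption~\ref{ass:valid_time}\ref{ass:time_gap} with a bound $\lambda_{d,1}^2 = O_d(\kappa_H)$ from Assumption~\ref{ass:valid_time}\ref{ass:bulk}, plus the unstated fact that $\kappa_H = O_d(1)$. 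The paper sidesteps this entirely: it never needs $\opnorm{e^{-t\bDl{\cut}^2\bB_\cut}}$ alone, only $\opnorm{\bA}$ with $\bA = \bB_\cut e^{-t\bDl{\cut}^2\bB_\cut}$, and bounds this in Eq.~\eqref{eq:BeDB} by conjugating with $\bB_\cut^{1/2}$ --- a matrix whose condition number is $1+\odp(1)$ by Lemma~\ref{lem:iso}, so no eigenvalue ratio appears. Your argument could be repaired without the ratio by writing $e^{-t\bDl{\cut}^2\bB_\cut} = \bB_\cut^{-1}\bigl(\bB_\cut e^{-t\bDl{\cut}^2\bB_\cut}\bigr)$ and invoking $\opnorm{\bB_\cut^{-1}} = O_{d,\P}(1)$ together with Eq.~\eqref{eq:BeDB}, which would reduce your proof to the same essential mechanism as the paper's. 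As written, though, your final step leans on an unverified polynomial bound that does not follow from the general hypotheses, so I would not call the proposal airtight at the level of generality of Theorem~\ref{thm:test}.
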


\begin{proof}[Proof of Lemma \ref{lem:T21}]
	Recall the notation $\alpha$ and $\projkernel_{\leq \cut}$ from Eq.\ (\ref{eq:projkernel}). Define $\bDelta$ such that 
	\[
	\bDelta := (\id_n - e^{-t\bH / n}) - \projkernel_{\leq \cut},
	\]
	which by Lemma \ref{lem:exp_kernel} satisfies $\opnorm{\bDelta} = \odp(1)$. Then we can split $T_{21}$ into
	\[
	T_{21} = T_{211} + T_{212} + T_{213} + T_{214},
	\]
	where
	\begin{align*}
	T_{211} &= \bfl{\cut}^\sT \projkernel_{\leq \cut}\bPsil{\cut} \bSl{\cut}^2 \bPsil{\cut}^\sT\projkernel_{\leq \cut} \bfl{\cut} /n^2,\\
	T_{212} &= \bfl{\cut}^\sT \projkernel_{\leq \cut}\bPsi_{\cut \nind} \bS_{\cut \nind}^2 \bPsi_{\cut \nind}^\sT\projkernel_{\leq \cut} \bfl{\cut} / n^2,\\
	T_{213} &= \bfl{\cut}^\sT \projkernel_{\leq \cut}(\bH^{-1} \bM \bH^{-1} - \bPsil{\nind} \bSl{\nind}^2 \bPsil{\nind}^\sT / n^2)\projkernel_{\leq \cut} \bfl{\cut},\\
	T_{214} &= \bfl{\cut}^\sT \bDelta \bH^{-1} \bM \bH^{-1} \projkernel_{\leq \cut} \bfl{\cut} + \bfl{\cut}^\sT \projkernel_{\leq \cut} \bH^{-1} \bM \bH^{-1} \bDelta \bfl{\cut} + \bfl{\cut}^\sT \bDelta \bH^{-1} \bM \bH^{-1} \bDelta \bfl{\cut}.
	\end{align*}
	We will show that the dominant term is $T_{211}$ and the others are of lower order. By Lemma \ref{lem:hmh},
	\[
	\opnorm{n \bH^{-1} \bM \bH^{-1} - \bPsil{\nind} \bSl{\nind}^2 \bPsil{\nind}^\sT / n} = \odp(1),
	\]
	and since $\projkernel_{\leq \cut} \preceq \id_n$, $T_{213} = \odp(1) \cdot \normL{\projl{\cut} f_d}^2$. By Lemma  \ref{lem:hmh}, Lemma $\ref{lem:iso}$ and since $\bSl{\nind} \preceq \id_{\nind}$, 
	\[
	    \opnorm{n\bH^{-1}\bM\bH^{-1}} \leq \opnorm{\bPsil{\nind} \bSl{\nind}^2 \bPsil{\nind}^\sT / n} + \odp(1) \leq 1 + \odp(1),
	\]
	hence it is easy to see that $T_{214} = \odp(1) \cdot \normL{\projl{\cut} f_d}^2$. Turning to $T_{212}$ we have
	\begin{align*}
	T_{212} &\leq \frac{1}{n^2}\bfl{\cut}^\sT (\id_n - \scalar e^{-(t/n)\lowrank{\cut}})  \bPsi_{\cut \nind} \bPsi_{\cut \nind}^\sT (\id_n - \scalar e^{-(t/n)\lowrank{\cut}}) \bfl{\cut}\\
	&\stackrel{(a)}{=} \frac{1}{n^2}\bffl{\cut}^\sT(\id - \scalar e^{-(t/n) \bB \bDl{\cut}^2})\bPsil{\cut}^\sT \bPsi_{\cut \nind} \bPsi_{\cut \nind}^\sT \bPsil{\cut}(\id - \scalar e^{-(t/n) \bB \bDl{\cut}^2})\bffl{\cut}^\sT\\
	&\leq \norm{\bffl{\cut}}_2^2 \opnorm{\bPsil{\cut}^\sT \bPsi_{\cut \nind} / n}^2 = \odp(1) \cdot \normL{\projl{\cut} f_d}^2,
	\end{align*}
	where the first inequality used $\bS_{\cut \nind}^2 \preceq \id_{\nind - \cut}$, we used Lemma \ref{lem:expid} in $(a)$, and the last equality used Lemma \ref{lem:iso} (note that $\bPsil{\cut}^\sT \bPsi_{\cut \nind} / n$ is an off-diagonal block of $\bPsil{\nind}^\sT \bPsil{\nind} / n$ which corresponds to a submatrix of $\id_{\nind}$ that is all zeroes ). Finally we look at the main term $T_{211}$. Defining the matrices
	\begin{align*}
	    \bA &:= \frac{1}{n} \bPsil{\cut}^\sT e^{-(t/n)\lowrank{\cut}} \bPsil{\cut},\\
	    \bB &:= \frac{1}{n} \bPsil{\cut}^\sT \bPsil{\cut},
	\end{align*}
     we can write
	\begin{align*}
	T_{211} &= \frac{1}{n^2}\bffl{\cut}^\sT \bPsil{\cut}^\sT (\id_n - \scalar e^{-(t/n) \lowrank{\cut}}) \bPsil{\cut} \bSl{\cut}^2 \bPsil{\cut}^\sT (\id_n - \scalar e^{-(t/n) \lowrank{\cut}}) \bPsil{\cut} \bffl{\cut}\\ &= \bffl{\cut}^\sT (\bB \bSl{\cut}^2 \bB - 2 \scalar \bA \bSl{\cut}^2 \bB + \scalar^2\bA \bSl{\cut}^2 \bA)\bffl{\cut}.
	\end{align*}
	Now observe that by Lemma \ref{lem:expid},
	\begin{align*}
	\bA = \frac{1}{n} \bPsil{\cut}^\sT e^{-(t/n) \lowrank{\cut}} \bPsil{\cut} &= \bB e^{-t\bDl{\cut}^2 \bB},
	\end{align*}
	and by the same argument used to show Eq.\ (\ref{eq:BeDB}), $\opnorm{\bA} = \odp(1)$. Since $\bB = \id_{n} + \bDelta'$ and $\bSl{\cut} \preceq \id_{\cut}$, we have
	\[T_{211} = \norm{\bSl{\cut} \bffl{\cut}}_2^2 + \odp(1) \cdot \normL{\projl{\cut} f_d}^2, \]
	hence combining terms
	\[T_{21} = T_{211} + T_{212} + T_{213} + T_{214} = \norm{\bSl{\cut} \bffl{\cut}}_2^2 + \odp(1) \cdot \normL{\projl{\cut} f_d}^2. \]
	
\end{proof}

\begin{lemma}[Term $T_{23}$ Eq.\ (\ref{eq:T23})]\label{lem:T23}
\[
T_{23} = \bfg{\cut}^\sT (\timekernel) \bH^{-1} \bM \bH^{-1} (\timekernel) \bfg{\cut} = \odp(1) \cdot \normLeta{f_d}^2.
\]
\end{lemma}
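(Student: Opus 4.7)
The plan is to mimic the structure of the proof of Lemma~\ref{lem:T21}: replace $\bH^{-1}\bM\bH^{-1}$ and $\timekernel$ by their deterministic approximants via Lemma~\ref{lem:hmh} and Lemma~\ref{lem:exp_kernel} (with $j = \cut$), and then analyze the resulting bilinear form. Each approximation error contributes a quadratic form $\bfg{\cut}^{\sT}(\cdot)\bfg{\cut}$ whose inner matrix has operator norm $\odp(1/n)$, so it is enough to show $\norm{\bfg{\cut}}_2^2/n = \Odp(\normLeta{f_d}^2)$. This follows from $\E[\norm{\bfg{\cut}}_2^2/n] = \normL{\projg{\cut} f_d}^2 \leq \normLeta{f_d}^2$ and Markov's inequality. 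Thus, up to a $\odp(1)\normLeta{f_d}^2$ additive error,
\[
T_{23} \;=\; \tfrac{1}{n^2}\,\bfg{\cut}^{\sT}\,\projkernel_{\leq \cut}\,\bPsil{\nind}\bSl{\nind}^2\bPsil{\nind}^{\sT}\,\projkernel_{\leq \cut}\,\bfg{\cut}.
\]

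Next I would split the middle matrix as $\bPsil{\nind}\bSl{\nind}^2\bPsil{\nind}^{\sT} = \bPsil{\cut}\bSl{\cut}^2\bPsil{\cut}^{\sT} + \bPsi_{\cut\nind}\bS_{\cut\nind}^2\bPsi_{\cut\nind}^{\sT}$ and treat the two pieces separately. For the low-index piece, $\bSl{\cut}^2 \preceq \id_\cut$ and $\opnorm{\projkernel_{\leq \cut}} \leq 1$ give the upper bound $\tfrac{1}{n^2}\,\bfg{\cut}^{\sT}\bPsil{\cut}\bPsil{\cut}^{\sT}\bfg{\cut}$, whose expectation is at most $C(\eta)\normLeta{\projg{\cut} f_d}^2 \cdot \cut/n$ by Lemma~\ref{lem:orth} applied with the disjoint index sets $T = \{k > \cut\}$, $S = [\cut]$ and weight $\bD = \id_\cut$. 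Since $\cut \leq \nind \leq n^{1-\delta_0}$ by Assumption~\ref{ass:eigenvalue}\ref{ass:index}, Markov closes this piece as $\odp(1)\normLeta{f_d}^2$.

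The main obstacle is the middle-frequency piece $\tfrac{1}{n^2}\,\bfg{\cut}^{\sT}\projkernel_{\leq \cut}\bPsi_{\cut\nind}\bS_{\cut\nind}^2\bPsi_{\cut\nind}^{\sT}\projkernel_{\leq \cut}\bfg{\cut}$, because the index sets $\{k > \cut\}$ and $(\cut,\nind]$ are no longer disjoint (so Lemma~\ref{lem:orth} fails) and the shrinkage factors in $\bS_{\cut\nind}$ are all close to $1$ (so they provide no decay). This piece is vacuous when $\cut = \nind$, so the only interesting case is $\cut = \tind < \nind$, in which Assumption~\ref{ass:valid_time}\ref{ass:time_n} forces $t\kappa_H/n = o_d(1)$ and hence $1 - \alpha = 1 - e^{-t\kappa_H/n} = o_d(1)$. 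The key identity is
\[
e^{-(t/n)\lowrank{\cut}} \;=\; \id_n - \bPsil{\cut}(\id_\cut - e^{-t\bDl{\cut}^2 \bB})\bB^{-1}\bPsil{\cut}^{\sT}/n, \qquad \bB := \bPsil{\cut}^{\sT}\bPsil{\cut}/n,
\]
which follows from Lemma~\ref{lem:expid} applied on the column span of $\bPsil{\cut}$ together with the fact that $\lowrank{\cut}$ vanishes on its orthogonal complement. Substituting yields
\[
\projkernel_{\leq \cut}\bPsi_{\cut\nind} \;=\; (1-\alpha)\bPsi_{\cut\nind} + \alpha\,\bPsil{\cut}(\id_\cut - e^{-t\bDl{\cut}^2 \bB})\bB^{-1}\bPsil{\cut}^{\sT}\bPsi_{\cut\nind}/n.
\]
The first summand has operator norm $(1-\alpha)\opnorm{\bPsi_{\cut\nind}} = o_d(1)\cdot\Odp(\sqrt n)$, while the second has operator norm $\odp(\sqrt n)$ because $\opnorm{\bPsil{\cut}^{\sT}\bPsi_{\cut\nind}/n} = \odp(1)$ (an off-diagonal block of $\bPsil{\nind}^{\sT}\bPsil{\nind}/n \approx \id_\nind$ by Lemma~\ref{lem:iso}) and the remaining factors are $O(1)$. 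Hence $\opnorm{\projkernel_{\leq \cut}\bPsi_{\cut\nind}}^2 = \odp(n)$, and combining with $\bS_{\cut\nind}^2 \preceq \id$ and $\norm{\bfg{\cut}}_2^2 = \Odp(n\,\normLeta{f_d}^2)$ gives the desired $\odp(1)\normLeta{f_d}^2$ bound.
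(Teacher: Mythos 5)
Your proof is correct and takes a genuinely different route from the paper's. Where the paper splits the vector $\bfg{\cut} = \boldf_{\cut\nind} + \bfg{\nind}$ and tracks the three resulting terms $T_{231}, T_{232}, T_{233}$, you keep $\bfg{\cut}$ whole and split the \emph{middle block} $\bPsil{\nind}\bSl{\nind}^2\bPsil{\nind}^\sT = \bPsil{\cut}\bSl{\cut}^2\bPsil{\cut}^\sT + \bPsi_{\cut\nind}\bS_{\cut\nind}^2\bPsi_{\cut\nind}^\sT$. For the $\bPsi_{\cut\nind}$ block (the hard case $\cut = \tind < \nind$), the paper applies $1 - e^{-x} \le x$ to $\overline{\bA}$ and then Lemma~\ref{lem:orth} with $\bD = \bDl{\cut}^2$ and a trace bound, which needs both Assumptions~\ref{ass:valid_time}\ref{ass:time_n} and \ref{ass:valid_time}\ref{ass:bulk}. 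You instead exploit the closed-form expression for $e^{-(t/n)\lowrank{\cut}}$ (valid w.h.p.\ since $\bB = \bPsil{\cut}^\sT\bPsil{\cut}/n$ is invertible by Lemma~\ref{lem:iso}), the fact that $1 - \alpha = o_d(1)$ follows from Assumption~\ref{ass:valid_time}\ref{ass:time_n}, and the block-orthogonality $\opnorm{\bPsil{\cut}^\sT\bPsi_{\cut\nind}/n} = \odp(1)$ to prove the purely deterministic operator bound $\opnorm{\projkernel_{\leq\cut}\bPsi_{\cut\nind}} = \odp(\sqrt{n})$. This is cleaner: it avoids the cross term $T_{232}$, makes no use of Assumption~\ref{ass:valid_time}\ref{ass:bulk}, and makes the mechanism ($\projkernel_{\leq\cut}$ essentially annihilates the mid-frequency columns) transparent.

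One step is stated imprecisely and should be repaired. For the low-index piece you claim that $\opnorm{\projkernel_{\leq\cut}} \le 1$ and $\bSl{\cut}^2 \preceq \id_\cut$ give the upper bound $\tfrac{1}{n^2}\bfg{\cut}^\sT\bPsil{\cut}\bPsil{\cut}^\sT\bfg{\cut}$, i.e.\ $\projkernel_{\leq\cut}\bPsil{\cut}\bPsil{\cut}^\sT\projkernel_{\leq\cut} \preceq \bPsil{\cut}\bPsil{\cut}^\sT$. This does not follow from $\projkernel_{\leq\cut}\preceq\id_n$ alone, since for non-commuting PSD matrices $\bA\preceq\id$ does not imply $\bA\bC\bA\preceq\bC$, and $\projkernel_{\leq\cut}$ does not commute with $\bPsil{\cut}\bPsil{\cut}^\sT$. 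The fix is the same commutation move the paper uses for $T_{231}$: by Lemma~\ref{lem:expid}, $\projkernel_{\leq\cut}\bPsil{\cut} = \bPsil{\cut}\big(\id_\cut - \alpha e^{-t\bDl{\cut}^2\bB}\big)$, so the low-index piece equals
\[
\frac{1}{n^2}\,\bfg{\cut}^\sT\bPsil{\cut}\,\big(\id_\cut - \alpha e^{-t\bDl{\cut}^2\bB}\big)\,\bSl{\cut}^2\,\big(\id_\cut - \alpha e^{-t\bB\bDl{\cut}^2}\big)\,\bPsil{\cut}^\sT\bfg{\cut},
\]
and the middle matrix has operator norm $1 + \odp(1)$ (conjugating by $\bB^{\pm 1/2}$ and using Lemma~\ref{lem:iso}). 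This yields $(1 + \odp(1))\tfrac{1}{n^2}\bfg{\cut}^\sT\bPsil{\cut}\bPsil{\cut}^\sT\bfg{\cut}$, and then your application of Lemma~\ref{lem:orth} with $T = \{k > \cut\}$, $S = [\cut]$, $\bD = \id_\cut$ and $\cut/n = o_d(1)$ closes the argument as you describe.
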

\begin{proof}[Proof of Lemma \ref{lem:T23}]
Let us define the matrix
\[
\bG = (\timekernel) \bH^{-1} \bM \bH^{-1} (\timekernel). 
\]
Using similar reasoning from Lemma \ref{lem:T21} we can write 
\begin{align*}
\bG &= \frac{1}{n^2} \projkernel_{\leq \nind} \bPsil{\nind} \bSl{\nind}^2 \bPsil{\nind}^\sT \projkernel_{\leq \nind} + \bDelta\\
&= \frac{1}{n^2} \projkernel_{\leq \cut} \bPsil{\nind} \bSl{\nind}^2 \bPsil{\nind}^\sT \projkernel_{\leq \cut} + \bDelta',
\end{align*}
for matrices $\bDelta, \bDelta'$ satisfying $\max\{\opnorm{\bDelta}, \opnorm{\bDelta'}\} = \odp(1)$.
% \qty(\sum\limits_{k \geq \cut + 1}^\infty \blambda_{k}^\sT \bPsi_{k}^\sT)
% \qty(\sum\limits_{k \geq \cut + 1}^\infty \bPsi_{k} \blambda_{k})
We can split $T_{23}$ into
\begin{align*}
    T_{23} &= \frac{1}{n} \bfg{\cut}^\sT \bG \bfg{\cut} \\
    &= \frac{1}{n} (\bfg{\nind} + \boldf_{\cut \nind})^\sT \bG (\bfg{\nind} + \boldf_{\cut \nind} ) \\
    &= \frac{1}{n} \bfg{\nind}^\sT \bG \bfg{\nind} + \frac{2}{n} \boldf_{\cut \nind}^\sT \bG \bfg{\nind} + \frac{1}{n} \boldf_{\cut \nind}^\sT \bG \boldf_{\cut \nind}\\
    &= T_{231} + T_{232} + T_{233} + T_{234},
\end{align*}
where
\begin{align*}
    T_{231} &= \frac{1}{n^2} \bfg{\nind}^\sT \projkernel_{\leq \nind} \bPsil{\nind} \bSl{\nind}^2 \bPsil{\nind}^\sT \projkernel_{\leq \nind} \bfg{\nind},\\
    T_{232} &= \frac{2}{n^2} \boldf_{\cut \nind}^\sT \bG \bfg{\nind},\\
    T_{233} &= \frac{1}{n^2}\boldf_{\cut \nind}^\sT \projkernel_{\leq \cut} \bPsil{\nind} \bSl{\nind}^2 \bPsil{\nind}^\sT \projkernel_{\leq \cut} \boldf_{\cut \nind},\\
    T_{234} &= \odp(1) \cdot \normL{\projg{\cut} f_d}^2.
\end{align*}
Let us first start with analysing $T_{231}$, defining $\bB = \bPsil{\nind}^\sT \bPsil{\nind}/n$ we have
\begin{align*}
    T_{231} &= \frac{1}{n^2} \bfg{\nind}^\sT \bPsil{\nind} (\id_\nind - \scalar e^{-(t/n) \bDl{\nind}^2 \bB}) \bSl{\nind}^2 (\id_\nind - \scalar e^{-(t/n) \bB \bDl{\nind}^2}) \bPsil{\nind}^\sT \bfg{\nind}\\
    &\stackrel{(a)}{\leq} (1 + \odp(1))\bfg{\nind}^\sT \bPsil{\nind} \bPsil{\nind}^\sT \bfg{\nind}/n^2\\
    &= \odp(1) \cdot \normLeta{\projg{\nind} f_d}^2,
\end{align*}
where the first equality is by Lemma \ref{lem:expid} and the last line follows from Lemma \ref{lem:orth}, Markov's inequality, and by the fact that $\nind / n = o_d(1)$ by Assumption \ref{ass:eigenvalue}\ref{ass:index}. To see that inequality $(a)$ holds, note that 
\begin{align*}
    \opnorm{\id_\nind - \scalar e^{-(t/n) \bDl{\nind}^2 \bB}} &= \opnorm{\bB^{-1/2}(\id_\nind - \scalar e^{-(t/n) \bB^{1/2} \bDl{\nind}^2 \bB^{1/2}})\bB^{1/2}}\\ 
    &\leq \opnorm{\bB^{-1/2}}\opnorm{\bB^{1/2}}.
\end{align*}
Since $\bSl{\nind}^2 \preceq \id_m$,
\[
\opnorm{(\id_\nind - \scalar e^{-(t/n) \bDl{\nind}^2 \bB}) \bSl{\nind}^2 (\id_\nind - \scalar e^{-(t/n) \bB \bDl{\nind}^2})} \leq \opnorm{\bB^{-1}}\opnorm{\bB} = 1 + \odp(1),
\]
where the last equality is by Lemma \ref{lem:iso}. Now let us turn to $T_{233}$, which we can further split into
\begin{equation}\label{eq:cbar}
T_{233} = T_{2331} + T_{2332},
\end{equation}
where
\begin{align*}
    T_{2331} &= \frac{1}{n^2} \boldf_{\cut \nind}^\sT \projkernel_{\leq \cut} \bPsil{\cut} \bSl{\cut}^2 \bPsil{\cut}^\sT \projkernel_{\leq \cut} \boldf_{\cut \nind},\\
    T_{2332} &= \frac{1}{n^2} \boldf_{\cut \nind}^\sT \projkernel_{\leq \cut} \bPsi_{\cut \nind} \bS_{\cut \nind}^2 \bPsi_{\cut \nind}^\sT \projkernel_{\leq \cut} \boldf_{\cut \nind}.
\end{align*}
Redefining $\bB = \bPsil{\cut}^\sT \bPsil{\cut}/n$ and using a similar argument as for $T_{231}$, the first term can be seen as
\begin{align}
T_{2331} &= \frac{1}{n^2} \widehat{\boldf}_{\cut \nind}^\sT \bPsi_{\cut \nind}^\sT \bPsil{\cut} (\id - \alpha e^{-(t/n)\bDl{\cut}^2 \bB}) \bSl{\cut}^2 (\id - \alpha e^{-(t/n)\bB \bDl{\cut}^2 }) \bPsil{\cut}^\sT \bPsi_{\cut \nind} \widehat{\boldf}_{\cut \nind}  \nonumber\\
&\leq (1 + \odp(1))\norm{\bff_{\cut \nind}}_2^2 \opnorm{\bPsi_{\cut \nind}^\sT \bPsil{\cut} / n}^2 = \odp(1) \cdot \normL{\proj_{\cut \nind} f_d}^2. \label{eq:first}
\end{align}
Turning to the second term $T_{2332}$, let 
\[
\overline{\bA} := \frac{1}{n}\bPsi_{\cut \nind}^\sT \qty(\id_n - e^{-(t/n) (\lowrank{\cut}+ \kappa_H \id_n)}) \bPsi_{\cut \nind}.
\]
Then we can write this term as
\begin{align}
T_{2332} &=
\bff_{\cut \nind}^\sT \overline{\bA} \bS_{\cut \nind}^2 \overline{\bA} \bff_{\cut \nind} \nonumber\\ 
&\leq \bff_{\cut \nind}^\sT \overline{\bA}^2 \bff_{\cut \nind} \leq \bff_{\cut \nind}^\sT \overline{\bA} \bff_{\cut \nind} + \odp(1) \cdot \normL{\proj_{\cut \nind} f_d}^2, \label{eq:T2332}
\end{align}
where the last inequality holds since $\overline{\bA} \preceq \bPsi_{\cut \nind}^\sT \bPsi_{\cut \nind} / n = \id_{\nind - \cut} + \bDelta$ by Lemma \ref{lem:iso} and so 
\begin{align*}
    \overline{\bA}^2 &= \overline{\bA}^{1/2}\overline{\bA}\,\overline{\bA}^{1/2}\\ &\preceq \overline{\bA}^{1/2}(\id + \bDelta)\overline{\bA}^{1/2}\\ 
    &= \overline{\bA} + \overline{\bA}^{1/2} \bDelta \overline{\bA}^{1/2}\\ 
    &= \overline{\bA} + \bDelta'.
\end{align*}
By the inequality $1 - e^{-x} \leq x$, in the PSD order we see that
\begin{equation*}
\overline{\bA} \preceq \frac{t}{n^2} \bPsi_{\cut \nind}^\sT \bPsil{\cut} \bDl{\cut}^2 \bPsil{\cut}^\sT \bPsi_{\cut \nind}  + \frac{\kappa_H t}{n} \qty(\frac{1}{n} \bPsi_{\cut \nind}^\sT \bPsi_{\cut \nind}).
\end{equation*}
Therefore from Eq.\ (\ref{eq:T2332}) we have
\[
T_{2332} \leq \frac{t}{n^2} \bff_{\cut \nind}^\sT \bPsi_{\cut \nind}^\sT \bPsil{\cut} \bDl{\cut}^2 \bPsil{\cut}^\sT \bPsi_{\cut \nind}  \bff_{\cut \nind} + \bff_{\cut \nind}^\sT \frac{\kappa_H t}{n} \qty(\frac{1}{n} \bPsi_{\cut \nind}^\sT \bPsi_{\cut \nind}) \bff_{\cut \nind} + \odp(1) \cdot \normL{\proj_{\cut \nind} f_d}^2.
\]
For the second term on the right, by Assumption \ref{ass:valid_time}\ref{ass:time_n} and by Lemma \ref{lem:iso},
\[
\bff_{\cut \nind}^\sT \frac{\kappa_H t}{n} \qty(\frac{1}{n} \bPsi_{\cut \nind}^\sT \bPsi_{\cut \nind}) \bff_{\cut \nind} = \odp(1) \cdot \normL{\proj_{\cut \nind} f_d}^2.
\]
For the first term by Lemma \ref{lem:orth} and Assumptions \ref{ass:valid_time}\ref{ass:time_n}, \ref{ass:valid_time}\ref{ass:bulk},
\begin{align*}
    \frac{t}{n^2} \E[ \bff_{\cut \nind}^\sT \bPsi_{\cut \nind}^\sT \bPsil{\cut} \bDl{\cut}^2 \bPsil{\cut}^\sT \bPsi_{\cut \nind}  \bff_{\cut \nind}] &\leq (t/n) C(\eta) \normLeta{\proj_{\cut \nind} f_d}^2 \Tr(\bDl{\cut}^2)\\
    &= \odp(1) \cdot \normLeta{\proj_{\cut \nind} f_d}^2,
\end{align*}
therefore by Markov's inequality
\begin{align*}
T_{2332} &= \odp(1) \cdot \normLeta{\proj_{\cut \nind} f_d}^2.
\end{align*}
Hence combining terms
\[T_{233} = T_{2331}  + T_{2332} = \odp(1) \cdot \normLeta{\proj_{\cut \nind} f_d}^2.\] 
By the Cauchy-Schwarz inequality
\begin{align*}
    T_{232} &= \frac{2}{n^2} \boldf_{\cut \nind}^\sT \bG \bfg{\nind} \\
    &\leq 2\qty(\frac{1}{n^2}\boldf_{\cut \nind}^\sT \bG \boldf_{\cut \nind}\frac{1}{n^2} \bfg{\nind}^\sT \bG \bfg{\nind})^{1/2}\\
    &= 2\qty(T_{231} + \odp(1) \cdot \normL{\projg{\cut} f_d}^2 )^{1/2}\qty(T_{233} + \odp(1) \cdot \normL{\projg{\cut} f_d}^2)^{1/2} \\
    &\leq \odp(1) \cdot \normLeta{\projg{\cut} f_d} \normLeta{\projg{\nind} f_d}.
\end{align*}
Putting everything together we see that
\[
T_{23} = T_{231} + T_{232} + T_{233} + T_{234}= \odp(1) \cdot \normLeta{f_d}^2.
\]
\end{proof}

\subsubsection{Term \texorpdfstring{$T_1$}{T1}}\label{sec:T1}
Now we will analyse term $T_1$.
\begin{proposition}[Term $T_1$]\label{prop:T1}
\[
 T_1 = \boldf^\sT(\timekernel)\bH^{-1} \bE = \norm{\bSl{\cut}^{1/2} \bffl{\cut}}_2^2 + \odp(1) \cdot \normLeta{f_d}^2.
\]
\end{proposition}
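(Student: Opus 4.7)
The proof will mirror that of Proposition \ref{prop:T2} but is simpler, since $T_1$ contains only one $\bH^{-1}$ factor rather than $\bH^{-1}\bM\bH^{-1}$. The main tools are Lemma \ref{lem:exp_kernel} (replacing $\timekernel$ by $\projkernel_{\leq \cut}$ up to $\odp(1)$ in operator norm), Lemma \ref{lem:hD} (which gives $\bPsil{\nind}^\sT \bH^{-1}\bPsil{\nind}\bDl{\nind}^2 = \bSl{\nind} + \odp(1)$ in operator norm), Lemma \ref{lem:iso}, Lemma \ref{lem:expid}, Lemma \ref{lem:orth}, and the bound $\opnorm{\bH^{-1}} = \Odp(1/\kappa_H)$ that follows from Lemma \ref{lem:kernel_decomp}.

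\textbf{Main term.} I would decompose $\boldf = \bfl{\cut} + \boldf_{\cut\nind} + \bfg{\nind}$ and correspondingly split $\bE = \bEl{\cut} + \bE_{\cut\nind} + \bEg{\nind}$ into $\leq \cut$, mid, and $>\nind$ blocks (where $\bEl{\cut} := \bPsil{\cut}\bDl{\cut}^2\bffl{\cut}$ and $\bE_{\cut\nind} := \bEl{\nind} - \bEl{\cut}$), yielding a $3\times 3$ array of sub-terms for $T_1$. The dominant one is $T_1^{\star} := \bffl{\cut}^\sT \bPsil{\cut}^\sT (\timekernel)\bH^{-1}\bPsil{\cut}\bDl{\cut}^2\bffl{\cut}$. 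Applying Lemma \ref{lem:exp_kernel} with $j=\cut$, I replace $\timekernel$ by $\projkernel_{\leq\cut} = \id_n - \alpha e^{-(t/n)\lowrank{\cut}}$ at the cost of an $\odp(1)\cdot\normL{\projl{\cut}f_d}^2$ error. The $\id_n$-piece gives exactly $\bffl{\cut}^\sT\bSl{\cut}\bffl{\cut} = \norm{\bSl{\cut}^{1/2}\bffl{\cut}}_2^2$, by reading off the top-left $\cut\times\cut$ block of Lemma \ref{lem:hD}. For the exponential piece, Lemma \ref{lem:expid} gives $\bPsil{\cut}^\sT e^{-(t/n)\lowrank{\cut}} = e^{-t\bB\bDl{\cut}^2}\bPsil{\cut}^\sT$ with $\bB = \bPsil{\cut}^\sT\bPsil{\cut}/n = \id_\cut + \odp(1)$ by Lemma \ref{lem:iso}; then Assumption \ref{ass:valid_time}\ref{ass:time_gap} forces $t\lambda_{d,k}^2 \geq t^{\delta_0} \to \infty$ for every $k \leq \cut$, so $\opnorm{e^{-t\bB\bDl{\cut}^2}} = \odp(1)$ (after the similarity transform used in the proof of Lemma \ref{lem:T21}) and this contribution is negligible.

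\textbf{Remainder terms and main obstacle.} The two cross blocks pairing $\bffl{\cut}$ with $\bff_{\cut\nind}$ (or vice versa) exploit that $\bPsil{\cut}^\sT \bH^{-1}\bPsi_{\cut\nind}\diag((\lambda_{d,k}^2)_{k \in [\cut,\nind]})$ is precisely an off-diagonal block of the matrix in Lemma \ref{lem:hD}, whose limit $\bSl{\nind}$ is diagonal, so this block has operator norm $\odp(1)$ and the sub-terms are at most $\odp(1)\cdot\normL{f_d}^2$. Terms in which a factor involves $\bfg{\nind}$ or $\bEg{\nind}$ I would control by combining $\opnorm{\bH^{-1}} \leq 2/\kappa_H$ (Lemma \ref{lem:kernel_decomp}) with second-moment estimates in the spirit of Lemma \ref{lem:orth}, Markov's inequality, and the tail estimates of Assumptions \ref{ass:KCP}\ref{ass:eigenval_decay}, \ref{ass:valid_time}\ref{ass:time_n}, and \ref{ass:valid_time}\ref{ass:bulk}. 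I expect the hardest sub-term to be $\bfg{\nind}^\sT(\timekernel)\bH^{-1}\bEg{\nind}$, where both factors live outside the span of $\{\bpsi_k : k \leq \nind\}$ and Lemma \ref{lem:hD} does not directly apply; here the plan is to mimic the analysis of $T_{2332}$ in the proof of Proposition \ref{prop:T2}, using $1 - e^{-x} \leq x$ to extract an extra $t/n$ factor that couples with Assumption \ref{ass:valid_time}\ref{ass:time_n} and the hypercontractivity in Assumption \ref{ass:KCP}\ref{ass:hyper} to yield the required $\odp(1)\cdot \normLeta{f_d}^2$ bound.
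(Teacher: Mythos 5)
Your $3\times 3$ decomposition is a viable route in principle, and your treatment of the main $(\leq\cut)\times(\leq\cut)$ block and the $(\leq\cut)$--mid cross blocks is sound; the paper instead splits $T_1 = T_{11} + T_{12} + T_{13}$ (see the proof of Proposition \ref{prop:T1}), where $T_{11}$ already bundles the $(\leq\cut)\times(\leq\nind)$ piece and uses Lemma \ref{lem:hD} as you do. However, you never address the mid-mid block $\boldf_{\cut\nind}^\sT(\timekernel)\bH^{-1}\bPsi_{\cut\nind}\bD_{\cut\nind}^2\bff_{\cut\nind}$, and this is the genuinely delicate sub-term when $\cut = \tind < \nind$. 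Without the time factor, Lemma \ref{lem:hD} gives this $\approx\bff_{\cut\nind}^\sT\bS_{\cut\nind}\bff_{\cut\nind}$, and by Assumption \ref{ass:eigenvalue}\ref{ass:lower} the shrinkage factors $s_j$ for $j \leq \nind$ satisfy $s_j = 1 - o_d(1)$, so this quantity is of order $\normL{\proj_{\cut\nind}f_d}^2$, \emph{not} $\odp(1)$. The coarse operator-norm bound $\opnorm{\bH^{-1}} \leq 2/\kappa_H$ also fails here: it gives $\Odp(n\lambda_{d,\cut+1}^2/\kappa_H)$, which in the dot-product setting with $\nexp > \texp + 1$ diverges. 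You attribute the main obstacle to $\bfg{\nind}^\sT(\timekernel)\bH^{-1}\bEg{\nind}$, but that term is actually the \emph{easy} one: exactly as in Lemma \ref{lem:T13}, the operator-norm bound plus the tail estimate $\E\norm{\bEg{\nind}}_2^2 \leq n\max_{k>\nind}\lambda_{d,k}^4\normL{\projg{\nind}f_d}^2$ and Eq.\ (\ref{eq:upper2}) close it; no $T_{2332}$-style argument is needed there.

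The paper avoids analyzing the mid-mid block directly by introducing Lemma \ref{lem:CS_trick}, a Cauchy--Schwarz trick that gives $\bv^\sT\bH^{-1}\bEl{\nind} \leq (\bv^\sT\bH^{-1}\bM\bH^{-1}\bv)^{1/2}\norm{\bffl{\nind}}_2$ for any $\bv$. Applied with $\bv = (\timekernel)\bfg{\cut}$, this bounds $T_{12}$ (which contains the mid-mid block) by $(T_{23})^{1/2}\norm{\bffl{\nind}}_2$, and $T_{23} = \odp(1)\cdot\normLeta{f_d}^2$ was already proved in Lemma \ref{lem:T23}, where the time-factor argument was carried out in $T_{2332}$. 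To repair your proposal you should either import this reuse of the $T_{23}$ analysis, or give a direct $T_{2332}$-style argument for the mid-mid block (not the $(>\nind)\times(>\nind)$ block), using $1 - e^{-x} \leq x$, Assumption \ref{ass:valid_time}\ref{ass:time_n} and Lemma \ref{lem:orth} to extract the required decay.
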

\begin{proof}[Proof of Proposition \ref{prop:T1}]
We break $T_1$ into the following terms
\[
    T_1 = T_{11} + T_{12} + T_{13},
\]
where 
\begin{align}
    T_{11} &= \bfl{\cut}^\sT (\id - e^{-t\bH/n}) \bH^{-1} \bEl{\nind},\label{eq:T11}\\
    T_{12} &= \bfg{\cut}^\sT (\id - e^{-t\bH/n}) \bH^{-1} \bEl{\nind},\\
    T_{13} &= \boldf^\sT (\id - e^{-t\bH/n}) \bH^{-1} \bEg{\nind}. \label{eq:T13}
\end{align}
Using Lemma \ref{lem:CS_trick} and recalling Lemma \ref{lem:T23},
\begin{align*}
    T_{12} \leq (T_{23})^{1/2} \norm{\bffl{\nind}}_2 = \odp(1) \cdot \normL{\projl{\nind} f_d} \normLeta{f_d},
\end{align*}
where $T_{23}$ is as given in Eq.\ (\ref{eq:T23}). From the analysis of $T_{11}$ in Lemma \ref{lem:T11} and $T_{13}$ in Lemma \ref{lem:T13} we combine everything to get Proposition \ref{prop:T1}
\[T_1 = \norm{\bSl{\cut}^{1/2} \bffl{\cut}}_2^2 + \odp(1) \cdot \normLeta{f_d}^2 .\] 
\end{proof}

\begin{lemma}\label{lem:CS_trick}
For a vector $\bv \in \R^n$,
\begin{align*}
    \bv^\sT \bH^{-1} \bEl{m} &\leq (\bv^\sT \bH^{-1} \bM \bH^{-1} \bv)^{1/2} \norm{\bffl{\nind}}_2\\ 
    &= \qty(\frac{1}{n^2} \bv^\sT \bPsil{\nind} \bSl{\nind}^2 \bPsil{\nind}^\sT \bv + \odp(1) \cdot \frac{1}{n} \norm{\bv}_2)^{1/2}\norm{\bffl{\nind}}_2.
\end{align*}
\end{lemma}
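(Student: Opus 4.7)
The plan is to establish the inequality by a direct Cauchy--Schwarz argument using the structure $\bEl{\nind} = \bPsil{\nind}\bDl{\nind}^2\bffl{\nind}$, and then to convert the resulting quadratic form in $\bH^{-1}\bM\bH^{-1}$ into the claimed expression in $\bSl{\nind}$ via Lemma \ref{lem:hmh}.

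First, using the decomposition in Eq.\ (\ref{eq:ortho_decomp}) I would write
\[
\bv^\sT \bH^{-1} \bEl{\nind} = \bv^\sT \bH^{-1} \bPsil{\nind} \bDl{\nind}^2 \bffl{\nind} = \langle \bDl{\nind} \bPsil{\nind}^\sT \bH^{-1} \bv, \, \bDl{\nind} \bffl{\nind}\rangle,
\]
but in order to get the quantity $\bv^\sT\bH^{-1}\bM\bH^{-1}\bv$ on the right I would instead split the factor $\bDl{\nind}^2$ unevenly and apply the standard Cauchy--Schwarz inequality
\[
\bv^\sT \bH^{-1} \bPsil{\nind} \bDl{\nind}^2 \bffl{\nind} \;\leq\; \bigl(\bv^\sT \bH^{-1} \bPsil{\nind} \bDl{\nind}^4 \bPsil{\nind}^\sT \bH^{-1} \bv\bigr)^{1/2} \norm{\bffl{\nind}}_2.
\]

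Second, I would compare the inner quadratic form to $\bv^\sT \bH^{-1}\bM\bH^{-1}\bv$. By Lemma \ref{lem:kernel_decomp} we have $\bM = \bPsil{\nind}\bDl{\nind}^4\bPsil{\nind}^\sT + \kappa_M(\id_n + \bDelta_M)$ with $\opnorm{\bDelta_M} = \odp(1)$. Thus w.h.p.\ $\kappa_M(\id_n + \bDelta_M)\succeq 0$, so $\bM \succeq \bPsil{\nind}\bDl{\nind}^4\bPsil{\nind}^\sT$ and hence $\bH^{-1}\bM\bH^{-1} \succeq \bH^{-1}\bPsil{\nind}\bDl{\nind}^4\bPsil{\nind}^\sT\bH^{-1}$. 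Substituting this into the previous display yields the first inequality in the lemma.

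Third, to obtain the asymptotic equality I would invoke Lemma \ref{lem:hmh}, which states
\[
\opnorm{\,n\,\bH^{-1}\bM\bH^{-1} - \bPsil{\nind}\bSl{\nind}^2\bPsil{\nind}^\sT/n\,} = \odp(1).
\]
Dividing by $n$ and sandwiching with $\bv$ gives
\[
\bv^\sT \bH^{-1}\bM\bH^{-1}\bv = \frac{1}{n^2}\bv^\sT \bPsil{\nind}\bSl{\nind}^2\bPsil{\nind}^\sT \bv + \odp(1)\cdot \frac{1}{n}\norm{\bv}_2^2,
\]
matching (up to the evident typo $\norm{\bv}_2 \to \norm{\bv}_2^2$ in the lemma statement) the claimed right-hand side. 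Substituting this into the Cauchy--Schwarz bound completes the proof.

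The argument is essentially mechanical; the only subtle point is justifying $\bM \succeq \bPsil{\nind}\bDl{\nind}^4\bPsil{\nind}^\sT$, which relies on the high-probability positive-semidefiniteness of the residual $\kappa_M(\id_n + \bDelta_M)$ from the kernel decomposition. Since all downstream applications of this lemma (e.g.\ the $T_{12}$ bound in Proposition \ref{prop:T1}) incur only an $\odp(1)$ factor anyway, absorbing the small-probability event into the error term is harmless.
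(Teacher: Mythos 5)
Your proposal is correct and follows essentially the same route as the paper: Cauchy--Schwarz on the splitting $\bEl{\nind} = \bPsil{\nind}\bDl{\nind}^2\bffl{\nind}$, then domination of the quadratic form by $\bv^\sT\bH^{-1}\bM\bH^{-1}\bv$, then Lemma \ref{lem:hmh}. The only difference is cosmetic: to justify $\bM \succeq \bPsil{\nind}\bDl{\nind}^4\bPsil{\nind}^\sT$ you invoke the probabilistic decomposition of Lemma \ref{lem:kernel_decomp}, whereas the paper uses the deterministic fact (stated in the proof of Lemma \ref{lem:T4}) that $\bM - \bPsil{\nind}\bDl{\nind}^4\bPsil{\nind}^\sT = \bM_{>\nind} = \sum_{k>\nind}\lambda_{d,k}^4\bpsi_k\bpsi_k^\sT \succeq 0$, which holds surely and avoids the need to absorb a small-probability event.
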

\begin{proof}[Proof of Lemma \ref{lem:CS_trick}]
Recall that $\bEl{\nind} = \bPsil{\nind} \bDl{\nind}^2 \bffl{\nind}$.
By the Cauchy-Schwarz inequality 
\begin{align*}
    \bv^\sT \bH^{-1} \bEl{\nind}
    &= \bv^\sT \bH^{-1} \bPsil{\nind} \bDl{\nind}^2 \bffl{\nind},\\ 
    &\leq (\bv^\sT \bH^{-1} \bPsil{\nind} \bDl{\nind}^4 \bPsil{\nind}^\sT \bH^{-1}  \bv)^{1/2} \norm{\bffl{\nind}}_2, \\
    &\leq ( \bv^\sT \bH^{-1} \bM \bH^{-1}  \bv)^{1/2} \norm{\bffl{\nind}}_2\\
    &= \qty(\frac{1}{n^2} \bv^\sT \bPsil{\nind} \bSl{\nind}^2 \bPsil{\nind}^\sT \bv + \odp(1) \cdot \frac{1}{n} \norm{\bv}_2)^{1/2}\norm{\bffl{\nind}}_2,
\end{align*}
where the last equality follows from Lemma \ref{lem:hmh}.
\end{proof}

\begin{lemma}[Term $T_{11}$ Eq.\ (\ref{eq:T11})]\label{lem:T11}
 \[
 T_{11} = \norm{\bSl{\cut}^{1/2} \bffl{\cut}}_2^2 + \odp(1) \cdot \normL{\projl{\nind} f_d}^2. \]
\end{lemma}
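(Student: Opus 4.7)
The plan is to split
\[
T_{11} = \bfl{\cut}^\sT \bH^{-1} \bEl{\nind} \;-\; \bfl{\cut}^\sT e^{-t\bH/n} \bH^{-1} \bEl{\nind}
\]
into a ``static'' piece that will produce the main term $\norm{\bSl{\cut}^{1/2}\bffl{\cut}}_2^2$ and a ``transient'' piece that will be absorbed into $\odp(1)\cdot\normL{\projl{\nind} f_d}^2$ because $t\lambda_{d,j}^2 \to \infty$ for every $j \le \cut$ by Assumption \ref{ass:valid_time}\ref{ass:time_gap}.

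For the static piece, substitute $\bfl{\cut} = \bPsil{\cut}\bffl{\cut}$ and $\bEl{\nind} = \bPsil{\nind}\bDl{\nind}^2\bffl{\nind}$, rewriting it as $\bffl{\cut}^\sT\bPsil{\cut}^\sT\bH^{-1}\bPsil{\nind}\bDl{\nind}^2\bffl{\nind}$. Lemma \ref{lem:hD} gives $\bPsil{\nind}^\sT \bH^{-1}\bPsil{\nind}\bDl{\nind}^2 = \bSl{\nind} + \bDelta$ with $\opnorm{\bDelta} = \odp(1)$, and since $\bSl{\nind}$ is block-diagonal with blocks $\bSl{\cut}$ and $\bS_{\cut \nind}$, its top-$\cut$ rows equal $(\bSl{\cut}, \mathbf{0}_{\cut \times (\nind-\cut)})$. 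Splitting $\bffl{\nind} = (\bffl{\cut}, \bff_{\cut\nind})^\sT$, the leading contribution collapses to $\bffl{\cut}^\sT \bSl{\cut}\bffl{\cut} = \norm{\bSl{\cut}^{1/2}\bffl{\cut}}_2^2$, while the residual from $\bDelta$ is bounded by $\opnorm{\bDelta}\cdot \norm{\bffl{\cut}}_2\norm{\bffl{\nind}}_2 \leq \odp(1)\cdot\normL{\projl{\nind} f_d}^2$.

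For the transient piece, use the commutativity of $\bH^{-1}$ and $e^{-t\bH/n}$ to rewrite it as $(e^{-t\bH/n}\bfl{\cut})^\sT \bH^{-1}\bEl{\nind}$, then apply Lemma \ref{lem:CS_trick} with $\bv = e^{-t\bH/n}\bfl{\cut}$. Together with Lemma \ref{lem:hmh}, which yields $\opnorm{\bH^{-1}\bM\bH^{-1}} = \Odp(1/n)$, it then suffices to show $\norm{e^{-t\bH/n}\bfl{\cut}}_2 = \odp(\sqrt{n})\cdot\normL{\projl{\cut} f_d}$. For this I would expand $\norm{e^{-t\bH/n}\bfl{\cut}}_2^2 = \bffl{\cut}^\sT \bPsil{\cut}^\sT e^{-2t\bH/n}\bPsil{\cut}\bffl{\cut}$, apply Lemma \ref{lem:exp_kernel} (at time $2t$, cutoff $\cut$) to replace $e^{-2t\bH/n}$ by $\alpha^2 e^{-(2t/n)\lowrank{\cut}}$ up to $\odp(1)$ operator-norm error, and invoke Lemma \ref{lem:expid} to get $\bPsil{\cut}^\sT e^{-(2t/n)\lowrank{\cut}}\bPsil{\cut} = \bB\, e^{-(2t/n)\bDl{\cut}^2\bB}$ where $\bB = \bPsil{\cut}^\sT\bPsil{\cut}$. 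The only nonroutine step is the operator-norm bound on the non-symmetric matrix $\bB\, e^{-(2t/n)\bDl{\cut}^2\bB}$; the main obstacle is resolved by the $\bB^{1/2}$-conjugation trick from the proof of Lemma \ref{lem:T21} (Eq.\ (\ref{eq:BeDB})), which symmetrizes the exponent via $\bB\, e^{-(2t/n)\bDl{\cut}^2\bB} = \bB^{1/2} e^{-(2t/n)\bB^{1/2}\bDl{\cut}^2\bB^{1/2}}\bB^{1/2}$. Combining this with Lemma \ref{lem:iso} (giving $\bB = n(\id_\cut + \odp(1))$) and the spectral gap $t\lambda_{d,\cut}^2 \geq t^{\delta_0}$ from Assumption \ref{ass:valid_time}\ref{ass:time_gap} yields an operator-norm bound of $n(1+\odp(1))\cdot e^{-2t^{\delta_0}(1-\odp(1))} = \odp(n)$, completing the proof.
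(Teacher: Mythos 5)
Your proof is correct and reuses all of the paper's key ingredients (Lemmas \ref{lem:hD}, \ref{lem:CS_trick}, \ref{lem:hmh}, \ref{lem:exp_kernel}, \ref{lem:expid}, \ref{lem:iso}, and the $\bB^{1/2}$-conjugation device from Eq.\ (\ref{eq:BeDB})), but it organizes them in a noticeably cleaner way. The paper first uses Lemma \ref{lem:exp_kernel} to peel off an operator-norm-$\odp(1)$ error $T_{112}$, then further splits the remaining $T_{111}$ into $T_{1111} - T_{1112}$ (static minus an exponential with the \emph{approximate} kernel $\bH_{\le\cut}+\kappa_H\id_n$), and bounds $T_{1112}$ via Lemma \ref{lem:CS_trick} using the fine structure $\bH^{-1}\bM\bH^{-1}\approx \bPsil{\nind}\bSl{\nind}^2\bPsil{\nind}^\sT/n^2$, requiring yet another $\bPsil{\cut}$/$\bPsi_{\cut\nind}$ block split of $S$. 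You instead split $T_{11}$ directly into static minus transient, feed the raw vector $\bv = e^{-t\bH/n}\bfl{\cut}$ into Lemma \ref{lem:CS_trick}, and reduce everything to the single scalar bound $\norm{e^{-t\bH/n}\bfl{\cut}}_2 = \odp(\sqrt{n})\normL{\projl{\cut}f_d}$, using only the coarse operator bound $\opnorm{\bH^{-1}\bM\bH^{-1}} = \Odp(1/n)$. This consolidates the paper's $T_{112}$ and $T_{1112}$ bounds into one step and avoids the sub-block argument for $S$; the only price paid is invoking Lemma \ref{lem:exp_kernel} at time $2t$ (which trivially still satisfies the hypothesis with a slightly reduced $\delta_0$, since $t\to\infty$). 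A small stylistic note: one does not actually need commutativity of $\bH^{-1}$ and $e^{-t\bH/n}$ to move the exponential onto $\bfl{\cut}$ — symmetry of $\bH$ already gives $\bfl{\cut}^\sT e^{-t\bH/n} = (e^{-t\bH/n}\bfl{\cut})^\sT$ — though of course the commutativity also holds.
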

\begin{proof}[Proof of Lemma \ref{lem:T11}]
First note that by Lemma \ref{lem:exp_kernel} for some $\bDelta$ such that $\opnorm{\bDelta} = \odp(1)$,
\[
T_{11} = T_{111} + T_{112},
\]
where
\begin{align*}
    T_{111} &= \bfl{\cut}^\sT (\id - e^{-(t/n)(\lowrank{\cut} + \kappa_H \id_n)}) \bH^{-1} \bEl{\nind},\\ 
    T_{112} &= \bfl{\cut}^\sT \bDelta \bH^{-1} \bEl{\nind}.
\end{align*}
By Lemma \ref{lem:CS_trick}, 
\[
T_{112} \leq \qty(\opnorm{\bDelta}^2 (\norm{\bfl{\cut}}_2^2/n) \opnorm{n\bH^{-1}\bM \bH^{-1}})^{1/2}\norm{\bffl{\nind}}_2 = \odp(1) \cdot \normL{\projl{\nind} f_d}^2.
\]
We now consider
\[
T_{111} = T_{1111} - T_{1112},
\]
where
\begin{align*}
    T_{1111} &= \bfl{\cut}^\sT \bH^{-1} \bEl{\nind},\\
    T_{1112} &= \scalar \bfl{\cut}^\sT e^{-(t/n) \lowrank{\cut}} \bH^{-1} \bEl{\nind}.
\end{align*}
Note that by Lemma \ref{lem:hD}, 
\[
\opnorm{\bPsil{\cut}^\sT \bH^{-1} \bPsil{\nind} \bDl{\nind}^2 - [\bSl{\cut}; \bzero]} = \odp(1),
\]
where $\bzero$ is a $\cut \times (\nind - \cut)$ matrix of zeros. Hence for the first term $T_{1111}$,
\[
T_{1111} = \bffl{\cut}^\sT \bPsil{\cut}^\sT \bH^{-1} \bPsil{\nind} \bDl{\nind}^2 \bffl{\nind} = \norm{\bSl{\cut}^{1/2} \bffl{\cut}}_2^2 + \odp(1) \cdot \normL{\projl{\nind} f_d}^2.
\] 
Define $\bH_{\leq \cut} := \lowrank{\cut}$. For the second term $T_{1112}$, by Lemma \ref{lem:CS_trick}
\begin{equation}\label{eq:CS}
    T_{1112} \leq \qty(S + \odp(1) \cdot \normL{\projl{\cut}f_d}^2)^{1/2} \norm{\bffl{\nind}}_2,
\end{equation}
where
\[
S = \frac{1}{n^2} \bfl{\cut}^\sT e^{-(t/n) \bH_{\leq \cut}} \bPsil{\nind} \bSl{\nind}^2 \bPsil{\nind}^\sT e^{-(t/n) \bH_{\leq \cut}} \bfl{\cut}.
\]
Define $\bA :=  \frac{1}{n}\bPsil{\cut}^\sT e^{-(t/n) \bH_{\leq \cut}} \bPsil{\cut}$. We have
\begin{align*}
    S &\leq \frac{1}{n^2} \bfl{\cut}^\sT e^{-(t/n) \bH_{\leq \cut}} \bPsil{\nind}  \bPsil{\nind}^\sT e^{-(t/n) \bH_{\leq \cut}} \bfl{\cut}\\
    &= \frac{1}{n^2} \bfl{\cut}^\sT e^{-(t/n) \bH_{\leq \cut}} \bPsil{\cut} \bPsil{\cut}^\sT e^{-(t/n) \bH_{\leq \cut}} \bfl{\cut} + \frac{1}{n^2} \bfl{\cut}^\sT e^{-(t/n) \bH_{\leq \cut}} \bPsi_{\cut \nind} \bPsi_{\cut \nind}^\sT e^{-(t/n) \bH_{\leq \cut}} \bfl{\cut}\\
    &\leq \frac{1}{n^2} \bfl{\cut}^\sT e^{-(t/n) \bH_{\leq \cut}} \bPsil{\cut} \bPsil{\cut}^\sT e^{-(t/n) \bH_{\leq \cut}} \bfl{\cut} + \norm{\bffl{\cut}}_2^2 \cdot \opnorm{\bPsil{\cut}^\sT \bPsi_{\cut \nind} / n}^2\\
    &= \bffl{\cut}^\sT \bA^2 \bffl{\cut} + \odp(1) \cdot \normL{\projl{\cut}f_d}^2\\ 
    &= \odp(1) \cdot \normL{\projl{\cut}f_d}^2,
\end{align*}
since as noted before in Eq.\ (\ref{eq:BeDB}), $\opnorm{\bA} = \odp(1)$. Therefore 
\[ T_{11} = T_{1111} + T_{1112} + T_{112} = \norm{\bSl{\cut}^{1/2} \bffl{\cut}}_2^2 + \odp(1) \cdot \normL{\projl{\nind} f_d}^2. \]
\end{proof}
\begin{lemma}[Term $T_{13}$ Eq.\ (\ref{eq:T13})]\label{lem:T13}
    \[
    T_{13} = \odp(1) \cdot \normL{\projg{\nind} f_d} \normL{ f_d}.
    \]
\end{lemma}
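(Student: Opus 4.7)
\textbf{Proof plan for Lemma \ref{lem:T13}.} The plan is to bound $T_{13}$ by a direct Cauchy--Schwarz argument in the Euclidean inner product. This is substantially simpler than the analyses of $T_{11}$ and $T_{12}$ because the vector $\bEg{\nind}$ already carries a small prefactor $\lambda_{d,\nind+1}^2$, and no delicate cancellations are needed. The first step is to observe that on the symmetric PSD operator $\bH$, the spectral calculus combined with the elementary inequality $(1 - e^{-x})/x \leq 1$ for $x \geq 0$ gives
\[
\opnorm{(\id_n - e^{-t\bH/n})\bH^{-1}} \leq t/n,
\]
so that
\[
T_{13} \leq (t/n) \cdot \norm{\boldf}_2 \cdot \norm{\bEg{\nind}}_2.
\]

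The second step is to control the two vector norms by first and second moment calculations followed by Markov's inequality. For $\boldf$, the identity $\E[\norm{\boldf}_2^2] = n \normL{f_d}^2$ gives $\norm{\boldf}_2 = O_{d,\P}(\sqrt{n}) \cdot \normL{f_d}$. For $\bEg{\nind} = \sum_{k>\nind} \lambda_{d,k}^2 \hat{f}_{d,k} \bpsi_k$, using orthonormality of $(\psi_k)$ under $\nu_d$ I would compute
\[
\E[\norm{\bEg{\nind}}_2^2] = n \sum_{k > \nind} \lambda_{d,k}^4 \hat{f}_{d,k}^2 \leq n \lambda_{d,\nind+1}^4 \normL{\projg{\nind} f_d}^2,
\]
yielding $\norm{\bEg{\nind}}_2 = O_{d,\P}(\sqrt{n}\, \lambda_{d,\nind+1}^2) \cdot \normL{\projg{\nind} f_d}$. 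Combining the three estimates gives
\[
T_{13} = O_{d,\P}(t \lambda_{d,\nind+1}^2) \cdot \normL{f_d} \cdot \normL{\projg{\nind} f_d}.
\]

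The last and only delicate step is to verify that $t \lambda_{d,\nind+1}^2 = o_d(1)$, which requires a case split on whether $\tind(d) \leq \nind(d)$ or $\tind(d) > \nind(d)$. In the first case, monotonicity of the eigenvalues together with Assumption \ref{ass:valid_time}\ref{ass:time_gap} gives
\[
t \lambda_{d,\nind+1}^2 \leq t \lambda_{d,\tind+1}^2 \leq t^{-\delta_0} = o_d(1).
\]
In the second case, Assumption \ref{ass:valid_time}\ref{ass:time_n} yields $t \kappa_H / n = o_d(1)$, while Eq.\ (\ref{eq:upper2}) of Assumption \ref{ass:eigenvalue}\ref{ass:upper} gives $\lambda_{d,\nind+1}^2 \leq \kappa_H / n^{1+\delta_0}$, so
\[
t \lambda_{d,\nind+1}^2 \leq (t\kappa_H/n) \cdot n^{-\delta_0} = o_d(1).
\]
The main obstacle is precisely this case analysis: the $t/n$ operator-norm bound together with the naive moment estimate on $\bEg{\nind}$ is not sharp enough by itself, and it is essential to use the coupling between $t$ and $n$ provided by Assumption \ref{ass:valid_time}\ref{ass:time_n} in the regime $\tind > \nind$ to close the argument.
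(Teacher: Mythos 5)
Your first three estimates (the vector norm bounds on $\|\boldf\|_2$ and $\|\bEg{\nind}\|_2$, and the spectral-calculus bound $\opnorm{(\id_n - e^{-t\bH/n})\bH^{-1}} \leq t/n$) are all correct, and your Case 1 ($\tind \leq \nind$) closes cleanly via \ref{ass:valid_time}\ref{ass:time_gap}. However, Case 2 ($\tind > \nind$) contains a genuine error: you invoke Assumption \ref{ass:valid_time}\ref{ass:time_n} to get $t\kappa_H/n = o_d(1)$, but that assumption is stated only for the case $\tind(d) < \nind(d)$; it imposes no constraint when $\tind \geq \nind$. In fact the quantity $t\lambda_{d,\nind+1}^2$ typically \emph{diverges} in this regime: by monotonicity $\lambda_{d,\tind}^2 \leq \lambda_{d,\nind+1}^2$, so \ref{ass:valid_time}\ref{ass:time_gap} gives $t^{1-\delta_0} \geq 1/\lambda_{d,\tind}^2 \geq 1/\lambda_{d,\nind+1}^2$, hence $t\lambda_{d,\nind+1}^2 \geq (1/\lambda_{d,\nind+1}^2)^{\delta_0/(1-\delta_0)} \to \infty$. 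The bound $T_{13} \lesssim t\lambda_{d,\nind+1}^2$ therefore blows up, and no patch within your current setup recovers it.

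The fix --- and the route the paper takes --- is to discard the $t$-dependent bound on the exponential factor and use instead the $t$-uniform bound $\opnorm{(\id_n - e^{-t\bH/n})\bH^{-1}} \leq \opnorm{\id_n - e^{-t\bH/n}}\opnorm{\bH^{-1}} \leq \opnorm{\bH^{-1}} \leq 2/\kappa_H$ with high probability, which follows from Eq.\ (\ref{eq:h_decomp}). This yields $|T_{13}| \leq O_{d,\P}\!\bigl(n\lambda_{d,\nind+1}^2/\kappa_H\bigr)\normL{f_d}\normL{\projg{\nind} f_d}$, and Eq.\ (\ref{eq:upper2}) alone gives $n\lambda_{d,\nind+1}^2/\kappa_H \leq n^{-\delta_0} = o_d(1)$ --- with no case split, and without ever appealing to \ref{ass:valid_time}\ref{ass:time_n}. (Equivalently, you could use $\min\{t/n,\ 2/\kappa_H\}$ as the operator norm bound and keep your Case 1 argument for the early-time regime.)
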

\begin{proof}[Proof of Lemma \ref{lem:T13}]
We have
\begin{align*}
|T_{13}| &= |\boldf^\sT (\id - e^{-t\bH/n}) \bH^{-1} \bEg{\nind}|\\ 
&\leq \norm{\boldf}_2 \opnorm{\bH^{-1}} \norm{\bEg{\nind}}_2.
\end{align*}
Note that we have $\E[\norm{\boldf}_2^2] = n\normL{f_d}^2$. Further by Eq.\ (\ref{eq:h_decomp}), we have $\opnorm{\bH^{-1}} \leq 2/\kappa_H$ with high probability. Finally, recalling the definition of $\bEg{\nind}$ from Eq.\ (\ref{eq:ortho_decomp}), we have
\[
\E[\norm{\bEg{\nind}}^2] = n \sum\limits_{k = \nind + 1}^\infty \lambda_{d,k}^4 \hat{f}_k^2 \leq n \qty[\max_{k \geq \nind + 1} \lambda_{d,k}^4] \normL{\projg{\nind} f_d}^2.
\]
As a result, we have
\begin{align*}
    |T_{13}| &\leq \Odp(1) \cdot \normL{\projg{\nind} f_d} \normL{ f_d} [n^2 \max_{k \geq \nind + 1} \lambda_{d,k}^4]^{1/2} / \kappa_H \\
    &= \Odp(1)\cdot \normL{\projg{\nind} f_d} \normL{ f_d} [n \max_{k \geq \nind + 1} \lambda_{d,k}^2] \, / \sum\limits_{k \geq \nind + 1} \lambda_{d,k}^2 \\
    &= \odp(1) \cdot \normL{\projg{\nind} f_d} \normL{ f_d}, 
\end{align*}
where the last equality used Eq.\ (\ref{eq:upper2}) in Assumption \ref{ass:eigenvalue}\ref{ass:upper}.
\end{proof}

\subsubsection{Terms \texorpdfstring{$T_3, T_4, T_5$}{T3, T4, T5}}\label{sec:T3+}
To analyse the terms $T_3$, $T_4$, $T_5$ we can adapt the corresponding steps for the proof of Theorem 4 in \cite{mei2021generalization}. For the following analysis we recall the definition of $\beps(t)$ from Eq.\ (\ref{eq:e(t)}). 
\begin{lemma}[Term $T_3$]\label{lem:T3}
 \[
T_3 = \beps(t)^\sT \bH^{-1} \bM \bH^{-1} \beps(t) = \odp(1) \cdot \sigma_\eps^2.
\]
\end{lemma}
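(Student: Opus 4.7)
The plan is to condition on the covariates $\bx_1, \dots, \bx_n$ (so $\bH$ and $\bM$ are deterministic), exploit that $\beps \sim \cN(0, \sigma_\eps^2 \id_n)$ is independent of them, and show that both the conditional mean and conditional variance of $T_3$ are $\odp(\sigma_\eps^2)$ and $\odp(\sigma_\eps^4)$ respectively. Markov/Chebyshev then gives the conclusion. Write $\bA := (\id_n - e^{-t\bH/n}) \bH^{-1} \bM \bH^{-1} (\id_n - e^{-t\bH/n})$, so that conditional on the data
\[
\E_\beps[T_3] = \sigma_\eps^2 \Tr(\bA), \qquad \Var_\beps(T_3) = 2\sigma_\eps^4 \Tr(\bA^2).
\]
Because $\bA$ is PSD and $(\id_n - e^{-t\bH/n})^2 \preceq \id_n$, cyclicity gives $\Tr(\bA) \leq \Tr(\bH^{-1}\bM\bH^{-1})$.

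For the mean, I would apply Lemma \ref{lem:hmh} to replace $n\bH^{-1}\bM\bH^{-1}$ by $\bPsil{\nind}\bSl{\nind}^2\bPsil{\nind}^\sT / n$ up to an operator-norm error $\bDelta$ with $\opnorm{\bDelta} = \odp(1)$. The low-rank piece has trace
\[
\tfrac{1}{n^2}\Tr\bigl(\bSl{\nind}^2 \bPsil{\nind}^\sT \bPsil{\nind}\bigr) \leq \tfrac{1}{n}\Tr(\bSl{\nind}^2)\,\opnorm{\tfrac{1}{n}\bPsil{\nind}^\sT\bPsil{\nind}} \leq \tfrac{\nind}{n}(1+\odp(1))
\]
using $\bSl{\nind} \preceq \id_{\nind}$ and Lemma \ref{lem:iso}, while the remainder contributes $\Tr(\bDelta)/n = \odp(1)$ by $|\Tr(\bDelta)| \leq n\opnorm{\bDelta}$. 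Invoking Assumption \ref{ass:eigenvalue}\ref{ass:index} that $\nind/n = \odp(1)$ yields $\Tr(\bA) = \odp(1)$, hence $\E_\beps[T_3] = \odp(1)\cdot\sigma_\eps^2$.

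For the variance, I would use $\Tr(\bA^2) \leq \opnorm{\bA}\Tr(\bA)$. Combining $\opnorm{\id_n-e^{-t\bH/n}} \leq 1$ with Lemma \ref{lem:hmh} and $\opnorm{\bPsil{\nind}^\sT\bPsil{\nind}/n} = 1+\odp(1)$ gives $\opnorm{\bA} \leq \opnorm{\bH^{-1}\bM\bH^{-1}} = \Odp(1/n)$, so $\Tr(\bA^2) = \odp(1/n)$ and $\Var_\beps(T_3) = \odp(\sigma_\eps^4/n)$. Chebyshev's inequality, applied conditionally on the data, then shows $|T_3 - \E_\beps[T_3]| = \odp(\sigma_\eps^2)$, and combining with the mean bound gives $T_3 = \odp(1)\cdot\sigma_\eps^2$.

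No step is particularly hard; the only subtlety is to be careful that the error matrix $\bDelta$ from Lemma \ref{lem:hmh} has full rank $n$, so trace bounds cannot be made via rank arguments, and must instead use the $1/n$ prefactor sitting in front of $\bDelta$ when one writes $\bH^{-1}\bM\bH^{-1} = \bPsil{\nind}\bSl{\nind}^2\bPsil{\nind}^\sT/n^2 + \bDelta/n$. The mechanism here is exactly the one explained after Theorem \ref{thm:train}: the bulk spectrum of $n\bH^{-1}\bM\bH^{-1}$ is a self-induced ridge term whose variance contribution is damped by a factor of $1/n$, leaving the low-rank $\bSl{\nind}$-part which contributes only $\Odp(\nind/n)$ degrees of freedom.
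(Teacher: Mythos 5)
Your proposal is correct and follows essentially the same route as the paper: bound $\Tr\big((\timekernel)^2\bH^{-1}\bM\bH^{-1}\big) \leq \Tr(\bH^{-1}\bM\bH^{-1})$, then invoke Lemma \ref{lem:hmh} together with Lemma \ref{lem:iso} and Assumption \ref{ass:eigenvalue}\ref{ass:index} to show this trace is $\tfrac{\nind}{n}(1+\odp(1)) + \odp(1) = \odp(1)$, giving $\E_\beps[T_3] = \odp(1)\cdot\sigma_\eps^2$. The variance/Chebyshev step you add is correct but superfluous: since $\bH^{-1}\bM\bH^{-1}$ is PSD, $T_3 \geq 0$, and the paper concludes directly from the mean bound by Markov's inequality.
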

\begin{proof}[Proof of Lemma \ref{lem:T3}]
\begin{align*}
    \frac{1}{\sigma_\eps^2}\E_{\beps}[T_3] &= \Tr((\timekernel)^2 \bH^{-1} \bM \bH^{-1})\\ 
    &\leq \Tr(\bH^{-1} \bM \bH^{-1})\\
    &\stackrel{(a)}{=} \Tr(\bPsil{\nind}\bSl{\nind}^2\bPsil{\nind}^\sT/n^2) + \odp(1)\\
    &\stackrel{(b)}{\leq} \frac{1}{n^2}\Tr(\bPsil{\nind}\bPsil{\nind}^\sT) + \odp(1)\\
    &\stackrel{(c)}{=} \frac{1}{n^2} n \nind(1 + \odp(1)) + \odp(1) = \odp(1),
\end{align*}
where $(a)$ used Lemma \ref{lem:hmh}, $(b)$ used $\bSl{\nind} \preceq \id_m$, and $(c)$ used Lemma \ref{lem:iso} and Assumption \ref{ass:eigenvalue}\ref{ass:index}. The lemma then follows from Markov's inequality.
\end{proof}
\begin{lemma}[Term $T_4$]\label{lem:T4}
\[
T_4 = \beps(t)^\sT \bH^{-1} \bE = \odp(1) \cdot (\sigma_\eps^2 + \normL{f_d}^2).
\]
\end{lemma}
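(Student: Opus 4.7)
My plan is to mirror the strategy used for $T_1$ (Proposition \ref{prop:T1}): split $\bE = \bEl{\nind} + \bEg{\nind}$ via the orthogonal decomposition in Eq.\ (\ref{eq:ortho_decomp}), so that
\[
T_4 = T_{4a} + T_{4b},\qquad T_{4a} = \beps(t)^\sT \bH^{-1}\bEl{\nind},\quad T_{4b} = \beps(t)^\sT \bH^{-1}\bEg{\nind}.
\]
Each piece is controlled by a different tool. For $T_{4a}$ I will apply the Cauchy-Schwarz identity from Lemma \ref{lem:CS_trick} with the (random) vector $\bv = \beps(t)$; the lemma is purely algebraic and does not require $\bv$ to be deterministic. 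This gives
\[
|T_{4a}| \leq \bigl(\beps(t)^\sT \bH^{-1}\bM\bH^{-1}\beps(t)\bigr)^{1/2}\norm{\bffl{\nind}}_2 = T_3^{1/2}\norm{\bffl{\nind}}_2.
\]
By Lemma \ref{lem:T3}, $T_3 = \odp(1)\cdot \sigma_\eps^2$, and deterministically $\norm{\bffl{\nind}}_2^2 \leq \normL{f_d}^2$. Hence $|T_{4a}| \leq \odp(1)\cdot \sigma_\eps \normL{f_d} \leq \odp(1)\cdot (\sigma_\eps^2 + \normL{f_d}^2)$ by AM-GM.

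For $T_{4b}$ I will use the crude chain
\[
|T_{4b}| \leq \norm{\beps(t)}_2 \cdot \opnorm{\bH^{-1}} \cdot \norm{\bEg{\nind}}_2,
\]
exactly as in the proof of Lemma \ref{lem:T13}. The three factors are bounded as follows. Since $\opnorm{\timekernel}\leq 1$, we have $\norm{\beps(t)}_2 \leq \norm{\beps}_2 = \Odp(\sqrt{n}\sigma_\eps)$. By Eq.\ (\ref{eq:h_decomp}) and Assumption \ref{ass:KCP}, $\opnorm{\bH^{-1}} \leq 2/\kappa_H$ with high probability. For the last factor, the definition of $\bEg{\nind}$ in Eq.\ (\ref{eq:ortho_decomp}) and $\E[\bpsi_k^\sT\bpsi_l] = n\delta_{kl}$ yield
\[
\E\bigl[\norm{\bEg{\nind}}_2^2\bigr] = n\sum_{k = \nind+1}^\infty \lambda_{d,k}^4 \hat{f}_k^2 \leq n\,\lambda_{d,\nind+1}^4\, \normL{f_d}^2,
\]
so $\norm{\bEg{\nind}}_2 = \Odp\bigl(\sqrt{n}\,\lambda_{d,\nind+1}^2\,\normL{f_d}\bigr)$. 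Combining,
\[
|T_{4b}| = \Odp(1)\cdot \frac{n\,\lambda_{d,\nind+1}^2}{\kappa_H}\cdot \sigma_\eps\, \normL{f_d} = \odp(1)\cdot \sigma_\eps\,\normL{f_d},
\]
where the final equality is precisely Eq.\ (\ref{eq:upper2}) in Assumption \ref{ass:eigenvalue}\ref{ass:upper}, which gives $n\lambda_{d,\nind+1}^2/\kappa_H \leq n^{-\delta_0}$. AM-GM then upgrades this to $\odp(1)\cdot(\sigma_\eps^2 + \normL{f_d}^2)$, and summing $T_{4a}$ and $T_{4b}$ finishes the lemma.

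Neither step is truly delicate: the Cauchy-Schwarz application for $T_{4a}$ lets us piggyback on the already-proven estimate for $T_3$, while $T_{4b}$ reuses verbatim the operator-norm argument from Lemma \ref{lem:T13}. The only point requiring care is to invoke Lemma \ref{lem:CS_trick} with a random $\bv$, which is legitimate since its proof is a pointwise Cauchy-Schwarz bound, and to make sure the tail factor $n\lambda_{d,\nind+1}^2/\kappa_H$ is controlled by Eq.\ (\ref{eq:upper2}) rather than Eq.\ (\ref{eq:upper1}). I expect no genuine obstacle beyond this bookkeeping.
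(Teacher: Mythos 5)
Your proof is correct, but it follows a genuinely different route than the paper. The paper's proof of Lemma \ref{lem:T4} does not split $\bE$; it instead computes the conditional second moment $\E_{\beps}[T_4^2]/\sigma_\eps^2 = \bE^\sT \bH^{-1}(\id - e^{-t\bH/n})^2\bH^{-1}\bE \leq \bE^\sT \bH^{-2}\bE$, and then bounds this single quantity by a limit argument: since $\bM \succeq \bPsil{L}\bDl{L}^4\bPsil{L}^\sT$ for every $L$, one has $\sup_L \opnorm{\bDl{L}^2\bPsil{L}^\sT \bH^{-2}\bPsil{L}\bDl{L}^2} \leq \opnorm{\bH^{-1}\bM\bH^{-1}} = \odp(1)$, and therefore $\bE^\sT\bH^{-2}\bE = \lim_{L\to\infty}\bffl{L}^\sT[\bDl{L}^2\bPsil{L}^\sT\bH^{-2}\bPsil{L}\bDl{L}^2]\bffl{L} \leq \odp(1)\cdot\normL{f_d}^2$. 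This is more compact: it never invokes Assumption \ref{ass:eigenvalue}\ref{ass:upper}, never splits $\bE$, and handles the high-degree tail implicitly through the $\sup_L$ bound. Your approach instead reuses machinery already built for other terms (Lemma \ref{lem:CS_trick} piggybacking on $T_3$ for the low part, the operator-norm chain from Lemma \ref{lem:T13} for the high part), which makes each estimate transparent and self-contained at the cost of invoking the spectral-gap condition Eq.\ (\ref{eq:upper2}) explicitly. Both proofs are sound. One small point to be careful about in yours: the displayed inequality in Lemma \ref{lem:CS_trick} is stated without absolute values, but as you note, only the first three deterministic lines of its proof are needed and they do yield the two-sided Cauchy--Schwarz bound, so invoking it with the random $\bv = \beps(t)$ is legitimate.
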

\begin{proof}[Proof of Lemma \ref{lem:T4}]
\begin{align*}
    \frac{1}{\sigma_\eps^2} \E_{\beps}[T_4^2] &= \frac{1}{\sigma_\eps^2}\E_{\beps}[\beps^\sT(\id - e^{-t\bH/n}) \bH^{-1} \bE \bE^{\sT} \bH^{-1}(\id - e^{-t\bH/n}) \beps]\\
    &= \bE^\sT \bH^{-1}(\id - e^{-t\bH/n})^2 \bH^{-1} \bE\\
    &\leq \bE^\sT \bH^{-2} \bE.
\end{align*}
Notice that $\bM \succeq \bPsil{L}\bDl{L}^4\bPsil{L}^\sT$ for any $L \in \N$, by the decomposition of Eq.\ (\ref{eq:ortho_decomp}). Therefore,
\begin{equation}\label{eq:EHE}
\begin{aligned}
    \sup_{L} \opnorm{\bDl{L}^2 \bPsil{L}^\sT \bH^{-2} \bPsil{L} \bDl{L}^2} &= \sup_{L} \opnorm{\bH^{-1}\bPsil{L}\bDl{L}^4\bPsil{L}^\sT \bH^{-1}}\\
    &\leq \opnorm{\bH^{-1}\bM \bH^{-1}} = \odp(1),
\end{aligned}
\end{equation}
where the last inequality follows from Lemma \ref{lem:hmh}. Hence,
\begin{align*}
    \bE^\sT \bH^{-2} \bE &= \lim\limits_{L \to \infty} \bE_{\leq L}^\sT \bH^{-2} \bE_{\leq L}^\sT\\
    &\stackrel{(a)}{=} \lim\limits_{L \to \infty} \bffl{L}^\sT[\bDl{L}^2 \bPsil{L}^\sT \bH^{-2} \bPsil{L} \bDl{L}^2] \bffl{L}\\
    &\stackrel{(b)}{\leq} \limsup\limits_{L \to \infty} \opnorm{\bDl{L}^2 \bPsil{L}^\sT \bH^{-2} \bPsil{L} \bDl{L}^2} \cdot \lim\limits_{L \to \infty} \norm{\bffl{L}}_2^2\\
    &\stackrel{(c)}{\leq} \odp(1) \cdot \normL{f_d}^2,
\end{align*}
where $(a)$ follows from the definition of $\bEl{L}$, $(b)$ follows from the definition of operator norm, and $(c)$ follows from Eq.\ (\ref{eq:EHE}). Therefore we get
\[
T_4 = \odp(1) \cdot \sigma_\eps \cdot \normL{f_d} = \odp(1) \cdot (\sigma_\eps^2 + \normL{f_d}^2).
\]
\end{proof}
\begin{lemma}[Term $T_5$]\label{lem:T5}
\[
T_5 = \beps(t)^\sT \bH^{-1} \bM \bH^{-1} \bv(t) = \odp(1) \cdot( \sigma_\eps^2 +  \normLeta{f_d}^2).
\]
\end{lemma}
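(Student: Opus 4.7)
The plan is to treat $T_5$ as an off-diagonal term in a Cauchy--Schwarz sense, bounding it by the geometric mean of the two ``diagonal'' terms $T_2$ and $T_3$ already controlled above. The key observation is that the matrix $\bH^{-1}\bM\bH^{-1}$ is positive semi-definite: $\bM$ is PSD since $\bM=\E_{\bx}[\bh(\bx)\bh(\bx)^{\sT}]$ is a Gram matrix, and for any $\bz\in\R^n$ we have $\bz^{\sT}\bH^{-1}\bM\bH^{-1}\bz=(\bH^{-1}\bz)^{\sT}\bM(\bH^{-1}\bz)\ge 0$. Therefore $\langle\bu,\bv\rangle_{\bH^{-1}\bM\bH^{-1}}:=\bu^{\sT}\bH^{-1}\bM\bH^{-1}\bv$ defines a semi-inner product, and Cauchy--Schwarz gives
\[
|T_5|=|\beps(t)^{\sT}\bH^{-1}\bM\bH^{-1}\bv(t)|\le \bigl(\beps(t)^{\sT}\bH^{-1}\bM\bH^{-1}\beps(t)\bigr)^{1/2}\bigl(\bv(t)^{\sT}\bH^{-1}\bM\bH^{-1}\bv(t)\bigr)^{1/2}=T_3^{1/2}T_2^{1/2}.
\]

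Next I would insert the bounds already established. Lemma \ref{lem:T3} gives $T_3=\odp(1)\cdot\sigma_\eps^2$. Proposition \ref{prop:T2} gives $T_2=\norm{\bSl{\cut}\bffl{\cut}}_2^2+\odp(1)\cdot\normLeta{f_d}^2$. Since $\bSl{\cut}\preceq \id_{\cut}$ and $\bffl{\cut}=(\hat{f}_{d,1},\ldots,\hat{f}_{d,\cut})^{\sT}$ is the deterministic vector of Fourier coefficients of $f_d$, we have the deterministic bound $\norm{\bSl{\cut}\bffl{\cut}}_2^2\le \norm{\bffl{\cut}}_2^2=\normL{\projl{\cut}f_d}^2\le \normL{f_d}^2\le \normLeta{f_d}^2$. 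Hence $T_2=\Odp(1)\cdot\normLeta{f_d}^2$.

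Combining the two estimates,
\[
|T_5|\le T_3^{1/2}T_2^{1/2}=\bigl(\odp(1)\cdot\sigma_\eps^2\bigr)^{1/2}\bigl(\Odp(1)\cdot\normLeta{f_d}^2\bigr)^{1/2}=\odp(1)\cdot\sigma_\eps\cdot\normLeta{f_d}.
\]
Applying the AM--GM inequality $\sigma_\eps\cdot\normLeta{f_d}\le \tfrac12\bigl(\sigma_\eps^2+\normLeta{f_d}^2\bigr)$ yields the claim $T_5=\odp(1)\cdot(\sigma_\eps^2+\normLeta{f_d}^2)$.

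I do not anticipate a genuine obstacle here: the work has been done in Proposition \ref{prop:T2} and Lemma \ref{lem:T3}, and the only ingredient to verify is the PSD-ness of $\bH^{-1}\bM\bH^{-1}$, which follows from $\bM$ being a Gram matrix. One could alternatively proceed by computing $\E_{\beps}[T_5^2]=\sigma_\eps^2\cdot\boldf^{\sT}(\id-e^{-t\bH/n})\bH^{-1}\bM\bH^{-1}(\id-e^{-t\bH/n})^2\bH^{-1}\bM\bH^{-1}(\id-e^{-t\bH/n})\boldf$ and bounding the central sandwich via $(\id-e^{-t\bH/n})^2\preceq \id_n$ together with $\opnorm{n\bH^{-1}\bM\bH^{-1}}=\Odp(1)$ from Lemma \ref{lem:hmh}, but the Cauchy--Schwarz route is cleaner and immediately recycles the existing lemmas.
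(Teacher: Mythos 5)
Your proof is correct, and it is cleaner than the paper's. The paper first splits $\bv(t)=(\timekernel)(\bfl{\cut}+\bfg{\cut})$, yielding $T_5=T_{51}+T_{52}$, then for each piece computes the conditional second moment $\E_\beps[T_{5i}^2]$, bounds $[\bH^{-1}\bM\bH^{-1}]^2$ by inserting $\opnorm{\bM^{1/2}\bH^{-2}\bM^{1/2}}=\odp(1)$ (via Lemmas \ref{lem:hmh}, \ref{lem:iso}), relates the remaining quadratic forms to $T_{21}$ and $T_{23}$, and finishes with Markov's inequality. You instead observe that $\bM$ is a Gram matrix so $\bH^{-1}\bM\bH^{-1}\succeq 0$, apply Cauchy--Schwarz for that PSD quadratic form to get the pointwise bound $|T_5|\le T_3^{1/2}T_2^{1/2}$, and then plug in Lemma \ref{lem:T3} and Proposition \ref{prop:T2}. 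This skips the $T_{51}/T_{52}$ decomposition, the conditional second-moment calculation, and the Markov step entirely, recycling the already-established bounds on $T_2$ and $T_3$ in one stroke. The paper's route is more granular but arrives at the same estimate; your route is shorter and, because the Cauchy--Schwarz step is deterministic, conceptually more transparent. One small remark on bookkeeping: since $T_2$ and $T_3$ are themselves random, the product $T_3^{1/2}T_2^{1/2}=\bigl(\odp(1)\bigr)^{1/2}\bigl(\Odp(1)\bigr)^{1/2}\cdot\sigma_\eps\normLeta{f_d}=\odp(1)\cdot\sigma_\eps\normLeta{f_d}$ requires noting that $\sqrt{\odp(1)}\cdot\sqrt{\Odp(1)}=\odp(1)$, which you do implicitly and which is indeed valid.
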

\begin{proof}[Proof of Lemma \ref{lem:T5}]
We can write term $T_5$ as
\[
T_{5} = T_{51} + T_{52},
\]
where
\begin{align*}
    T_{51} &= \beps(t)^\sT \bH^{-1}\bM \bH^{-1}(\timekernel) \bfl{\cut},\\
    T_{52} &= \beps(t)^\sT \bH^{-1}\bM \bH^{-1}(\timekernel) \bfg{\cut}.
\end{align*}
Note as in Eq.\ (\ref{eq:EHE}), that by Lemma \ref{lem:hmh} and Lemma \ref{lem:iso},
\[
\opnorm{\bM^{1/2}\bH^{-2}\bM^{1/2}} = \opnorm{\bH^{-1} \bM \bH^{-1}} = \odp(1)
\]
and taking the second moment of $T_{51}$ yields
\begin{align*}
    \frac{1}{\sigma_\eps^2}\E_{\beps}[T_{51}^2] &\leq \frac{1}{\sigma_\eps^2} \E_{\beps}[\beps^\sT \bH^{-1} \bM \bH^{-1} (\timekernel) \bfl{\cut} \bfl{\cut}^\sT (\timekernel) \bH^{-1} \bM \bH^{-1} \beps] \\
    &= \bfl{\cut}^\sT(\timekernel)[\bH^{-1} \bM \bH^{-1}]^2 (\timekernel)\bfl{\cut} \\
    &\leq \opnorm{\bM^{1/2} \bH^{-2} \bM^{1/2}} \norm{\bM^{1/2}\bH^{-1}(\timekernel) \bfl{\cut}}_2^2\\
    &= \odp(1) \cdot T_{21} \\
    &= \odp(1) \cdot \normL{\projl{\cut} f_d}^2,
\end{align*}
where $T_{21}$ is as given in Eq.\ (\ref{eq:T21}).
Similarly we get that
\[
\frac{1}{\sigma_\eps^2} \E_{\beps}[T_{52}^2] = \odp(1) \cdot T_{23} = \odp(1) \cdot \normLeta{f_d}^2,
\]
where $T_{23}$ is as given in Eq.\ (\ref{eq:T23}).
By Markov's inequality we deduce that
\[
T_5 = \odp(1) \cdot \sigma_\eps \cdot (\normL{\projl{\cut} f_d} +  \normLeta{f_d}) = \odp(1) \cdot( \sigma_\eps^2 +  \normLeta{f_d}^2).
\]
\end{proof}
Finally putting Propositions \ref{prop:T2}, \ref{prop:T1} and Lemmas \ref{lem:T3}, \ref{lem:T4}, \ref{lem:T5} together for terms $T_2, T_1, T_3, T_4$ and $T_5$ respectively leads to the proof of Theorem \ref{thm:test}
\begin{proof}[Proof of Theorem \ref{thm:test}]
\begin{align*}
    \Rtest(\emp_t) &= \normL{f_d}^2 - 2T_1 + T_2 + T_3 -2T_4 + 2T_5\\
    &= \normL{\projg{\cut} f_d}^2 + \norm{\bff_\cut}_2^2 - 2 \norm{\bSl{\cut} \bffl{\cut}}^2 + \norm{\bSl{\cut} \bffl{\cut}}^2\\
    \quad &+ \odp(1) \cdot (\normL{f_d}^2 + \normLeta{f_d}^2 + \sigma_\eps^2)\\
    &= \norm{(\id - \bSl{\cut}) \bffl{\cut}}_2^2 + \normL{\projg{\cut} f_d}^2 + \odp(1) \cdot (\normL{f_d}^2 + \normLeta{f_d}^2 + \sigma_\eps^2).
\end{align*}
By Assumption \ref{ass:eigenvalue}\ref{ass:lower}, $\kappa_H / n = o_d(1) \cdot \max_{j \leq \cut} \lambda_{d, j}^2$ hence
\[
    \norm{(\id - \bSl{\cut}) \bffl{\cut}}_2^2 = \odp(1) \cdot \normL{f_d}^2
\]
and as a result we obtain the first part of the theorem
\begin{equation}\label{eq:test_thm}
\Rtest(\emp_t) = \normL{\projg{\cut} f_d}^2 + \odp(1) \cdot (\normL{f_d}^2 + \normLeta{f_d}^2 + \sigma_\eps^2).
\end{equation}
Now observe that similar to Eq.\ (\ref{eq:test_decomp}) we have the following decomposition
\[
    \normL{\emp_t - \projl{\cut} f_d}^2 = \normL{\projl{\cut} f_d}^2 - 2\bu(t)^\sT \bH^{-1} \bEl{\cut} + \bu(t)^\sT \bH^{-1}\bM\bH^{-1}\bu(t),
\]
where $\bu(t)$ is given in Eq.\ (\ref{eq:u(t)}).
Therefore we can write
\begin{equation}\label{eq:test_model_relate}
\normL{\emp_t - \projl{\cut} f_d}^2 = \Rtest(\emp_t) - \normL{\projg{\cut} f_d}^2 + 2 \bu(t)^\sT \bH^{-1} \bEg{\cut}.
\end{equation}
We now focus on the term
\[
\bu(t)^\sT \bH^{-1} \bEg{\cut} = \bv(t)^\sT \bH^{-1} \bEg{\cut} + \beps(t)^\sT \bH^{-1} \bEg{\cut}.
\]
By choosing $L = \cut$ in the proof of Lemma $\ref{lem:T4}$, it follows that
\[
\beps(t)^\sT \bH^{-1} \bEl{\cut} = \odp(1) \cdot (\sigma_\eps^2 + \normL{\projl{\cut} f_d}^2),
\]
hence combining with the bound for $T_4$ in Lemma \ref{lem:T4} yields
\begin{align*}
    \beps(t)^\sT \bH^{-1} \bEg{\cut} &= T_4 - \beps(t)^\sT \bH^{-1} \bEl{\cut} \\
    &= \odp(1) \cdot (\sigma_\eps^2 + \normL{f_d}^2) - \odp(1) \cdot (\sigma_\eps^2 + \normL{\projl{\cut} f_d}^2)\\
    &= \odp(1) \cdot (\sigma_\eps^2 + \normL{f_d}^2).
\end{align*}
We will now show that 
\begin{equation}\label{eq:vHE}
\bv(t)^\sT \bH^{-1} \bEg{\cut} = \odp(1) \cdot \normL{\projg{\cut} f_d} \normL{f_d}.
\end{equation}
If $\ell(d) = \nind(d)$, then Eq.\ (\ref{eq:vHE}) follows from Lemma \ref{lem:T13}. Otherwise, consider the case $\ell(d) = \tind(d)$. Following similar logic to the proof of Lemma \ref{lem:T13}, because $\E[\norm{\boldf}_2^2] = n \normL{f_d}^2$ and
\[
\E[\norm{\bEg{\tind}}_2^2] = n \sum\limits_{k = \tind + 1}^\infty \lambda_{d,k}^4 \hat{f}_k^2 \leq n \lambda_{d, \tind + 1}^4 \normL{\projg{\tind}f_d}^2.
\]
We also get Eq.\ (\ref{eq:vHE}) since
\begin{align*}
    |\bv(t)^\sT \bH^{-1} \bEg{\cut}| &= |\boldf^\sT(\id - e^{-t\bH /n} \bH^{-1}) \bEg{\tind}|\\
    &\leq (t/n)\norm{\boldf}_2 \opnorm{(\id - e^{-t\bH /n}) (t\bH/n)^{-1}}\norm{\bEg{\tind}}_2 \\
    &\stackrel{(a)}{\leq} \Odp(1) \cdot \normL{\projg{\tind} f_d} \normL{f_d} t \lambda_{d,\tind + 1}^2\\
    &\stackrel{(b)}{=} \odp(1) \cdot \normL{\projg{\tind} f_d} \normL{f_d},
\end{align*}
where $(a)$ used the inequality $(1 - e^{-x})/x \leq 1$ and $(b)$ used Assumption \ref{ass:valid_time}\ref{ass:time_gap}. Therefore 
\[ \bu(t)^\sT \bH^{-1} \bEg{\cut} = \odp(1) \cdot (\sigma_\eps^2 + \normL{f_d}^2), \] hence by Eq.\ (\ref{eq:test_thm}) and Eq.\ (\ref{eq:test_model_relate}) we obtain the final part of the theorem
\begin{equation} \label{eq:test_model}
    \normL{\emp_t - \projl{\cut} f_d}^2 = \odp(1) \cdot (\normLeta{f_d}^2 + \sigma_\eps^2).
\end{equation}
\end{proof}

\clearpage
\section{Dot Product Kernels on \texorpdfstring{$\sphere$}{S(d)}}
\subsection{Setting}
We now apply our general theorems to the setting of dot product kernels on the sphere. Concretely we take $\cX_d = \sphere$ and $\nu_d = \Unif(\sphere)$ and consider dot product kernels $H_d$ which take the form of Eq.\ (\ref{eqn:dot-product}). Note that by Eq.\ (\ref{eq:dot_prod_decomp}) any dot product kernel $h_d$ can be decomposed as
\[
h_d(\inner{\bx_1, \bx_2} / d) = \E_{\bw \sim \Unif(\S^{d-1})}[\sigma_d(\inner{\bw, \bx_1})\sigma_d(\inner{\bw, \bx_2})],
\]
for some activation function $\sigma_d$. We state mild assumptions on $\sigma_d$ and show that under these conditions we can apply the results in Appendix \ref{sec:general_results}.

\subsection{Assumptions}\label{sec:inner_prod_ass}
We state our assumptions on $\sigma_d$ after some definitions. See Appendix \ref{sec:technical_background} for additional background. Denote by $\polyprojl{\ell}$ the orthogonal projection onto the subspace of $L^2(\sphere)$ spanned by polynomials of degree less than or equal to $\ell$. The projectors $\polyproj_\ell$ and $\polyprojg{\ell}$ are defined analogously. Let us emphasize that the projectors $\polyprojl{\ell}$ are related but distinct from the $\projl{\nind}$: while $\polyprojl{\ell}$ projects onto the eigenspace of polynomials of degree at most $\ell$, $\projl{\nind}$ projects onto the top $\nind$-eigenfunctions.

The assumptions given on the activations are the same as Assumption 3 of \cite{mei2021generalization}.

\begin{assumption}[Assumptions for Dot Product Kernels at level $\nexp \in \N$]\label{ass:rot-invar}
Let $\{H_d\}_{d \geq 1}$ be a sequence of dot product kernels with associated activation functions $\{\sigma_d\}_{d \geq 1}$ as in Eq.\ (\ref{eq:dot_prod_decomp}). We assume the following hold
\begin{enumerate}[label=(\alph*)]
    \item There exists $k \in \N$ and constants $c_1 < 1$ and $c_0 > 0$, such that $|\sigma_d(x)| \leq c_0 \exp(c_1x^2/(4k))$.
    \item We have 
    \begin{align*}
        \min_{k \leq \nexp} d^{\nexp - k} \normL{\polyproj_k \sigma_d(\inner{\be, \cdot})}^2 &= \Omega_d(1),\\
        \normL{\polyprojg{2 \nexp + 1} \sigma_d(\inner{\be, \cdot})}^2 &= \Omega_d(1),
    \end{align*}
    where $\be \in \S^{d-1}$ is a fixed vector (it is easy to see that these quantities do not depend on $\be$).
\end{enumerate}
\end{assumption}
Consider $t(d), n(d)$ such that
\[
d^{\texp + \delta_0} \leq t \leq d^{\texp + 1 - \delta_0}, ~~~~~d^{\nexp + \delta_0} \leq n \leq d^{\nexp + 1 - \delta_0},
\]
for some $\texp, \nexp \in \N$ and $\delta_0 > 0$. We now verify that if $\{\sigma_d\}_{d \geq 1}$ satisfies Assumption \ref{ass:rot-invar} at level $\nexp$, then for an appropriate choice of $(\tind(d), \nind(d))$ the conditions in Appendix \ref{sec:general_assump} are satisfied and lead to Theorem \ref{thm:rot-invar}. We set $\tind(d)$ and $\nind(d)$ to be the number of eigenvalues associated to spherical harmonics of degree less than or equal to $\texp$ and $\nexp$ respectively
\[
\tind = \sum\limits_{k=0}^{\texp} B(d, k) = \Theta_d(d^\texp), \quad \nind = \sum\limits_{k=0}^{\nexp} B(d, k) = \Theta_d(d^\nexp).
\]
The verification of Assumption \ref{ass:KCP} (Kernel Concentration Property) and Assumption \ref{ass:eigenvalue} (Eigenvalue Condition) at level $\{(n(d), \nind(d)\}$ is the same as the treatment in Theorem 2 of \cite{mei2021generalization}. We only need to verify Assumption \ref{ass:valid_time}. To see part \ref{ass:valid_time}\ref{ass:time_gap}, note that $1/\lambda_{d, \tind(d)}^2 = \Theta_d(d^\texp)$ and $1/\lambda_{d, \tind(d)}^2 = \Theta_d(d^{\texp + 1})$. For part \ref{ass:valid_time}\ref{ass:time_n}, the condition holds because $\tind(d) < \nind(d)$ for large $d$ if and only if $\texp < \nexp$. Assumption \ref{ass:valid_time}\ref{ass:bulk} is easily seen to hold since the trace of the kernel operator $\Tr(\kernelop_d) = \Theta_d(1)$.
%\sm{Add proofs verifying the assumptions of the general results. } \nikhil{..}
\clearpage
\section{Group Invariant Kernels on \texorpdfstring{$\sphere$}{S(d)}}
\subsection{Setting}
We now apply our general theorems to the setting of group invariant kernels on the sphere. Concretely we take $\cX_d = \sphere$ and $\nu_d = \Unif(\sphere)$ and consider kernels $H_d$ which take the form of Eq.\ (\ref{eq:invar}) for some function $h$. By Eq.\ (\ref{eq:dot_prod_decomp}), for some activation function $\sigma_d$
\begin{equation}\label{eq:inv_decomp}
H_d(\bx_1, \bx_2) = \int_{\cG_d} \E_{\bx \sim \Unif(\S^{d-1})}[\sigma_d(\inner{\bx_1, \bw})\sigma_d(\inner{\bx_2, g \cdot \bw})] \pi_d(\dd{g}).
\end{equation}
We state mild assumptions on $\sigma_d$ and show that under these conditions we can apply the results in Appendix \ref{sec:general_results}. For additional technical background refer to Appendix \ref{sec:technical_background}.

\subsection{Assumptions}
We will assume that $\sigma_d = \sigma$ for all $d$ and make the following assumptions on $\sigma$ which are the same as Assumption 1 in \cite{mei2021learning}.
\begin{assumption}[Assumption on Group Invariant Kernel at level $\nexp$]\label{ass:group-invar}
Let $\{H_d\}_{d \geq 1}$ be a sequence of invariant kernels with associated activation functions $\sigma_d = \sigma$ as in Eq.\ (\ref{eq:inv_decomp}). We assume the following conditions hold
\begin{enumerate}[label=(\alph*)]
    \item For $\cG_d = \Cyc_d$, we assume $\sigma$ to be $(\nexp + 1) \vee 3$ differentiable and there exists constants $c_0 > 0$ and $c_1 < 1$ such that $|\sigma^{(k)}| \leq c_0e^{c_1 u^2/2}$ for any $2 \leq k \leq (\nexp + 1) \vee 3$.
    
    For general $\cG_d$, we assume that $\sigma$ is a (finite degree) polynomial function.
    
    \item The Hermite coefficients $\mu_k(\sigma)$ (c.f.\ Appendix) verify $\mu_k \neq 0$ for any $0 \leq k \leq \nexp$.
    
    \item We assume that $\sigma$ is not a polynomial with degrees less than or equal to $\nexp$.
\end{enumerate}
\end{assumption}
Consider $t(d)$, $n(d)$ such that
\[
d^{\texp + \delta_0} \leq t \leq d^{\texp + 1 - \delta_0}, ~~~~~d^{\nexp - \alpha + \delta_0} \leq n \leq d^{\nexp - \alpha + 1 - \delta_0},
\]
for some $\texp, \nexp \in \N$ and $\delta_0 > 0$. We now verify that if $\sigma$ satisfies Assumption \ref{ass:group-invar} at level $\nexp$, then the conditions given in Appendix \ref{sec:general_assump} are satisfied for an appropriate choice of $(\tind(d), \nind(d))$ which leads to Theorem \ref{thm:group_invar}. We set $\tind$ and $\nind$ to be the number of eigenvalues invariant polynomials of degree less than or equal to $\texp$ and $\nexp$ respectively
\[
\tind = \sum\limits_{k=0}^{\texp} D(d, k) = \Theta_d(d^{\texp - \alpha}), \quad \nind = \sum\limits_{k=0}^{\nexp} D(d, k) = \Theta_d(d^{\nexp-\alpha}),
\]
where $D(d, k)$ is the dimension of the subspace of invariant polynomials of degree $k$ (c.f.\ Appendix \ref{sec:invariant_polynomials}).
The verification of Assumption \ref{ass:KCP} (Kernel Concentration Property) and Assumption \ref{ass:eigenvalue} (Eigenvalue Condition) at level $\{(n(d), \nind(d)\}$ is exactly the same as in Theorem 1 in \cite{mei2021learning}.

We must verify Assumption \ref{ass:valid_time}. To see part \ref{ass:valid_time}\ref{ass:time_gap}, note that $1 / \lambda_{d, \tind(d)}^2 = \Theta_d(d^\texp)$ and $1 / \lambda_{d, \tind(d)}^2 = \Theta_d(d^{\texp + 1})$. For part \ref{ass:valid_time}\ref{ass:time_n}, the condition holds because $\tind(d) < \nind(d)$ for large $d$ if and only if $\texp < \nexp$ in which case
\[
\frac{t(d)}{n(d)} \Tr(\kernelop_{d, \nind(d)}) \leq \frac{d^{\texp + 1 - \delta_0}}{d^{\nexp - \alpha + \delta_0}} \Theta(d^{-\alpha}) = O(d^{-2\delta_0} d^{\texp - \nexp + 1}) = o_d(1).
\]
Assumption \ref{ass:valid_time}\ref{ass:bulk} can be seen to hold from the fact that $\Tr(\kernelop_{d, >\nind(d)}) = \Theta(d^{\nexp - \alpha})$ and
\[
\sum\limits_{j = 0}^{\nind} \lambda_{d,j}^2 = \sum\limits_{k = 0}^{\nexp} \xi_{d,k}^2 D(d, k) = \Theta(\nexp d^{-\alpha}) = \Theta(d^{-\alpha})
\]
from which it follows that for some constant $C$
\[
\sum\limits_{j = 0}^{\nind} \lambda_{d,j}^2 \leq C \sum\limits_{j > \nind}^{\infty} \lambda_{d,j}^2.
\]

%\sm{Add proof for $\| f_t - P f_d \|_2^2$} \nikhil{..}
\clearpage
\section{Auxiliary Results}
\subsection{Solution to Kernel Dynamics} \label{sec:dynamics}
Recall that we are interested in the following dynamics given in Eqs. (\ref{eqn:oracle_dynamics}), (\ref{eqn:empirical_dynamics}),
\begin{align*}
        \dv{t} \oracle_t(\bx) &= \E [H_d(\bx, \bz)(f_d(\bz) - \oracle_t(\bz))],\\
        \dv{t} \hat{f}_t(\bx) &= \frac{1}{n} \sum\limits_{i=1}^n H_d(\bx, \bx_i)(y_i - \hat{f}_t(\bx_i)),
    \end{align*}
with zero initialization $\oracle_0 \equiv \emp_0 \equiv 0$. In this section we clarify the derivation, validity, and solution of these dynamics. Let us consider the maps $\Rtest: \cH_d \to \R$ and $\Rtrain: \cH_d \to \R$ defined in Eq.\ (\ref{eqn:risks}). 
\begin{align*}
    \Rtest(f) &= \int_{\cX_d} (f(\bx) - f_d(\bx))^2 \dd{\nu_d}(\bx) + \sigma_\eps^2, \\
    \Rtrain(f) &= \frac{1}{n} \sum\limits_{i=1}^n (f(\bx_i) - y_i)^2.
\end{align*}
First, we recall the definition of the Fr\'{e}chet derivative of a functional $V : \cH_d \to \R$ at $f$. The Fr\'{e}chet derivative $DV(f)$ is the linear functional such that for $g \in \cH_d$, 
\[
\lim_{\norm{g}_{\cH_d} \to 0} \frac{|V(f + g) - V(f) - DV(f)(g)|}{\norm{g}_{\cH_d}} = 0.
\]
The gradient $\grad V(f) \in \cH_d$ is defined such that
\[
    \inner{\grad V(f), g}_{\cH_d} = DV(f)(g)
\]
exists uniquely by the Riesz representation theorem. The gradients of the risk functionals are
\begin{align*}
    \grad \Rtest(f) &= \kernelop_d(f - f_d), \\
    \grad \Rtrain(f) &= \frac{1}{n} \sum\limits_{i = 1}^n (f(\bx_i) - y_i)H_{\bx_i},
\end{align*}
where $H_{\bx_i}(\bx) := H_d(\bx_i, \bx)$ and $\kernelop_d$ is the kernel operator as in Appendix \ref{sec:setup}. A proof of this fact is given in Proposition 2.1 of \cite{yao2007early}. Taking $\oracle_t(x)$ as shorthand for $\oracle(t, x)$ where $\oracle(t, \cdot) \in \cH_d$ is the oracle model at time $t$ and similarly for $\emp_t(x)$, the following gradient flows with zero initialization are well-defined for $t \geq 0$,
\begin{align}
    \dv{t} \oracle_t &= -\grad \Rtest(\oracle_t) = -\kernelop_d( \oracle_t - f_d) = \E_{\bz}[H_d(\cdot, \bz)(f_d(\bz) - \oracle_t(\bz)] \label{eq:oracle_ode},\\
    \dv{t} \emp_t &= -\grad \Rtrain(\emp_t) = -\frac{1}{n} \sum\limits_{i = 1}^n (\emp_t(\bx_i) - y_i)H_{\bx_i} = \frac{1}{n} \sum\limits_{i=1}^n H_d(\cdot, \bx_i)(y_i - \hat{f}_t(\bx_i)). \label{eq:empirical_ode}
\end{align}
The oracle model ODE Eq.\ (\ref{eq:oracle_ode}) is simply a linear differential equation which has the following solution involving the operator exponential $\exp(\bA) := \sum\limits_{k=0}^\infty \bA^k / k!$
\begin{equation}\label{eq:oracle_solution}
\oracle_t = f_d + \exp(-t \kernelop_d)(\oracle_0 - f_d) = f_d - \exp(-t \kernelop_d) f_d.
\end{equation}
For the empirical model ODE Eq.\ (\ref{eq:empirical_ode})  we first consider the system of scalar differential equations induced at the points $\{(\bx_i, y_i)\}_{i \in [n]}$. Letting $\bu(t) = (\emp_t(\bx_1), \ldots, \emp_t(\bx_n))^\sT$, $\by = (y_1, \ldots, y_n)^\sT$, and $\bH = (H_d(\bx_i, \bx_j))_{i, j \in [n]}$ we have
\[
	\dv{t} \bu(t) = -\frac{1}{n} \bH(\bu(t) - \by),
\]
with initial condition $\bu(0) = \bzero$. As this a linear ODE, the solution is given by
\begin{equation}\label{eq:u(t)_sol}
\bu(t) = \by + e^{-t\bH / n}(\bu(0) - \by) = (\id_n - e^{-t\bH/n})\by.
\end{equation}
For $\bx \in \R^d$, define $\bh(\bx) = (H_d(\bx, \bx_1), \ldots, H_d(\bx, \bx_n))^\sT \in \R^n$. Let $\ba(t) = \bH^{-1} \bu(t) \in \R^n$. We will show that the function $\emp_t(\cdot) := \inner{\bh(\cdot), \ba(t)} \in \cH_d$, which satisfies $(\emp_t(\bx_1), \ldots, \emp_t(\bx_n))^\sT = \bu(t)$, satisfies the following equation
\[
\dv{t} \emp_t(\bx) = \frac{1}{n} \inner{\bh(\bx), \by - \bu(t)} = \frac{1}{n} \inner{\bh(\bx),e^{-t\bH/n}\by},
\]
which is Eq.\ (\ref{eq:empirical_ode}) at point $\bx$. Indeed, by the chain rule
\begin{align*}
\dv{t} \emp_t(\bx) &= \dv{t} \inner{\bh(\bx),  \ba(t)}\\ 
&= \inner{\bh(\bx),  \dv{t}  \ba(t)} \\
&= \inner{\bh(\bx), \bH^{-1} \frac{1}{n} \bH e^{-t\bH / n} \by}\\
&= \frac{1}{n} \inner{\bh(\bx),e^{-t\bH/n}\by},
\end{align*}
which is what we wanted to show.
\subsection{Equivalence between Invariant Kernels and Data Augmentation} \label{sec:equiv}
In this section we will show an equivalence between the (time rescaled) gradient flows for training invariant kernels and using an augmented dataset. Specifically consider a group $\cG$ and a kernel $H$ that is $\cG$-equivariant, that is
\[
H(g \cdot \bx_1, g \cdot \bx_2) = H(\bx_1, \bx_2) \quad \forall g \in \cG, \, \forall \bx_1, \bx_2 \in \cX.
\]
Given a $\cG$-equivariant kernel $H$, we define a $\cG$-invariant kernel $H_{\rm{inv}}$ as the group averaged kernel
\[
H_{\rm{inv}} = \int_{\cG} H(\bx_1, g \cdot \bx_2) \pi(\dd{g})
\]
for the Haar measure $\pi$ on $\cG$ (c.f.\ Eq.\ (\ref{eq:invar})). Note that any dot product kernel is $\cG$-equivariant for $\cG$ a subgroup the orthogonal group e.g. the cyclic group $\Cyc$ (c.f.\ Appendix \ref{sec:group_invar_kernel}). 

Given a dataset $(\bX, \by) = \{(\bx_i, y_i) : i \in [n]\}$ consider the augmented dataset 
\[(\bX_{\cG}, \by_{\cG}) = \{(g \cdot \bx_i, y_i) : g \in \cG, i \in [n]\}.\] We consider the (rescaled c.f.\ Remark \ref{rmk:aug_step_size}) empirical dynamics Eq.\ (\ref{eqn:empirical_dynamics}) of the gradient flow on $(\bX, \by)$ using $H_{\rm{inv}}$ which we denote $\emp_{t, \rm{inv}}$ 
\[
\dv{t} \emp_{t, \rm{inv}}(\bx) = -m \grad \Rtrain(\emp_{t, \rm{inv}}) = -\frac{m}{n} \sum\limits_{i=1}^n (\emp_{t, \rm{inv}}(\bx_i) - y_i)H_{\rm{inv}}(\bx_i, \bx)
\]
and the empirical dynamics of the gradient flow on $(\bX_{\cG}, \by_{\cG})$ using $H$ which we denote $\emp_{t, \rm{aug}}$ 
\[
\dv{t} \emp_{t, \rm{aug}}(\bx) = -\frac{1}{n} \sum_{g \in \cG} \sum\limits_{i=1}^n (\emp_{t, \rm{aug}}(g \cdot \bx_i) - y_i)H(g \cdot \bx_i, \bx).
\]
\begin{proposition}\label{prop:aug}
Let $\cG$ be a finite group with $m$ elements. Given a $\cG$-equivariant kernel $H$, if $\pi$ is the uniform measure on $\cG$ then
\[
\emp_{t, \rm{inv}} \equiv \emp_{t, \rm{aug}}, ~~~~\forall t \geq 0.
\]
\end{proposition}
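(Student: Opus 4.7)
The plan is to verify that $\emp_{t,\rm{inv}}$ itself satisfies the augmented ODE with the same zero initial condition as $\emp_{t,\rm{aug}}$, after which uniqueness of the flow forces $\emp_{t,\rm{inv}} \equiv \emp_{t,\rm{aug}}$. The two ingredients I will need are (i) the $\cG$-invariance of $\emp_{t,\rm{inv}}$ as a function on $\cX$, which comes purely from the fact that $\pi$ is the uniform measure on $\cG$, and (ii) the group-averaging identity $\sum_{g \in \cG} H(g \cdot \bx_i, \bx) = m\, H_{\rm{inv}}(\bx_i, \bx)$, which comes from $\cG$-equivariance of $H$.

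First I establish invariance. A reindexing $g \mapsto g g_0^{-1}$ in the definition of $H_{\rm{inv}}$ gives $H_{\rm{inv}}(\bx_1, g_0 \cdot \bx_2) = H_{\rm{inv}}(\bx_1, \bx_2)$, so each slice $H_{\rm{inv}}(\bx_i, \cdot)$ is $\cG$-invariant. Since the invariant dynamics expresses $\dv{t}\emp_{t,\rm{inv}}$ as a finite linear combination of these slices and $\emp_{0,\rm{inv}} \equiv 0$, the trajectory $\emp_{t,\rm{inv}}$ remains $\cG$-invariant for all $t \geq 0$; in particular $\emp_{t,\rm{inv}}(g \cdot \bx_i) = \emp_{t,\rm{inv}}(\bx_i)$ for every $g \in \cG$ and $i \in [n]$. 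Substituting this into the augmented right-hand side and pulling the residual out of the inner sum over $g$ gives
\[
-\frac{1}{n}\sum_{g \in \cG}\sum_{i=1}^n (\emp_{t,\rm{inv}}(g \cdot \bx_i) - y_i)\, H(g \cdot \bx_i, \bx) = -\frac{1}{n}\sum_{i=1}^n (\emp_{t,\rm{inv}}(\bx_i) - y_i) \sum_{g \in \cG} H(g \cdot \bx_i, \bx).
\]
Equivariance $H(g \cdot \bx_i, \bx) = H(\bx_i, g^{-1}\cdot \bx)$ followed by reindexing over the group yields $\sum_{g \in \cG} H(g \cdot \bx_i, \bx) = m\, H_{\rm{inv}}(\bx_i, \bx)$, so the right-hand side simplifies to $-\frac{m}{n}\sum_i(\emp_{t,\rm{inv}}(\bx_i) - y_i)\, H_{\rm{inv}}(\bx_i, \bx) = \dv{t}\emp_{t,\rm{inv}}(\bx)$, confirming that $\emp_{t,\rm{inv}}$ solves the augmented ODE.

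Finally I invoke uniqueness. As in Appendix \ref{sec:dynamics}, the augmented flow is driven by a closed finite-dimensional linear ODE for the vector $\bu(t) = (\emp_{t,\rm{aug}}(g \cdot \bx_i))_{g \in \cG,\, i \in [n]} \in \R^{mn}$, whose unique solution in turn determines the function at every point via the representer formula analogous to Eq.~(\ref{eq:u(t)_sol}). Since $\emp_{t,\rm{inv}}$ also satisfies this ODE with matching zero initial condition, the two flows coincide on the augmented sample and hence everywhere. There is no real obstacle here, only the bookkeeping of the $\cG$-invariance of $\emp_{t,\rm{inv}}$ together with the factor of $m$ from group-averaging; both fall out immediately from equivariance of $H$ and uniformity of $\pi$.
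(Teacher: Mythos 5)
Your proof is correct, and it takes a genuinely different route from the paper's. The paper works at the level of the explicit matrix-exponential solution: it introduces the $mn$-dimensional output vector $\bu_{\rm aug}(t)$ and the block kernel matrix $\bH_{\rm aug}$, expands $\exp(-t\bH_{\rm aug}/n)$ as a power series, and proves by induction on $k$ that $\bH_{\rm aug}^k \by_\cG = (m^k \bH_{\rm inv}^k \by, \ldots, m^k \bH_{\rm inv}^k \by)$, concluding with a citation to Theorem~4.1 of \cite{li2019enhanced} to pass from agreement of the output vectors to agreement of the functions. You instead argue at the level of the ODEs themselves: you observe that the invariant flow stays in the subspace of $\cG$-invariant functions (because each slice $H_{\rm inv}(\bx_i,\cdot)$ is $\cG$-invariant and $\emp_{0,\rm inv}\equiv 0$), substitute this into the augmented right-hand side, and use equivariance plus the reindexing $\sum_{g}H(g\cdot\bx_i,\bx)=\sum_{g}H(\bx_i,g\cdot\bx)=m\,H_{\rm inv}(\bx_i,\bx)$ to show that $\emp_{t,\rm inv}$ already solves the augmented ODE; uniqueness of the (finite-dimensional linear) flow then forces $\emp_{t,\rm inv}\equiv\emp_{t,\rm aug}$. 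Your argument is shorter, avoids both the induction and the external citation, and makes transparent where the factor $m$ and the need for $\pi$ uniform actually enter. One minor point worth making explicit when you invoke uniqueness: $\emp_{t,\rm inv}$ lies in $\mathrm{span}\{H(g\cdot\bx_i,\cdot):g,i\}$ for all $t$ because $H_{\rm inv}(\bx_i,\cdot)=\frac{1}{m}\sum_g H(g^{-1}\cdot\bx_i,\cdot)$ by the same equivariance/reindexing step, so both trajectories live in the same finite-dimensional subspace and the uniqueness argument closes cleanly.
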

\begin{proof}
Let $\cG = \{g_1, \ldots, g_m\}$ where $g_1$ is the identity. Define the output vectors 
\begin{align*}
    \bu_{\rm{inv}}(t) &:= (\emp_{t, \rm{inv}}(\bx_1), \ldots, \emp_{t, \rm{inv}}(\bx_n))^\sT \in \R^n,\\
    \bu_{g}(t) &:= (\emp_{t, \rm{aug}}(g \cdot \bx_1), \ldots, \emp_{t, \rm{aug}}(g \cdot \bx_n))^\sT \in \R^n,\\
    \bu_{\rm{aug}}(t) &:= (\bu_{g_1}(t), \ldots, \bu_{g_m}(t))^\sT \in \R^{mn}.
\end{align*}
Furthermore, define the kernel matrices
\begin{align*}
    \bH_{g, g'} &:= [H(g \cdot \bx_i, g' \cdot \bx_j)]_{i, j \in [n]} \in \R^{n \times n} \text { for } g, g' \in \cG,\\
    \bH_{\rm{aug}} &:= [\bH_{g, g'}]_{g, g' \in \cG} \in \R^{mn \times mn},\\
    \bH_{\rm{inv}} &= [H_{\rm{inv}}(\bx_i, \bx_j)]_{i, j \in [n]}.
\end{align*}
Note that by definition $\bH_{\rm{inv}} = \frac{1}{m} \sum\limits_{j = 1}^m \bH_{g_1, g_j}$. We will show that 
\begin{equation}\label{eq:u_aug}
    \bu_{\rm{aug}}(t) = (\bu_{\rm{inv}}(t), \bu_{\rm{inv}}(t), \ldots, \bu_{\rm{inv}}(t)) \text{ for all } t \geq 0.
\end{equation}
From this the result follows by Theorem 4.1 in \cite{li2019enhanced} since $\emp_{t, \rm{inv}}, \emp_{t, \rm{aug}}$ are given by kernel regressions with targets $\bu_{\rm{inv}}(t), \bu_{\rm{aug}}(t)$ and kernels $H_{\rm{inv}}, H$ respectively.

By Eq.\ (\ref{eq:empirical_ode}), we can write
\begin{align*}
    \bu_{\rm{inv}}(t) &= (\id - \exp(-t m \bH_{\rm{inv}} /n )) \by,\\
    \bu_{\rm{aug}}(t) &= (\id - \exp(-t \bH_{\rm{aug}} /n )) \by_{\cG},
\end{align*}
where $\by_{\cG} = (\by, \ldots, \by) \in \R^{mn}$. By expanding the matrix exponential series and using linearity, it suffices to show that
\[
\bH_{\rm{aug}}^k \by_\cG = (m^k H_{\rm{inv}}^k \by, m^k H_{\rm{inv}}^k \by, \ldots, m^k H_{\rm{inv}}^k \by) \text{ for all } k \in \N.
\]
 in order to show Eq.\ (\ref{eq:u_aug}) holds. We prove the above by induction on $k$. For $k = 1$, observe that 
\begin{align*}
    H_{\rm{aug}} \by_\cG &= \qty(\sum\limits_{j = 1}^m \bH_{g_i, g_j} \by)_{i=1}^m\\
    &= \qty(\sum\limits_{j = 1}^m \bH_{g_1, g_j} \by)_{i=1}^m\\
    &= (m\bH_{\rm{inv}} \by)_{i=1}^m,
\end{align*}
where the second equality follows from $\cG$-equivariance of $H$. Assume the inductive hypothesis holds for $k$. Then 
\begin{align*}
    \bH_{\rm{aug}}^{k+1} \by_\cG &= \bH_{\rm{aug}}\bH_{\rm{aug}}^k \by_{\cG}\\ &= \qty(m^k \sum\limits_{j = 1}^m \bH_{g_i, g_j} \bH_{\rm{inv}}^k \by)_{i=1}^m\\
    &= \qty(\sum\limits_{j = 1}^m \bH_{g_1, g_j} \qty(\sum\limits_{j' = 1}^m \bH_{g_1, g_{j'}})^k \by)_{i=1}^m\\
    &= \qty(\qty(\sum\limits_{j = 1}^m \bH_{g_1, g_{j}})^{k+1} \by)_{i=1}^m\\
    &= (m^{k+1}\bH_{\rm{inv}}^{k+1} \by)_{i=1}^m,
\end{align*}
where the second equality applies the induction hypothesis and the third equality uses equivariance. Thus the inductive claim is proved and the proof is complete.

\end{proof}
% \begin{corollary}
% If $\cG_d = \Cyc_d$ and $\pi_d = \Unif(\Cyc_d)$, then for a dot product kernel $H$ and corresponding invariant kernel $H_{\rm{inv}}$ we have 
% \[
% \emp_{t, \rm{inv}} \equiv \emp_{t, \rm{aug}}. 
% \]
% \end{corollary}
% \begin{proof}
% This simply follows from the previous proposition since $|\Cyc_d| = d$ and a dot product kernel is $\Cyc_d$-equivariant.
% \end{proof}
\begin{remark}\label{rmk:aug_step_size}
The scaling factor $m$ in the gradient flow for $\emp_{t, \rm{inv}}$, leads to a natural comparison with $\emp_{t, \rm{aug}}$ as elaborated in Appendix \ref{sec:discrete}. As argued in that section, in the gradient descent discretization, it is natural to take a step-size inversely proportional to the maximum kernel eigenvalue. In the case of high-dimensional invariant kernels, note that
\[
\lambda_{\max}(\bH_{\rm{aug}}) = m \lambda_{\max}(\bH) \sim m \lambda_{\max}(\bH_{\rm{inv}}),
\]
hence the step-size for the invariant kernel flow should be $m$ times larger.
\end{remark}

\subsection{Discretizing Time} \label{sec:discrete}
Comparing  different ``speeds" of optimization algorithms only makes sense for discrete-time algorithms. Consider the following gradient descent dynamics with step-size $\eta$, obtained as the discretization of the empirical gradient flow Eq.\ (\ref{eq:empirical_ode})
\begin{equation}\label{eq:gradient_descent}
\emp_{k+1} = \emp_k - \eta \grad \Rtrain(\emp_k) = \emp_k - \eta \frac{1}{n} \sum\limits_{i = 1}^n (\emp_k(\bx_i) - y_i)H(\cdot, \bx_i), ~~~~k = 0, 1, \ldots
\end{equation}
We will argue that it is natural to take $\eta \sim n/\lambda_{\max}(\bH)$ where $\bH$ is the kernel matrix.

Define the sampling operator $S : \cH_d \to \R^n$ by $S(f) = (f(\bx_i))_{i=1}^n \in \R^n$ and let $S^*: \R^n \to \cH_d$ be its adjoint, defined by $S^*(\by) = \frac{1}{n} \sum\limits_{i=1}^n y_i H_{\bx_i}$ (see \cite{yao2007early} Appendix B for more details). Then we can rewrite the gradient descent equation Eq.\ (\ref{eq:gradient_descent}) as
\begin{equation}\label{eq:sampling}
    \emp_{k+1} = \emp_k - \eta (S^*S\emp_t- S^*\by), ~~~~k = 0, 1, \ldots
\end{equation}
Let $T = S^*S$ and define $\overline{\bH} := SS^* = \frac{1}{n}\bH$ to be the normalized kernel matrix. Let $b := S^*\overline{\bH}^{-1} \by \in \cH_d$ and note that since $Tb = S^*\by$, we can rewrite Eq.\ (\ref{eq:sampling}) as
\begin{equation}\label{eq:sampling2}
\emp_{k+1} = \emp_k - \eta T(\emp_k - b), ~~~~k = 0, 1, \ldots
\end{equation}
Denote the eigenvalues of $\overline{\bH}$ as $\lambda_1 \geq \lambda_2 \geq \ldots \geq \lambda_n > 0$. By the spectral theorem, there exists a set of orthornomal eigenvectors $\phi_1, \ldots, \phi_n \in \cH_d$ such that $T = \sum\limits_{i=1}^n \lambda_i \phi_i \phi_i^*$. Define $\alpha_i(k) = \inner{\emp_k - b, \phi_i} \in \R$. By taking Eq.\ (\ref{eq:sampling2}) then subtracting $b$ and taking the inner product with $\phi_i$ on both sides, we get the coordinate evolution equations for $i = 1, \ldots, n$
\begin{align*}
\alpha_i(k + 1) &= \inner{(\text{id} - \eta T)(\emp_k - b), \phi_i}\\
&= \inner{(\emp_k - b), (\text{id} - \eta T)\phi_i}\\
&= (1 - \eta \lambda_i)\alpha_i(k),
\end{align*}
where the second equality holds since $T$ is self-adjoint and the last equality is since $\phi_i$ is an eigenvector of $T$. It is easy to see that $\alpha_i(k) = (1 - \eta \lambda_i)^k \alpha_i(0)$. Therefore we see that gradient descent Eq.\ (\ref{eq:gradient_descent}) is guaranteed to converge if $\eta < 1 / (2 \lambda_1)$ and may not otherwise. Therefore it is natural to choose the step-size $\eta$ to scale asymptotically as $\eta \sim 1 / \lambda_{\max}(\overline{\bH})$.

For a dot product kernel $H$ and its corresponding invariant kernel $H_{\rm{inv}}$ the kernel matrices have operator norms of the same order
\[
\lambda_{\max}({\bH}) \sim \lambda_{\max}({\bH_{\rm{inv}}}),
\]
hence no time rescaling is need to compare the corresponding optimization speeds asymptotically.

\subsection{Similarities with Empirical Phenomena}\label{sec:empirical}
In this section we elaborate upon Remark \ref{rmk:practice} and mention some connections with empirical observations in \cite{nakkiran2020deep}. Although the metric in our setting is the squared loss, we can still observe three stages in classification problems when measuring the soft error. In Fig.\ \ref{fig:preetum_soft_err} taken from \cite{nakkiran2020deep} we can observe stage 1 and stage 2. Either training has not continued long enough to observe stage 3 or $n$ is large enough so that the models have converged to the approximation error of the neural network class (c.f.\ Remark \ref{rmk:degen}). In Fig.\ \ref{fig:preetum_vary_n} taken from \cite{nakkiran2020deep}, although the train errors are not plotted, by extrapolating from Fig.\ \ref{fig:preetum_soft_err}, presumably for each $n$ stage 1 and stage 2 occur. From the dimmer curves in Fig.\ \ref{fig:preetum_vary_n} we can see that for $n  < 50000$ stage 3 occurs as well.

In Fig.\ \ref{fig:preetum_data_aug}, we see a parallel between the use of cyclic versus dot product kernels and the use data augmentation versus not for a Resnet-18 trained on CIFAR-5m (note that using a cyclic kernel is equivalent to using a dot product kernel with data-augmentation c.f.\ Appendix \ref{sec:equiv}). In both our theoretical results and in the empirical results of \cite{nakkiran2020deep} we observe that the ideal world optimization speed of augmented and non-augmented training are the same, but for augmented training the real world training speed is slowed down, eventually leading to better generalization for long enough training.

\begin{figure}
    \centering
    \begin{subfigure}[t]{0.3125\textwidth}
    \centering
    \includegraphics[width=\linewidth]{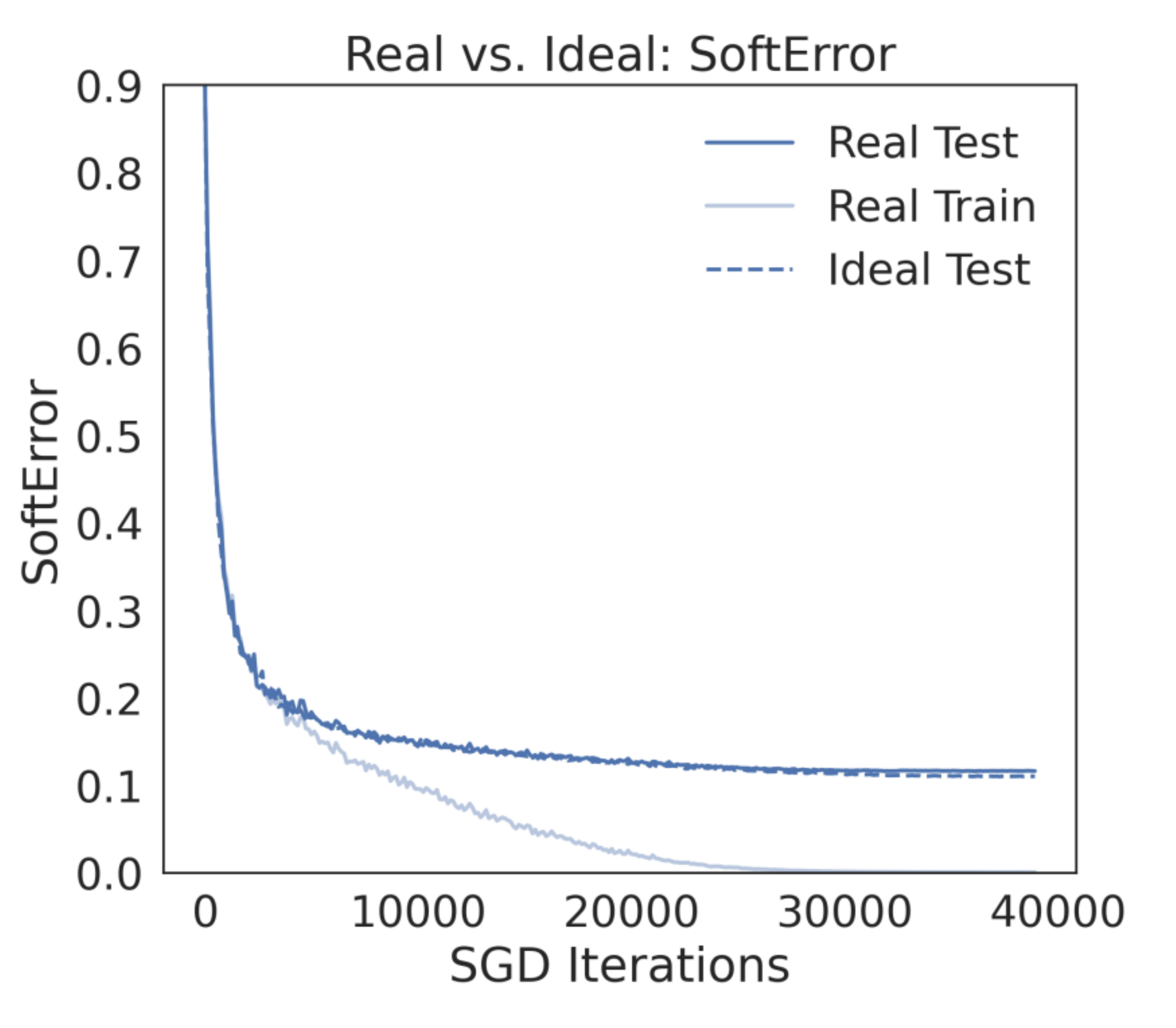}
    \caption{ }
    \label{fig:preetum_soft_err}
    \end{subfigure}
    \begin{subfigure}[t]{0.4875\textwidth}
    \centering
    \includegraphics[width=\linewidth]{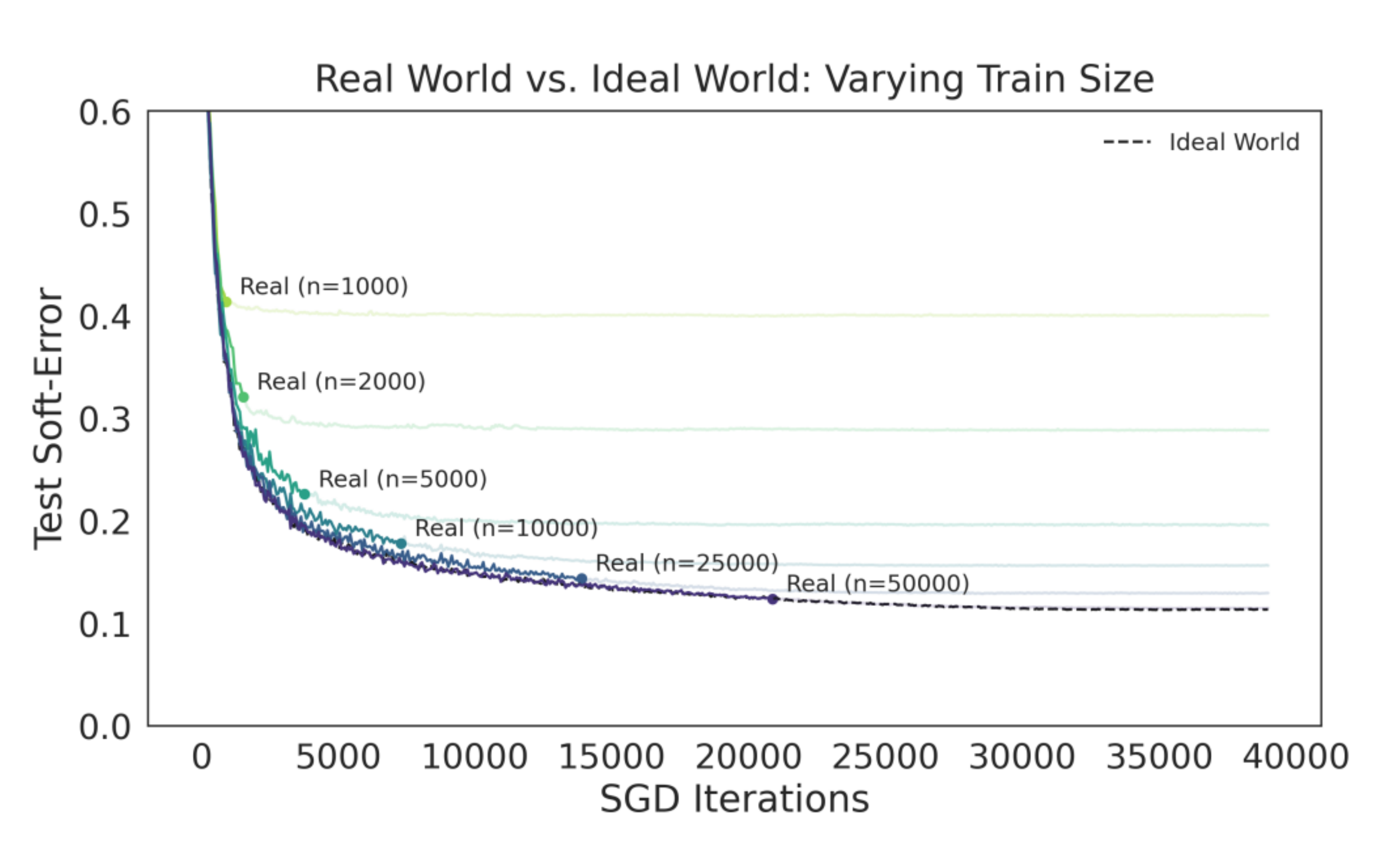}
    \caption{ }
    \label{fig:preetum_vary_n}
    \end{subfigure}
    \caption{Soft-error curves for Resnet-18 trained on CIFAR-5m taken from ref. \cite{nakkiran2020deep}. Panel (\ref{fig:preetum_soft_err}): $n = 5 \times 10^4$ (Fig.\ 6 in ref). Panel (\ref{fig:preetum_vary_n}) Varying $n$ (Fig.\ 4a in ref).}
\end{figure}
\begin{figure}
    \centering
    \begin{subfigure}[t]{0.45\textwidth}
    \centering
    \includegraphics[width=\linewidth]{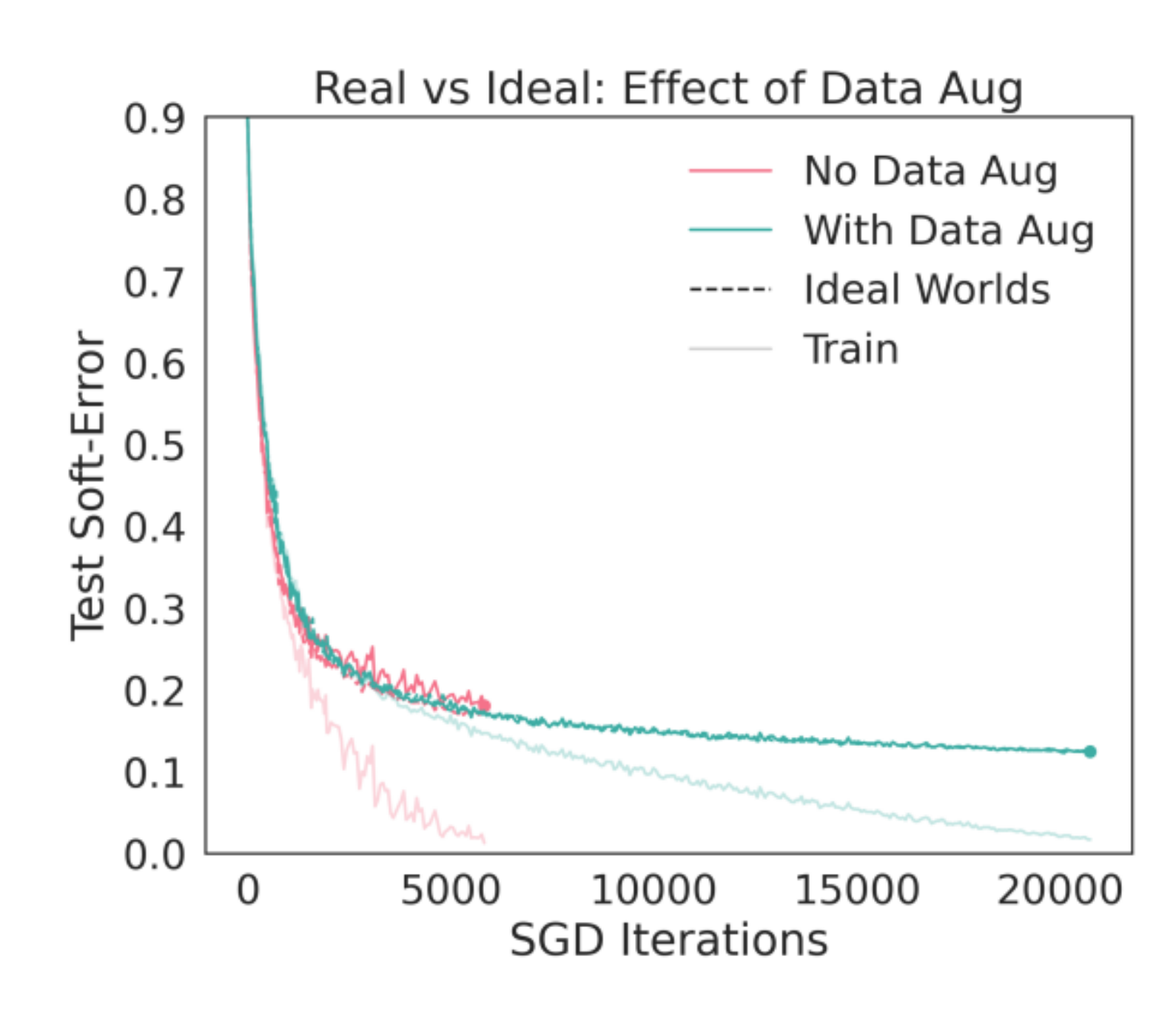}
    \caption{ }
    \label{fig:preetum_data_aug}
    \end{subfigure}
    \begin{subfigure}[t]{0.45\textwidth}
    \centering
    \includegraphics[width=\linewidth]{iclr2022/figures/linear_scale_relu_cyclic_d=400.pdf}
    \caption{ }
    \label{fig:preetum_cyclic}
    \end{subfigure}
    \caption{Panel (\ref{fig:preetum_data_aug}): Data-augmentation for Resnet-18 on CIFAR-5m. (Fig.\ 5a from \cite{nakkiran2020deep}). Panel (\ref{fig:preetum_cyclic}): Cyclic versus dot product kernel (Fig.\ \ref{fig:bs_relu_cyclic} from this work)}
\end{figure}

\clearpage
\section{Additional Figures}\label{sec:additional_figs}
\begin{figure}[h!]
    \centering
    \begin{subfigure}[t]{0.28\textwidth}
        \centering
        \includegraphics[width=\linewidth]{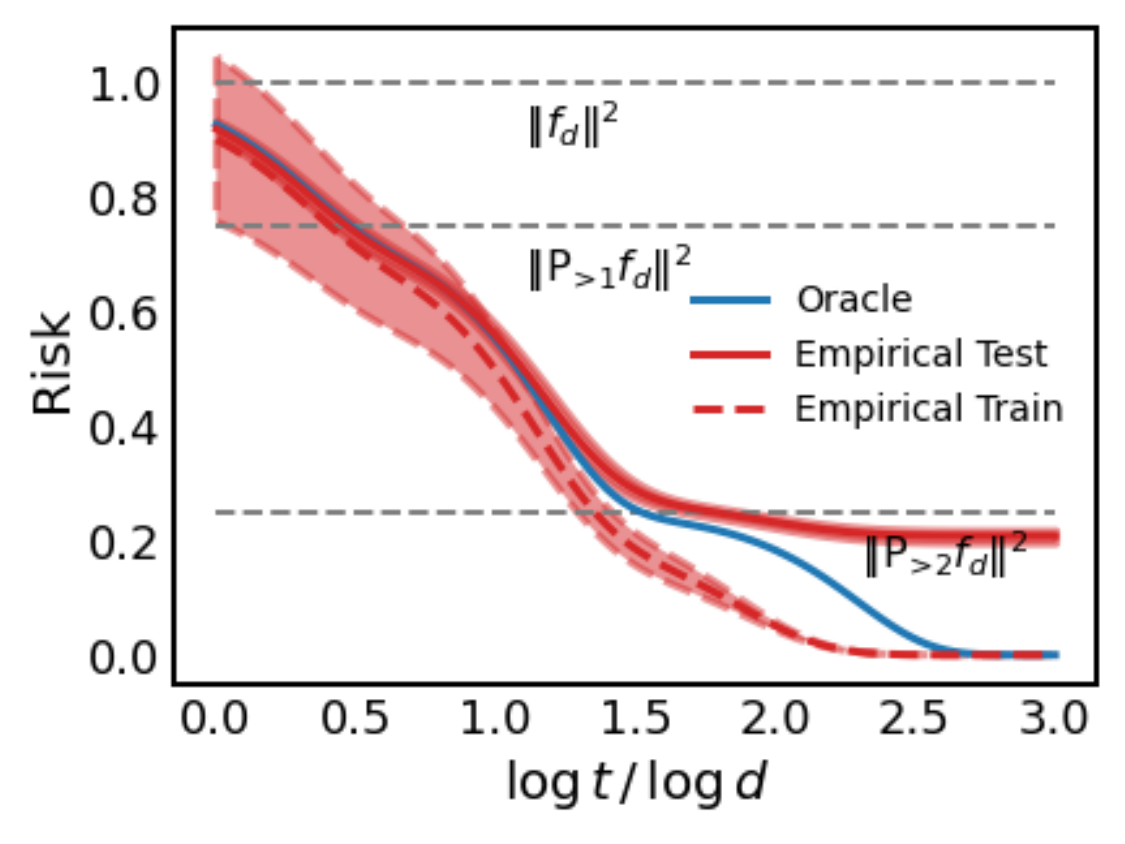} 
        \caption{$d = 50$} 
    \end{subfigure}
    \begin{subfigure}[t]{0.28\textwidth}
        \centering
        \includegraphics[width=\linewidth]{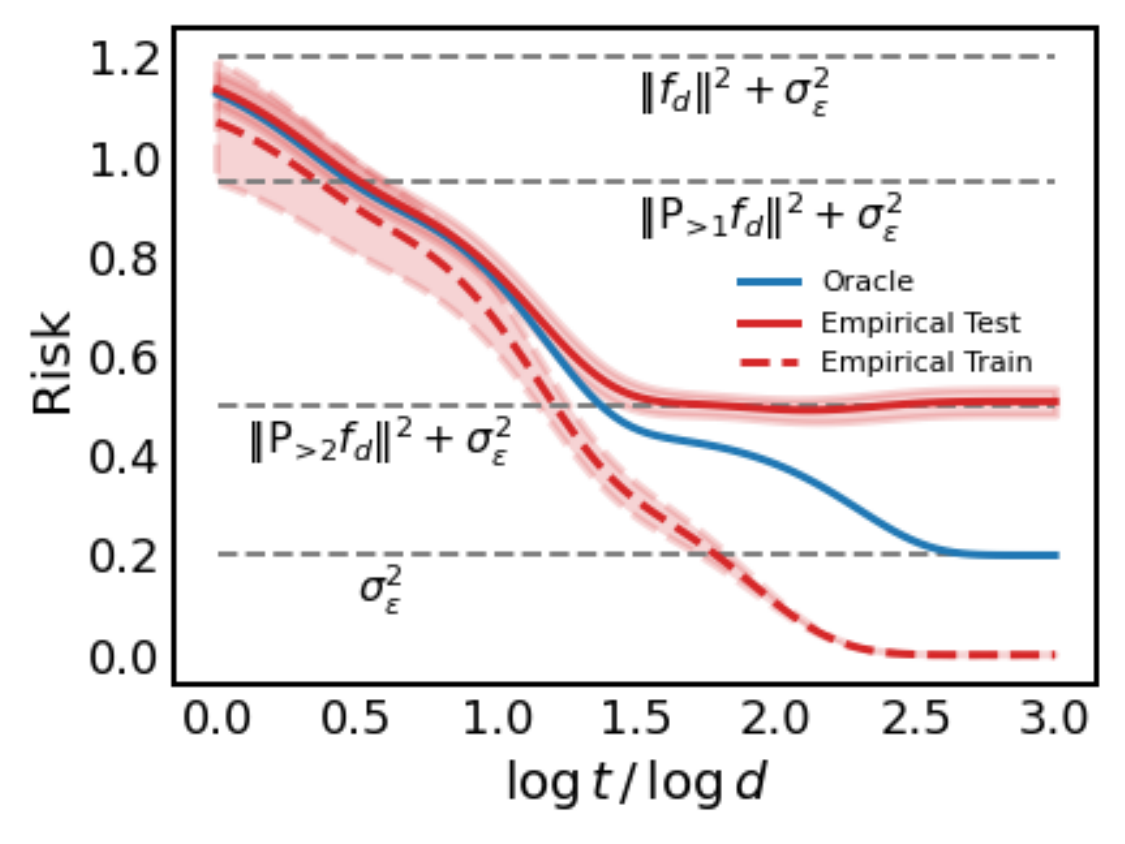} 
        \caption{$d = 50$} 
    \end{subfigure}
    \begin{subfigure}[t]{0.28\textwidth}
        \centering
        \includegraphics[width=\linewidth]{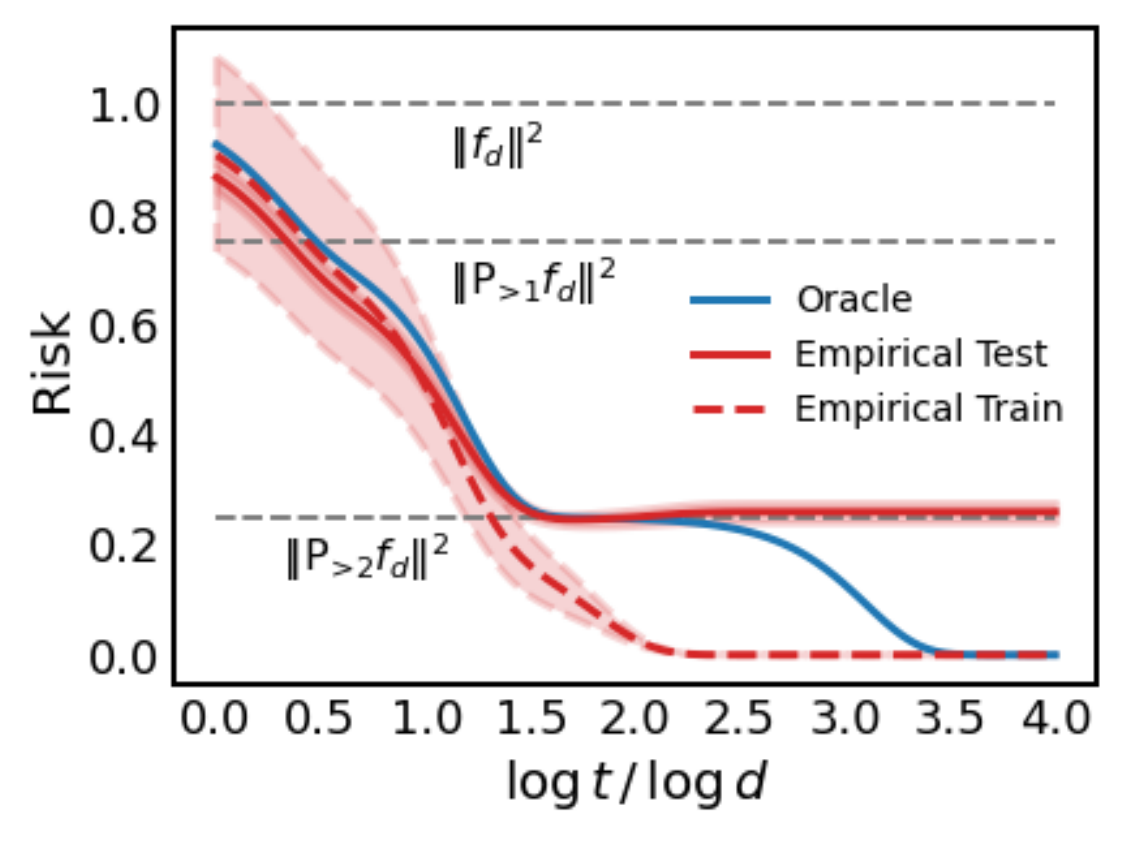} 
        \caption{$d = 50$} 
    \end{subfigure}
    
    \begin{subfigure}[t]{0.28\textwidth}
        \centering
        \includegraphics[width=\linewidth]{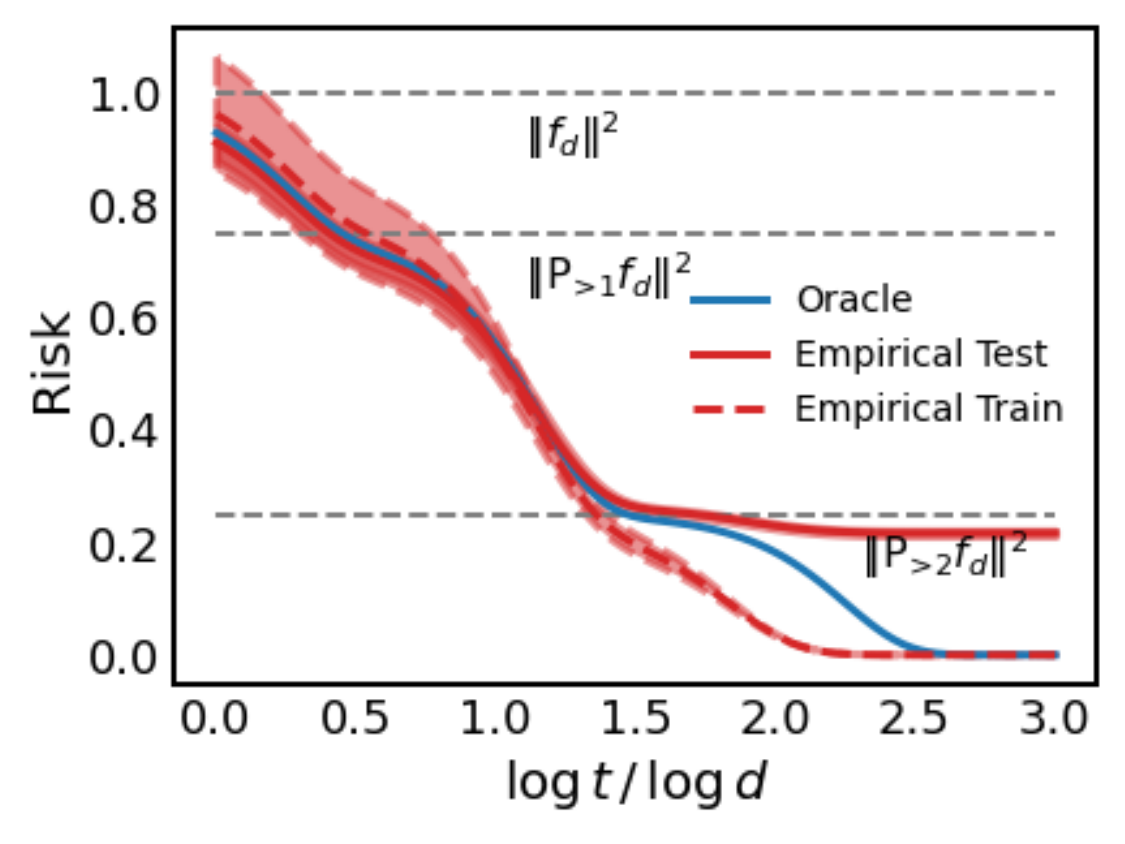} 
        \caption{$d = 100$} 
    \end{subfigure}
    \begin{subfigure}[t]{0.28\textwidth}
        \centering
        \includegraphics[width=\linewidth]{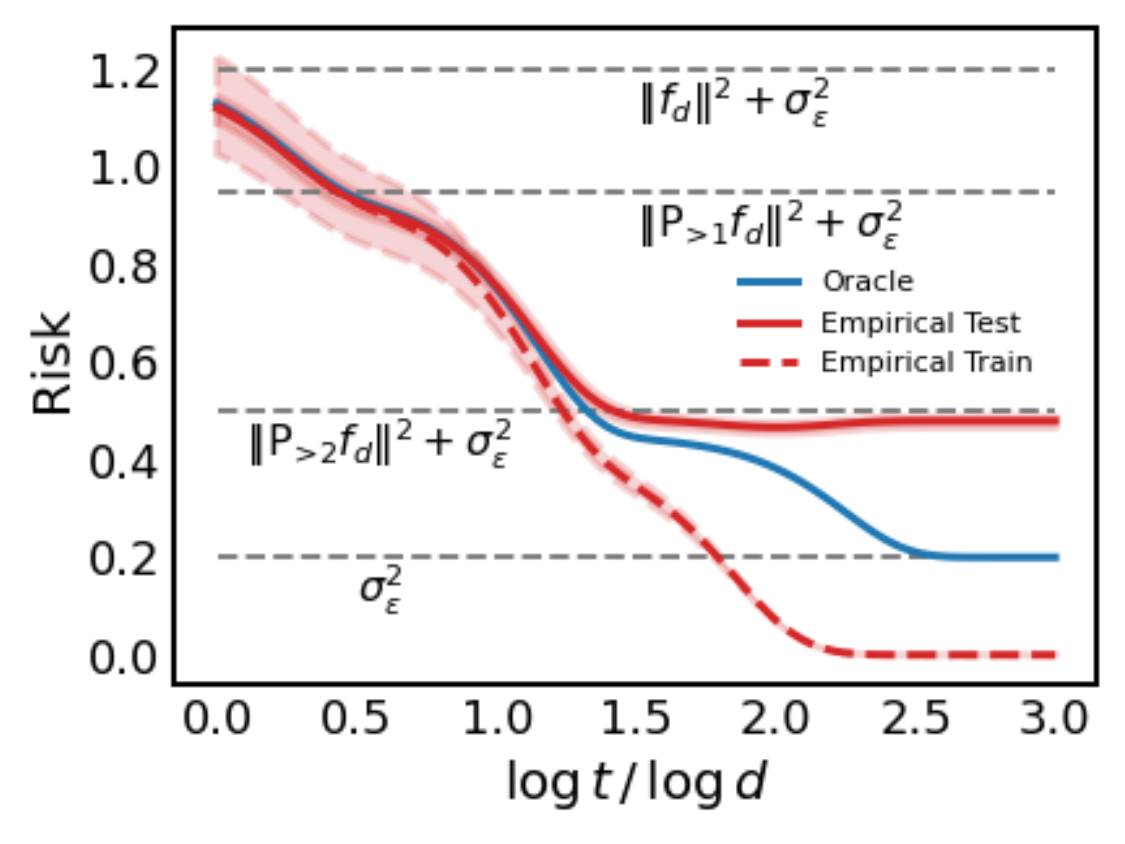} 
        \caption{$d = 100$} 
    \end{subfigure}
    \begin{subfigure}[t]{0.28\textwidth}
        \centering
        \includegraphics[width=\linewidth]{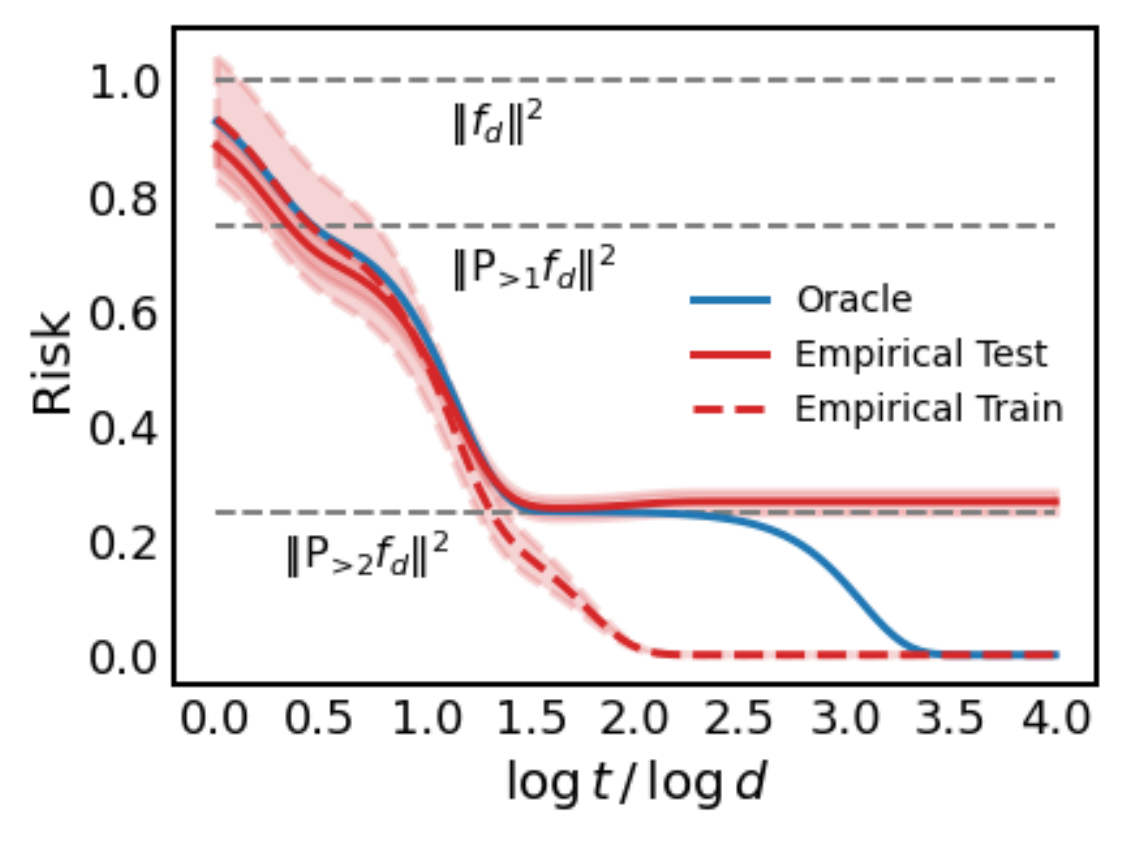} 
        \caption{$d = 100$} 
    \end{subfigure}
    
    \begin{subfigure}[t]{0.28\textwidth}
        \centering
        \includegraphics[width=\linewidth]{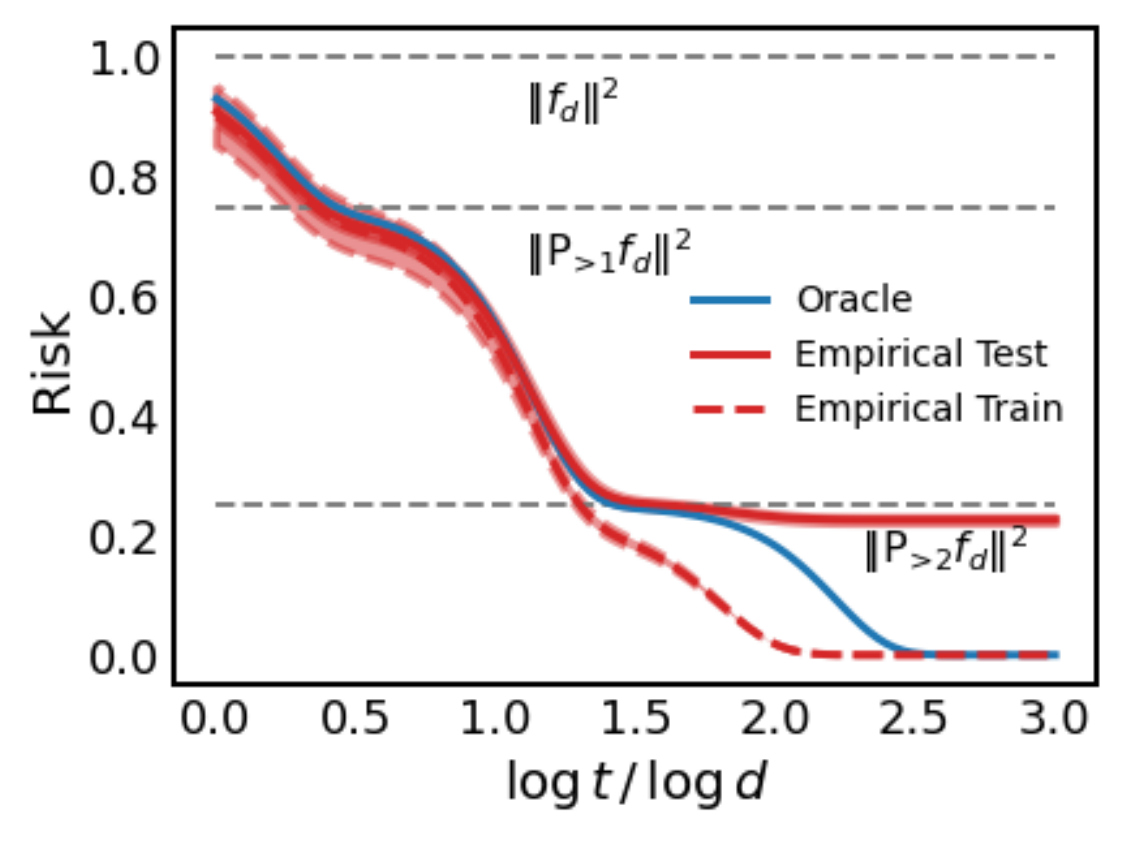} 
        \caption{$d = 200$} 
    \end{subfigure}
    \begin{subfigure}[t]{0.28\textwidth}
        \centering
        \includegraphics[width=\linewidth]{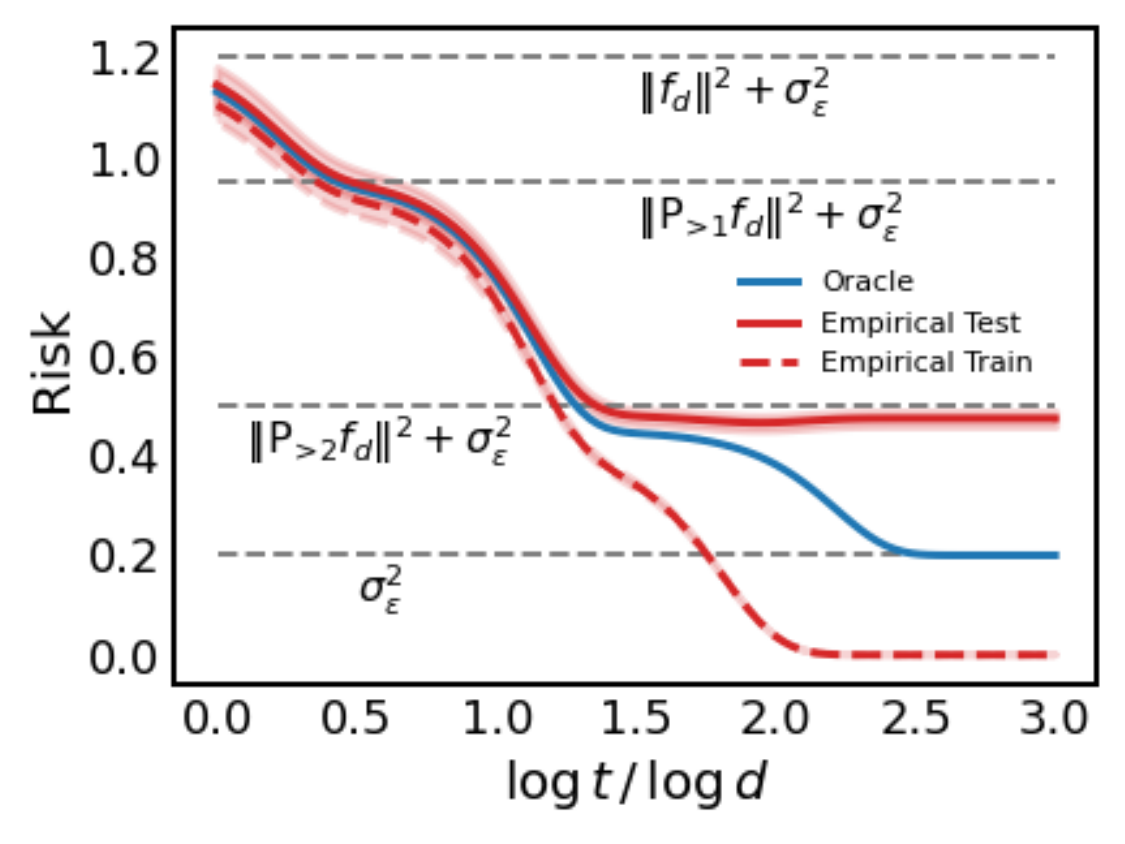} 
        \caption{$d = 200$} 
    \end{subfigure}
    \begin{subfigure}[t]{0.28\textwidth}
        \centering
        \includegraphics[width=\linewidth]{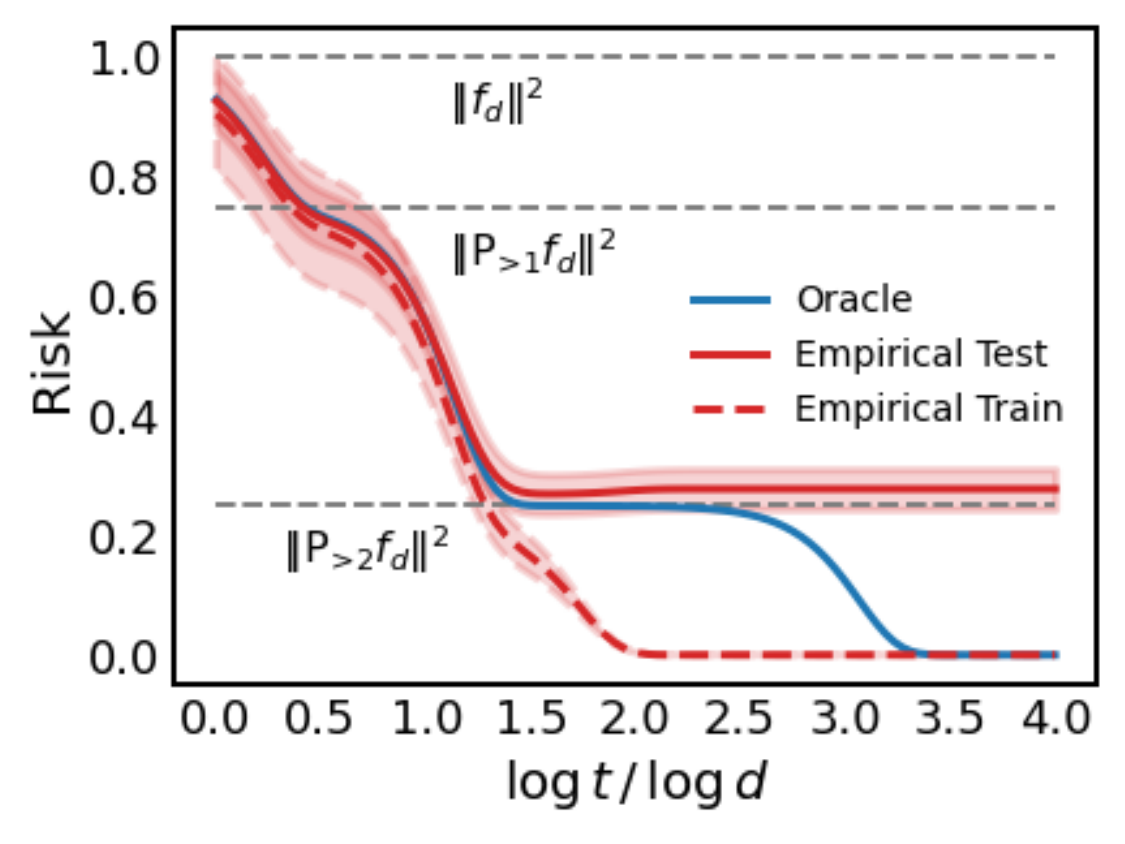} 
        \caption{$d = 200$} 
    \end{subfigure}
    
    \begin{subfigure}[t]{0.28\textwidth}
        \centering
        \includegraphics[width=\linewidth]{iclr2022/figures/log_scale_relu_d=400.pdf} 
        \caption{$d = 400$} 
    \end{subfigure}
    \begin{subfigure}[t]{0.28\textwidth}
        \centering
        \includegraphics[width=\linewidth]{iclr2022/figures/log_scale_relu_noise_d=400.pdf} 
        \caption{$d = 400$} 
    \end{subfigure}
    \begin{subfigure}[t]{0.28\textwidth}
        \centering
        \includegraphics[width=\linewidth]{iclr2022/figures/log_scale_relu_cubic_d=400.pdf} 
       \caption{$d = 400$}  
    \end{subfigure}
    \caption{Each column is a kernel gradient flow experiment as in Fig.\ \ref{fig:gf_rot_inv} but repeated for $d \in \{50, 100, 200, 400\}$. \textbf{Column 1:} Replicates Fig.\ \ref{fig:log_relu} exactly. \textbf{Column 2:} Replicates Fig.\ \ref{fig:log_relu} but with $\sigma_\eps^2 = 0.2$. \textbf{Column 3:} Replicates Fig.\ \ref{fig:log_relu_cubic}. Averaged over 10 trials.}
\end{figure}
To see the effects of varying the dimension $d$ we replicate the log-scale plots of kernel gradient flow with dot product kernels in Fig.\ \ref{fig:gf_rot_inv}. We take $n = d^{1.5}$ and vary $d \in \{50, 100, 200, 400\}$. Each plot is averaged over 10 runs with the shaded region representing one standard deviation around the mean. We can see that as $d$ increasing the standard deviation decreases and the curves approach the theoretical high-dimensional prediction.
\clearpage
\section{Technical background}\label{sec:technical_background}

\def\reals{{\mathbb R}}
\def\integers{{\mathbb Z}}
\def\oproj{{\overline \proj}}
\def\bproj{{\overline \proj}}
\def\cuH{{\mathcal H}}
\def\normal{{\mathcal N}}
\def\bbHe{{\rm He}}
\def\Coeff{{\rm Coeff}}
\def\op{{\rm op}}

\subsection{Notations}

For a positive integer, we denote by $[n]$ the set $\{1 ,2 , \ldots , n \}$. For vectors $\bu,\bv \in \R^d$, we denote $\< \bu, \bv \> = u_1 v_1 + \ldots + u_d v_d$ their scalar product, and $\| \bu \|_2 = \< \bu , \bu\>^{1/2}$ the $\ell_2$ norm. Given a matrix $\bA \in \R^{n \times m}$, we denote $\| \bA \|_{\op} = \max_{\| \bu \|_2 = 1} \| \bA \bu \|_2$ its operator norm and by $\| \bA \|_{F} = \big( \sum_{i,j} A_{ij}^2 \big)^{1/2}$ its Frobenius norm. If $\bA \in \R^{n \times n}$ is a square matrix, the trace of $\bA$ is denoted by $\Tr (\bA) = \sum_{i \in [n]} A_{ii}$.

We use $O_d(\, \cdot \, )$  (resp. $o_d (\, \cdot \,)$) for the standard big-O (resp. little-o) relations, where the subscript $d$ emphasizes the asymptotic variable. Furthermore, we write $f = \Omega_d (g)$ if $g(d) = O_d (f(d) )$, and $f = \omega_d (g )$ if $g (d) = o_d (f (d))$. Finally, $f =\Theta_d (g)$ if we have both $f = O_d (g)$ and $f = \Omega_d (g)$.

We use $O_{d,\P} (\, \cdot \,)$ (resp. $o_{d,\P} (\, \cdot \,)$) the big-O (resp. little-o) in probability relations. Namely, for $h_1(d)$ and $h_2 (d)$ two sequences of random variables, $h_1 (d) = O_{d,\P} ( h_2(d) )$ if for any $\eps > 0$, there exists $C_\eps > 0 $ and $d_\eps \in \Z_{>0}$, such that
\[
\begin{aligned}
\P ( |h_1 (d) / h_2 (d) | > C_{\eps}  ) \le \eps, \qquad \forall d \ge d_{\eps},
\end{aligned}
\]
and respectively: $h_1 (d) = o_{d,\P} ( h_2(d) )$, if $h_1 (d) / h_2 (d)$ converges to $0$ in probability.  Similarly, we will denote $h_1 (d) = \Omega_{d,\P} (h_2 (d))$ if $h_2 (d) = O_{d,\P} (h_1 (d))$, and $h_1 (d) = \omega_{d,\P} (h_2 (d))$ if $h_2 (d) = o_{d,\P} (h_1 (d))$. Finally, $h_1(d) =\Theta_{d,\P} (h_2(d))$ if we have both $h_1(d) =O_{d,\P} (h_2(d))$ and $h_1(d) =\Omega_{d,\P} (h_2(d))$.

\subsection{Functional spaces over the sphere}\label{sec:functions_sphere}

For $d \ge 3$, we let $\S^{d-1}(r) = \{\bx \in \R^{d}: \| \bx \|_2 = r\}$ denote the sphere with radius $r$ in $\reals^d$.
We will mostly work with the sphere of radius $\sqrt d$, $\S^{d-1}(\sqrt{d})$ and will denote by $\tau_{d}$  the uniform probability measure on $\S^{d-1}(\sqrt d)$. 
All functions in this section are assumed to be elements of $ L^2(\S^{d-1}(\sqrt d) ,\tau_d)$, with scalar product and norm denoted as $\<\,\cdot\,,\,\cdot\,\>_{L^2}$
and $\|\,\cdot\,\|_{L^2}$:
\begin{align}
\<f,g\>_{L^2} \equiv \int_{\S^{d-1}(\sqrt d)} f(\bx) \, g(\bx)\, \tau_d(\de \bx)\,.
\end{align}

For $\ell\in\integers_{\ge 0}$, let $\tilde{V}_{d,\ell}$ be the space of homogeneous harmonic polynomials of degree $\ell$ on $\reals^d$ (i.e. homogeneous
polynomials $q(\bx)$ satisfying $\Delta q(\bx) = 0$), and denote by $V_{d,\ell}$ the linear space of functions obtained by restricting the polynomials in $\tilde{V}_{d,\ell}$
to $\S^{d-1}(\sqrt d)$. With these definitions, we have the following orthogonal decomposition
\begin{align}
L^2(\S^{d-1}(\sqrt d) ,\tau_d) = \bigoplus_{\ell=0}^{\infty} V_{d,\ell}\, . \label{eq:SpinDecomposition}
\end{align}
The dimension of each subspace is given by
\begin{align}
\dim(V_{d,\ell}) = B(d, \ell) = \frac{2 \ell + d - 2}{d - 2} \binom{ \ell + d - 3}{\ell} \, .
\end{align}
For each $\ell\in \integers_{\ge 0}$, the spherical harmonics $\{ Y_{\ell, j}^{(d)}\}_{1\le j \le B(d, \ell)}$ form an orthonormal basis of $V_{d,\ell}$:
\[
\<Y^{(d)}_{ki}, Y^{(d)}_{sj}\>_{L^2} = \delta_{ij} \delta_{ks}.
\]
Note that our convention is different from the more standard one, that defines the spherical harmonics as functions on $\S^{d-1}(1)$.
It is immediate to pass from one convention to the other by a simple scaling. We will drop the superscript $d$ and write $Y_{\ell, j} = Y_{\ell, j}^{(d)}$ whenever clear from the context.

We denote by $\oproj_k$  the orthogonal projections to $V_{d,k}$ in $L^2(\S^{d-1}(\sqrt d),\tau_d)$. This can be written in terms of spherical harmonics as
\begin{align}
\oproj_k f(\bx) \equiv& \sum_{l=1}^{B(d, k)} \< f, Y_{kl}\>_{L^2} Y_{kl}(\bx). 
\end{align}
We also define
$\oproj_{\le \ell}\equiv \sum_{k =0}^\ell \oproj_k$, $\oproj_{>\ell} \equiv \id -\oproj_{\le \ell} = \sum_{k =\ell+1}^\infty \oproj_k$,
and $\oproj_{<\ell}\equiv \oproj_{\le \ell-1}$, $\oproj_{\ge \ell}\equiv \oproj_{>\ell-1}$.

\subsection{Gegenbauer polynomials}
\label{sec:Gegenbauer}

The $\ell$-th Gegenbauer polynomial $Q_\ell^{(d)}$ is a polynomial of degree $\ell$. Consistently
with our convention for spherical harmonics, we view $Q_\ell^{(d)}$ as a function $Q_{\ell}^{(d)}: [-d,d]\to \reals$. The set $\{ Q_\ell^{(d)}\}_{\ell\ge 0}$
forms an orthogonal basis on $L^2([-d,d],\tilde\tau^1_{d})$, where $\tilde\tau^1_{d}$ is the distribution of $\sqrt{d}\<\bx,\be_1\>$ when $\bx\sim \tau_d$,
satisfying the normalization condition:
\begin{align}
\< Q^{(d)}_k(\sqrt{d}\< \be_1, \cdot\>), Q^{(d)}_j(\sqrt{d}\< \be_1, \cdot\>) \>_{L^2(\S^{d-1}(\sqrt d))} = \frac{1}{B(d, k)}\, \delta_{jk} \, .  \label{eq:GegenbauerNormalization}
\end{align}
In particular, these polynomials are normalized so that  $Q_\ell^{(d)}(d) = 1$. 
As above, we will omit the superscript $(d)$ in $Q_\ell^{(d)}$ when clear from the context.

Gegenbauer polynomials are directly related to spherical harmonics as follows. Fix some vector $\bv\in\S^{d-1}(\sqrt{d})$ and 
consider the subspace of  $V_{\ell}$ formed by all functions that are invariant under rotations in $\reals^d$ that keep $\bv$ unchanged.
It is not hard to see that this subspace has dimension one, and coincides with the span of the function $Q_{\ell}^{(d)}(\<\bv,\,\cdot\,\>)$.

We will use the following properties of Gegenbauer polynomials
\begin{enumerate}
\item For $\bx, \by \in \S^{d-1}(\sqrt d)$
\begin{align}
\< Q_j^{(d)}(\< \bx, \cdot\>), Q_k^{(d)}(\< \by, \cdot\>) \>_{L^2} = \frac{1}{B(d, k)}\delta_{jk}  Q_k^{(d)}(\< \bx, \by\>).  \label{eq:ProductGegenbauer}
\end{align}
\item For $\bx, \by \in \S^{d-1}(\sqrt d)$
\begin{align}
Q_k^{(d)}(\< \bx, \by\> ) = \frac{1}{B(d, k)} \sum_{i =1}^{ B(d, k)} Y_{ki}^{(d)}(\bx) Y_{ki}^{(d)}(\by). \label{eq:GegenbauerHarmonics}
\end{align}
\end{enumerate}
These properties imply that, up to a constant, $Q_k^{(d)}(\< \bx, \by\> )$ is a representation of the projector onto 
the subspace of degree-$k$ spherical harmonics
\begin{align}
(\oproj_k f)(\bx) = B(d, k) \int_{\S^{d-1}(\sqrt{d})} \, Q_k^{(d)}(\< \bx, \by\> )\,  f(\by)\, \tau_d(\de\by)\, .\label{eq:ProjectorGegenbauer}
\end{align}
For a function $\sigma \in L^2([-\sqrt d, \sqrt d], \tau^1_{d})$ (where $\tau^1_{d}$ is the distribution of $\< \be_1, \bx \> $ when $\bx \sim \Unif(\S^{d-1}(\sqrt d))$), denoting its spherical harmonics coefficients $\xi_{d, k}(\sigma)$ to be 
\begin{align}\label{eqn:technical_lambda_sigma}
\xi_{d, k}(\sigma) = \int_{[-\sqrt d , \sqrt d]} \sigma(x) Q_k^{(d)}(\sqrt d x) \tau^1_{d}(\de x),
\end{align}
then we have the following equation holds in $L^2([-\sqrt d, \sqrt d],\tau^1_{d})$ sense
\[
\sigma(x) = \sum_{k = 0}^\infty \xi_{d, k}(\sigma) B(d, k) Q_k^{(d)}(\sqrt d x). 
\]

For any dot product kernel $H_d(\bx_1, \bx_2) = h_d(\< \bx_1, \bx_2\> / d)$,
with $h_d(\sqrt{d}\, \cdot \, ) \in L^2([-\sqrt{d},\sqrt{d}],\tau^1_{d})$,
we can associate a self adjoint operator $\cuH_d:L^2(\S^{d-1}(\sqrt{d}))\to L^2(\S^{d-1}(\sqrt{d}))$
\begin{align}
\cuH_d f(\bx) \equiv \int_{\S^{d-1}(\sqrt{d})} h_d(\<\bx,\bx_1\>/d)\, f(\bx_1) \, \tau_d(\de \bx_1)\, .
\end{align}
By rotational invariance, the space $V_{k}$ of homogeneous polynomials of degree $k$ is an eigenspace of
$\cuH_d$, and we will denote the corresponding eigenvalue by $\xi_{d,k}(h_d)$. In other words
$\cuH_df(\bx) \equiv \sum_{k=0}^{\infty} \xi_{d,k}(h_d) \oproj_{k}f$.   The eigenvalues can be computed via
\begin{align}
  \xi_{d, k}(h_d) = \int_{[-\sqrt d , \sqrt d]} h_d\big(x/\sqrt{d}\big) Q_k^{(d)}(\sqrt d x) \tau^1_d (\de x)\, .
\end{align}

For a dot product kernel $H_d(\bx, \by) = h_d(\inner{\bx, \by} / d)$ consider the Gegenbauer expansion of $h_d$ in $L^2([-\sqrt{d}, \sqrt{d}], \tau_d^1)$
\begin{equation}
    h_d(\inner{\bx, \by}/d) = \sum\limits_{k = 0}^\infty \xi_{k,d}(h_d) B(d, k) Q_{k}^{(d)}(\inner{\bx, \by}).
\end{equation}
Using Eq.\ (\ref{eq:ProductGegenbauer}) we can equivalently write the kernel as an expectation over random features for some activation $\sigma_d$
\begin{equation}\label{eq:dot_prod_decomp}
h_d(\inner{\bx, \by} / d) = \E_{\bw \sim \Unif(\S^{d-1})}[\sigma_d(\inner{\bw, \bx})\sigma_d(\inner{\bw, \by})]
\end{equation}
by taking 
\begin{equation}
    \sigma_d(x) = \sum\limits_{k = 0}^\infty \xi_{d, k}(h_d)^{1/2} B(d, k) Q_{k}^{(d)}(\sqrt{d} x).
\end{equation}
Note that $\sigma_d \in L^2([-\sqrt{d}, \sqrt{d}], \tau_d^1)$ as long as $h(1) < \infty$.
\subsection{Hermite polynomials}
\label{sec:Hermite}

The Hermite polynomials $\{\bbHe_k\}_{k\ge 0}$ form an orthogonal basis of $L^2(\reals,\gamma)$, where 
\[ \gamma(\de x) = e^{-x^2/2}\de x/\sqrt{2\pi}\]
is the standard Gaussian measure, and $\bbHe_k$ has degree $k$. We will follow the classical normalization (here and below, expectation is with respect to
$G\sim\normal(0,1)$):
\begin{align}
\E\big\{\bbHe_j(G) \,\bbHe_k(G)\big\} = k!\, \delta_{jk}\, .
\end{align}
As a consequence, for any function $g\in L^2(\reals,\gamma)$, we have the decomposition
\begin{align}\label{eqn:sigma_He_decomposition}
g(x) = \sum_{k=0}^{\infty}\frac{\mu_k(g)}{k!}\, \bbHe_k(x)\, ,\;\;\;\;\;\; \mu_k(g) \equiv \E\big\{g(G)\, \bbHe_k(G)\}\, .
\end{align}

The Hermite polynomials can be obtained as high-dimensional limits of the Gegenbauer polynomials introduced in the previous section. Indeed, the Gegenbauer polynomials (up to a $\sqrt d$ scaling in domain) are constructed by Gram-Schmidt orthogonalization of the monomials $\{x^k\}_{k\ge 0}$ with respect to the measure 
$\tilde\tau^1_{d}$, while Hermite polynomial are obtained by Gram-Schmidt orthogonalization with respect to $\gamma$. Since $\tilde\tau^1_{d}\Rightarrow \gamma$
(here $\Rightarrow$ denotes weak convergence),
it is immediate to show that, for any fixed integer $k$, 
\begin{align}
\lim_{d \to \infty} \Coeff\{ Q_k^{(d)}( \sqrt d x) \, B(d, k)^{1/2} \} = \Coeff\left\{ \frac{1}{(k!)^{1/2}}\,\bbHe_k(x) \right\}\, .\label{eq:Gegen-to-Hermite}
\end{align}
Here and below, for $P$ a polynomial, $\Coeff\{ P(x) \}$ is  the vector of the coefficients of $P$. As a consequence,
for any fixed integer $k$, we have
\begin{align}\label{eqn:mu_lambda_relationship}
\mu_k(\sigma) = \lim_{d \to \infty} \xi_{d,k}(\sigma) (B(d, k)k!)^{1/2}, 
\end{align}
where $\mu_k(\sigma)$ and $\xi_{d,k}(\sigma)$ are given in Eq.\ (\ref{eqn:sigma_He_decomposition}) and Eq.\ (\ref{eqn:technical_lambda_sigma}).

% \subsection{Decomposition of invariant functions}\label{sec:invariant}

% In this section, we take $\cA_d \in \{ \S^{d-1}(\sqrt {d}), \Cube^d \}$, and $\cG_d$ to be any group that is isomorphic to a subgroup of $\cO(d)$ and that preserves $\cA_d$. This section is mostly built on the technical background presented in Appendix \ref{sec:technical_background}. 

\subsection{The invariant function class and the symmetrization operator}\label{sec:invariant_class}

Let $\cG_d$ be a group that is isomorphic to a subgroup of $\cO(d)$, the orthogonal group in $d$ dimension. That means, each element of $\cG_d$ can be identified with a matrix in $\cO(d) \subseteq \R^{d \times d}$, and the group addition operation in $\cG_d$ can be regarded as matrix multiplications in $\cO(d)$. For any $\bx \in \S^{d-1}(\sqrt d)$ and $g \in \cG_d$, we define group action $g \cdot \bx$ to be the multiplication of matrix representation of $g$ with the vector $\bx$. We equip $\cG_d$ with a probability measure $\pi_d$, which is the uniform probability measure on $\cG_d$. More specifically, the Borel sigma algebra on $\cG_d$ is defined as the Borel sigma algebra of $\cO(d)$ restricted on $\cG_d$. The uniform probability measure $\pi_d$ satisfies the property that, for any Borel-measurable set $B \subseteq \cG_d$ and any $g \in \cG_d$, we have
\[
\pi_d(B) = \pi_d(g B). 
\]
Let $L^2(\S^{d-1}(\sqrt d))$ be the class of $L^2$ functions on $\S^{d-1}(\sqrt d)$ equipped with uniform probability measure $\Unif(\S^{d-1}(\sqrt d))$. We define the invariant function class to be 
\[
L^2(\S^{d-1}(\sqrt d), \cG_d) = \Big\{ f \in L^2(\S^{d-1}(\sqrt d)): f(\bx) = f(g \cdot \bx), ~~ \forall \bx \in \S^{d-1}(\sqrt d), ~~ \forall g \in \cG_d \Big\}.
\]
We define the symmetrization operator $\cS: L^2(\S^{d-1}(\sqrt d)) \to L^2(\S^{d-1}(\sqrt d), \cG_d)$ to be 
\[
(\cS f) (\bx) = \int_{\cG_d} f(g \cdot \bx) \pi_d(\de g). 
\]

\subsection{Orthogonal polynomials on invariant function class}\label{sec:invariant_polynomials}

% As mentioned in Definition \ref{def:degeneracy}, 

We define $V_{d, \le k} \subseteq L^2(\S^{d-1}(\sqrt d))$ to be the subspace spanned by all the degree $\ell$ polynomials, $V_{d, >k} \equiv V_{d, \le k}^\perp \subseteq L^2(\S^{d-1}(\sqrt d))$ to be the orthogonal complement of $V_{d, \le k}$, and $V_{d, k} = V_{d, \le k} \cap V_{d, \le k-1}^\perp$. In words, $V_{d, k}$ contains all degree $k$ polynomials that orthogonal to all polynomials of degree at most $k-1$. We further define $V_{d, <k} = V_{d, \le k-1}$ and $V_{d, \ge k} = V_{d, > k-1}$. 

Let $\bproj_{\le \ell}$ to be the projection operator on $L^2(\S^{d-1}(\sqrt d), \Unif)$ that project a function onto $V_{d, \le \ell}$, the space spanned by all the degree $\ell$ polynomials. Then it is easy to see that $\bproj_{\le \ell}$ and $\cS$ operator commute. This means, for any $f \in L^2(\S^{d-1}(\sqrt d))$, we have 
\[
\bproj_{\le \ell}[\cS( f )] = \cS[\bproj_{\le \ell}(f)]. 
\]
Similarly, we can define $\bproj_{\ell}$, $\bproj_{< \ell}$, $\bproj_{> \ell}$, $\bproj_{\ge \ell}$, which commute with $\cS$. We denote $V_{d, \ell}(\cG_d) \equiv \cP_{\ell}(\S^{d-1}(\sqrt d), \cG_d)$ to be the space of polynomials in the images of $\bproj_\ell \cS$. Then we have
\[
\cP_{\ell}(\cA_d, \cG_d) = \bproj_\ell( L^2(\S^{d-1}(\sqrt d), \cG_d)) = \cS[\bproj_\ell(L^2(\cA_d))]. 
\]

We denote $D(d, k) \equiv \dim(\cP_k(\S^{d-1}(\sqrt d), \cG_d))$ to be the dimension of $\cP_k(\S^{d-1}(\sqrt d), \cG_d)$. We denote $\{ \overline Y_{kl}^{(d)} \}_{l \in [D(\S^{d-1}(\sqrt d); k)]}$ to be a set of orthonormal polynomial basis in $\cP_k(\S^{d-1}(\sqrt d), \cG_d)$. That means 
\[
\E_{\bx \sim \Unif(\S^{d-1}(\sqrt d))} [\overline Y_{k_1 l_1}^{(d)}(\bx) \overline Y_{k_2 l_2}^{(d)}(\bx)] = \ones\{k_1 = k_2, l_1 = l_2 \},
\]
and 
\[
\overline Y_{k l}^{(d)}(\bx) = \overline Y_{k l}^{(d)}(g \cdot \bx), ~~ \forall \bx \in \S^{d-1}(\sqrt d), ~~ \forall g \in \cG_d.
\]

\end{document}